\title{Learning High Dimensional Wasserstein Geodesics}
\newcommand{\equal}{\textsuperscript{*}}
\newtheorem{definition}{Definition}
\newtheorem{remark}{Remark}
\newtheorem{theorem}{Theorem}
\newtheorem{lemma}{Lemma}
\newenvironment{manualtheorem}[1]{%
  \manualtheoreminner
}{\endmanualtheoreminner}
\newenvironment{customthm}[1]
  {\innercustomthm}
  {\endinnercustomthm}
\begin{document}

\author{
  Shu Liu\thanks{Equal contribution} \\
  Department of Mathematics\\
  Georgia Institute of Technology\\
  Atlanta, GA 30332, USA \\
  \texttt{sliu459@gatech.edu} \\
   \And
   Shaojun Ma\equal\\
   Department of Mathematics \\
   Georgia Institute of Technology\\
    Atlanta, GA 30332, USA\\
   \texttt{shaojunma@gatech.edu} \\
   \And
   Yongxin Chen \\
   Department of Aerospace Engineering \\
   Georgia Institute of Technology\\
   Atlanta, GA 30332, USA \\
   \texttt{yongchen@gatech.edu} \\
   \And
   Hongyuan Zha \thanks{The research of Hongyuan Zha is supported in part by a grant from Shenzhen Research Institute of Big Data}\\
   School of Data Science \\
   Shenzhen Research Institute of Big Data\\
   The Chinese University of Hong Kong\\
   Shenzhen, Guangdong Province 518172, China \\
   \texttt{zhahy@cuhk.edu.cn} \\
   \And
   Haomin Zhou \\
   Department of Mathematics \\
   Georgia Institute of Technology\\
   Atlanta, GA 30332, USA \\
   \texttt{hmzhou@math.gatech.edu} \\
}

\maketitle






\begin{abstract}
We propose a new formulation and learning strategy for computing the Wasserstein geodesic between two probability distributions in high dimensions. By applying the method of Lagrange multipliers to the dynamic formulation of the optimal transport (OT) problem, we derive a minimax problem whose saddle point is the Wasserstein geodesic. We then parametrize the functions by deep neural networks and design a sample based bidirectional learning algorithm for training. The trained networks enable sampling from the Wasserstein geodesic. As by-products, the algorithm also computes the Wasserstein distance and OT map between the marginal distributions. We demonstrate the performance of our algorithms through a series of experiments with both synthetic and real-world data. 

\end{abstract}

\section{Introduction}
As a key concept of optimal transport (OT) \citep{villani2003topics}, Wasserstein distance has been widely used to evaluate the distance between two distributions. Suppose we are given any two probability distributions $\rho_a$ and $\rho_b$ on $\mathbb{R}^d$, the Wasserstein distance between $\rho_a$ and $\rho_b$ is defined as
\begin{equation}
    \inf_{\pi}\left\{\int_{\mathbb{R}^d\times \mathbb{R}^d}c(x,y)d\pi (x,y)\big|\pi\in \Pi(\rho_a,\rho_b)\right\}
    \label{Static OT}
\end{equation}
in the Kantorovich form. Here $c(x,y)$ is the cost function that quantifies the effort of moving one unit of mass from location $x$ to location $y$, and $\Pi(\rho_a,\rho_b)$ is the set of joint distribution of $\rho_a$ and $\rho_b$. The solution of OT refers to an optimal $\pi^*$ to attain the Wasserstein distance between two distributions as well as an optimal map $g$ such that $g(x)$ and $y$ have the same distribution when $x \sim \rho_a$. 

\begin{figure}[ht]
\centering
\subfloat[][t=0]{\includegraphics[width=.2\linewidth]{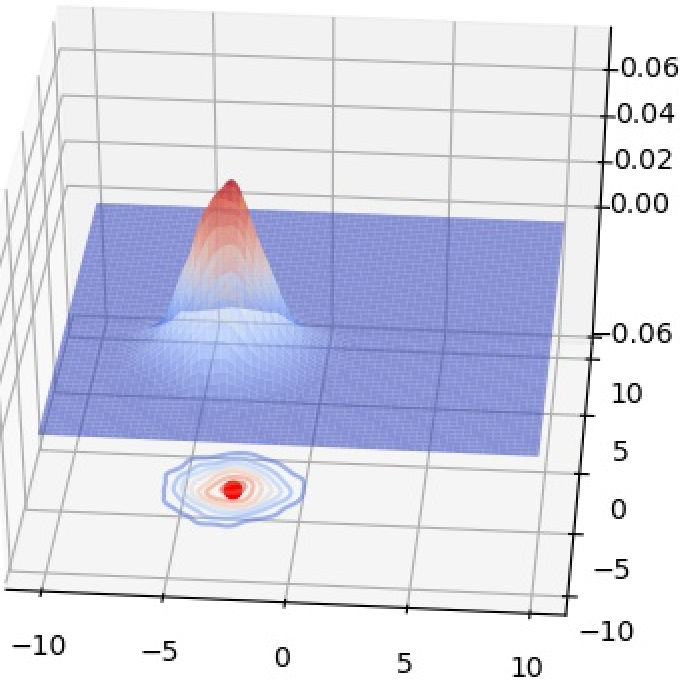}}
\subfloat[][t=0.25]{\includegraphics[width=.2\linewidth]{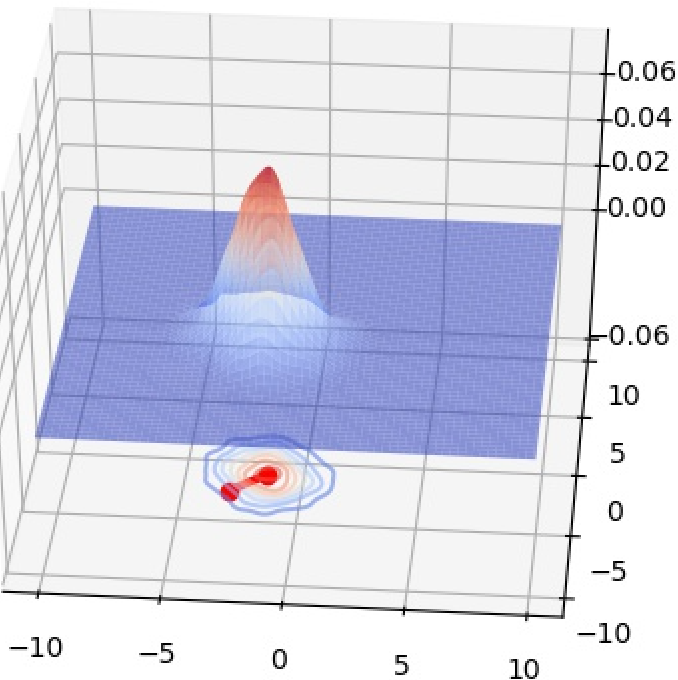}}
\subfloat[][t=0.5]{\includegraphics[width=.2\linewidth]{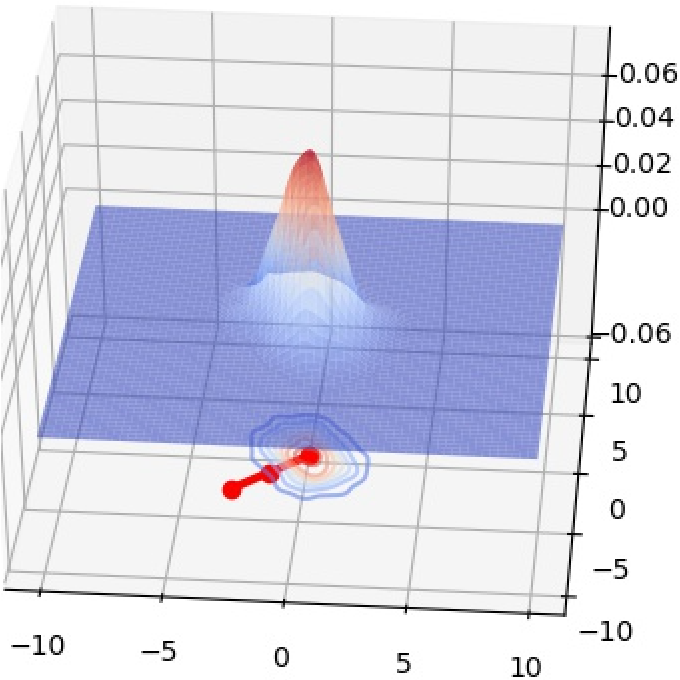}}
\subfloat[][t=0.75]{\includegraphics[width=.2\linewidth]{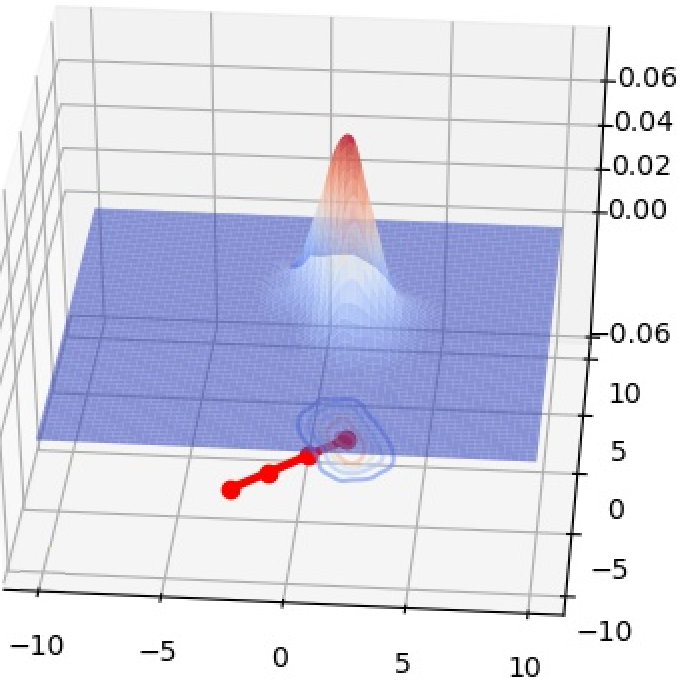}}
\subfloat[][t=1]{\includegraphics[width=.2\linewidth]{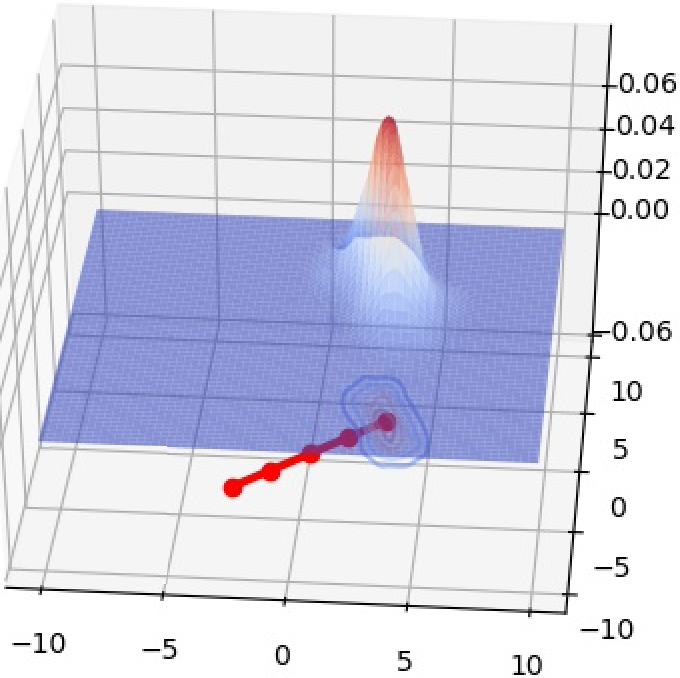}}
\caption{Wasserstein geodesic between two Gaussians}
\label{fig:geodesic}
\end{figure}

When applying OT-related computations to applications, the Kantorovich duality form has been widely studied and applied. For example, many regularized OT problems have been investigated,
including entropic regularized OT \citep{cuturi2013sinkhorn, largeot}, Laplacian regularization \citep{lapregular}, Group-Lasso regularized OT \citep{grouplasso}, Tsallis regularized OT \citep{tsallis} and OT with $L^2$ regularization \citep{l2regular}.

Based on the Kantorovich form, specially if choosing cost function as $c(x,y) = \frac{1}{2}|x-y|^2$ which results in the well-known $2$-Wasserstein distance, 
the OT problem can be rewritten in a fluid dynamics perspective \citep{benamou2}: 
$W_2$ defined below is the square of $2$-Wasserstein distance,
\begin{align}
    W_2(\mu,\nu) = & \textrm{inf} \left\{ \int_{\mathbb{R}^d}\int_0^1\rho(x,t)|v(x,t)|^2dxdt\right\},\nonumber\\
    \text{subject to:}~~&\partial_t\rho + \nabla\cdot(\rho v)=0,~~~\rho(\cdot,0) = \rho_a, ~~ \rho(\cdot,1) = \rho_b.
    \label{eqn:fluiddynamics}
\end{align}
Solving \eqref{eqn:fluiddynamics} leads to the following partial differential equation (PDE) system:
\begin{align}
    &\partial_t\rho + \nabla\cdot(\rho \nabla \Phi)=0, ~~~\frac{\partial \Phi}{\partial t} + \frac{1}{2}|\nabla \Phi|^2 = 0,  \nonumber\\
    &\text{subject to:}~~\rho(\cdot,0) = \rho_a, ~~~ \rho(\cdot, 1) = \rho_b.
    \label{eqn:otpde}
\end{align}
The solution of Problem \eqref{eqn:fluiddynamics} or \eqref{eqn:otpde} gives the definition of Wasserstein {\it geodesic}: if there exists an optimal pushforward map $\Xi^*$ between $\rho_a$ and $\rho_b$, then the constant speed geodesic is defined as a continuous curve: $\rho_t = (\textrm{Id} + t \Xi^*)_\sharp \rho_a$\footnote{Please check Definition \ref{pushfwd of distribution}.
}, which depicts the trajectory of $x_0 \in \rho_a$ moving towards $x_1 \in \rho_b$. Here $\textrm{Id}$ stands for the identity map.

Knowing Wasserstein geodesic between $\rho_a$ and $\rho_b$ (refer to Figure \ref{fig:geodesic} as an example, trajectory of the distribution mean is indicated by a red line) provides ample information for their Wasserstein distance and optimal pushforward map. More importantly, since the Wasserstein geodesic is automatically kinetic-energy-minimizing, it offers a natural sampling mechanism without using additional artificial regularization to generate samples not only for the target distribution $\rho_b$, but also for all distributions along the Wasserstein geodesic. This is different from several recent OT based models for computing the optimal pushforward map, such as Jacobian and Kinetic regularized OT \citep{trainode} and $L^2$ regularized OT \citep{pfg}.  


Wasserstein geodesic also finds applications in robotics and optimal control communities. 
\citet{otrobotics1} apply Brenier-Benamou OT to swarm control and updates the velocity of each agent. \citet{otrobotics2} study the locations of robots by minimizing the Wasserstein distance between original and target distributions. \citet{otcontrol1, otcontrol2} investigate diverse OT applications in control theories. Currently, the research that combines OT with robotics or control is still limited to low dimensions. We believe, having a method to compute the Wasserstein geodesic, especially in high dimensional setups, will be very beneficial for developing novel algorithms and applications in robotics and control, such as path planning for multi-agent systems.   

Last but not least, finding an efficient method to compute the Wasserstein geodesic is important and challenging in applied mathematics as well. It is well-known that  directly solving Problem \eqref{eqn:fluiddynamics} or \eqref{eqn:otpde} by the traditional numerical PDE methods, such as finite difference or finite element method which requires spatial discretization, must face the {\it curse of dimensionality}, meaning that the computational cost grows exponentially as the dimension increases.

To this end, we first formulate the OT problem as a saddle point problem without introducing any regularizers. We also further reduce the search space for the saddle point problem by leveraging KKT conditions. We then parametrize the optimal maps as well as the Lagrange multipliers via deep neural networks, which can be trained alternately. The resulting method is a sample based algorithm that is capable of handling high dimensional Wasserstein geodesic. It is worth mentioning that there is no Lipschitz or convexity constraint on the neural networks in our computation. Those constraints become thorny issues when they can only be approximately enforced in methods for other Wasserstein related problems. Furthermore, our formulation is based on a Lagrangian framework which can be readily generalized to various alternative cost functions, including the $L^p$-Wasserstein distance. To the best of our knowledge, there is no method to compute high dimensional Wasserstein distance, optimal map as well as Wasserstein geodesic all at once for a general cost function. We validate our method through a series experiments on both synthetic and realistic data sets. To summarize, our contributions are:
\begin{itemize}[leftmargin=*]
    \item We develop a novel saddle point formulation so that high dimensional Wasserstein geodesic, optimal map as well as Wasserstein distance between two given distributions can be computed in one single framework.
    \item Our scheme is formulated to handle general convex cost functions, including the general $L^p$-Wasserstein distance. More importantly, it provides a method without requiring convexity or Lipschitz constraint.
    \item We show the effectiveness of our method through extensive numerical experiments with both synthetic and realistic data sets. 
\end{itemize}
\textbf{Related work:}
Traditional approaches \citep{benamou2,benamou1,parallelot,unnormot} for OT problems are designed to handle low dimensional settings but become infeasible in high dimensional cases since those methods are based on spatial discretizations. In machine learning community, researchers utilize OT to drive one distribution to another, and the Sinkhorn algorithm has been widely adopted \citep{lineartimeotsinkhorn, genwithsinkhorn, otmatch, ipot} due to its convenient implementation even in high dimensional settings. 
However, the algorithm does not scale well to a large number of samples or can't readily handle continuous probability measures \citep{soptilargescaleot}. 

To overcome the challenges of Sinkhorn algorithm, researchers bring neural networks to study OT problems. \citet{largeot} study the regularized OT by applying neural networks such that large scale and high dimensional OT can be computed. \citet{wgan} propose Wasserstein-1 GAN, which is a generative model to minimize Wasserstein distance. Though we have witnessed a great success of Wasserstein-1 GAN and its related work in high dimensional and large scale applications \citep{gulrajani2017improved, wasserautoencoder, spectral, wassproximalofgan, wofwforgen, scalsot}, the non-trivial Lipschitz-1 constraint of the discriminator is hard to be theoretically satisfied.

In order to avoid Lipschitz-1 constraint in Wasserstein-1 GAN, by using input convex neural networks (ICNN) \citep{icnn} to approximate the potential function, \citet{wasserstein2gan} propose Wasserstein-2 GAN to rewrite OT as a minimization problem, \citet{icnnwasserbary} and \citet{icnnot} extend the semi-dual formulation of OT to new minimax problems. 

\citet{ruthotto2020machine} propose a machine learning framework for solving general Mean-Field Games in high dimensional spaces. Such method can be applied to solving Optimal Transport problems by adding penalty term to enforce the terminal constraint. Many other OT based approaches have been proposed to drive one distribution to another, including continuous normalizing flows (CNF) \citep{cnf1, cnf2} and the approaches developed by \citet{tabak1, tabak2}. 

Various OT models bring numerous applications in domain adaptation \citep{largeot}, generative modeling \citep{wgan}, partial differential equations \citep{learnsde1}, stochastic control \citep{wassercontrol}, robotics \citep{otrobotics1,otrobotics2}, as well as color transfer \citep{wasserstein2gan}, which is also one of the experiments in this paper.

Although Wasserstein distance and optimal map have been widely studied within diverse frameworks, most of them focus on the cases that the cost function is either $L^1$ or $L^2$ based, and Wasserstein geodesic is seldom computed, especially in high dimensional settings. In this paper we compute all in our novel designed framework. We also note that a similar strategy formulated by \citet{apac} derives a saddle point optimization scheme for solving the mean field game equations. We should point out that our problem setting and sampling method are distinct from them.

\section{Background of the Wasserstein Geodesic}
We consider two probability distributions $\rho_a, \rho_b$ defined on $\mathbb{R}^d$. For ease of discussion, we assume the density functions of these two distributions exist, then we focus on computing an interpolation curve $\{\rho_t\}_{t=0}^1$ between $\rho_a$ and $\rho_b$. To be more precise, we aim to find the length-minimizing curve joining $\rho_a$ and $\rho_b$, which is also known as the {\it Wasserstein geodesic}. 
To this end, we start from the classical Optimal Transport (OT) problem \eqref{Static OT}, and assume the cost function $c(x,y)$ is the optimal value of the following control problem,
\begin{align}
  & c(x, y) =L(y-x)  = \min_{\{{v}(\cdot, t)\}}  \left\{ \int_0^1 L({v}_t)~dt \right\},  \nonumber\\
  & \textrm{subject to:}~~\dot{x}_t={v}(x_t, t),~ x_0 = x,~ x_1 = y. \nonumber
\end{align}
Here we assume the Lagrangian $L(\cdot)\in C^1(\mathbb{R}^d)$ satisfies $L(-u)=L(u)$ for arbitrary $u\in \mathbb{R}^d$ and is strictly convex and superlinear.\footnote{$L(\cdot)$ is super linear if $\lim_{u\rightarrow \infty}~\frac{L(u)}{|u|}=\infty$.} Then $\nabla L:\mathbb{R}^d\rightarrow \mathbb{R}^d$ is an invertible map, we denote $\nabla L^{-1}$ the inverse of $\nabla L$ in the sequel. We define the Hamiltonian $H(\cdot)$ associated with Lagrangian $L(\cdot)$ as
\begin{align}
  H(p) & = \max_{v\in\mathbb{R}^d}\{v\cdot p - L(v)\} = \nabla L^{-1}(p)\cdot p - L(\nabla L^{-1}(p))\label{Legendre},  
\end{align}
This is useful in our future discussion.
\begin{remark}
 One important instance of $L$ is $L(v)=\frac{|v|^p}{p}$, $p>1$, which leads to $p$-Wasserstein distance. When $p=2$, it recovers the classical $2$-Wasserstein distance.
\end{remark}
Since \eqref{Static OT} doesn't involve time, we denote it as \textbf{Static OT} problem and we denote its optimal value as $W_{\textrm{Static}}(\rho_a, \rho_b)$. Before introducing the dual problem of \eqref{Static OT}, we introduce the following definition.
\begin{definition}\label{pushfwd of distribution}
 Given measurable map $T:\mathbb{R}^d\rightarrow \mathbb{R}^d$ and $\lambda$ as a probability distribution on $\mathbb{R}^d$.  We denote the pushforward of $\lambda$ by $T$ as $T_{\sharp}\lambda$, which is defined as 
 \begin{equation*}
   T_{\sharp}\lambda(E) = \lambda(T^{-1}(E)) \quad  \textrm{for all measurable set} ~ E \subset \mathbb{R}^d.
 \end{equation*}
\end{definition}
Problem \eqref{Static OT} has the Kantorovich dual form \citep{villani2008optimal}
\begin{equation}
 \max_{\phi(y)-\psi(x)\leq c(x,y)}\left\{ \int \phi(y)\rho_b(y)~dy - \int \psi(x)\rho_a(x)~dx \right\} . \label{Kant duality}
\end{equation}
Here and in the sequel, we will denote $\int$ as $\int_{\mathbb{R}^d}$ for conciseness. One can show that the optimal value of \eqref{Kant duality} equals $W_{\textrm{Static}}(\rho_a, \rho_b)$. Let us denote the optimizer to \eqref{Kant duality} as $ (\phi^*,\psi^*)$.
Then $\nabla\psi^*$ (as well as $\nabla\phi^*$) provides optimal transport maps from $\rho_a$ to $\rho_b$ (as well as from $\rho_b$ to $\rho_a$) in the sense that
\begin{equation}
(\textrm{Id}+\nabla L^{-1}(\nabla\psi^*))_{\sharp}\rho_a=\rho_b, ~  
(\textrm{Id}-\nabla L^{-1}(\nabla\phi^*))_{\sharp}\rho_b = \rho_a.  \label{phi, psi as monge maps}
\end{equation}

We then consider the dynamic version of \eqref{Static OT}
\begin{align}
    & W_{\textrm{Dym}}(\rho_a,\rho_b) = \min_{\rho,{v}}\left\{ \int_0^1\int L({v}(x,t))~\rho(x,t)~dxdt  \right\}, \label{dynamical OT}\\
    & \textrm{subject to:}~~\frac{\partial\rho}{\partial t} + \nabla\cdot(\rho{v}) = 0, \rho(\cdot, 0)=\rho_a, \rho(\cdot,1) = \rho_b.  \label{constraints}
\end{align}
This problem also has an equivalent particle control version 
\begin{align}
  \min_{{v}}\left\{ \int_0^1\mathbb{E} [L({v}(\boldsymbol{X}_t,t))]~dt  \right\}, \label{particle dynamical OT} \quad
  \textrm{subject to}~ \frac{d}{dt}\boldsymbol{X}_t = {v}(\boldsymbol{X}_t, t),~ \boldsymbol{X}_0 \sim \rho_a, ~ \boldsymbol{X}_1 \sim \rho_b. 
\end{align}
Such particle control version is more helpful for designing sample based formulation than its PDE counterpart, \eqref{dynamical OT} as we will stated in Section \ref{geodesic pushforward}.
Since we introduce the dynamics of $\{\rho_t\}$ and $\{\boldsymbol{X}_t\}$ into the new definition and reformulate the original problem \eqref{Static OT} as an optimal control problem \eqref{dynamical OT} and \eqref{particle dynamical OT}, we thus denote \eqref{dynamical OT}, \eqref{particle dynamical OT} as \textbf{Dynamical OT} problem. The optimal solution of Dynamical OT is given by the following coupled PDE system, c.f. Chapter 13 of \cite{villani2008optimal}
\begin{align}
\label{geodesic eq}
\begin{split}
  &\frac{\partial \rho(x,t)}{\partial t} + \nabla\cdot(\rho(x,t) \nabla L^{-1}(\nabla \Phi(x,t)))=0,~~\frac{\partial \Phi(x,t)}{\partial t} + H(\nabla \Phi(x,t)) = 0 ,  \\
  &\textrm{subject to:}~~\rho(\cdot,0) = \rho_a, ~\rho(\cdot, 1) = \rho_b.
\end{split}
\end{align}
We denote the solution to \eqref{geodesic eq} as $(\rho^*, \Phi^*)$. Then the optimal vector field is
\begin{equation}
  {v}^*(x,t) = \nabla L^{-1}(\nabla\Phi^*(x, t)). \label{vector field = inv grad L grad Phi}
\end{equation}
From a geometric perspective, Problem \eqref{dynamical OT} can be treated as a computing scheme  for the geodesic on the probability manifold equipped with Wasserstein distance $W_{\textrm{Dym}}(\cdot,\cdot)$. Following this point of view, we also treat \eqref{dynamical OT} as the problem for evaluating the \textbf{Wasserstein geodesic} joining $\rho_a$ and $\rho_b$. Meanwhile, the PDE system \eqref{geodesic eq} serves as the geodesic equation for the Wasserstein geodesic. 

Static OT problems and Dynamical OT problems are closely related. We summarize some of their connections, which are useful for our future derivations in section \ref{derivation }. 

Recall $(\phi^*,\psi^*)$ as optimal solution to \eqref{Kant duality}, and $\Phi^*$ as solution of \eqref{geodesic eq}, then
\begin{equation}
 \Phi^*(x, 0) = \psi^*(x) + C_0,\quad \Phi^*(x,1) = \phi^*(x) + C_1.   \label{static dual = dynamic dual}
\end{equation}
Here $C_0, C_1$ are constants. Furthermore, from \eqref{phi, psi as monge maps} we have
\begin{align}
\label{Phi0, Phi1 as monge maps}
 \begin{split}
  (\textrm{Id}+\nabla L^{-1}(\nabla\Phi^*(\cdot, 0)))_{\sharp}\rho_a=\rho_b,~~~  
  (\textrm{Id}-\nabla L^{-1}(\nabla\Phi^*(\cdot, 1)))_{\sharp}\rho_b = \rho_a. 
 \end{split}
\end{align}
In addition, Static and Dynamical OT produce the same distance, i.e. $W_{\textrm{Static}}(\rho_a, \rho_b) = W_{\textrm{Dym}}(\rho_a, \rho_b)$. 

\section{Proposed Methods }\label{derivation } 
\subsection{Primal-Dual based saddle point scheme}\label{prototype saddle problem}
In this section, we present an approach of solving \textbf{Dynamical OT} problem \eqref{dynamical OT} by applying Lagrange Multiplier method. We introduce Lagrange Multiplier $\Phi(x,t)$ for the PDE constraint $\frac{\partial\rho(x,t)}{\partial t} + \nabla\cdot(\rho(x,t){v}(x,t)) = 0 $ and $\Psi(x)$ for one of the boundary constraints $\rho(\cdot, 1) = \rho_b$. (The constraint $\rho( \cdot,0)=\rho_a$ can be naturally treated as the initial condition of $\rho_t$.) Then we consider the functional
\begin{align}
  \mathfrak{L}(\rho,{v},\Phi,\Psi) &=  \int_0^1\int L({v})~\rho + \left(\frac{\partial\rho}{\partial t} + \nabla\cdot(\rho{v})\right)\Phi(x,t)~dxdt + \int \Psi(x)(\rho(x,1)-\rho_b(x))~dx  \nonumber\\
 &=  \int_0^1\int \left( L({v}) -\frac{\partial\Phi}{\partial t} - \nabla\Phi\cdot{v} \right) ~\rho(x,t)~dx dt  \label{reformulation of L of original problem} + \int \Phi(x,1)\rho(x,1)~dx \\
 &- \int \Phi(x,0)\rho_a(x)~dx +\int \Psi(x) (\rho(x,1)-\rho_b(x))~dx. \nonumber
\end{align}
For the second equality, we apply integration by parts on $[0,1]$ and use the initial condition $\rho(\cdot, 0)=\rho_a$.
Solving the constrained optimization problem \eqref{dynamical OT} is equivalent to investigating the following saddle point optimization problem
\begin{equation}
    \min_{\rho,{v}}  \max_{\Phi,\Psi}   ~\mathfrak{L}(\rho,{v},\Phi,\Psi).  \label{original saddle scheme}
\end{equation}
Problem \eqref{original saddle scheme} contains $(\rho,{v},\Phi,\Psi)$ as variables. We eliminate some of the variables by leveraging
the Karush–Kuhn–Tucker (KKT) conditions \citep{kuhn1951}
\begin{equation}
   \frac{\partial \mathfrak{L}}{\partial \Phi(x,t)}=0, \frac{\partial \mathfrak{L}}{\partial \Psi(x)} = 0, 
  \frac{\partial \mathfrak{L}}{\partial  \rho(x,t)} = 0,  \frac{\partial \mathfrak{L}}{\partial {v}(x,t)} = 0. \label{kkt}
\end{equation}
The first two conditions lead to the constraints \eqref{constraints}. The third condition in \eqref{kkt} yields
\begin{align}
   -\frac{\partial\Phi}{\partial t} - (\nabla\Phi(x,t)\cdot{v}(x,t) - L({v}(x,t))) = 0, \label{condition A original HJ}\\
   \Phi(x,1)+\Psi(x)=0.  \label{condition B}
\end{align}
The fourth condition in \eqref{kkt} yields $\nabla L({v}(x, t)) -\nabla\Phi(x,t) = 0$, 
which can be rewritten as
\begin{equation}
  {v}(x,t) = \nabla L^{-1} (\nabla\Phi(x,t)).  \label{condition C  rel of v and nabla Phi}
\end{equation}
The KKT conditions \eqref{condition B} and \eqref{condition C  rel of v and nabla Phi} reveal explicit relations among ${v}$, $\Phi$ and $\Psi$, which can be incorporated in \eqref{original saddle scheme} by plugging $\Psi(x) = -\Phi(x,1)$ and ${v}(x,t) = \nabla L^{-1}(\nabla\Phi(x,t))$ back into \eqref{reformulation of L of original problem}. 
Recall definition of $H$ in \eqref{Legendre}, the first term of \eqref{reformulation of L of original problem} then becomes $-\left(\frac{\partial\Phi(x,t)}{\partial t}+H(\nabla \Phi(x,t))\right)$. Our new optimization problem can be formulated as
\begin{align}
  \min_{\rho}\max_{\Phi}
\int_0^1\int -\left(\frac{\partial\Phi}{\partial t}+H(\nabla \Phi)\right) \rho(x,t)~dxdt  + \int\Phi(x,1)\rho_b(x) - \Phi(x,0)\rho_a(x)~dx.
\label{functional L}
\end{align}


\subsection{Simplification via geodesic pushforward map}\label{geodesic pushforward}
In the saddle point problem \eqref{functional L}, both variables $\rho(\cdot,t)$ and $\Phi(\cdot,t)$ are time-varying functions. It is a weak formulation of the OT problem that may accommodate general cost $L$. On the other hand, it requires to optimize over rather large space of time-varying functions, which may increase the computational cost as well as the chance of encountering local optima. 
To mitigate the challenge, we reduce the search space by leveraging the the following geodesic property of optimal transporting trajectory if $L$ is strictly convex. 
\begin{theorem}\label{transport along geod}
    Suppose $\{\boldsymbol{X}^*_t\}_{t=0}^1$ is the trajectory obeying the optimal vector field of \eqref{particle dynamical OT} with strictly convex Lagrangian $L$, then $\frac{d^2\boldsymbol{X}^*_t}{dt^2}=0$.
\end{theorem}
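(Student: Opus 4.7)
The plan is to prove this classical "straight-line characteristics" property for the Hamilton–Jacobi equation by showing that the momentum $p_t := \nabla \Phi^*(\boldsymbol{X}^*_t,t)$ is conserved along the optimal trajectory, which immediately forces the velocity $v^* = \nabla L^{-1}(p_t)$ to be constant in time and hence $\ddot{\boldsymbol{X}}^*_t = 0$. Concretely, I will start from \eqref{vector field = inv grad L grad Phi} and \eqref{geodesic eq}: along the trajectory obeying $\dot{\boldsymbol{X}}^*_t = v^*(\boldsymbol{X}^*_t,t) = \nabla L^{-1}(\nabla\Phi^*(\boldsymbol{X}^*_t,t))$, the chain rule gives
\begin{equation*}
\dot p_t \;=\; \partial_t \nabla \Phi^*(\boldsymbol{X}^*_t,t) + D^2\Phi^*(\boldsymbol{X}^*_t,t)\,\dot{\boldsymbol{X}}^*_t \;=\; \nabla(\partial_t\Phi^*) + D^2\Phi^*\, v^*.
\end{equation*}

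The next step is to eliminate $\partial_t \Phi^*$ using the Hamilton–Jacobi equation from \eqref{geodesic eq}, $\partial_t\Phi^* = -H(\nabla\Phi^*)$. Taking the spatial gradient and applying the chain rule yields $\nabla(\partial_t\Phi^*) = -D^2\Phi^*\,\nabla H(\nabla\Phi^*)$. Here the key algebraic identity I need is $\nabla H(p) = \nabla L^{-1}(p)$, which is the standard Legendre-transform relation and follows directly from differentiating the definition \eqref{Legendre}: the envelope theorem (or a direct computation together with the first-order condition $\nabla L(\nabla L^{-1}(p)) = p$) shows that the derivative of $H$ in $p$ is exactly the argmax $\nabla L^{-1}(p)$. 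Substituting this back gives $\nabla(\partial_t\Phi^*) = -D^2\Phi^*\, v^*$, so that $\dot p_t = -D^2\Phi^* v^* + D^2\Phi^* v^* = 0$.

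With $p_t$ constant along the trajectory, the velocity $v^*(\boldsymbol{X}^*_t,t) = \nabla L^{-1}(p_t)$ is also constant in $t$, and therefore $\ddot{\boldsymbol{X}}^*_t = \tfrac{d}{dt} v^*(\boldsymbol{X}^*_t,t) = 0$, which is the desired conclusion. The main obstacle I anticipate is a regularity issue rather than an algebraic one: the computation implicitly assumes that $\Phi^*$ is $C^1$ in time and $C^2$ in space on a neighborhood of the trajectory, and that $\nabla L^{-1}$ is well defined and $C^1$. The hypotheses in the theorem statement guarantee the latter (since $L \in C^1$ is strictly convex and superlinear, $\nabla L$ is a $C^0$ bijection and in the smooth-solution regime $\nabla L^{-1}$ inherits enough regularity). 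For $\Phi^*$ one has to work in the classical-solution regime of \eqref{geodesic eq}; in the non-smooth case the same conclusion holds almost surely along optimal characteristics but requires a viscosity-solution or Mather-theoretic argument, which I would only sketch as a remark rather than carry out in detail.
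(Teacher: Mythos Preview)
Your proposal is correct. The paper itself does not supply a self-contained proof of this theorem; it simply cites the $2$-Wasserstein case and Theorem~5.5 of Villani. However, exactly the computation you carry out appears later in the paper as Lemma~\ref{vf of HJ} (Appendix~B): one differentiates the Hamilton--Jacobi equation in space to obtain $\partial_t(\nabla\Phi^*) + \nabla^2\Phi^*\,\nabla H(\nabla\Phi^*)=0$, and combines this with $\nabla H = \nabla L^{-1}$ to see that $\nabla\Phi^*$ is conserved along the characteristic. The only difference is organizational: the paper's lemma starts from the \emph{presumed} straight line $T_t(x)=x+t\nabla H(\nabla\Phi^*_0(x))$ and verifies that $\nabla\Phi^*(T_t(x),t)$ stays equal to $\nabla\Phi^*_0(x)$ (implicitly invoking ODE uniqueness), whereas you start from the \emph{actual} flow $\dot{\boldsymbol{X}}^*_t = \nabla L^{-1}(\nabla\Phi^*(\boldsymbol{X}^*_t,t))$ and show directly that $\dot p_t=0$. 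Your version is slightly cleaner because the two $D^2\Phi^*$ terms cancel identically without any appeal to uniqueness, and it makes the conclusion $\ddot{\boldsymbol{X}}^*_t=0$ immediate. Your caveat about the needed $C^2$-in-space regularity of $\Phi^*$ matches the hypothesis the paper imposes in Lemma~\ref{vf of HJ} and Theorem~\ref{main}.
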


This result was discussed by \citet{hjd2} for 2-Wasserstein case ($L(\cdot)=\frac{|\cdot|^2}{2}$). The more general result was presented in Theorem 5.5 of \cite{villani2003topics}. This theorem illustrates that, from the particle point of view, the optimal transport process can be treated as a pushforward operation along the geodesics (straight lines) in $\mathbb{R}^d$. To be more precise, we denote $\{\rho^*(\cdot, t)\}_{t=0}^1$ as the optimal solution to \eqref{dynamical OT}. Denote $\{{v}^*(\cdot, t)\}_{t=0}^1$ as the optimal vector field in \eqref{particle dynamical OT}. Then $\{\boldsymbol{X}_t^*\}_{t=0}^1$ solves $\frac{d}{dt}\boldsymbol{X}_t^* = {v}^*(\boldsymbol{X}_t,t)$. Since $\frac{d^2}{dt^2}\boldsymbol{X}_t^*=0$, this implies $\boldsymbol{X}^*_t = \boldsymbol{X}^*_0+t{v}^*(\boldsymbol{X}^*_0, 0)$, $t\in [0,1]$. Finally, due to the equivalence between \eqref{dynamical OT} and \eqref{particle dynamical OT}, we are able to verify that $\boldsymbol{X}^*_t\sim\rho^*(\cdot, t)$, which yields $\boldsymbol{X}^*_0+t{v}^*(\boldsymbol{X}^*_0, 0)\sim\rho^*(\cdot, t)$. Since $\boldsymbol{X}_0^*$ follows the distribution with density $\rho_a$, this leads to the following relation between optimal density $\rho^*(\cdot,t)$ and optimal vector field ${v}^*(\cdot, 0)$ at $t=0$
\begin{equation}
  \rho^*(\cdot, t) = (\textrm{Id}+t{v}^*(\cdot, 0))_{\sharp}\rho_a.  \label{optimal rho given by geodesic pushfwd}
\end{equation}
Equation \eqref{optimal rho given by geodesic pushfwd} justifies that the optimal $\rho^*(\cdot, t)$ can be obtained by pushforwarding the initial distribution $\rho_a$ along certain straight lines with initial direction ${v}^*(\cdot, 0)$. This observation motivates us to restrict the search space of $\{\rho_t\}_{t=0}^1$ on the following space $\mathcal{P}_{\textrm{restrict}}$
\begin{equation*}
  \{~\{\rho(\cdot, t)\}_{t=0}^1~|~ ~ \rho(\cdot, t) = (\textrm{Id}+t{F})_{\sharp}\rho_a~~\textrm{for}~  t \in[0,1]~~ \}.
\end{equation*}
Here ${F}$ is an arbitrary vector field defined on $\mathbb{R}^d$. Combining the previous discussions together, we propose the scheme $ \min_{\rho\in \mathcal{P}_{\textrm{restrict}}}\max_{\Phi} \hat{\mathfrak{L}}(\rho,\Phi)$. In particular, since $\rho$ is uniquely determined by ${F}$, we reformulate our scheme as
\begin{align}
  &  \min_{{F}}\max_{\Phi}  \mathcal{L}({F},\Phi),~~~ \mathcal{L}({F}, \Phi) = \hat{\mathfrak{L}}((\textrm{Id}+t{F})_{\sharp}\rho_a,\Phi).\label{chosen scheme}
\end{align}
We have the following theoretical property for scheme \eqref{chosen scheme}.
\begin{restatable}{theorem}{theoremA}
\label{main}
  Denote the solution \eqref{geodesic eq} as $\rho^*(x,t)$ and $\Phi^*(x,t)$, and set $\Phi_0^*(\cdot) = \Phi^*(\cdot,0)$. Assume $\Phi^*(\cdot, t)\in C^2(\mathbb{R}^d)$, then 
  $(\nabla L^{-1}(\nabla \Phi_0^*), \Phi^*)$ is a critical point to the functional $\mathcal{L}$, i.e.
\[ \frac{\partial \mathcal{L}}{\partial {F}}(\nabla L^{-1}(\nabla \Phi_0^*),\Phi^*) = 0, \quad \frac{\partial \mathcal{L}}{\partial \psi}(\nabla L^{-1}(\nabla \Phi_0^*),\Phi^*) = 0.\]
Furthermore,  $\mathcal{L}(\nabla L^{-1}(\nabla\Phi_0^*), \Phi^*)=W_{\textrm{Dym}}(\rho_a,\rho_b)$.
\end{restatable}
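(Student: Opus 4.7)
The plan is to verify both critical-point equations at the candidate $(F^*,\Phi^*)$ with $F^*:=\nabla L^{-1}(\nabla\Phi_0^*)$, and then evaluate $\mathcal{L}(F^*,\Phi^*)$. The first step is to unfold the pushforward so that each spatial integral becomes an average along the straight-line trajectory $X_t^F := x + tF(x)$:
\[
  \mathcal{L}(F,\Phi) = \int_0^1\int \bigl[-\partial_t\Phi(X_t^F,t) - H(\nabla\Phi(X_t^F,t))\bigr]\rho_a(x)\,dx\,dt + \int \Phi(x,1)\rho_b(x)\,dx - \int \Phi(x,0)\rho_a(x)\,dx.
\]
This single representation drives all three parts.

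For the $\Phi$-critical equation, I would perturb $\Phi$ by $\eta$ and use two ingredients. First, Theorem \ref{transport along geod} keeps the optimal velocity constant along $X_t^{F^*}$, so $\nabla H(\nabla\Phi^*(X_t^{F^*},t)) = \nabla L^{-1}(\nabla\Phi^*(X_t^{F^*},t)) = v^*(X_t^{F^*},t) = F^*(x)$ for every $t$. Second, the chain rule gives $\tfrac{d}{dt}\eta(X_t^{F^*},t) = \partial_t\eta(X_t^{F^*},t) + F^*(x)\cdot\nabla\eta(X_t^{F^*},t)$. Combining the two collapses the variation integrand to $-\tfrac{d}{dt}\eta(X_t^{F^*},t)$, so its time-integral telescopes to $\eta(x,0) - \eta(x+F^*(x),1)$. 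Because $x\mapsto x+F^*(x)$ transports $\rho_a$ to $\rho_b$ by \eqref{Phi0, Phi1 as monge maps}, averaging against $\rho_a$ yields $\int \eta(\cdot,0)\rho_a - \int \eta(\cdot,1)\rho_b$, which exactly cancels the explicit boundary perturbation in $\mathcal{L}$.

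The $F$-critical equation is essentially immediate from the setup: the two boundary terms of $\mathcal{L}$ do not involve $F$, and at $\Phi=\Phi^*$ the time-integrand $-\partial_t\Phi^*(X_t^F,t) - H(\nabla\Phi^*(X_t^F,t))$ is an evaluation of the Hamilton--Jacobi residual in \eqref{geodesic eq}, which vanishes at every $(x,t)$. Hence the full time-integral is identically zero as a function of $F$, and so is every directional derivative.

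Finally, $\mathcal{L}(F^*,\Phi^*)$ reduces to $\int \Phi^*(y,1)\rho_b(y)\,dy - \int \Phi^*(x,0)\rho_a(x)\,dx$. To identify this with $W_{\textrm{Dym}}$ I would compute along the trajectory: $\tfrac{d}{dt}\Phi^*(X_t^{F^*},t) = \partial_t\Phi^* + v^*\cdot\nabla\Phi^* = -H(\nabla\Phi^*) + \nabla L^{-1}(\nabla\Phi^*)\cdot\nabla\Phi^* = L(v^*(X_t^{F^*},t))$ via the Legendre identity \eqref{Legendre}. Integrating in $t$ and averaging over $X_0\sim\rho_a$ gives the boundary difference equal to $\mathbb{E}\int_0^1 L(v^*(X_t^{F^*},t))\,dt = W_{\textrm{Dym}}(\rho_a,\rho_b)$ by definition \eqref{dynamical OT}. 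The main obstacle is the $\Phi$-variation: everything hinges on Theorem \ref{transport along geod} pinning $v^*$ to the constant value $F^*(x)$ along $X_t^{F^*}$, so that the spatial-gradient term in the variation reassembles with $\partial_t\eta$ into a total time derivative; without this straight-line structure, the telescoping fails and the boundary cancellation would not be available from the sample-path representation alone.
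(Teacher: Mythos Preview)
Your proposal is correct and, in several respects, more economical than the paper's own proof; the two genuinely differ in organisation.

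For the $F$--variation, the paper first rewrites $\mathcal{L}$ via an integration by parts into the form
\[
\int_0^1\!\!\int\bigl[\nabla\Phi(x+tF(x),t)\cdot F(x)-H(\nabla\Phi(x+tF(x),t))\bigr]\rho_a(x)\,dx\,dt+\int\Phi(\cdot,1)\rho_b-\int\Phi(x+F(x),1)\rho_a(x)\,dx,
\]
then differentiates in $F$ for general $(F,\Phi)$ and checks that the two resulting pieces vanish at $(F^*,\Phi^*)$ using Lemma~\ref{vf of HJ}. Your observation---that at $\Phi=\Phi^*$ the integrand $-\partial_t\Phi^*-H(\nabla\Phi^*)$ is identically zero by the Hamilton--Jacobi equation in \eqref{geodesic eq}, so the whole $F$--dependent integral vanishes \emph{for every} $F$---bypasses this computation entirely. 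This is a legitimate shortcut the paper does not take.

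For the $\Phi$--variation, the paper works in Eulerian coordinates: it identifies the first variation with the continuity--equation residual plus the terminal mismatch $\rho_b-\hat\rho(\cdot,1)$, and then invokes Lemmas~\ref{con eq}--\ref{continuity eq lemma} to kill both. You instead stay in Lagrangian coordinates and telescope: using $\nabla H(\nabla\Phi^*(X_t^{F^*},t))=F^*(x)$ (exactly Lemma~\ref{vf of HJ}, which you recover from Theorem~\ref{transport along geod}), the perturbation integrand becomes $-\tfrac{d}{dt}\eta(X_t^{F^*},t)$ and the resulting endpoint terms cancel against the boundary variation via \eqref{Phi0, Phi1 as monge maps}. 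Both routes are equivalent; yours avoids the detour through the continuity equation.

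For the value $\mathcal{L}(F^*,\Phi^*)$, the paper plugs into its integrated-by-parts expression and invokes Lemma~\ref{simplify wasserstein}. You instead compute the trajectory derivative $\tfrac{d}{dt}\Phi^*(X_t^{F^*},t)=L(v^*)$ via the Legendre identity \eqref{Legendre} and integrate, which is the Hopf--Lax viewpoint and gives the same answer. In short: the paper separates the work into four preparatory lemmas and then verifies the variations termwise, while you exploit the Hamilton--Jacobi equation and the straight-line characteristics more aggressively to collapse the computations.
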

Theorem \ref{main} shows that the optimal solution of Dynamical OT is also a critical point of the functional used in our saddle point optimization scheme \eqref{chosen scheme}. The optimal value of \eqref{chosen scheme} is exactly the optimal transport distance. The theorem is proved in the supplemental materials (Appendix B).

\subsection{Bidirectional dynamical formulation}
To improve the stability and avoid local traps in the training processing, we propose a {\it bidirectional} scheme by exploiting the symmetric status of $
\rho_a$ and $\rho_b$ in \eqref{Static OT}.  
Let's consider two OT problems
\begin{equation*}
    \min_{{F}}\max_{\Phi_F} ~ \mathcal{L}^{ab}({F}, \Phi_F), \quad 
  \min_{ {G} }\max_{\Phi_G} ~ \mathcal{L}^{ba}({G}, \Phi_G),
\end{equation*}
where $\mathcal{L}^{ab}$ is defined in \eqref{chosen scheme}, and $\mathcal{L}^{ba}$ is defined by switching $\rho_a$ and $\rho_b$ in \eqref{chosen scheme}. Clearly, the first one is for $W(\rho_a,\rho_b)$ while the second one is for $W(\rho_b, \rho_a)$.
When reaching optima, the vector fields $F$ and $G$ are transport vectors in the opposite directions. At a specific point $x\in\mathbb{R}^d$, moving along straight line in the direction ${F}$ ends up at $x+{F}(x)$. The direction of ${G}$ at $x+{F}(x)$ must point to the opposite direction of ${F}(x)$, which leads to ${G}(x+{F}(x))=-{F}(x)$. Similarly, we also have ${F}(x+{G}(x))=-{G}(x)$. Thus we introduce two constraints for ${F}$ and ${G}$
\begin{align*}
\begin{split}
 \mathcal{R}^{ab}({F}, {G}) = & \int |{G}(x+{F}(x))+{F}(x)|^2~\rho_a(x)~dx, \\
 \mathcal{R}^{ba}({F}, {G}) = & \int |{F}(x+{G}(x))+{G}(x)|^2\rho_b(x)~dx.
\end{split}
\end{align*}
Our final saddle-point problem becomes
\begin{align}
  \label{bidirection}
  \begin{split}
  \min_{ {F}, {G} }\max_{\Phi_F,\Phi_G} ~& \mathcal{L}^{ab}({F}, \Phi_F) + \mathcal{L}^{ba}({G}, \Phi_G) + \lambda(\mathcal{R}^{ab}({F}, {G}) + \mathcal{R}^{ba}({F}, {G})),
  \end{split}
\end{align}
where $\lambda$ is a tunable coefficient of our constraint terms. 

\subsection{Overview of the algorithm}
To solve \eqref{bidirection}, we propose an algorithm that is summarized in the following steps. Please also check its detailed discussions in Appendix A.
\begin{itemize}[leftmargin=*]
  \item \textbf{Preconditioning}  We can apply preconditioning techniques to 2-Wasserstein cases in order to make our computation more efficient.
  \item \textbf{Main Algorithm} We set ${F}_{\theta_1}, {G}_{\theta_2}$ and $\Phi^F_{\omega_1},\Phi^G_{\omega_2}$ as fully connected neural networks and optimize over their parameters $\omega_1,\omega_2$ and $\theta_1,\theta_2$ alternatively via stochastic gradient ascend and descend.
    \item \textbf{Stopping Criteria} When computed ${F}$ (or ${G}$) is close to the optimal solution, the Wasserstein distance $W(\rho_a,\rho_b)$ (or $W(\rho_b, \rho_a)$) can be approximated by 
    \begin{equation*}
      \widehat{W}^{ab} = \int L({F}(x))~\rho_a(x)~dx,  ~ \widehat{W}^{ba} = \int L({G}(x))~\rho_b(x)~dx.
    \end{equation*}
    For a chosen threshold $\epsilon>0$, we treat $|\widehat{W}^{ab} - \widehat{W}^{ba}|<\epsilon$ as the stopping criteria of our algorithm.
\end{itemize}

\section{Experiments}
\textbf{Experiment Setup:}
We test our algorithm through a series of synthetic data sets and compare our numerical results with the ground truths (only for the Gaussian cases) or with the computational methods introduced in the Python library (Python Optimal Transport (POT)) \citep{JMLR:v22:20-451}. We also test our algorithm for realistic data sets including color transfer \citep{10.1109/38.946629} and transportation between MNIST digits \citep{726791}. 

For low dimensional cases, namely, 2 and 10 dimensional cases, we set $\Phi_F, \Phi_G$ and $F, G$ as fully connected neural networks, where $\Phi_F, \Phi_G$ have 6 hidden layers and $F$ and $G$ have 5 hidden layers. Each layer has 48 nodes, the activation function is chosen as Tanh. For high dimensional cases, where we deal with MNIST handwritten digits data set, we adopt similar structures of neural networks, the only difference is that in each layer we extend the number of nodes from 48 to 512. In terms of training process, for all synthetic and realistic cases we use the Adam optimizer \citep{kingma2014adam} with learning rate $10^{-4}$. Notice that we are computing Wasserstein geodesic, namely, starting with an initial distribution $\rho_{t_0}$, in most cases we generate evolving distributions for next ten time steps, from $t_1=0.1$ to $t_{10}=1$. The cost functions are chosen as $L(v)=|v|^{\frac{3}{2}}$ in Synthetic-2 and $L(v)=|v|^2$ in all other tests. We only show the final state of the generated distribution due to space limitation, more experiments and details are included in the last part of Appendix.

\textbf{Synthetic-1}: This is a 2-dimensional case, we set $\rho_a$ as a standard Gaussian distribution $N(\mu_0,\Sigma_0)$ while $\rho_b$ as six surrounding Gaussian distributions with the same $\Sigma_0$. In Figure \ref{fig:syn-1}, we show the generated distribution that follows $\rho_b$ as well as the one that follows $\rho_a$. We also show the start-end tracks of points and vector field.

\vspace{-0.2em}
\textbf{Synthetic-2}: In this 5-dimensional case we treat $\rho_a$ and $\rho_b$ both as two Gaussian distributions. We show the results of two dimensional projection in Figure \ref{fig:syn-2}.
\vspace{-0.2em}

\textbf{Synthetic-3}: As a 10-dimensional case, here we set $\rho_a$ as a standard Gaussian distribution and $\rho_b$ as a special distribution where samples are unevenly distributed around four corners. We show the results of two dimensional projection in Figure \ref{fig:syn-3}.

\vspace{-0.2em}

\textbf{Training and Results}: For synthetic data sets, in the training process we set the batch size $N_t=2000$ and sample size for prediction $N_p=1000$. From Figures \ref{fig:syn-1}, \ref{fig:syn-2} and \ref{fig:syn-3} we see that in all cases, within various dimensional settings, the generated samples closely follow the ground-truth distributions.

\vspace{-0.2em}

\textbf{Comparisons}: We now compare the numerical results listed in the Synthetic cases with either the ground truth (Synthetic-2) or with the results computed by POT (Synthetic-1, Synthetic-3) in Figure \ref{fig:com}. Here all the examples are computed under quadratic cost $L(v)=|v|^2/2$.

\begin{figure}[t!]
\centering
\subfloat[][]{\includegraphics[width=.125\linewidth]{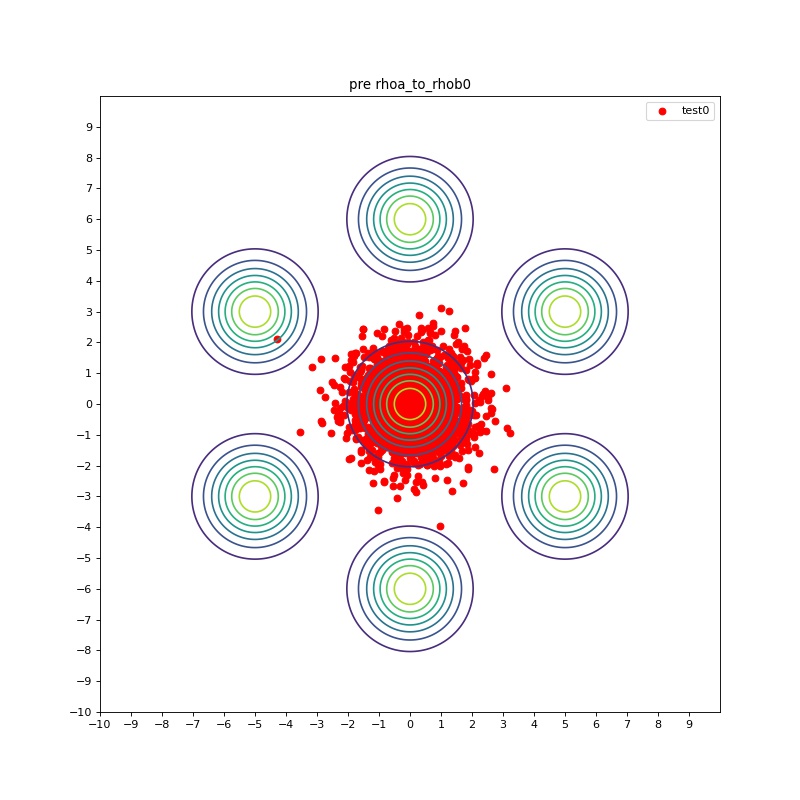}}
\subfloat[][]{\includegraphics[width=.125\linewidth]{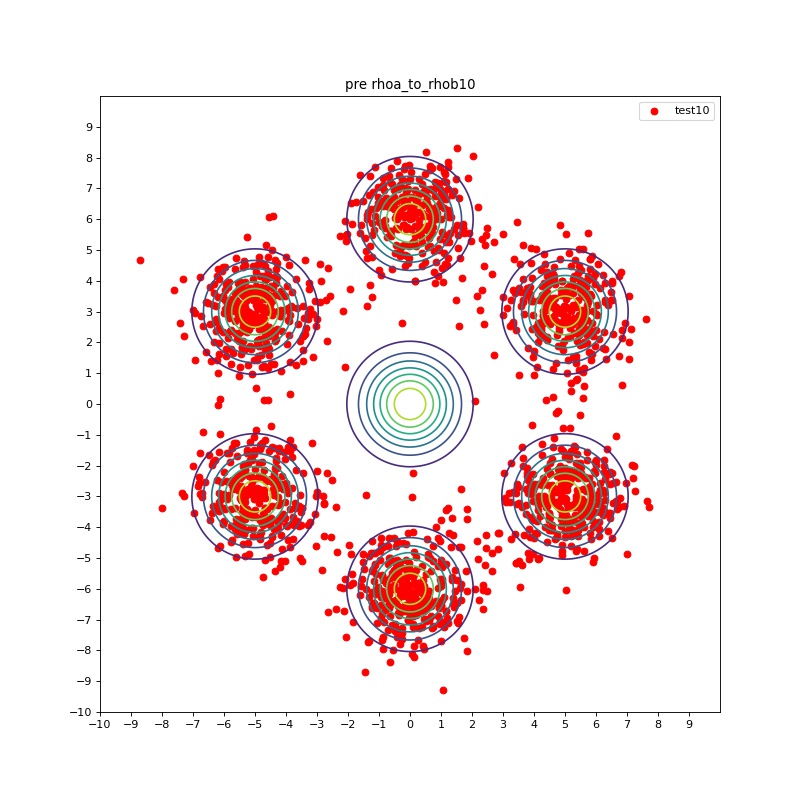}}
\subfloat[][]{\includegraphics[width=.125\linewidth]{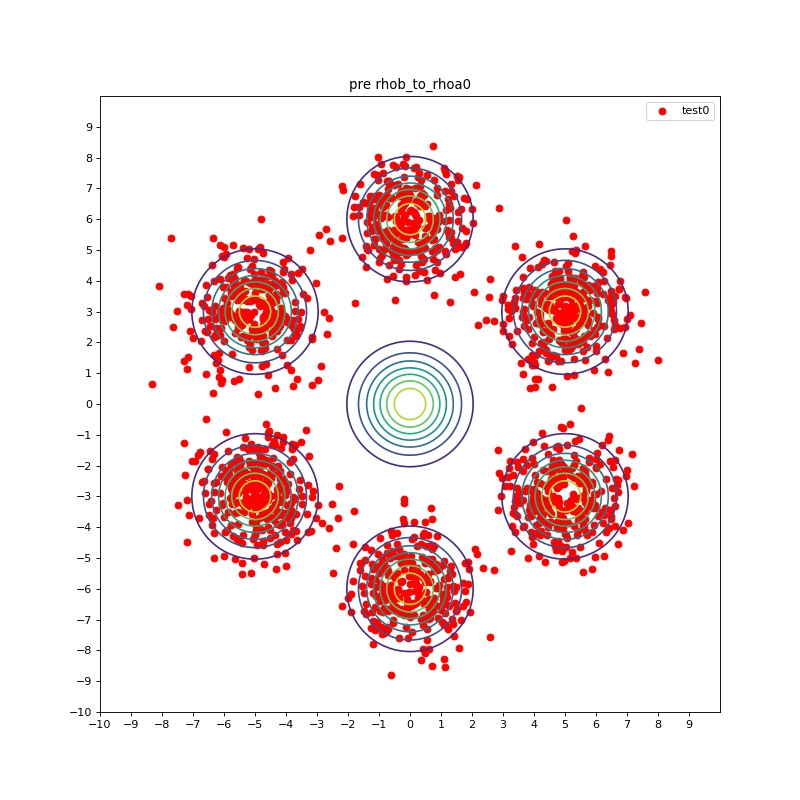}}
\subfloat[][]{\includegraphics[width=.125\linewidth]{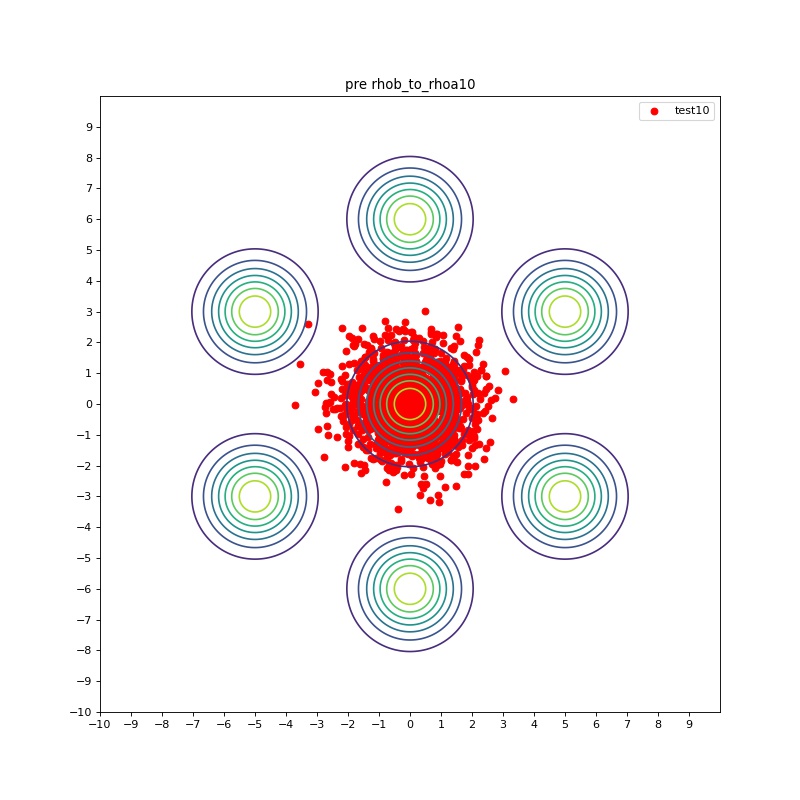}}
\subfloat[][]{\includegraphics[width=.125\linewidth]{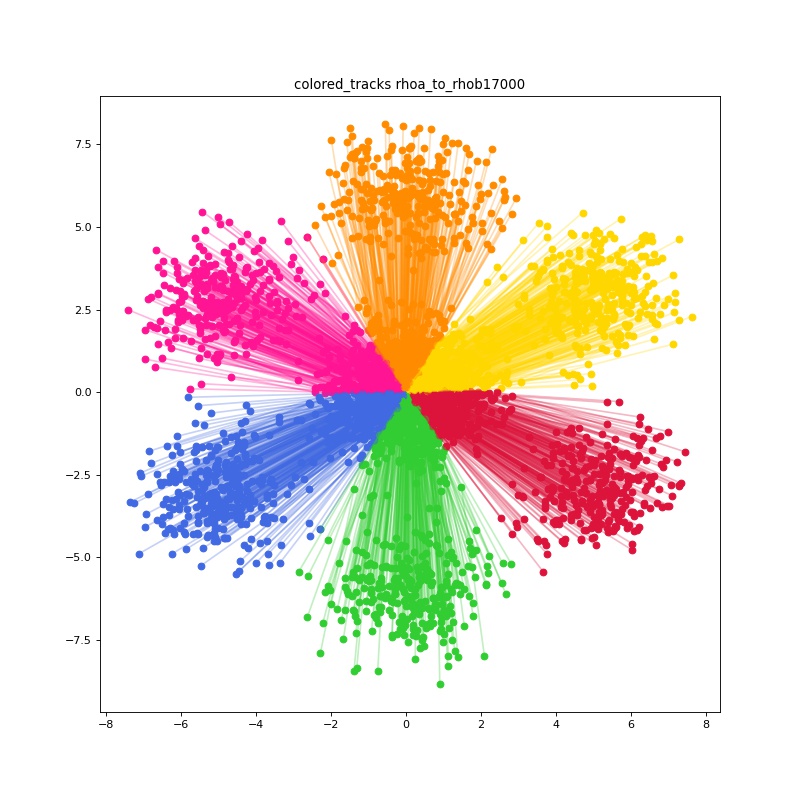}}
\subfloat[][]{\includegraphics[width=.125\linewidth]{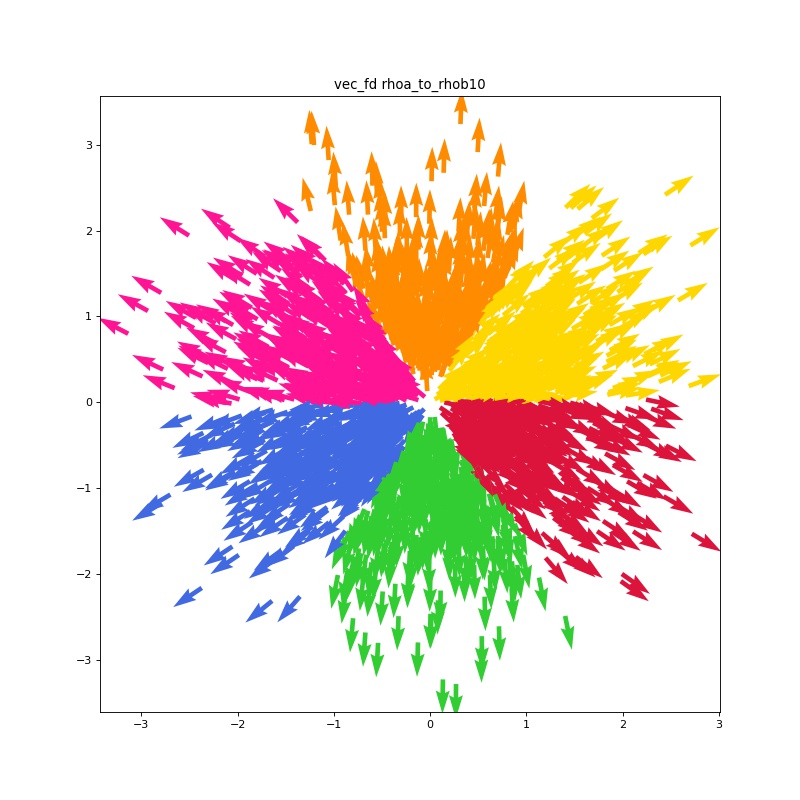}}
\subfloat[][]{\includegraphics[width=.125\linewidth]{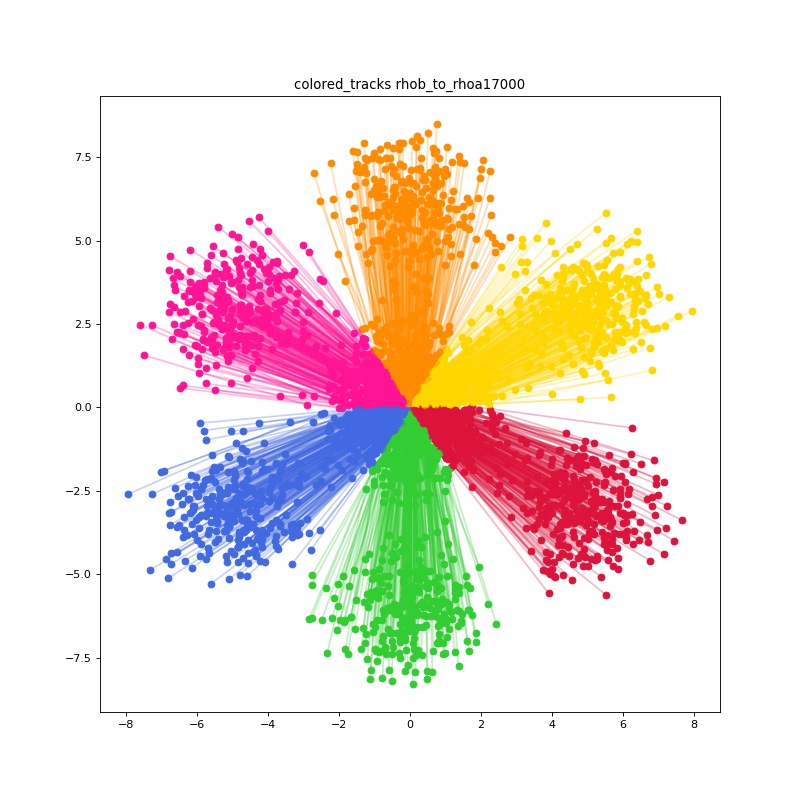}}
\subfloat[][]{\includegraphics[width=.125\linewidth]{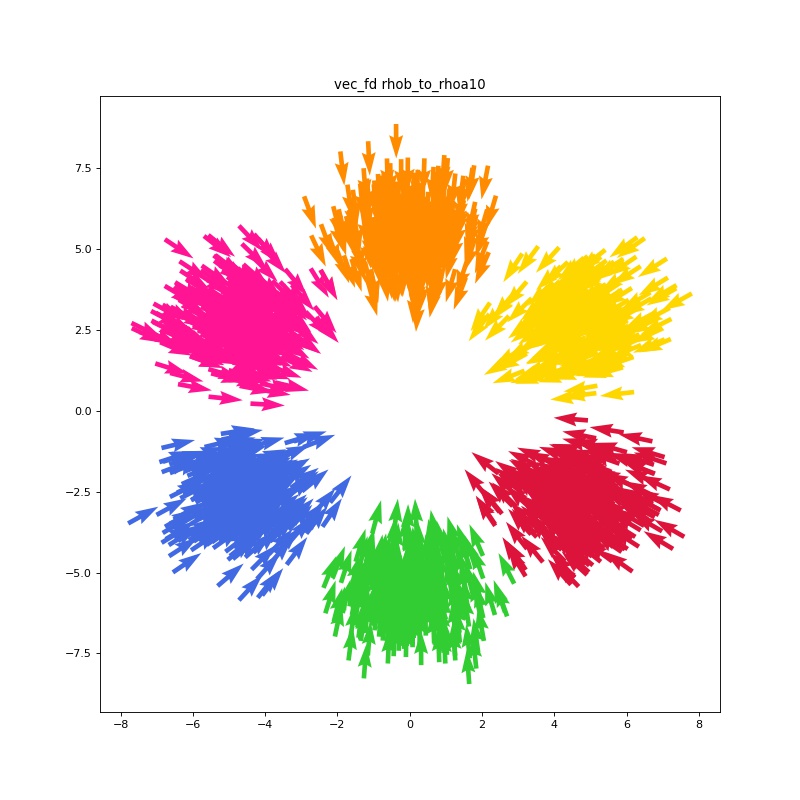}}
\caption{Syn-1. (a)(b) true $\rho_a$ and generated $\rho_b$, (c)(d) true $\rho_b$ and generated $\rho_a$, (e)(g) tracks of sample points from $\rho_a$($\rho_b$) to $\rho_b$($\rho_a$), (f)(h) vector fields from $\rho_a$($\rho_b$) to $\rho_b$($\rho_a$).}
\label{fig:syn-1}
\vspace{-2em}
\end{figure}

\vspace{-0.2em}

\begin{figure}[t!]
\subfloat[][]{\includegraphics[width=.125\linewidth]{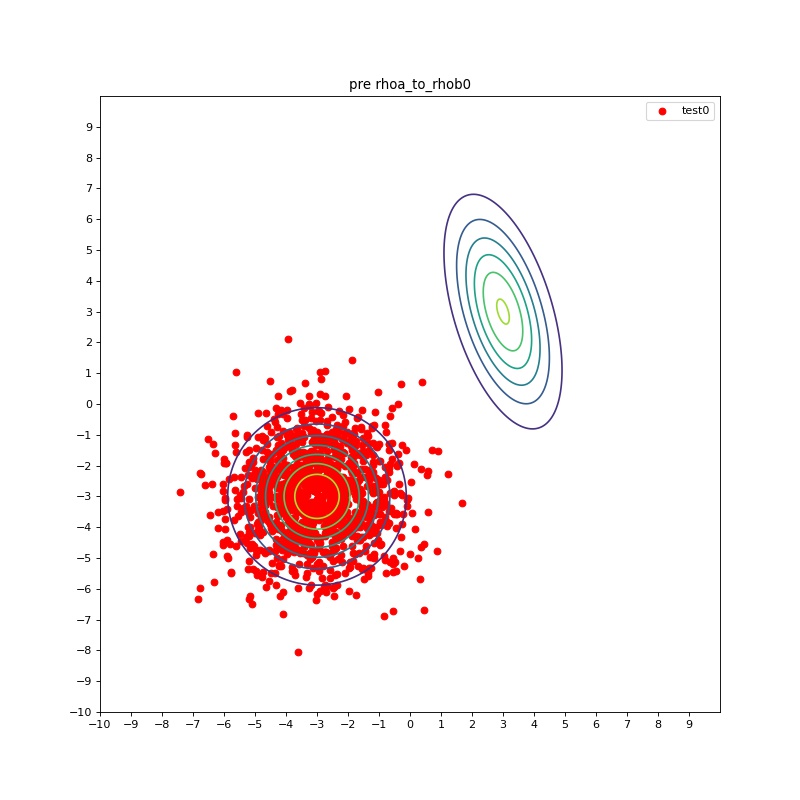}}
\subfloat[][]{\includegraphics[width=.125\linewidth]{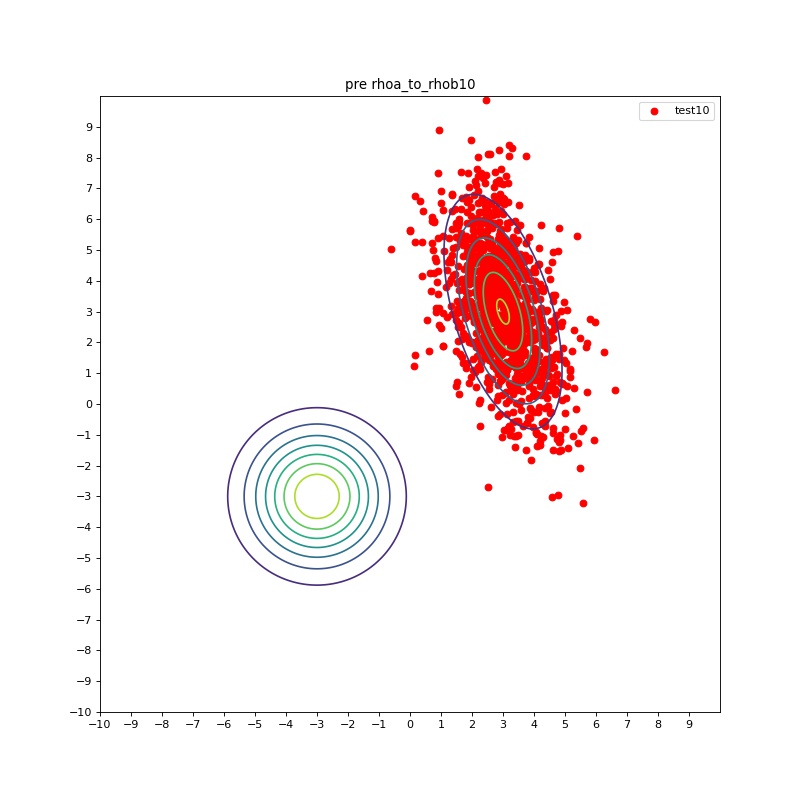}}
\subfloat[][]{\includegraphics[width=.125\linewidth]{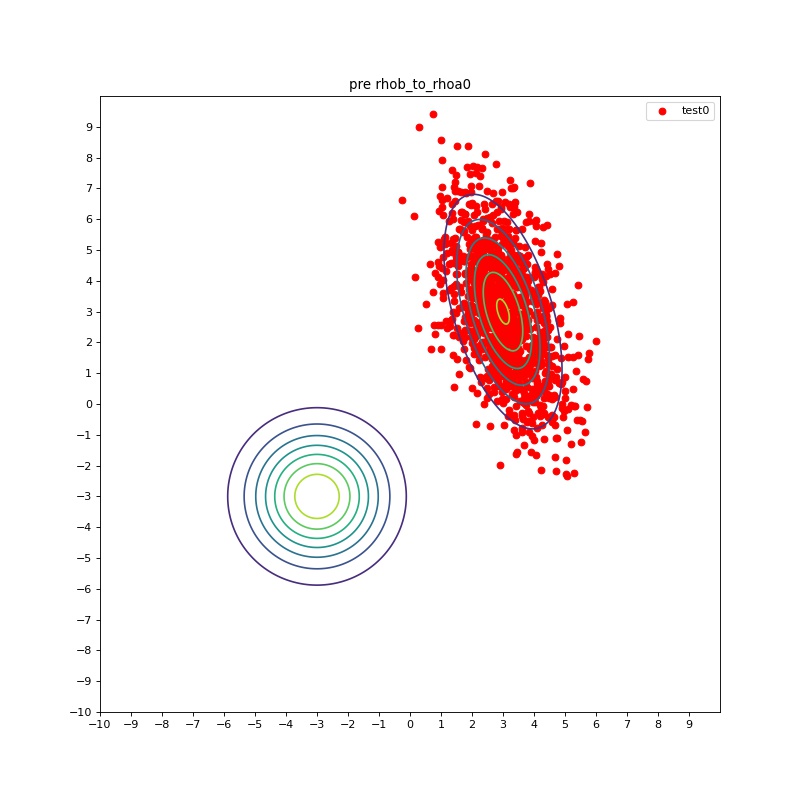}}
\subfloat[][]{\includegraphics[width=.125\linewidth]{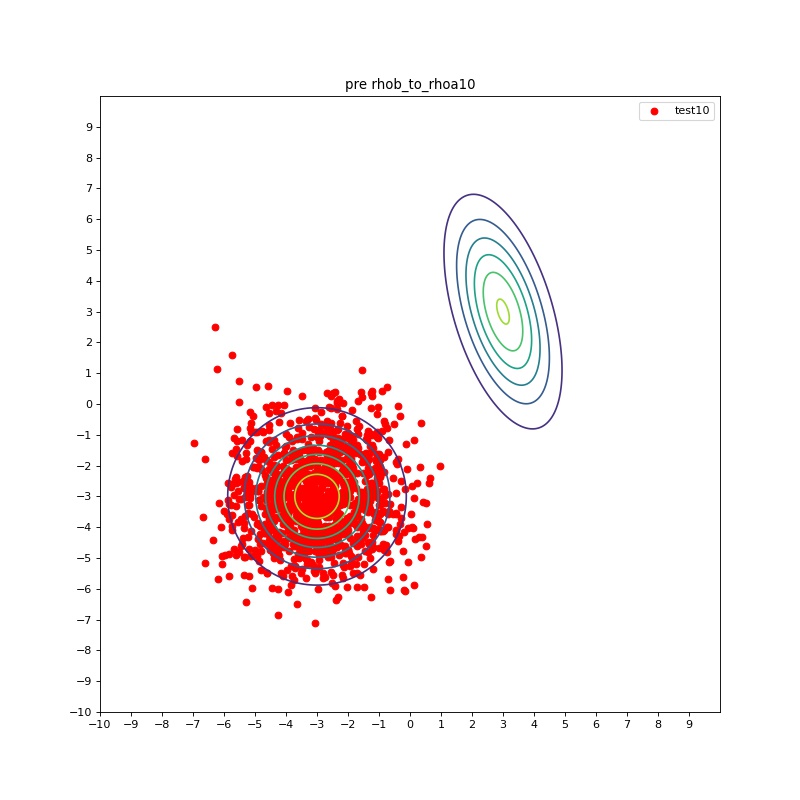}}
\subfloat[][]{\includegraphics[width=.125\linewidth]{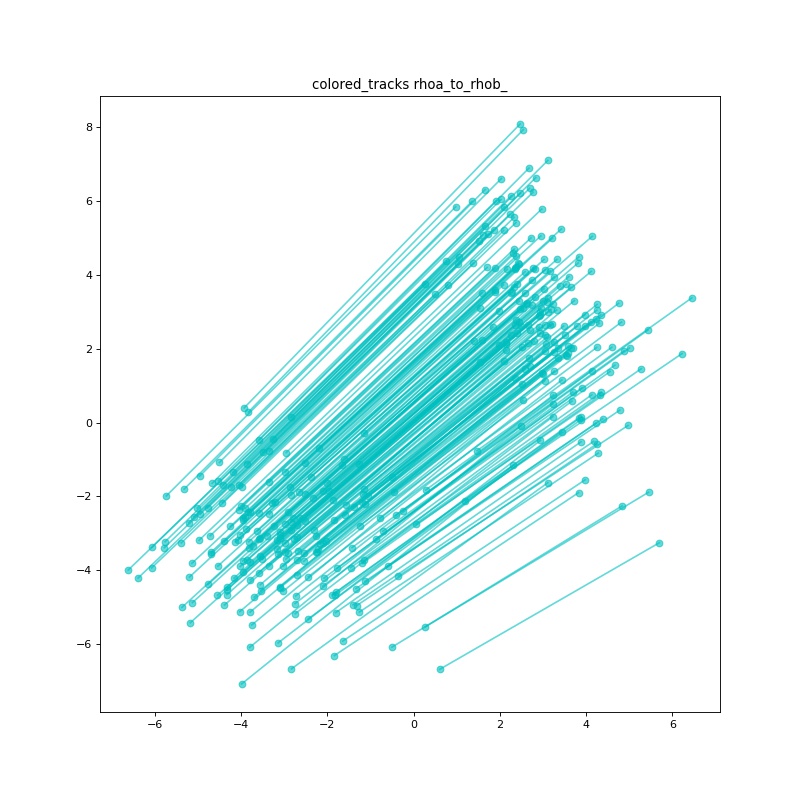}}
\subfloat[][]{\includegraphics[width=.125\linewidth]{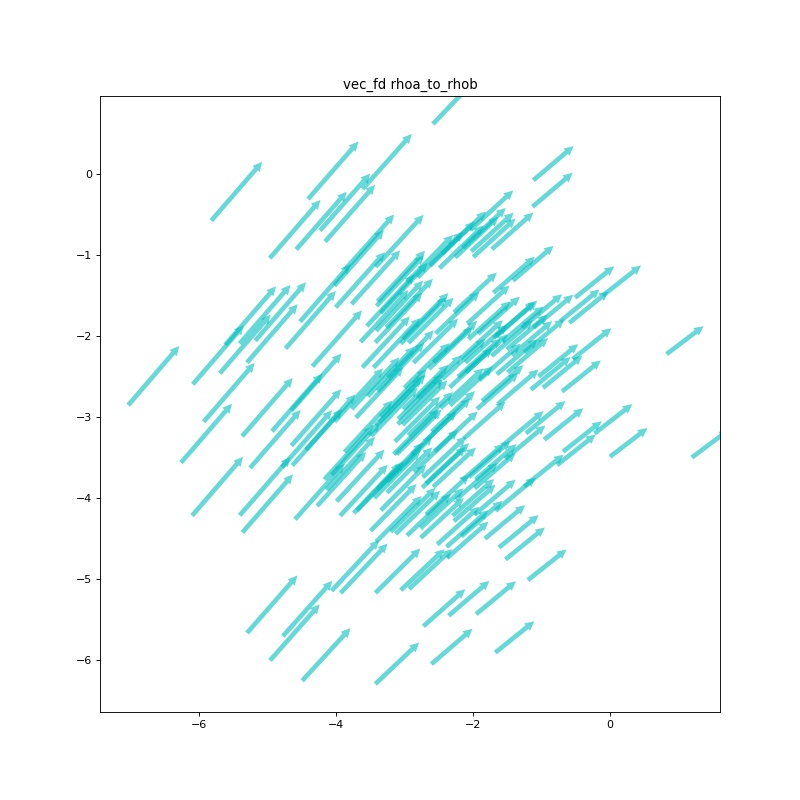}}
\subfloat[][]{\includegraphics[width=.125\linewidth]{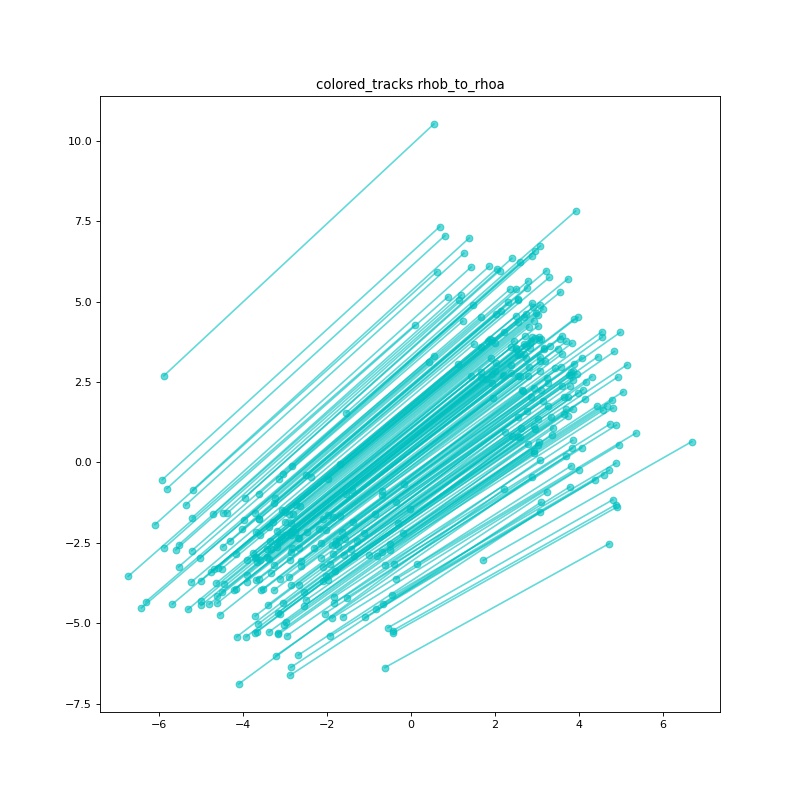}}
\subfloat[][]{\includegraphics[width=.125\linewidth]{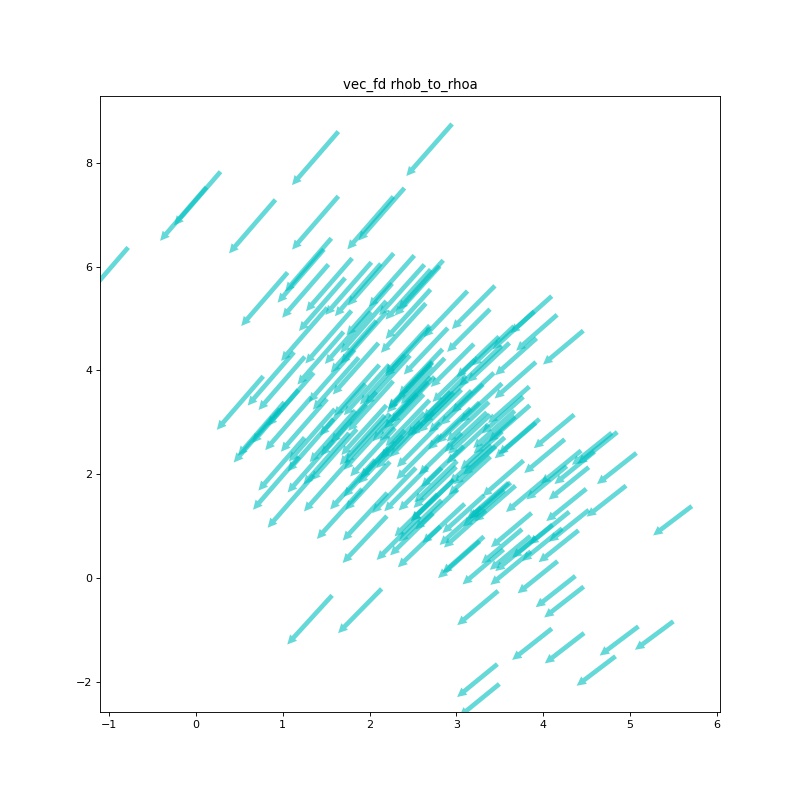}}
\caption{Syn-2. (a)(b) true $\rho_a$ and generated $\rho_b$, (c)(d) true $\rho_b$ and generated $\rho_a$, (e)(g) tracks of sample points from $\rho_a$($\rho_b$) to $\rho_b$($\rho_a$), (f)(h) vector fields from $\rho_a$($\rho_b$) to $\rho_b$($\rho_a$).}
\label{fig:syn-2}
\vspace{-2em}
\end{figure}

\begin{figure}[t!]
\subfloat[][]{\includegraphics[width=.125\linewidth]{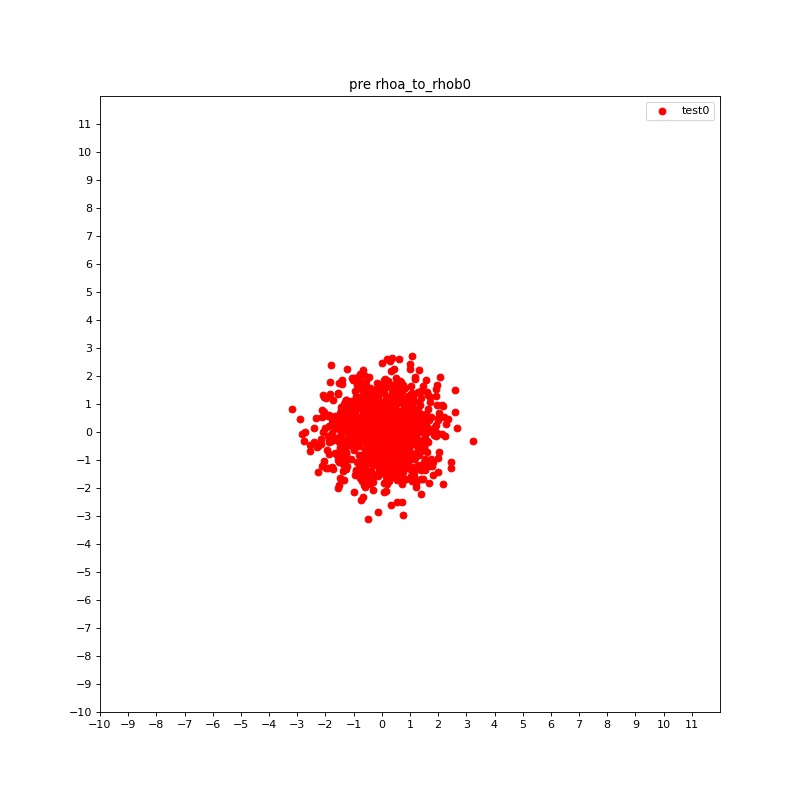}}
\subfloat[][]{\includegraphics[width=.125\linewidth]{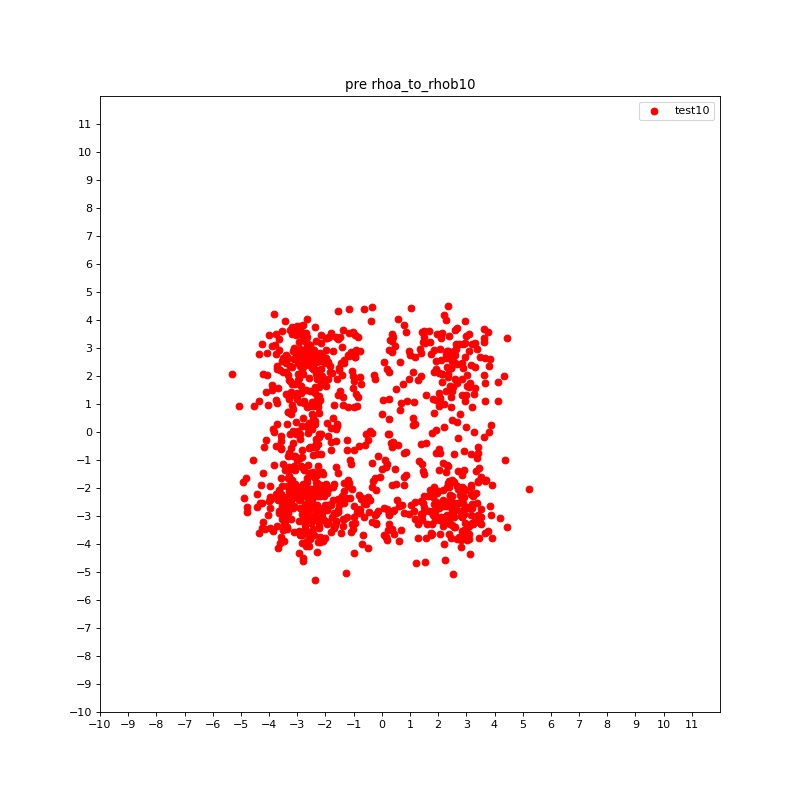}}
\subfloat[][]{\includegraphics[width=.125\linewidth]{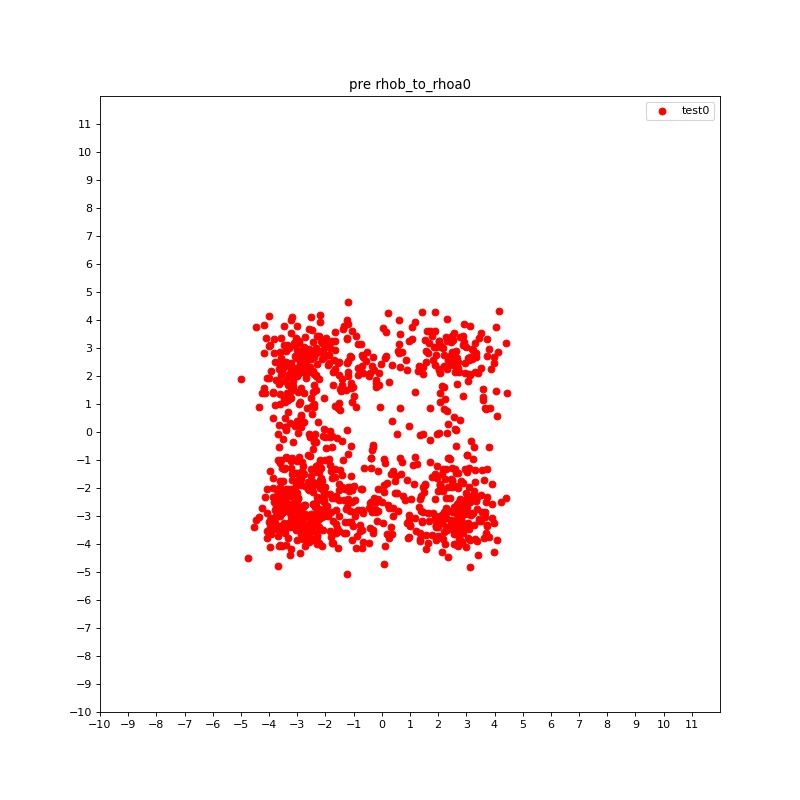}}
\subfloat[][]{\includegraphics[width=.125\linewidth]{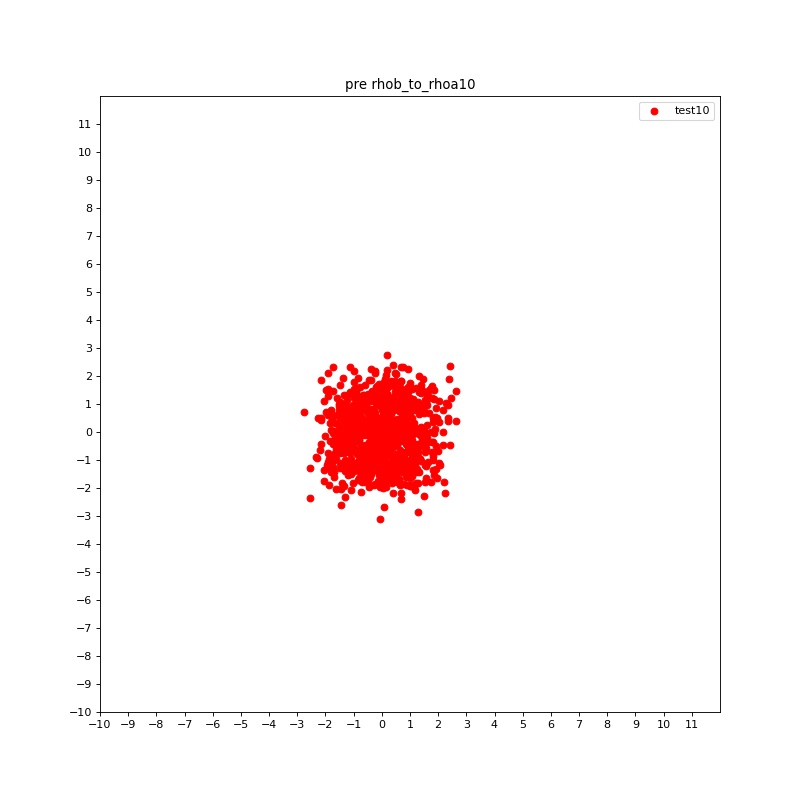}}
\subfloat[][]{\includegraphics[width=.125\linewidth]{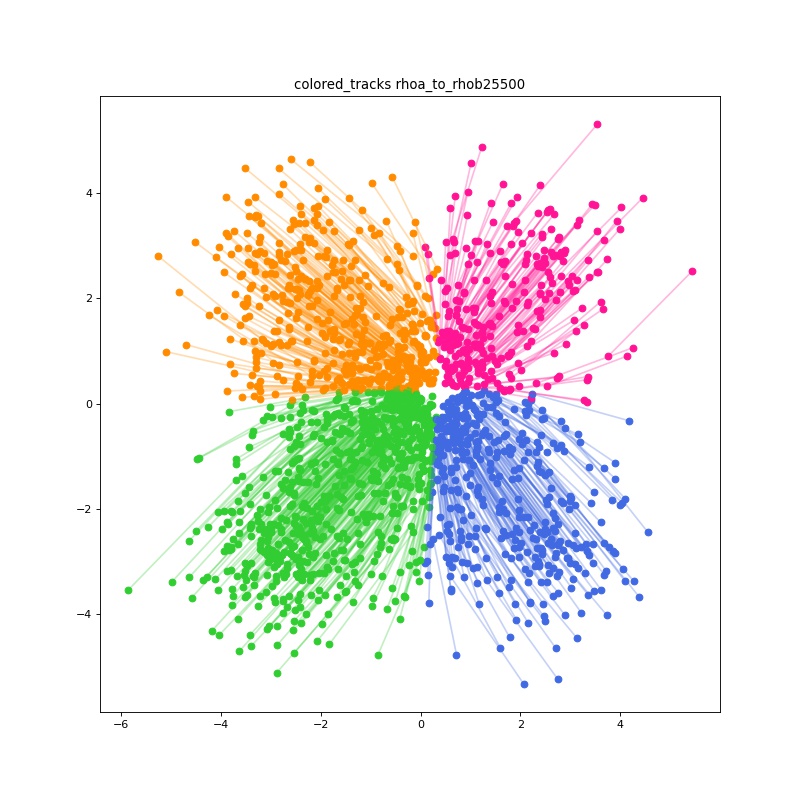}}
\subfloat[][]{\includegraphics[width=.125\linewidth]{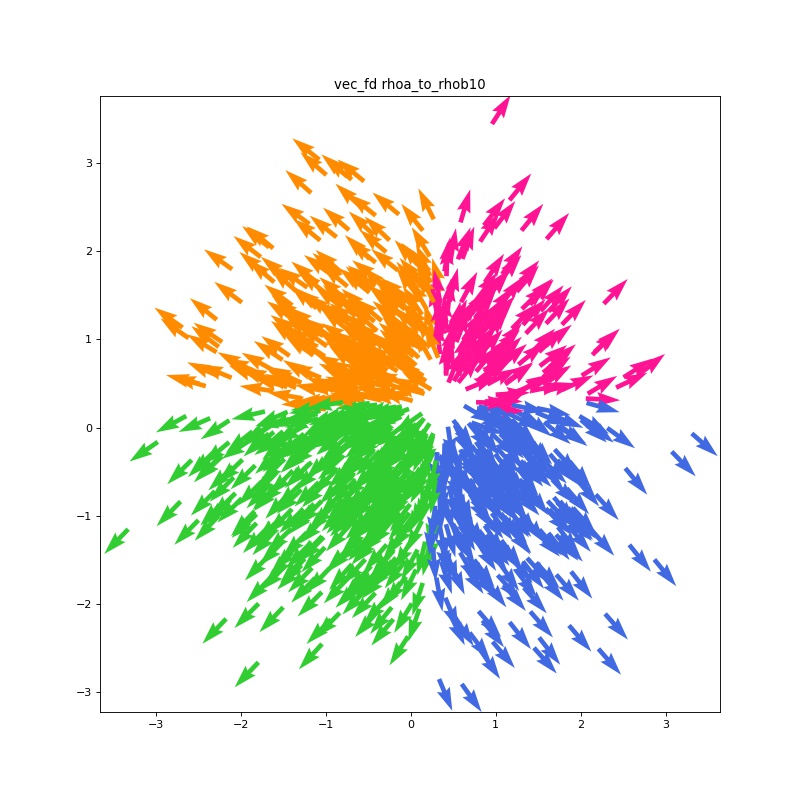}}
\subfloat[][]{\includegraphics[width=.125\linewidth]{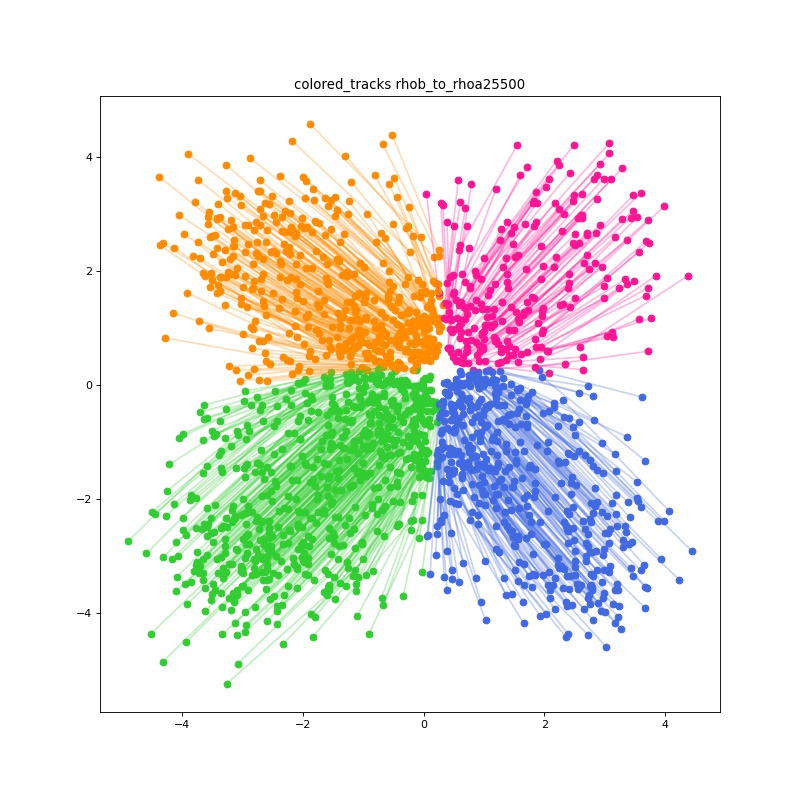}}
\subfloat[][]{\includegraphics[width=.125\linewidth]{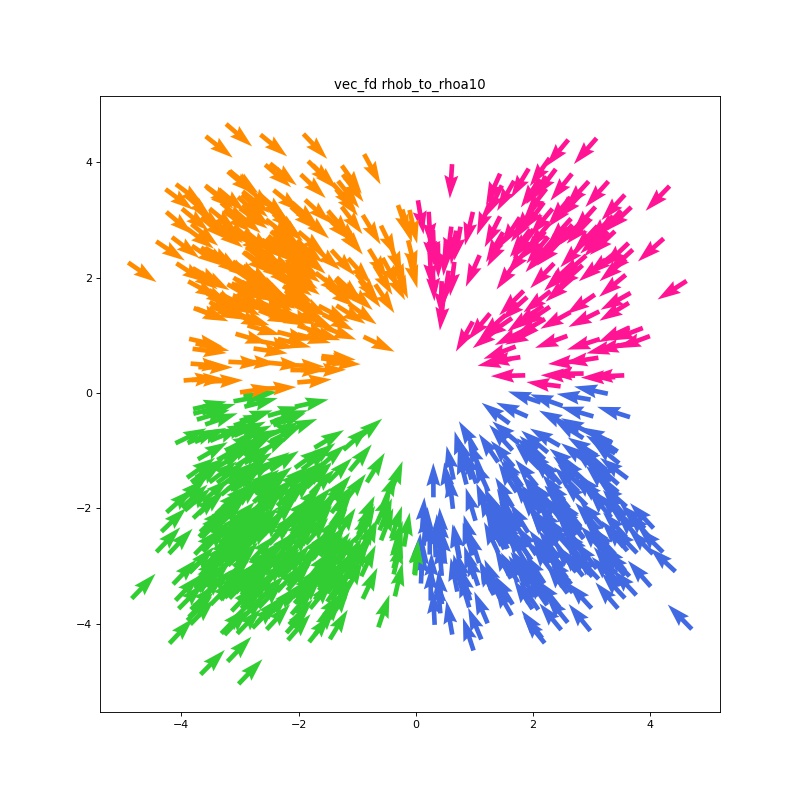}}
\caption{Syn-3. (a)(b) true $\rho_a$ and generated $\rho_b$, (c)(d) true $\rho_b$ and generated $\rho_a$, (e)(g) tracks of sample points from $\rho_a$($\rho_b$) to $\rho_b$($\rho_a$), (f)(h) vector fields from $\rho_a$($\rho_b$) to $\rho_b$($\rho_a$).}
\label{fig:syn-3}
\vspace{-2em}
\end{figure}

\begin{figure}[t!]
\centering
\subfloat[][]{\includegraphics[width=.26\linewidth]{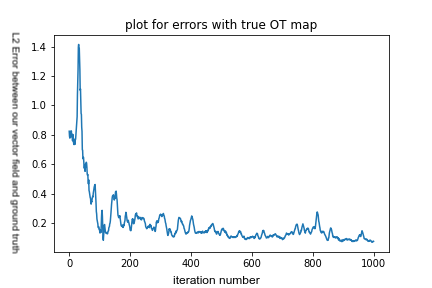}}
\subfloat[][]{\includegraphics[width=.18\linewidth]{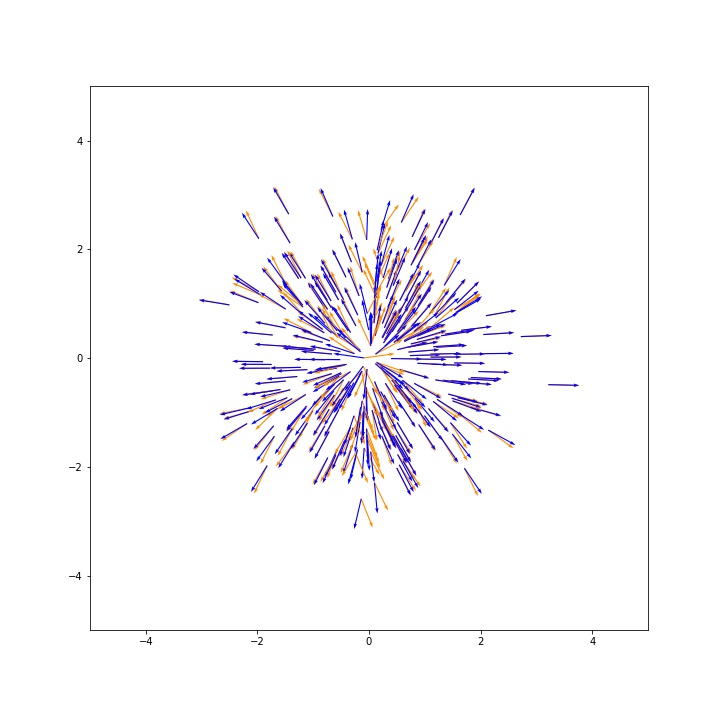}}
\subfloat[][]{\includegraphics[width=.18\linewidth]{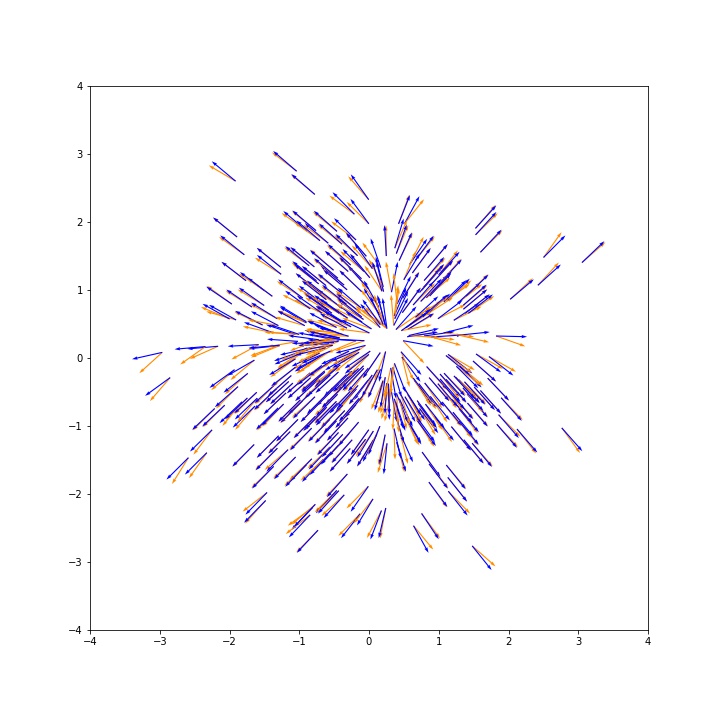}}
\caption{Left: Syn2: $L^2(\rho_a)$ error between our computed $F$ and the real OT map vs iteration number; Middle: Syn1: Plot of our computed $F$ (blue) and the OT vector field computed by POT (orange); Right: Syn3: Plot of our computed $F$ and the OT vector field computed by POT (orange).}
\label{fig:com}
\vspace{-1em}
\end{figure}

\begin{figure}[t!]
\centering
\subfloat[][]{\includegraphics[width=.125\linewidth]{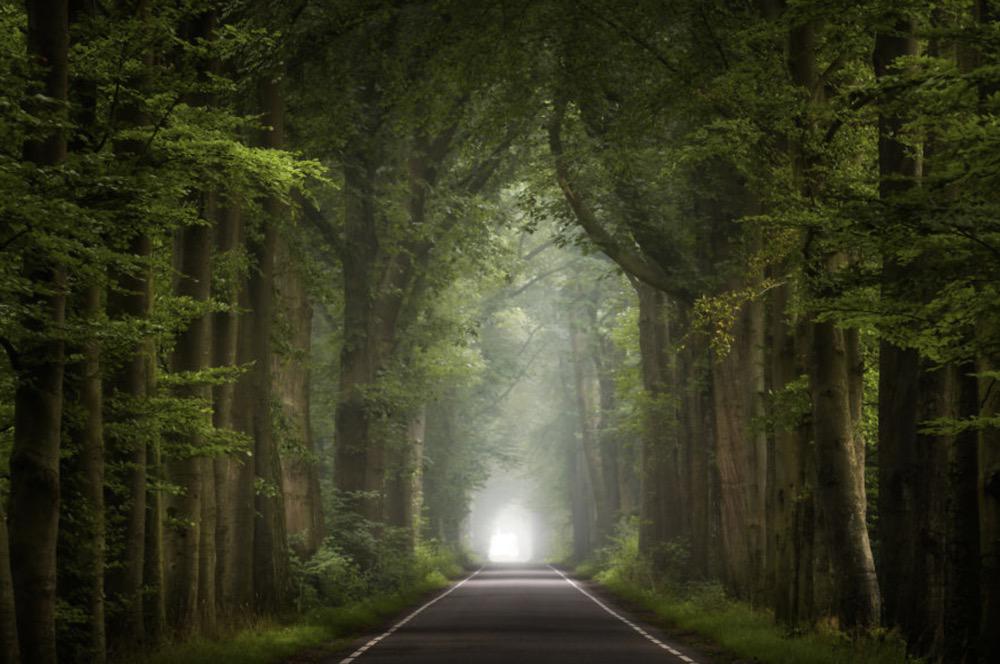}}
\subfloat[][]{\includegraphics[width=.125\linewidth]{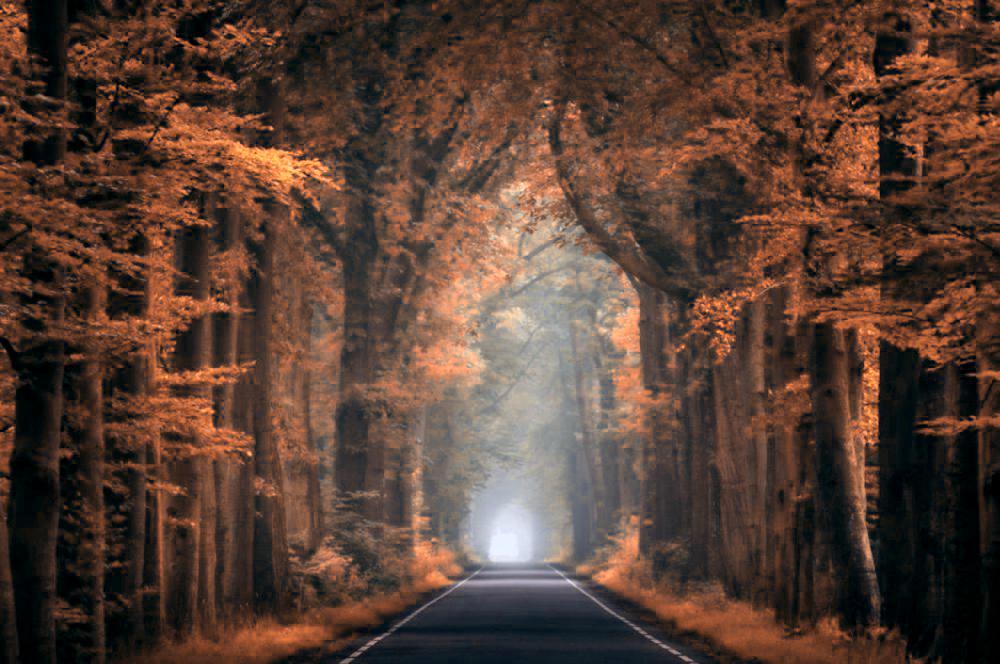}}
\subfloat[][]{\includegraphics[width=.125\linewidth]{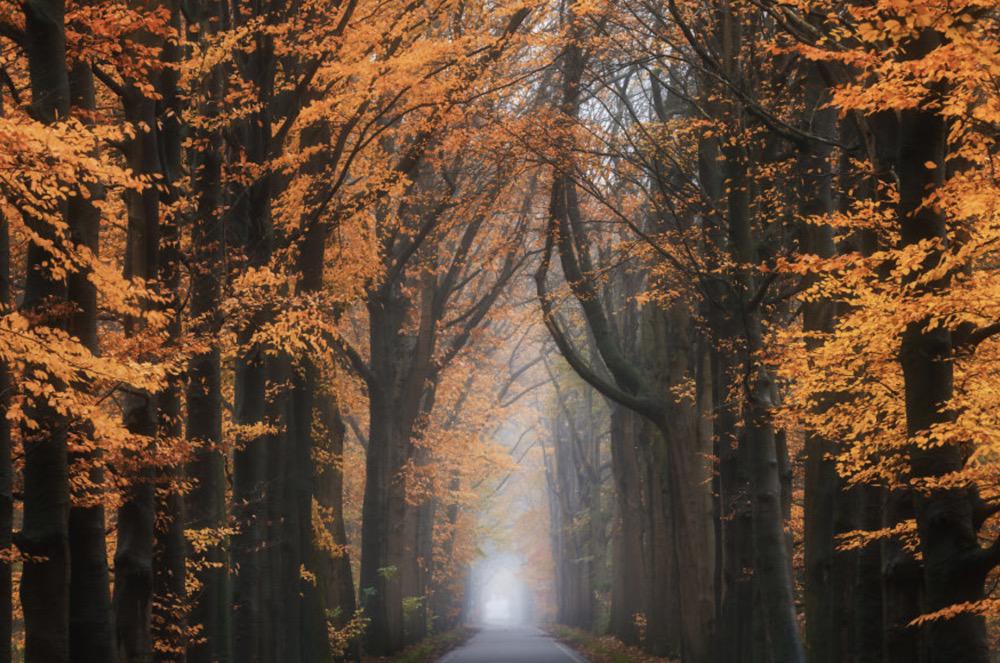}}
\subfloat[][]{\includegraphics[width=.125\linewidth]{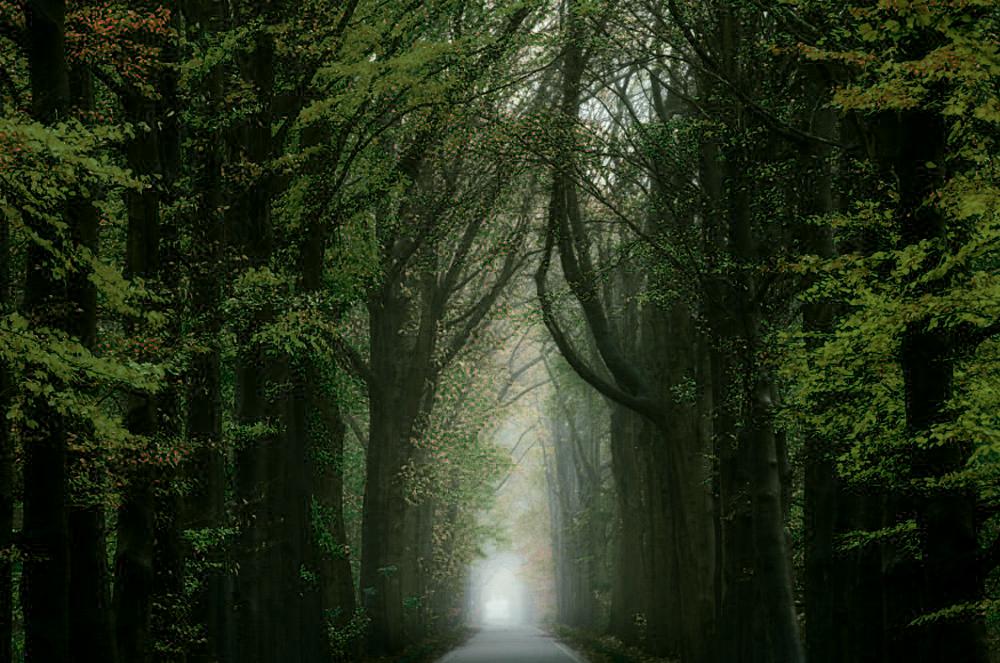}}
\subfloat[][]{\includegraphics[width=.125\linewidth]{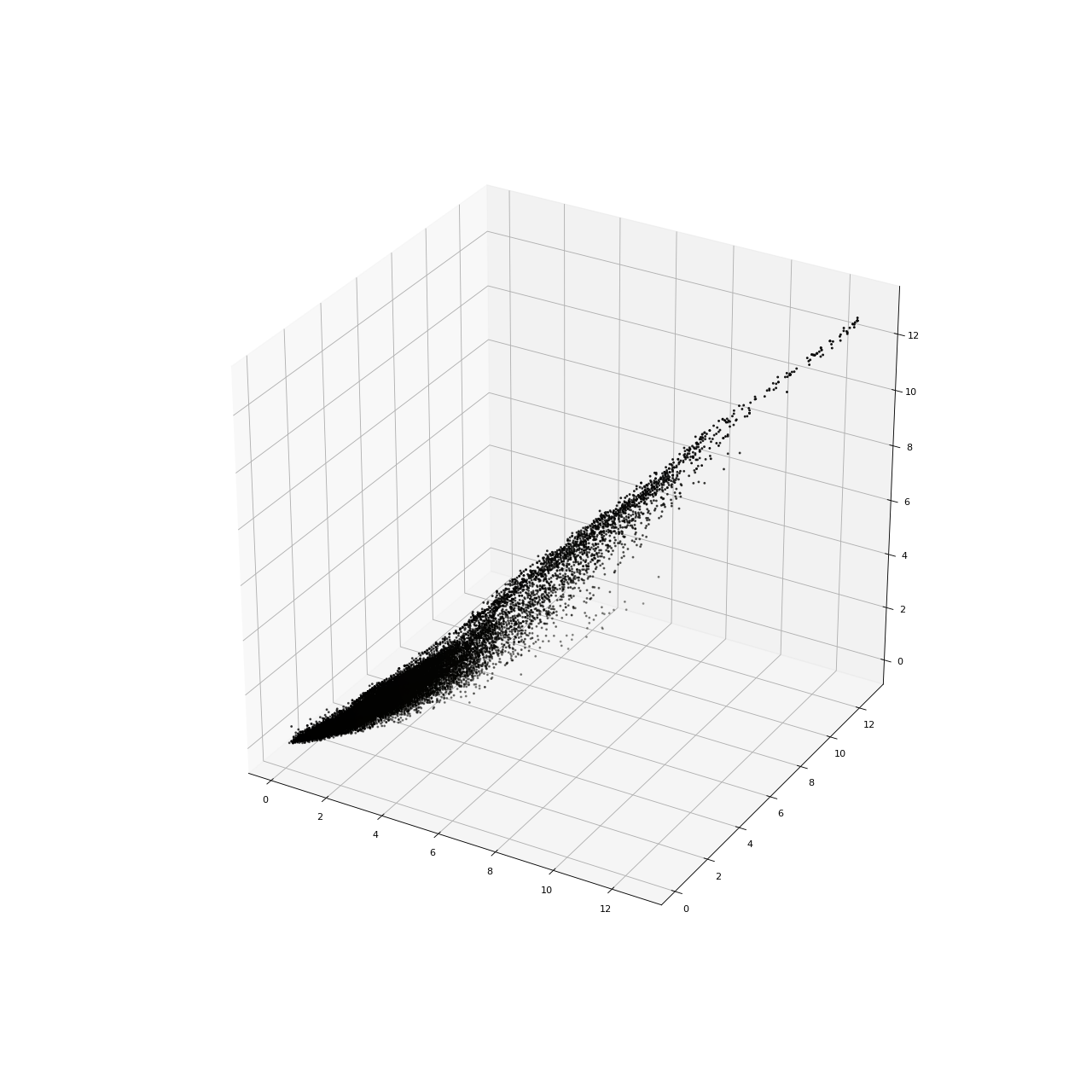}}
\subfloat[][]{\includegraphics[width=.125\linewidth]{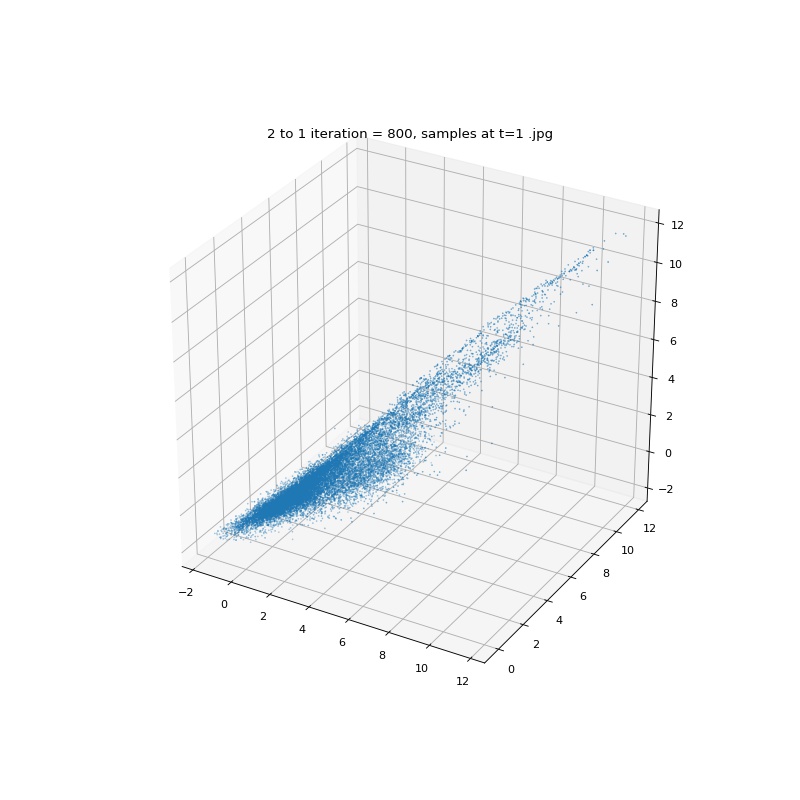}}
\subfloat[][]{\includegraphics[width=.125\linewidth]{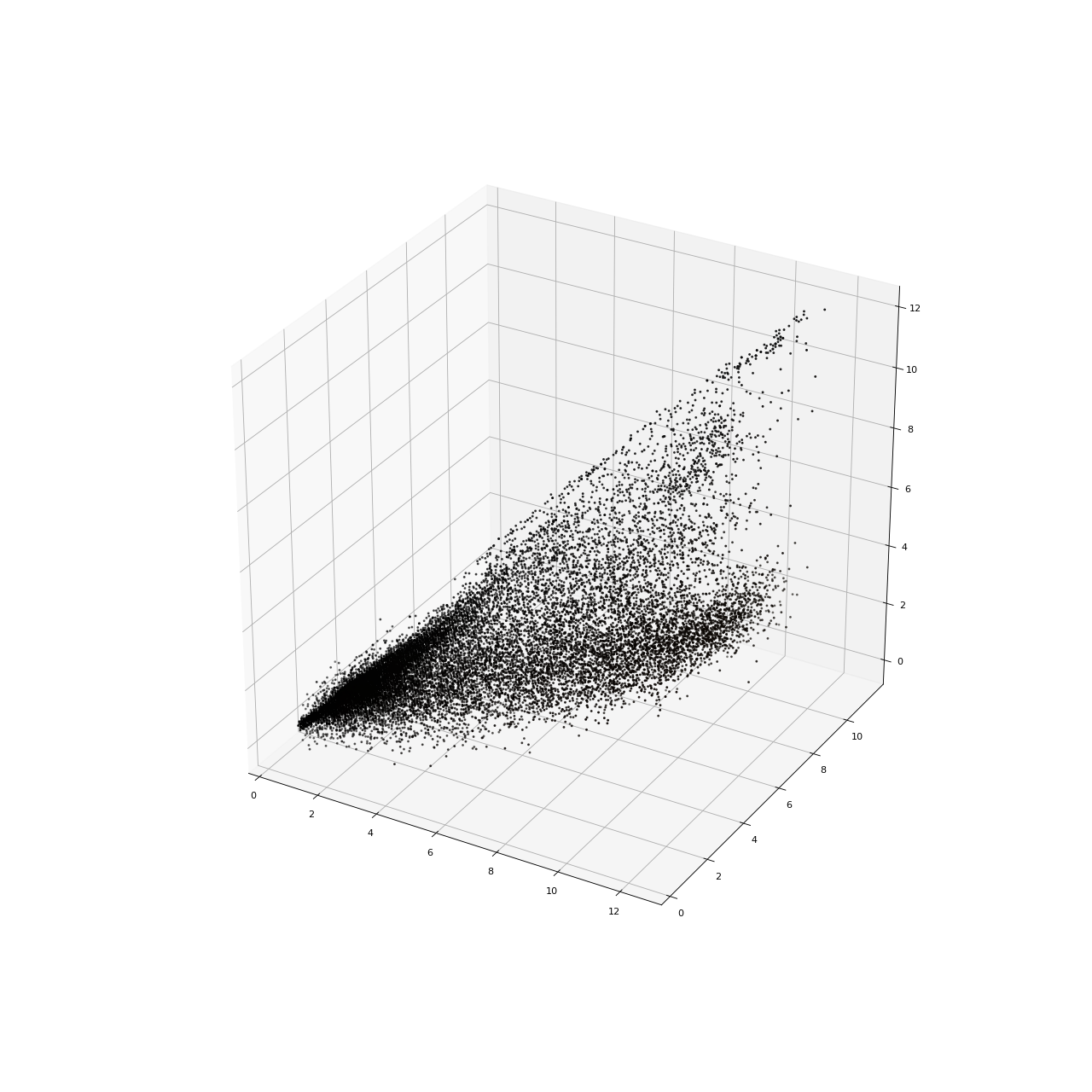}}
\subfloat[][]{\includegraphics[width=.125\linewidth]{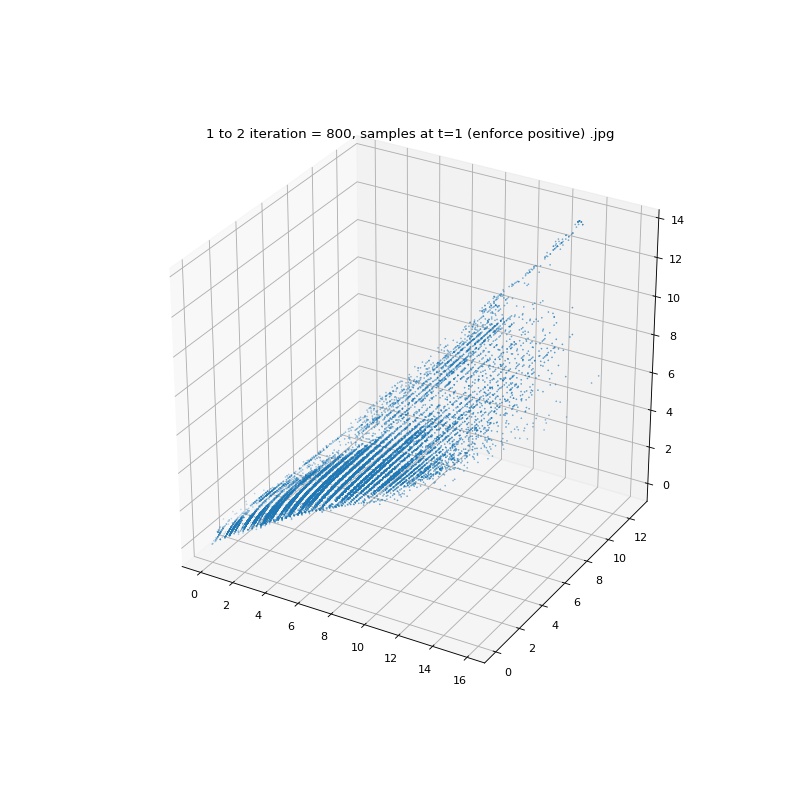}}
\caption{Real-1, Forest views. (a)(c) true summer(autumn) view, (b)(d) generated autumn(summer) view when true summer(autumn) view is given, (e)(g) true palette distribution of summer(autumn) view, (f)(h) generated palette distribution of summer(autumn) view.}
\label{fig:real-1}
\vspace{-2em}
\end{figure}

\begin{figure}[t!]
\centering
\subfloat[][]{\includegraphics[width=.125\linewidth]{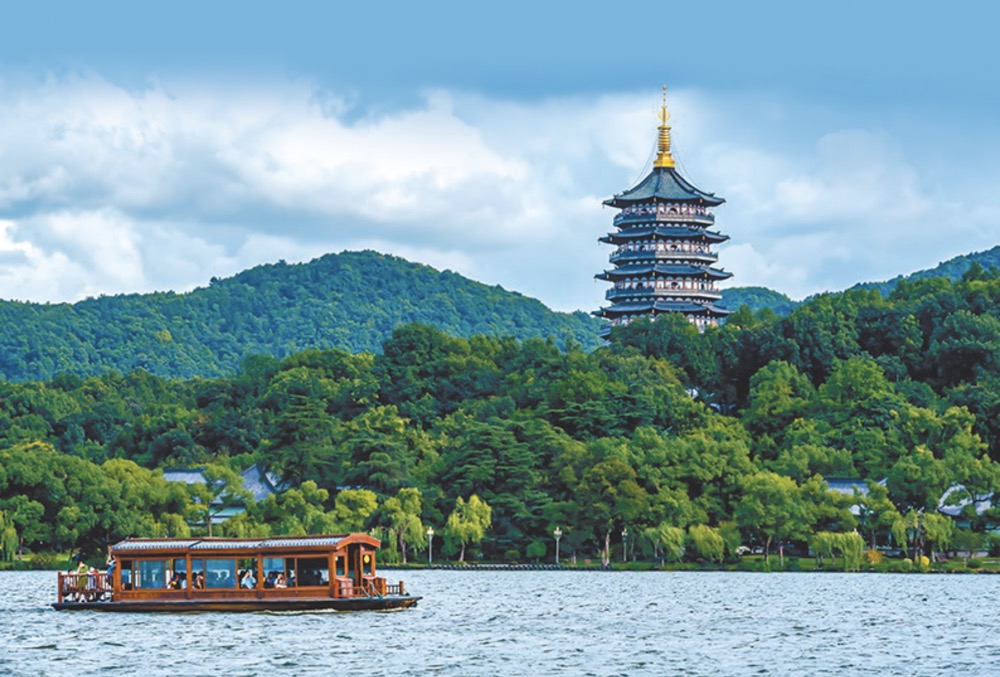}}
\subfloat[][]{\includegraphics[width=.125\linewidth]{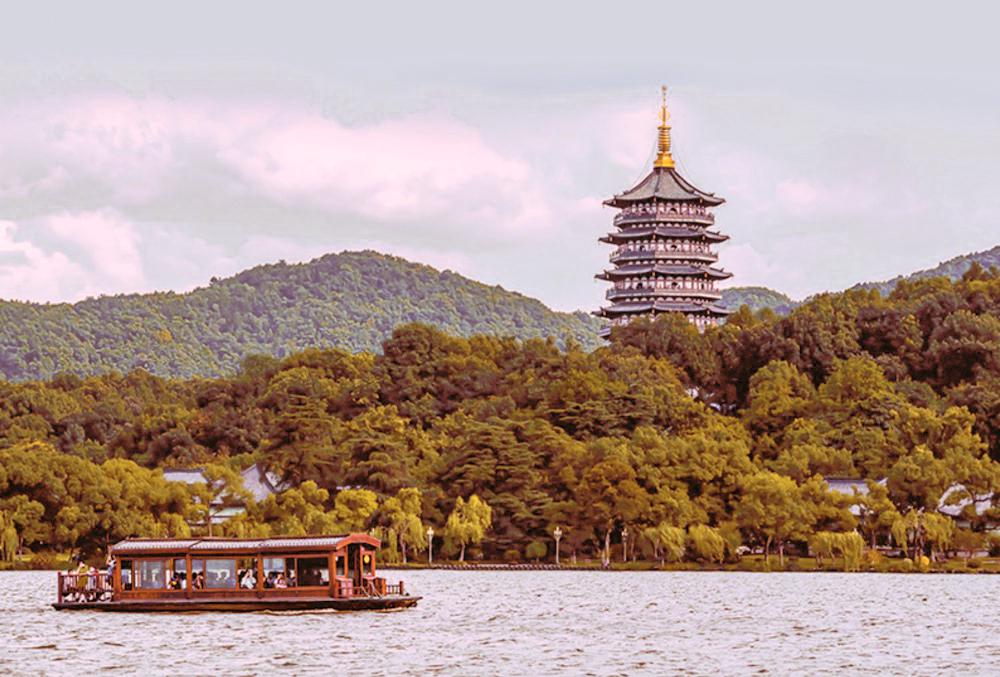}}
\subfloat[][]{\includegraphics[width=.125\linewidth]{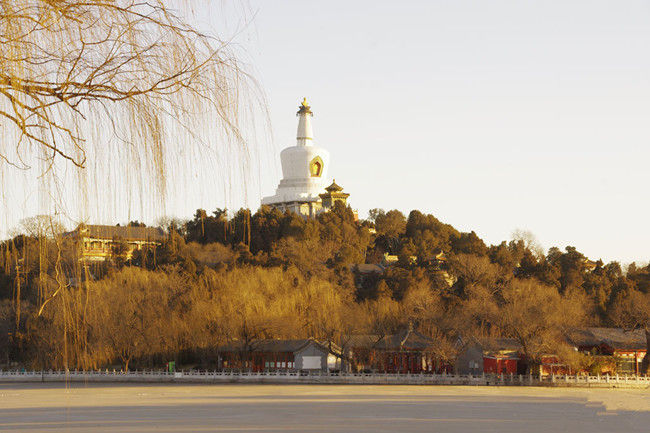}}
\subfloat[][]{\includegraphics[width=.125\linewidth]{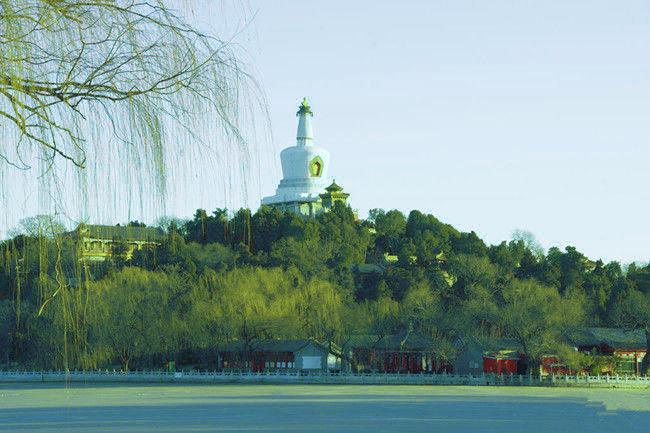}}
\subfloat[][]{\includegraphics[width=.125\linewidth]{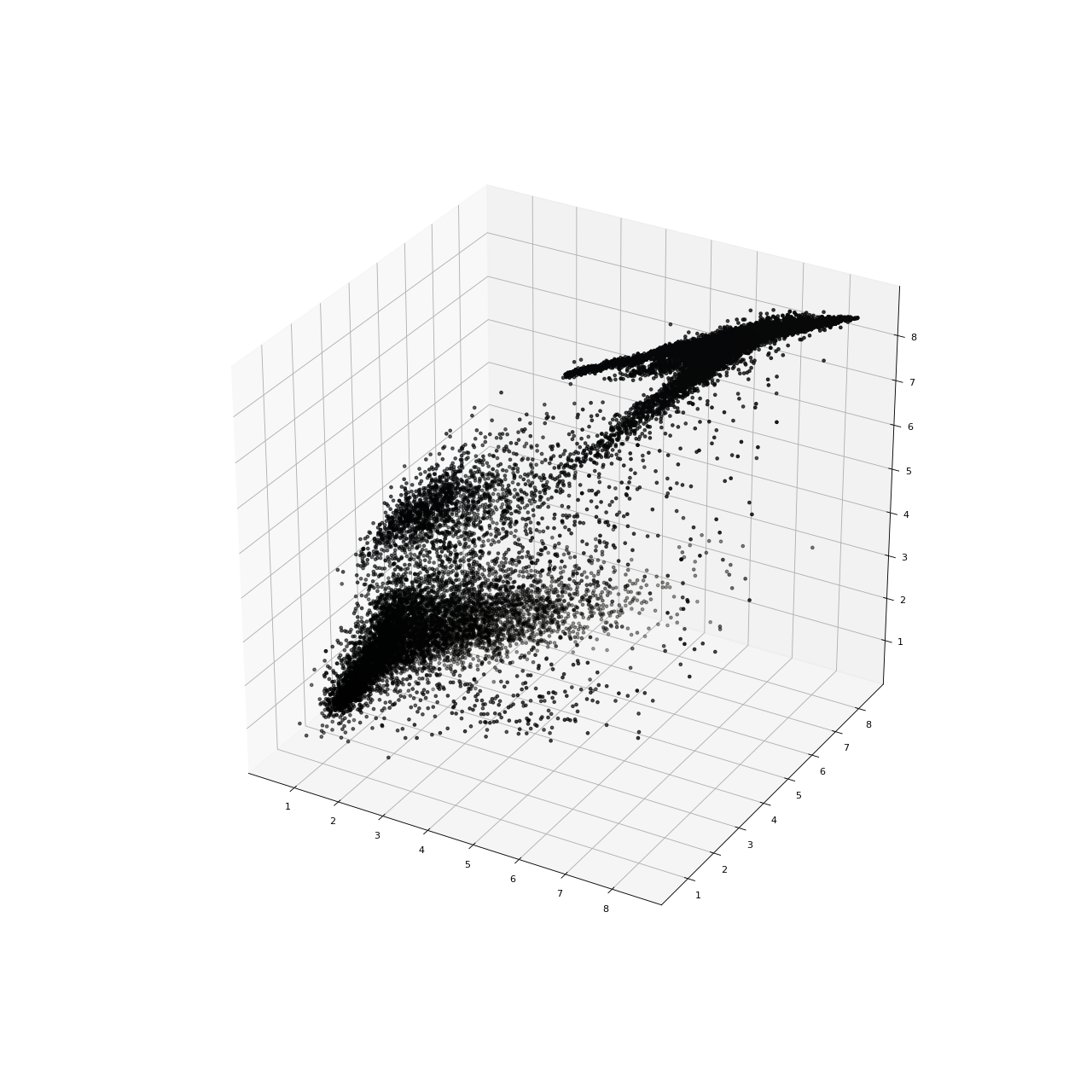}}
\subfloat[][]{\includegraphics[width=.125\linewidth]{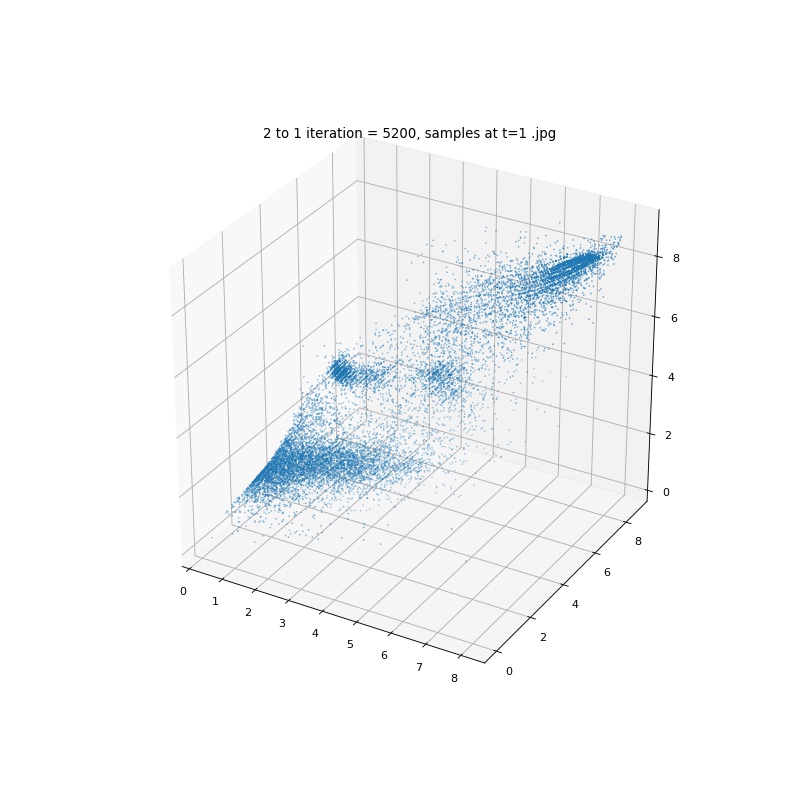}}
\subfloat[][]{\includegraphics[width=.125\linewidth]{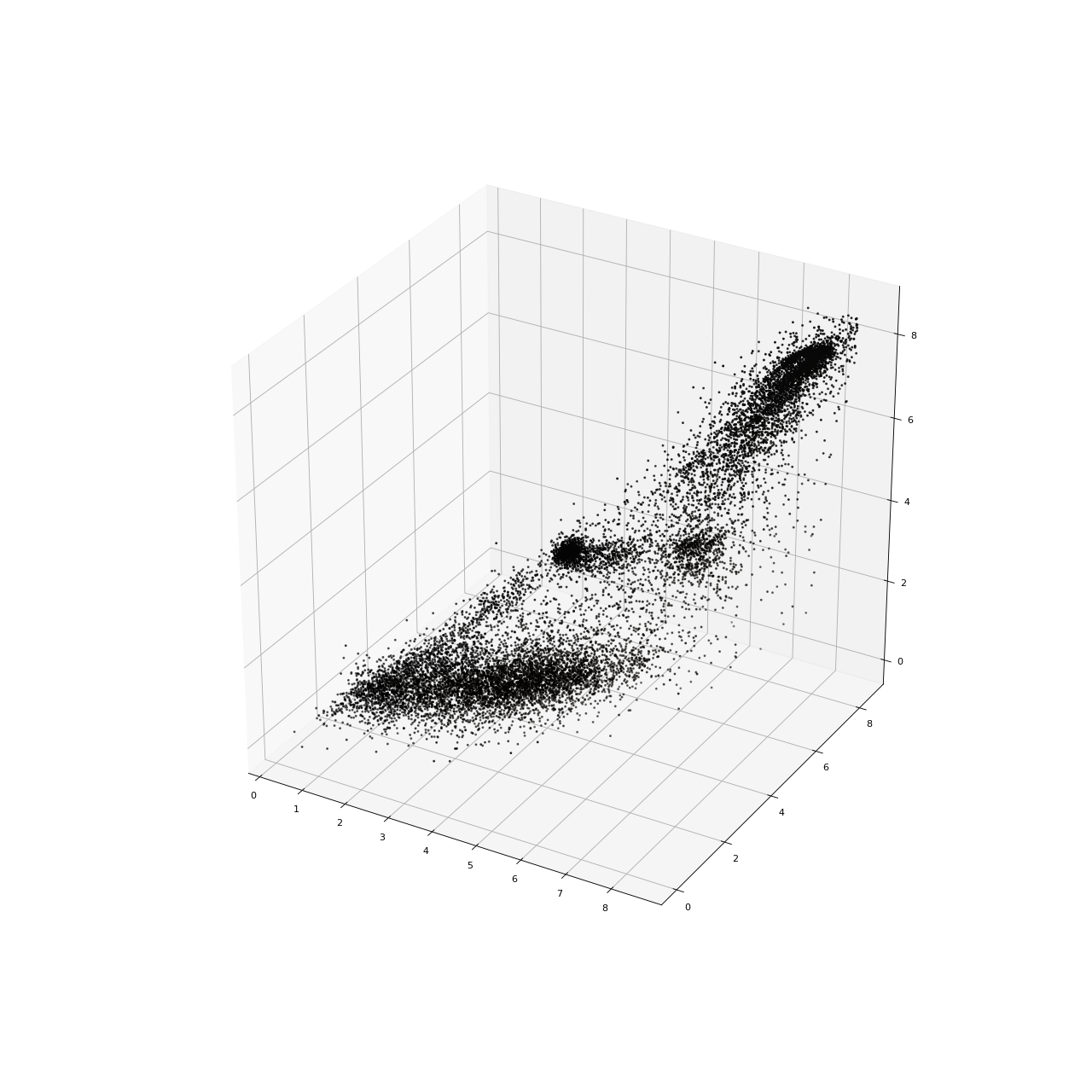}}
\subfloat[][]{\includegraphics[width=.125\linewidth]{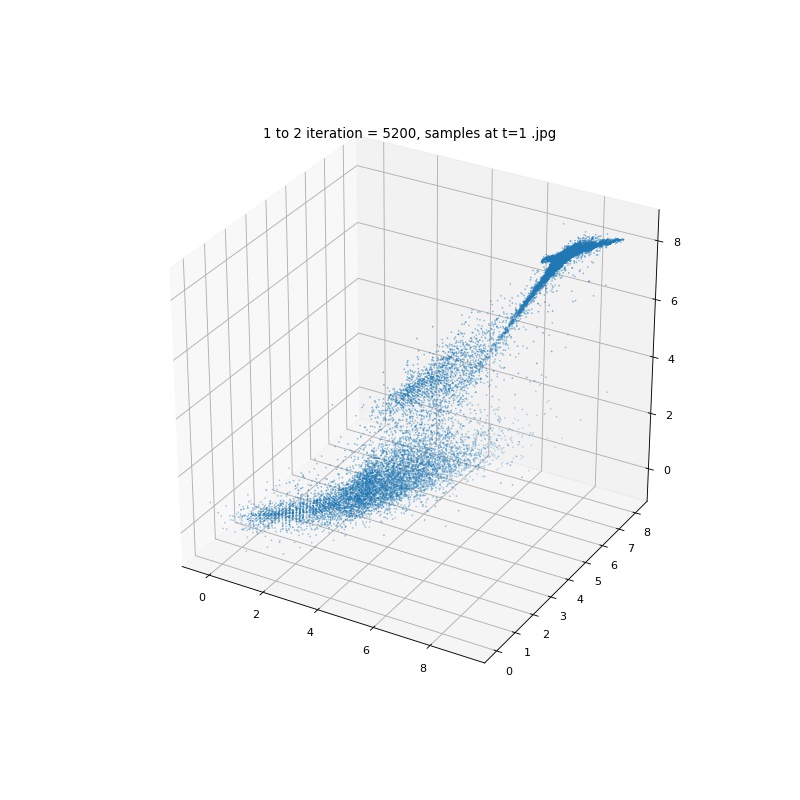}}
\caption{Real-2, Color transfer. (a)(b) true summer(generated autumn) view of the West Lake, (c)(d) true autumn(generated summer) view of the White Tower, (e)(f) palette distribution of the true summer West Lake(generated summer White Tower), (g)(h) palette distribution of the true autumn White Tower(generated autumn West Lake).}
\label{fig:real-2}
\vspace{-1em}
\end{figure}

\begin{figure}[t!]
\centering
\subfloat[][]{\includegraphics[width=.125\linewidth]{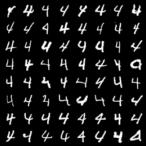}}
\subfloat[][]{\includegraphics[width=.125\linewidth]{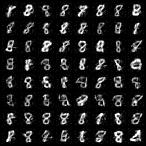}}
\subfloat[][]{\includegraphics[width=.125\linewidth]{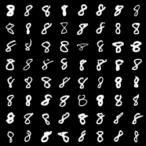}}
\subfloat[][]{\includegraphics[width=.125\linewidth]{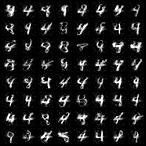}}
\subfloat[][]{\includegraphics[width=.125\linewidth]{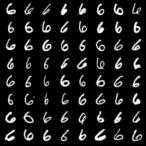}}
\subfloat[][]{\includegraphics[width=.125\linewidth]{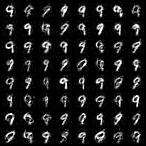}}
\subfloat[][]{\includegraphics[width=.125\linewidth]{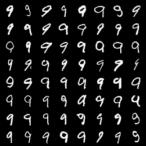}}
\subfloat[][]{\includegraphics[width=.125\linewidth]{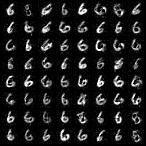}}
\caption{Real-3, Digits transformation. (a)(b) true(generated) digit 4(8), (c)(d) true(generated) digit 8(4), (e)(f) true(generated) digit 6(9), (g)(h)true(generated) digit 9(6).}
\label{fig:real-3}
\end{figure}

\textbf{Realistic-1}: Two given pictures describe the summer view$(\rho_a)$ and autumn view$(\rho_b)$ of a forest. We generate the autumn view starting with the summer view and generate the summer view starting with the autumn view. We also show the ground-truth and generated palette distributions in Figure \ref{fig:real-1}.

\vspace{-0.2em}
\textbf{Realistic-2}: In this case the view of West Lake in summer and the view of White Tower in autumn are given, then we aim to do a color transfer and simulate the summer view of White Tower and the autumn view of West Lake, the results are shown in Figure \ref{fig:real-2}, the ground-truth and generated palette distributions are also included.

\vspace{-0.2em}
\textbf{Realistic-3}: We choose MNIST as our data set ($28 \times 28$ dimensional) and study the Wasserstein mappings as well as geodesic between digit 0$(\rho_a)$ and digit 1$(\rho_b)$, digit 4($\rho_a$) and digit 8($\rho_b$), digit 6($\rho_a$) and digit 9($\rho_b$). Only partial results are presented in Figure \ref{fig:real-3} due to space limit.

\vspace{-0.2em}
\textbf{Training and Results}: For realistic-1 and realistic-2 we set the batch size $N_t=1000$, for realistic-3 in each iteration we take $N_t=500$ pictures for training, especially for realistic-3 we also add small noise to samples during the training process. From Figure \ref{fig:real-1} and \ref{fig:real-2}, the generated autumn(summer) views are very close to the true autumn(summer) views, moreover, the similarity between true and generated palette distributions also demonstrate that our algorithm works well in these cases. In realistic-3 we study the Wasserstein geodesic and mappings in original space without any dimension reduction techniques. Our generated handwritten digits follow similar patterns with the ground truth.


\vspace{-0.6em}
\section{Conclusion}
OT problem has been drawing more attention in machine learning recently. Though many algorithms have been proposed during the past several years for efficient computations, most of them do not consider the Wasserstein geodesics, neither be suitable for estimating optimal transport map with general cost in high dimensions. In this paper we present a novel method to compute Wasserstein geodesic between two given distributions with general convex cost. In particular, we consider the primal-dual scheme of the dynamical OT problem and simplify the scheme via the KKT conditions as well as the geodesic transporting properties. By further introducing preconditioning techniques and bidirectional dynamics into our optimization, we obtain a stable and effective algorithm that is capable of high dimensional computation, which is one of the very first scalable algorithms for directly computing geodesics with general cost, including $L$-$p$. Our method not only computes sample based Wasserstein geodesics, but also provides Wasserstein distance and optimal map. We demonstrate the effectiveness of our scheme through a series of experiments with both low dimensional and high dimensional settings. It is worth noting that our model can be applied not only in machine learning such as image transfer, density estimation, but also in optimal control and robotics, where one needs to study the distribution of mobile agents. We should also be aware of the malicious usage for our method, for instance, it could be potentially used in some activities involving generating misleading data distributions.




\bibliography{a_paper}
\bibliographystyle{nips2021}





\appendix
\newpage
\section{Algorithm}
\subsection{Preconditioning technique for 2-Wasserstein case}
It's worth mentioning that we can apply preconditioning technique under the 2-Wasserstein cases, i.e., $L(\cdot) = \frac{|\cdot|^2}{2}$. When the support of distributions $\rho_a$ and $\rho_b$ are far away from each other, the computational process might get much more sensitive with respect to vector field ${F}$. In order to deal with this situation, we consider preconditioning to our initial distribution $\rho_a$. In our implementation, we treat $P:\mathbb{R}^d\rightarrow\mathbb{R}^d$ as our preconditioning map. We fix its structure as $P(x) = \sigma x + \mu$ with $\sigma\in\mathbb{R}_+, ~\mu\in\mathbb{R}^d$. Such preconditioning process can be treated as an operation aiming at relocating and rescaling the initial distribution $\rho_a$ so that the support of $P_{\sharp}\rho_a$ matches with the support of $\rho_b$ in a better way, which in turn facilitates the training process of our OT problem. A similar technique is also carried out by \citet{kuang2017preconditioning}.

Let us denote the optimal vector field of OT problem between $P_{\sharp}\rho_a$ and $\rho_b$ as $\nabla\widehat{\Phi}_0$, then for the vector field ${F}^*(x) = \nabla\widehat{\Phi}_0 \circ P(x) + P(x) - x $, the following theorem guarantees the optimality of ${F}^*$.
\begin{manualtheorem}{3}
\label{theorem on preconditioning}
 Suppose $L(\cdot)=\frac{|\cdot|^2}{2}$. We define the map $P(x)=\sigma x+\mu$ with $\sigma\in\mathbb{R}_+, \mu\in\mathbb{R}^d$. Recall \eqref{vector field = inv grad L grad Phi}, we denote ${v}(x,t)=\nabla\widehat{\Phi}(x, t)$ as the optimal solution to dynamical OT problem \eqref{dynamical OT} from $P_{\sharp}\rho_a$ to $\rho_b$. We set $\widehat{\Phi}_0 = \widehat{\Phi}(\cdot, 0)$. Furthermore, we denote ${v}(x,t)=\nabla\Phi^*(x,t)$ as the optimal solution to dynamical OT problem \eqref{dynamical OT} from $\rho_a$ to $\rho_b$,
 and set $\Phi^*_0=\Phi^*(\cdot, 0)$. Then we have
  \begin{equation*}
     \nabla\Phi^*_0(x) = \nabla\widehat{\Phi}_0\circ P(x)+P(x)-x,
~~\Phi^*_0(x) = \frac{1}{\sigma}\widehat{\Phi}_0(\sigma x + \mu)+\frac{\sigma - 1}{2}|x|^2 + \mu^Tx+\textrm{Const}.
\end{equation*}
\end{manualtheorem}
This theorem indicates that our constructed ${F}^*$ is exactly the optimal transport field $\nabla\Phi^*_0$ for the original OT problem from $\rho_a$ to $\rho_b$. The proof is in Appendix C.

Our computation procedure is summarized in Algorithm \ref{alg:1}. We set ${F}_{\theta_1}, {G}_{\theta_2}$ and $\Phi^F_{\omega_1},\Phi^G_{\omega_2}$ as fully connected neural networks and optimize over their parameters.

\begin{algorithm}[hb!]
\caption{Computing Wasserstein geodesic from $\rho_a$ to $\rho_b$ via bidirectional scheme \eqref{bidirection} and preconditioning}
\begin{algorithmic}[1]
\State Choose our preconditioning map $P(x)=\sigma x + \mu$. Denote $\hat{\rho_a} = P_{\sharp}\rho_a$ (This step is only applicable for 2-Wasserstein case. If we do not need preconditioning, we treat $P=\textrm{Id}$.)
\State Set up the threshold $\epsilon>0$ as the stopping criteria
\State Initialize ${F}_{\theta_1}, {G}_{\theta_2}$, $\Phi^F_{\omega_1}, \Phi^G_{\omega_2}$
\For{${F}_{\theta_1}$, ${G}_{\theta_2}$ steps}
\State Sample $\{(z^{a}_k,t^{a}_k)\}_{k=1}^N$ from $\hat{\rho}_a\otimes U(0,1)$ and $\{(z^b_k, t^b_k)\}_{k=1}^N$ from $\rho_b\otimes U(0,1)$; 
\State Set $x^a_k = z^a_k+t^a_k {F}_{\theta_1}(z^a_k)$,  $x^b_k = z^b_k + t^b_k {G}_{\theta_2}(z^b_k)$;
\State Sample $\{w^a_k\}_{k=1}^M$ from $\hat{\rho}_a$ and $\{w^b_k\}$ from $\rho_b$;
\For{$\Phi^F_{\omega_1}, \Phi^G_{\omega_2}$ steps}
\State Update (via gradient ascent) $\Phi^F_{\omega_1}, \Phi^G_{\omega_2}$ by:
\begin{equation*}
    \nabla_{\omega_1, \omega_2} (\mathcal{L}^{ab}(\Phi^F_{\omega_1}) +\mathcal{L}^{ba}(\Phi^G_{\omega_2}))
\end{equation*}
\EndFor
\State Sample $\{\xi^a_k\}_{k=1}^K$ from $\hat{\rho}_a$ and $\{\xi^b_k\}_{k=1}^K$ from $\rho_b$
\State Update (grad descent) ${F}_{\theta_1}$, ${G}_{\theta_2}$ by:
\begin{equation*}
 \nabla_{\theta_1, \theta_2} ( \mathcal{L}^{ab}(\Phi_{\omega_1}^F) + \mathcal{L}^{ba}(\Phi_{\omega_2}^G) +\mathcal{K}({F}_{\theta_1}, {G}_{\theta_2})  )
\end{equation*}
\State Whenever $|\widehat{W}^{ab}-\widehat{W}^{ba}|<\epsilon$, skip out of the loop.
\EndFor
\State Set ${F}^* = {F}_{\theta_1}\circ P + P - \textrm{Id}$ and ${G}^* = {G}_{\theta_2}\circ P + P - \textrm{Id}$.
\State Wasserstein geodesic from $\rho_a$ to $\rho_b$ is given by $\{(\textrm{Id}+t{F}_{\theta_1})_{\sharp}\rho_a\}$; Wasserstein geodesic from $\rho_b$ to $\rho_a$ is given by $\{(\textrm{Id}+t{G}_{\theta_2})_{\sharp}\rho_b\}$. 
\end{algorithmic}
\label{alg:1}
\end{algorithm}

\begin{remark}
 In Algorithm \ref{alg:1}, we need to sample points $\{z^a_k\}$ from the distribution $\hat{\rho}_a=P_\sharp \rho_a$. To achieve this, we first sample $\{u_k\}$ from $\rho_a$. Then $\{P(u_k)\}$ are our desired samples from $\hat{\rho}_a$.
\end{remark}
In Algorithm \ref{alg:1} we define
\begin{align*}
  &\mathcal{L}^{ab}(\Phi^F_{\omega_1}) = -\frac{1}{N}\sum_{k=1}^N \left[\frac{\partial}{\partial t}\Phi^F_{\omega_1}(x^{a}_k,t^{a}_k) + H(\nabla \Phi^F_{\omega_1} (x^a_k,t^a_k)) \right] +\frac{1}{M}\sum_{k=1}^M(\Phi^F_{\omega_1}(w^b_k,1)-\Phi^F_{\omega_1}(w^a_k , 0)),\\
  & \mathcal{L}^{ba}(\Phi^G_{\omega_2}) =   -\frac{1}{N}\sum_{k=1}^N \left[\frac{\partial}{\partial t}\Phi^G_{\omega_2}(x^{b}_k,t^{b}_k) + H(\nabla \Phi^G_{\omega_2} (x^b_k,t^b_k))\right]+\frac{1}{M}\sum_{k=1}^M(\Phi^G_{\omega_2}(w^b_k,1)-\Phi^G_{\omega_2}(w^a_k , 0)),  \\
  & \mathcal{K}({F}_{\theta_1}, {G}_{\theta_2}) =   \frac{\lambda}{ K}\sum_{k=1}^K|{G}_{\theta_2}(\xi^a_k+{F}_{\theta_1}(\xi^a_k))+{F}_{\theta_1}(\xi^a_k)|^2 + \frac{\lambda}{K} \sum_{k=1}^K|{F}_{\theta_1}(\xi^b_k+{G}_{\theta_2}(\xi^b_k))+{G}_{\theta_2}(\xi^b_k)|^2,\\
  & \widehat{W}^{ab} = \frac{1}{M}\sum_{k=1}^{M}L({F}_{\theta_1}(w^a_k)), \quad \widehat{W}^{ba} = \frac{1}{M}\sum_{k=1}^{M}L({G}_{\theta_2}(w^b_k)). \quad
\end{align*}

\section{Proof of Theorem 2}\label{study}
Let us denote $ {F}:\mathbb{R}^d\rightarrow \mathbb{R}^d$ as the transporting vector field. Recall that we are computing for the Wasserstein geodesic interpolating $\rho_a$ and $\rho_b$. We denote $\hat{\rho}_t = (I+t {F})_{\sharp}\rho_a$. Then we introduce the following functional of $F$ and $\Phi$:
\begin{align}
  \mathcal{L}( {F}, \Phi) = & \int_0^1 \int \left(-\frac{\partial \Phi(x,t)}{\partial t}-H(\nabla\Phi(x,t))\right)\hat{\rho}(x,t)~dxdt + \int \Phi(x,1)\rho_b(x) - \Phi(x, 0)\rho_a(x)~dx  \label{def of L} \nonumber \\
  = & \int_0^1 \int \left(-\frac{\partial \Phi(x+tF(x),t)}{\partial t}-H(\nabla\Phi(x+tF(x),t))\right)\rho_a(x,t)~dxdt + \int \Phi(x,1)\rho_b(x) \nonumber\\
  &- \Phi(x, 0)\rho_a(x)~dx 
\end{align}
As mentioned in \eqref{chosen scheme}, our numerical method is to solve the following saddle point problem
\begin{equation}
   \min_{ {F}}~\max_{\Phi} ~ \mathcal{L}( {F}, \Phi)  \label{saddle scheme}
\end{equation}
As stated in Theorem \ref{main}, the optimal solution obtained from dynamical OT problem (Brenier-Benamou formulation) \eqref{dynamical OT} is a critical point to the functional $\mathcal{L}( {F}, \Phi)$. Before we prove this result, we need the following lemmas:

\begin{lemma}\label{con eq}
  Given a distribution with density $\rho$ defined on $\mathbb{R}^d$, consider vector field $ {F}:\mathbb{R}^d\rightarrow\mathbb{R}^d$. Define time-varying density $\{\rho(\cdot,t)\}_{t\in [0,1]}$ as $\rho(\cdot, t) = (Id+t {F})_{\sharp}\rho_0$. Suppose for a given $f\in C^1(\mathbb{R}^d)$, $f(x)\rho(x,t)$ is integrable on $\mathbb{R}^d$. Then
  \begin{equation*}
     \int f(x)\frac{\partial}{\partial t}\rho(x,t) = \int \nabla f(x+t {F}(x))\cdot {F}(x)~\rho_a(x)~dx
  \end{equation*}
\end{lemma}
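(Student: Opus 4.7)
The plan is to pass everything to the reference measure $\rho_0$ via the change-of-variables formula for pushforwards, then differentiate under the integral. By Definition \ref{pushfwd of distribution}, for any integrable $g$ one has
\[
  \int g(x)\,\rho(x,t)\,dx \;=\; \int g(x)\,d((\mathrm{Id}+tF)_\sharp \rho_0)(x) \;=\; \int g(x+tF(x))\,\rho_0(x)\,dx.
\]
Applying this with $g=f$ turns the left-hand quantity of interest into an explicit $t$-dependent integral over the fixed measure $\rho_0$:
\[
  \int f(x)\,\rho(x,t)\,dx \;=\; \int f(x+tF(x))\,\rho_0(x)\,dx.
\]

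I would then differentiate both sides in $t$. The right-hand side, being an integral against the fixed measure $\rho_0$ of a smooth-in-$t$ integrand, can be differentiated under the integral by Leibniz's rule together with the chain rule, giving
\[
  \frac{d}{dt}\int f(x+tF(x))\,\rho_0(x)\,dx \;=\; \int \nabla f(x+tF(x))\cdot F(x)\,\rho_0(x)\,dx.
\]
For the left-hand side, the assumed integrability of $f(x)\rho(x,t)$ together with the $t$-regularity of $\rho(\cdot,t)$ (inherited from the smoothness of $\mathrm{Id}+tF$) lets us again move the derivative inside, yielding $\int f(x)\,\partial_t\rho(x,t)\,dx$. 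Matching the two expressions gives the claimed identity (with the $\rho_a$ on the right interpreted as $\rho_0$, correcting the apparent typo).

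The only real technical point is justifying the differentiation under the integral sign, which requires a dominating function for $\nabla f(x+tF(x))\cdot F(x)\,\rho_0(x)$ uniform in $t\in[0,1]$. This is the main obstacle, and it is the reason the hypothesis ``$f(x)\rho(x,t)$ is integrable'' is imposed; in the intended applications (where $f=\Phi$, $\nabla\Phi$, or $H(\nabla\Phi)$ and $F$ is parametrized by a sufficiently regular neural network) such a bound is straightforward from boundedness of $F$ on the support of $\rho_0$ and continuity of $\nabla f$ on the compact set traced out by $x+tF(x)$. Everything else is a direct bookkeeping step from the definition of pushforward.
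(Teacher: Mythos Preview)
Your proposal is correct and follows essentially the same approach as the paper: swap $\partial_t$ and $\int$, apply the pushforward change of variables to rewrite the integral against the fixed reference density, and then differentiate under the integral via the chain rule. You are in fact more careful than the paper, which simply writes the chain of equalities without discussing the domination hypothesis or the $\rho_a/\rho_0$ typo.
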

\begin{proof}
  We have
  \begin{align*}
     \int f(x)\frac{\partial}{\partial t}\rho(x,t) =\frac{d}{dt}\left(\int f(x)\rho(x,t)~dx\right)= &\frac{d}{dt}\left(\int f(x+t {F}(x))\rho_a(x)~dx\right)\\=& \int \nabla f(x+ {F}(x))\cdot {F}(x)~\rho_a(x)~dx
  \end{align*}
\end{proof}

\begin{lemma}\label{vf of HJ}
  Suppose $\Phi^*(x,t)$ is solved from \eqref{geodesic eq} in the paper with initial condition $\Phi^*(\cdot, 0) = \Phi^*_0(\cdot)$, we further assume $\Phi^*(\cdot, t)\in C^2(\mathbb{R}^d)$. Then we have
  \begin{equation}
    \nabla\Phi^*(x+t~\nabla L^{-1}(\nabla\Phi^*_0(x)), ~t) = \nabla\Phi^*_0(x).  \label{gradient of solution to hj}
  \end{equation}
\end{lemma}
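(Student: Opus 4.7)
The plan is to recognize this as a statement about characteristics of the Hamilton-Jacobi equation $\partial_t \Phi + H(\nabla \Phi) = 0$ that appears as the second component of the geodesic system \eqref{geodesic eq}. Since the Hamiltonian $H$ depends only on $p$ and not explicitly on $x$, Hamilton's characteristic ODEs collapse to $\dot{p}_t = -\nabla_x H = 0$ and $\dot{x}_t = \nabla_p H(p_t)$, so along a characteristic the momentum $p_t = \nabla\Phi^*(x_t,t)$ is conserved and the base point $x_t$ moves along a straight line with constant velocity $\nabla H(p_0)$. To pin down that velocity I would invoke the Legendre duality \eqref{Legendre}: either by the envelope theorem applied to $H(p) = \sup_v\{v\cdot p - L(v)\}$ with maximizer $v^* = \nabla L^{-1}(p)$, or by differentiating the explicit formula $H(p) = \nabla L^{-1}(p)\cdot p - L(\nabla L^{-1}(p))$ and cancelling terms using $\nabla L(\nabla L^{-1}(p)) = p$, one obtains $\nabla H(p) = \nabla L^{-1}(p)$. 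Hence the characteristic starting at $(x,\nabla\Phi_0^*(x))$ is $x_t = x + t\,\nabla L^{-1}(\nabla\Phi_0^*(x))$ with frozen momentum $p_t = \nabla\Phi_0^*(x)$, which is precisely the claimed identity.

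For a self-contained PDE-level verification that does not invoke characteristic theory as a black box, I would differentiate the HJ equation in $x$ to obtain the transport equation $\partial_t(\nabla\Phi^*) + \nabla^2\Phi^*\cdot\nabla L^{-1}(\nabla\Phi^*) = 0$, then fix $x$ and set $v := \nabla L^{-1}(\nabla\Phi_0^*(x))$ and $Y(t) := \nabla\Phi^*(x + tv, t)$. A straightforward chain-rule computation yields $Y'(t) = \nabla^2\Phi^*(x+tv,t)\bigl(v - \nabla L^{-1}(Y(t))\bigr)$. Defining $Z(t) := v - \nabla L^{-1}(Y(t))$, the pair $(Y,Z)$ then solves a linear ODE system with initial data $Y(0) = \nabla\Phi_0^*(x)$ and $Z(0) = 0$; uniqueness of ODE solutions forces $Z \equiv 0$, and hence $Y(t) \equiv \nabla\Phi_0^*(x)$ on $[0,1]$, which rearranges to \eqref{gradient of solution to hj}.

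The main obstacle I anticipate is not the algebra but making the characteristic picture globally well-defined on the full time horizon: the straight lines $x \mapsto x + t\,\nabla L^{-1}(\nabla\Phi_0^*(x))$ could in principle collide for some $t \in (0,1]$, at which point $\Phi^*$ would cease to be $C^2$ and shocks would form. The hypothesis $\Phi^*(\cdot,t) \in C^2(\mathbb{R}^d)$ is precisely the classical regularity that rules this out, placing us in the shock-free regime where the Picard-Lindel\"of uniqueness used above is legitimate and no viscosity-solution machinery is required.
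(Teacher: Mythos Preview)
Your approach is correct and essentially the same as the paper's: differentiate the Hamilton--Jacobi equation in $x$, evaluate along the straight line $T_t(x)=x+t\,\nabla L^{-1}(\nabla\Phi_0^*(x))$, and deduce that $\nabla\Phi^*(T_t(x),t)$ is constant in $t$. Your ODE-uniqueness argument is in fact more explicit than the paper's, which simply asserts $\frac{d}{dt}\nabla\Phi^*(T_t(x),t)=0$ after the substitution; one small quibble is that the system for $(Y,Z)$ is not literally linear in $(Y,Z)$, but your conclusion still follows since along any solution $Z$ satisfies a linear homogeneous ODE with $Z(0)=0$.
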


\begin{proof}
   Now consider Hamilton-Jacobi equation of \eqref{geodesic eq} in the paper:
   \begin{equation*}
      \frac{\partial \Phi^*(y,t)}{\partial t} + H(\nabla\Phi^*(y,t)) = 0 \quad \Phi^*(\cdot, 0) = \Phi^*_0.
   \end{equation*}
   We take gradient with respect to $x$ on both sides, we have
   \begin{equation}
      \frac{\partial}{\partial t}(\nabla\Phi^*(y,t)) + \nabla^2\Phi^*(y,t)\nabla H( \nabla\Phi^*(y,t) ) = 0.  \label{grad of hj equation}
   \end{equation}
   Let us denote $T_t(x) = x+t\nabla H(\nabla\Phi^*_0(x))$ for simplicity. We now compute
   \begin{equation*}
      \frac{d}{dt}\nabla\Phi^*(T_t (x), t) = \frac{\partial}{\partial t}\nabla\Phi^*(T_t(x), t) + \nabla^2 \Phi^*(T_t(x), t) \nabla H(\nabla\Phi^*_0(x))
   \end{equation*}
   By plugging $y=T_t(x)$ into \eqref{grad of hj equation}, we are able to verify $\frac{d}{dt}\nabla\Phi^*(T_t (x), t) = 0$. Thus 
   \begin {equation}
      \nabla\Phi^*(T_t (x), t) = \nabla \Phi^*(T_0(x), 0) = \nabla\Phi^*_0(x) \quad \textrm{for} ~~ t\in [0,1]  \label{lemma2 eq}
   \end{equation}
   Recall $H$ defined in the paper, we can verify that $\nabla H = \nabla L^{-1}$. Thus \eqref{lemma2 eq} leads to
   \begin{equation*}
     \nabla\Phi^*(x+t~\nabla L^{-1}(\nabla\Phi^*_0(x)), ~t) = \nabla\Phi^*_0(x).
   \end{equation*}

\end{proof}


\begin{lemma}\label{continuity eq lemma}
   Suppose $\Phi^*(x,t)$ is solved from \eqref{geodesic eq} in the paper with initial condition $\Phi^*(\cdot, 0) = \Phi^*_0(\cdot)$, we further assume $\Phi^*(\cdot, t)\in C^2(\mathbb{R}^d)$. Now denote $\hat{\rho}(\cdot, t) = (\textrm{Id}+t\nabla L^{-1}(\nabla \Phi^*_0))_{\sharp}\rho_a$. Then $\hat{\rho}(\cdot, t)$ solves 
   \begin{equation*}
      \frac{\partial \hat{\rho}(x,t)}{\partial t} + \nabla\cdot(\hat{\rho}(x,t)\nabla L^{-1}(\nabla\Phi^*(x,t)))=0.
   \end{equation*}
\end{lemma}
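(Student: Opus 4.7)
The plan is to verify the continuity equation in the weak sense by testing against an arbitrary $f\in C^1_c(\mathbb{R}^d)$ and combining the two previous lemmas. The key observation is that Lemma 2 identifies the constructed characteristic flow $T_t(x)=x+t\nabla L^{-1}(\nabla\Phi_0^*(x))$ with the flow of the velocity field $v^*(y,t)=\nabla L^{-1}(\nabla\Phi^*(y,t))$; the lemma then follows from the fact that pushforward along a smooth flow obeys the continuity equation associated with that flow.

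First, I would apply Lemma 1 with $F = \nabla L^{-1}(\nabla\Phi_0^*)$ to rewrite, for arbitrary $f\in C^1_c(\mathbb{R}^d)$,
\begin{equation*}
\int f(x)\,\tfrac{\partial}{\partial t}\hat{\rho}(x,t)\,dx = \int \nabla f\bigl(x+t\nabla L^{-1}(\nabla\Phi_0^*(x))\bigr)\cdot\nabla L^{-1}(\nabla\Phi_0^*(x))\,\rho_a(x)\,dx.
\end{equation*}
Next, I would invoke Lemma 2 to replace $\nabla\Phi_0^*(x)$ by $\nabla\Phi^*(T_t(x),t)$, where $T_t(x):=x+t\nabla L^{-1}(\nabla\Phi_0^*(x))$. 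This gives
\begin{equation*}
\int f(x)\,\tfrac{\partial}{\partial t}\hat{\rho}(x,t)\,dx = \int \nabla f(T_t(x))\cdot\nabla L^{-1}(\nabla\Phi^*(T_t(x),t))\,\rho_a(x)\,dx.
\end{equation*}

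Then, by the definition of the pushforward $\hat{\rho}(\cdot,t) = (T_t)_\sharp\rho_a$, the right-hand side equals
\begin{equation*}
\int \nabla f(y)\cdot\nabla L^{-1}(\nabla\Phi^*(y,t))\,\hat{\rho}(y,t)\,dy,
\end{equation*}
and integration by parts (noting $f$ has compact support, so boundary terms vanish) turns this into $-\int f(y)\,\nabla\cdot(\hat{\rho}(y,t)\nabla L^{-1}(\nabla\Phi^*(y,t)))\,dy$. Since $f$ was arbitrary, this establishes the continuity equation pointwise (in the distributional sense, and classically under the $C^2$ hypothesis).

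The only potentially subtle step is the application of Lemma 2, which requires that $T_t$ be a well-defined characteristic flow that actually reaches every relevant point; but Lemma 2 as stated provides exactly the identity $\nabla\Phi^*(T_t(x),t)=\nabla\Phi_0^*(x)$ needed on the support of $\rho_a$, so once the pushforward change of variables is justified, everything else is routine. No further regularity beyond the assumed $\Phi^*(\cdot,t)\in C^2$ and the implicit smoothness/integrability needed for Lemma 1 is required.
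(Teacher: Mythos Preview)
Your proposal is correct and essentially identical to the paper's proof: both test against a compactly supported smooth function, apply Lemma~\ref{con eq} to express $\int f\,\partial_t\hat\rho$ as an integral against $\rho_a$, use the pushforward change of variables to express the divergence term likewise, and then invoke Lemma~\ref{vf of HJ} to match the two integrands. The only cosmetic difference is the order in which you apply Lemma~\ref{vf of HJ} and the pushforward change of variables.
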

\begin{proof}
  For arbitrary $f\in C_0^\infty(\mathbb{R}^d)$, we consider:
  \begin{align*}
    & \int f(x)\left(\frac{\partial \hat{\rho}(x,t)}{\partial t} + \nabla\cdot(\hat{\rho}(x,t)\nabla L^{-1}(\nabla\Phi^*(x,t)))\right)~dx \\
    =& \int f(x)\frac{\partial\hat{\rho}(x,t)}{\partial t} ~dx -\int \nabla f(x)\cdot \nabla L^{-1}(\nabla\Phi^*(x,t))\hat{\rho}(x,t)~dx
  \end{align*}
  By Lemma \ref{con eq}, the first term equals
  \begin{equation}
   \int \nabla f(x+t\nabla L^{-1}(\nabla\Phi^*_0(x)))\cdot \nabla L^{-1}(\nabla\Phi^*_0(x))~\rho_a(x)~dx  \label{first}
  \end{equation}
  The second term equals
  \begin{equation}
    \int \nabla f(x+t\nabla L^{-1}(\nabla\Phi^*_0(x)))\cdot \nabla L^{-1}(\nabla\Phi^*(x+t\nabla L^{-1}(\nabla\Phi^*_0(x)), t))~\rho_a(x)~ dx\label{second}
  \end{equation}
  Using Lemma \ref{vf of HJ}, we know the integrals \eqref{first} and \eqref{second} are the same. Thus we have
  \begin{equation*}
     \int f(x)\left(\frac{\partial \hat{\rho}(x,t)}{\partial t} + \nabla\cdot(\hat{\rho}(x,t)\nabla L^{-1}(\nabla\Phi^*(x,t)))\right)~dx = 0 \quad \forall ~ ~f\in C_0^\infty(\mathbb{R}^d).
  \end{equation*}
  This leads to our result.
\end{proof}

\begin{lemma}\label{simplify wasserstein}
 Suppose $\Phi^*(x,t)$ is solved from \eqref{geodesic eq} in the paper with initial condition $\Phi^*(\cdot, 0) = \Phi_0^*(\cdot)$, then 
 \begin{equation}
    W_{\textrm{Dym}}(\rho_a, \rho_b) = \int L(\nabla L^{-1}(\nabla\Phi_0^*(x)))\rho_a(x)~dx.  \label{compute Wass dist simplified}
 \end{equation}
\end{lemma}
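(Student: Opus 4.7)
The plan is to start from the standard dynamic-OT formula
$W_{\textrm{Dym}}(\rho_a,\rho_b)=\int_0^1\!\!\int L(v^*(x,t))\,\rho^*(x,t)\,dx\,dt$
and show that the inner space integral is independent of $t$, equal to the right-hand side of \eqref{compute Wass dist simplified}, so the outer time integral trivializes.

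First I would substitute the optimal vector field $v^*(x,t)=\nabla L^{-1}(\nabla\Phi^*(x,t))$ from \eqref{vector field = inv grad L grad Phi}, giving
\[
W_{\textrm{Dym}}(\rho_a,\rho_b)=\int_0^1\!\!\int L\bigl(\nabla L^{-1}(\nabla\Phi^*(x,t))\bigr)\,\rho^*(x,t)\,dx\,dt.
\]
Next I would use Lemma~\ref{continuity eq lemma} to identify $\rho^*(\cdot,t)$ with the pushforward $\hat\rho(\cdot,t)=(\textrm{Id}+t\nabla L^{-1}(\nabla\Phi_0^*))_\sharp\rho_a$ (both solve the continuity equation with vector field $\nabla L^{-1}(\nabla\Phi^*(\cdot,t))$ and coincide at $t=0$, so by uniqueness they are equal for all $t\in[0,1]$). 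A change of variables $y=x+t\nabla L^{-1}(\nabla\Phi_0^*(x))$ then rewrites the spatial integral as
\[
\int L\bigl(\nabla L^{-1}(\nabla\Phi^*(x+t\nabla L^{-1}(\nabla\Phi_0^*(x)),\,t))\bigr)\,\rho_a(x)\,dx.
\]

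The key step is to apply Lemma~\ref{vf of HJ}, which states precisely that $\nabla\Phi^*(x+t\nabla L^{-1}(\nabla\Phi_0^*(x)),t)=\nabla\Phi_0^*(x)$ along the straight-line characteristics of the Hamilton--Jacobi equation. Thanks to this identity, the integrand collapses to $L(\nabla L^{-1}(\nabla\Phi_0^*(x)))$, which has no $t$-dependence. Integrating this constant-in-$t$ quantity over $t\in[0,1]$ removes the time integral and delivers \eqref{compute Wass dist simplified}.

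The only real care needed is making sure that the uniqueness argument at the change of variables step is justified and that the $C^2$ assumption on $\Phi^*(\cdot,t)$ is strong enough for Lemmas~\ref{vf of HJ} and~\ref{continuity eq lemma} to apply; once that is in place, this statement is essentially a direct corollary of the two preceding lemmas together with the representation of $v^*$ via $\nabla L^{-1}\circ\nabla\Phi^*$. I do not expect any genuine obstacle beyond bookkeeping.
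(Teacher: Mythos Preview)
Your argument is correct, but it differs from the paper's route. The paper works in the \emph{particle} (Lagrangian) picture: it invokes the equivalent control formulation \eqref{particle dynamical OT}, writes $W_{\textrm{Dym}}(\rho_a,\rho_b)=\int_0^1\mathbb{E}\,L(v^*(\boldsymbol{X}_t,t))\,dt$, and then appeals directly to Theorem~\ref{transport along geod} to assert $v^*(\boldsymbol{X}_t,t)=v^*(\boldsymbol{X}_0,0)=\nabla L^{-1}(\nabla\Phi_0^*(\boldsymbol{X}_0))$ along each optimal trajectory. Swapping the $t$-integral and expectation then gives the result in one line, since $\boldsymbol{X}_0\sim\rho_a$.

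You instead stay in the Eulerian picture, combining Lemma~\ref{continuity eq lemma} (plus a uniqueness argument for the continuity equation) with Lemma~\ref{vf of HJ} after a pushforward change of variables. This is the density-side translation of the same mechanism, and it works, but it costs you the extra uniqueness step and the $C^2$ hypothesis on $\Phi^*(\cdot,t)$ needed for those two lemmas. The paper's particle argument sidesteps both: Theorem~\ref{transport along geod} already encodes ``constant velocity along characteristics'' without identifying $\rho^*$ with a pushforward or differentiating the Hamilton--Jacobi equation. So the paper's proof is shorter and slightly leaner in assumptions; yours is a perfectly valid alternative that keeps everything at the PDE level and reuses the lemmas already on the table.
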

\begin{proof}
   Consider particle dynamical OT with its optimal solution $ {v}^*(x,t) = \nabla L^{-1}(\nabla\Phi^*(x, t))$ as stated in \eqref{vector field = inv grad L grad Phi} in the paper. Recall Theorem 1 stating that the optimal plan is transporting each particle $\boldsymbol{X}_t$ along straight lines with constant velocity $ {v}^*(\boldsymbol{X}_0, 0) = \nabla L^{-1}(\nabla\Phi_0^*(\boldsymbol{X}_0))$, i.e. $ {v}^*(\boldsymbol{X}_t, t) =  {v}^*(\boldsymbol{X}_0, 0)$ for any $t\in [0,1]$. Combining these, we have
   \begin{equation*}
      W_{\textrm{Dym OT}}(\rho_a, \rho_b) = \int_0^1\mathbb{E}~ L( {v}^*(\boldsymbol{X}_t,t))~dt = \mathbb{E}\left(\int_0^1 L( {v}^*(\boldsymbol{X}_t,t))~dt\right)=\mathbb{E}~L(\nabla L^{-1}(\nabla\Phi_0^*(\boldsymbol{X}_0))).
   \end{equation*}
   Notice that we require $\boldsymbol{X}_0\sim\rho_a$. This will lead to \eqref{compute Wass dist simplified}.
\end{proof}


\begin{customthm}{2}
Recall the solution to the equation system \eqref{geodesic eq} in the paper is $\rho(x,t)$ and $\Phi^*(x,t)$, suppose $\Phi^*(\cdot, t)\in C^2(\mathbb{R}^d)$. Denote $\Phi_0^*(\cdot) = \Phi^*(\cdot,0)$, then $(\nabla L^{-1}(\nabla \Phi_0^*), \Phi^*)$ is a critical point to the functional $\mathcal{L}$, i.e.,
  \[ \frac{\partial \mathcal{L}}{\partial {F}}(\nabla L^{-1}(\nabla \Phi_0^*),\Phi^*) = 0, \quad \frac{\partial \mathcal{L}}{\partial \psi}(\nabla L^{-1}(\nabla \Phi_0^*),\Phi^*) = 0.\]
Furthermore,  $\mathcal{L}(\nabla L^{-1}(\nabla\Phi_0^*), \Phi^*)=W_{\textrm{Dym OT}}(\rho_a,\rho_b)$.
\end{customthm}

\begin{proof}
   Since we have assumed $\Phi^*(\cdot, t)\in C^2(\mathbb{R}^d)$, we restrict our $\Phi\in C^2(\mathbb{R}^d)$ as well.\\
   We first rewrite $\mathcal{L}( {F}, \Phi)$ by using integration by parts as:
   \begin{equation}
     \int_0^1 \int \Phi(x,t)\frac{\partial \hat{\rho}(x,t)}{\partial t} - H(\nabla\Phi(x,t)) \hat{\rho}(x,t)~dxdt + \int \Phi(x,1)(\rho_a(x) - \hat{\rho}(x,1))~dx . \label{L1}
   \end{equation}
  By Lemma \ref{con eq}, \eqref{L1} can be written as
  \begin{align}
     \mathcal{L}( {F}, \Phi) 
     = & \int_0^1\int_{\mathbb{R}^d} [\nabla\Phi(x+t {F}(x),t)\cdot  { F }(x) - H(\nabla \Phi(x+t {F}(x),t))]\rho_a(x)~dxdt \label{variation B of L} \\
     & + \int \Phi(x,1)\rho_b(x)~dx - \int \Phi(x+ {F}(x), 1)\rho_a(x)~dx.\nonumber
  \end{align}
  Now based on \eqref{variation B of L} here, we are able to compute $\frac{\partial \mathcal{L}( {F}, \Phi)}{\partial  {F}}(x)$ as
  \begin{align}
     \frac{\partial \mathcal{L}( {F}, \Phi)}{\partial {F}} 
     =& \int_0^1  t\nabla^2\Phi(x+t {F}(x), t)\cdot \underbrace{ [ {F}(x) - \nabla H( \nabla\Phi(x+t {F}(x), t)) ] }_{(A)} \rho_a(x)~dt  \label{dL/dF} \\
     & + \underbrace{ \left( \int_0^1\nabla\Phi(x+t {F}(x), t)~dt -  \nabla\Phi(x+ {F}(x), 1) \right) }_{(B)} \rho_a(x). \nonumber
  \end{align}
  Now we plug $ {F} = \nabla L^{-1} (\nabla\Phi^*_0)$, $\Phi = \Phi^*$ into \eqref{dL/dF}, by Lemma \ref{vf of HJ}, we have 
  \begin{equation}
     \nabla\Phi^*(x+t~\nabla L^{-1}(\nabla\Phi^*_0(x)), ~t) = \nabla\Phi^*_0(x). \label{important relation}
  \end{equation}
  Then using \eqref{important relation} and recall that $\nabla H = \nabla L^{-1}$, one can verify that $(A)=0$, similarly, for $(B)$, we have $\nabla\Phi(x+t {F}(x) ,t) = \nabla\Phi^*_0$ for all $t\in [0,1]$. Thus $(B)=0$ and we are able to verify $\frac{\partial \mathcal{L}}{\partial{F}}(\nabla L^{-1}(\nabla \Phi^*_0), \Phi^*) = 0$. 
  
  On the other hand, we can compute $\frac{\partial\mathcal{L}( {F}, \Phi)}{\partial \Phi}(x,t)$ as 
  \begin{align*}
     \frac{\delta\mathcal{L}( {F}, \Phi)}{\delta \Phi}(x,t) & = \underbrace{\left[\frac{\partial \hat{\rho}(x,t)}{\partial t} +\nabla\cdot(\hat{\rho}(x,t)\nabla H(\nabla\Phi(x,t)))\right]}_{(C)} + \underbrace{[\rho_b(x)-\hat{\rho}(x,1)]}_{(D)} \delta_1(t) 
  \end{align*}
  Now by Lemma \ref{continuity eq lemma}, we know $(C)=0$.
   Furthermore, since $\Phi^*$ solves Dynamical OT problem associated to the optimal transport problem between $\rho_a$ and $\rho_b$, by \eqref{Phi0, Phi1 as monge maps}, we have $\hat{\rho}(x,1) = (Id+\nabla\Phi^*_0)_{\sharp}\rho_a = \rho_b$, this verifies $(D) = 0$. Thus, we are able to verify $\frac{\partial \mathcal{L}( {F}, \Phi)}{\partial \Phi}( \nabla L^{-1}(\nabla \Phi^*_0), \Phi^* ) = 0$.
   
   At last, we plug $ {F}=\nabla L^{-1}(\nabla\Phi^*_0),\Phi = \Phi^*$ in \eqref{variation B of L} to obtain:
   \begin{align*}
      \mathcal{L}(\nabla L^{-1}(\nabla\Phi^*_0), \Phi^*) & = \int_0^1\int \nabla\Phi^*_0(x)\cdot\nabla L^{-1}(\nabla \Phi^*_0(x)) - H(\nabla\Phi^*_0(x)) ~\rho_a(x) ~dxdt \\
     & = \int L(\nabla L^{-1}(\nabla\Phi^*_0(x)))\rho_a(x)~dx.
   \end{align*}
   Now by Lemma \ref{simplify wasserstein}, we have verified $\mathcal{L}(\nabla L^{-1}(\nabla\Phi^*_0), \Phi^*)=W_{\textrm{Dym OT}}(\rho_a,\rho_b)$.
\end{proof}

\section{Proof of Theorem \ref{theorem on preconditioning}} \label{theorem3}

\begin{customthm}{3}
Suppose $L(\cdot)=\frac{|\cdot|^2}{2}$. We define the map $P(x)=\sigma x+\mu$ with $\sigma\in\mathbb{R}_+, \mu\in\mathbb{R}^d$. Recall \eqref{vector field = inv grad L grad Phi} in the paper, we denote ${v}(x,t)=\nabla\widehat{\Phi}(x, t)$ as the optimal solution to dynamical OT problem \eqref{dynamical OT} in the paper from $P_{\sharp}\rho_a$ to $\rho_b$. We set $\widehat{\Phi}_0 = \widehat{\Phi}(\cdot, 0)$. Furthermore, we denote ${v}(x,t)=\nabla\Phi^*(x,t)$ as the optimal solution to dynamical OT problem \eqref{dynamical OT} from $\rho_a$ to $\rho_b$. We set $\Phi^*_0=\Phi^*(\cdot, 0)$. Then we have
\begin{equation*}
  \nabla\Phi^*_0(x) = \nabla\widehat{\Phi}_0\circ P(x)+P(x)-x,
\end{equation*}
and $\Phi^*_0(x) = \frac{1}{\sigma}\widehat{\Phi}_0(\sigma x + \mu)+\frac{\sigma - 1}{2}|x|^2 + \mu^Tx+\textrm{Const}$.
\end{customthm}

\begin{proof}
 According to \eqref{Phi0, Phi1 as monge maps} in the paper, we have
 \begin{equation*}
   (\textrm{Id}+\nabla\widehat{\Phi}_0)_{\sharp}(P_{\sharp}\rho_a) = \rho_b
 \end{equation*}
 This yields
 \begin{equation*}
    (P+\nabla\Phi_0\circ P)_{\sharp}\rho_a = \rho_b
 \end{equation*}
 We rewrite this as
 \begin{equation}
    (\textrm{Id} + \nabla\widehat{\Phi}_0\circ P + P - \textrm{Id})_{\sharp}\rho_a = \rho_b  \label{thm precondition eq1}
 \end{equation}
 We denote $u(x) =  \frac{1}{\sigma}\widehat{\Phi}_0(\sigma x + \mu) + \frac{\sigma-1}{2}|x|^2 + \mu^Tx$.
 
 Then we can directly verify that
 \begin{equation*}
    \nabla u(x) = \nabla\widehat{\Phi}_0(\sigma x + \mu) + (\sigma x + \mu) - x = \nabla\widehat{\Phi}_0\circ P(x) + P(x) - x
 \end{equation*}
 Plug this into \eqref{thm precondition eq1} above we get:
 \begin{equation*}
    (\textrm{Id}+\nabla u)_{\sharp}\rho_a = \rho_b
 \end{equation*}
 Using the uniqueness of the solution to Monge-Ampere equation, we have $\Phi^*_0 = u +\textrm{Const}$, or equivalently, $\nabla\Phi^*_0(x) = \nabla u(x) = \nabla\widehat{\Phi}_0\circ P(x) + P(x)-x$
\end{proof}

\newpage
\section{Complete Geodesics in Experiments}
\subsection{Synthetic Data}
\textbf{Syn-1:}
\begin{figure*}[h!]
\centering
 \subfloat[][$\rho_a $ to $\rho_b$ at $t_1$]{\includegraphics[width=0.18\textwidth,height=0.18\textheight,keepaspectratio]{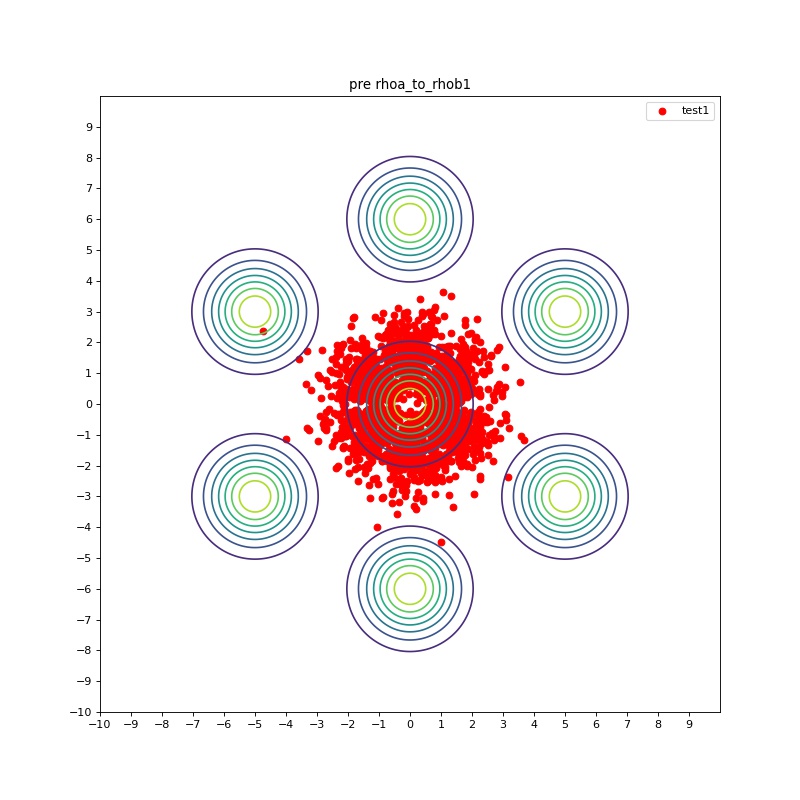}}
 \subfloat[][$\rho_a $ to $\rho_b$ at $t_2$]{\includegraphics[width=.18\linewidth]{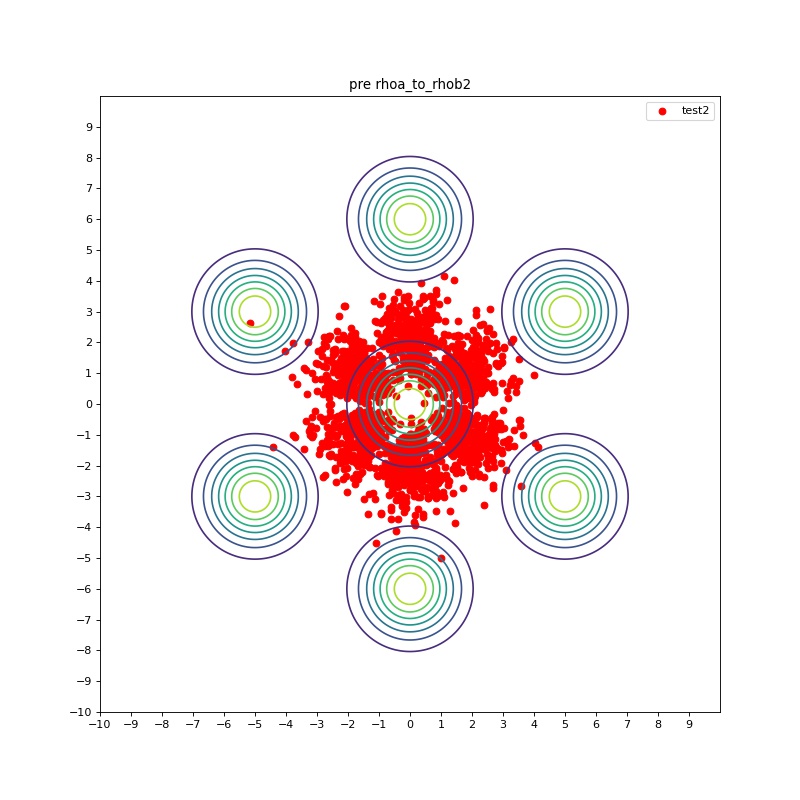}}
 \subfloat[][$\rho_a $ to $\rho_b$ at $t_3$]{\includegraphics[width=.18\linewidth]{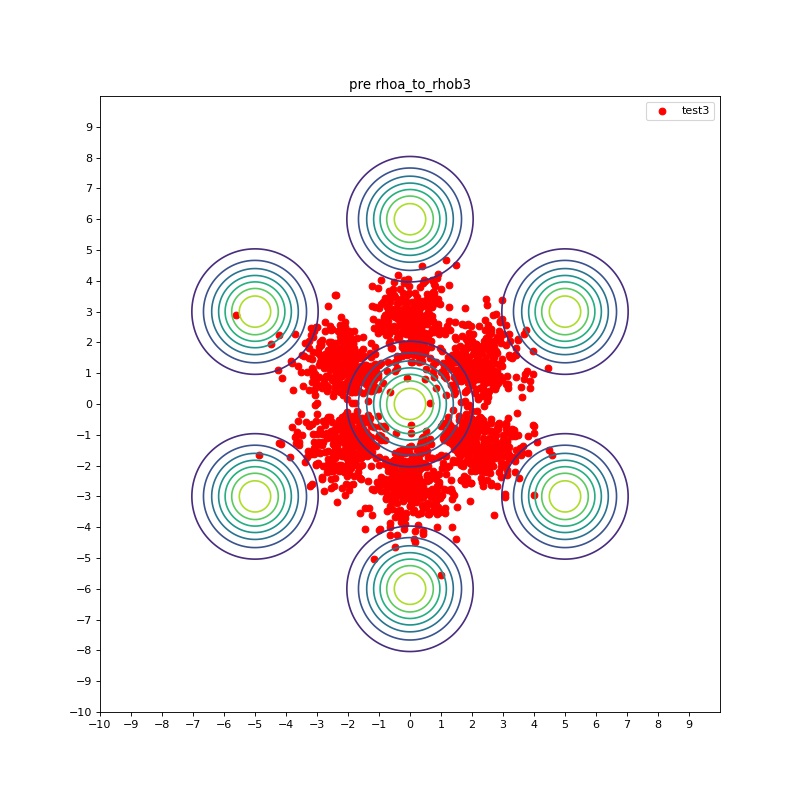}}
 \subfloat[][$\rho_a $ to $\rho_b$ at $t_4$]{\includegraphics[width=.18\linewidth]{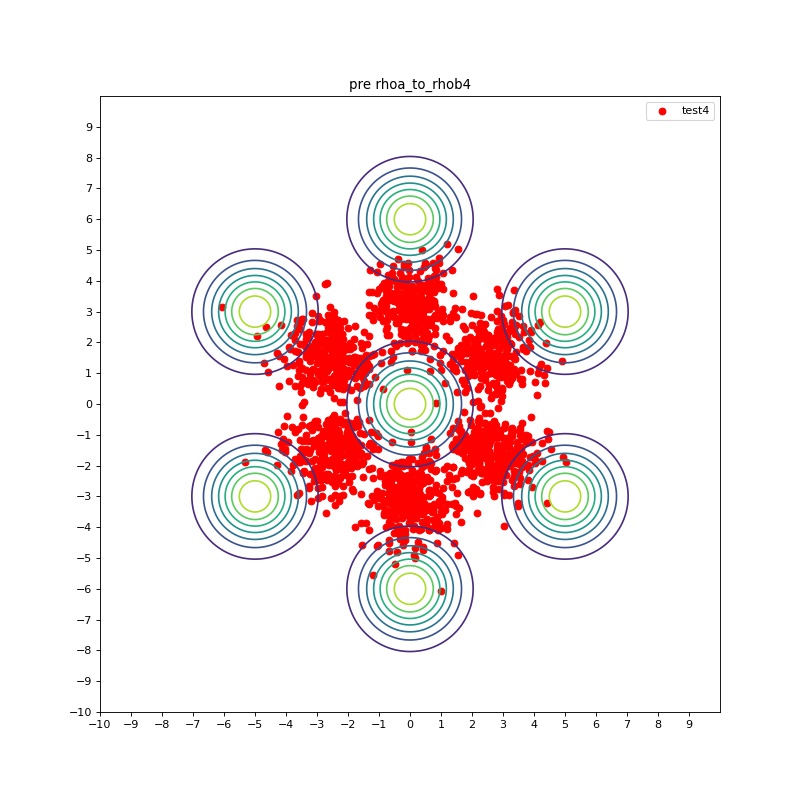}}
 \subfloat[][$\rho_a $ to $\rho_b$ at $t_5$]{\includegraphics[width=.18\linewidth]{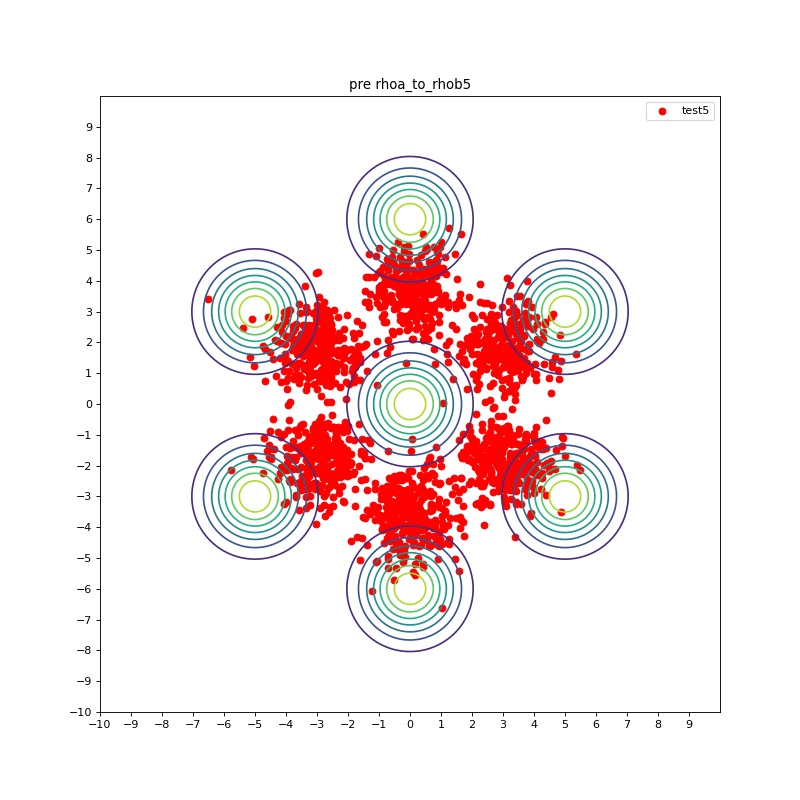}}\\
 \subfloat[][$\rho_a $ to $\rho_b$ at $t_6$]{\includegraphics[width=.18\linewidth]{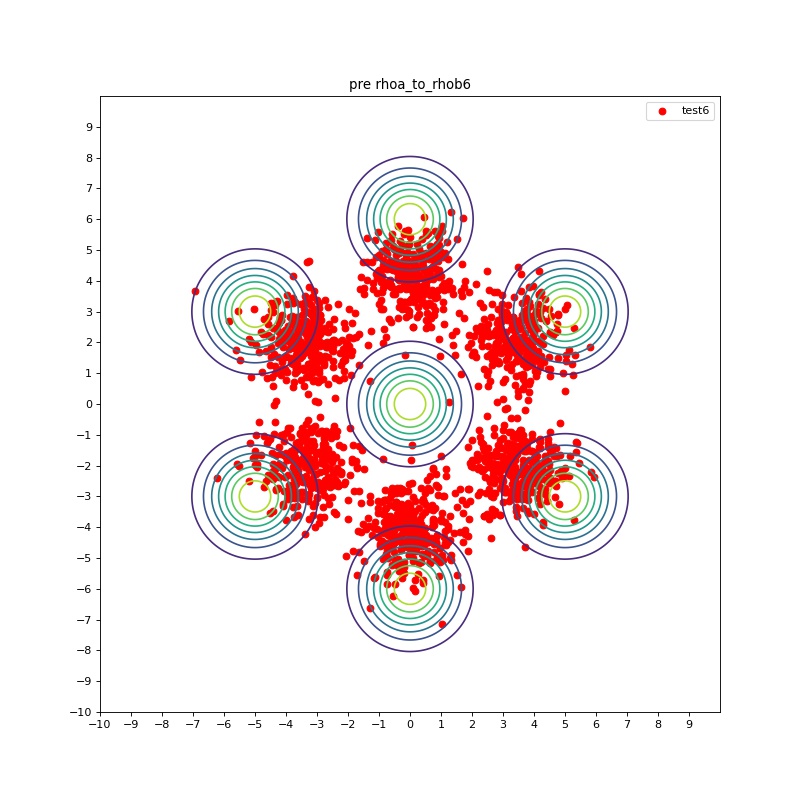}}
 \subfloat[][$\rho_a $ to $\rho_b$ at $t_7$]{\includegraphics[width=.18\linewidth]{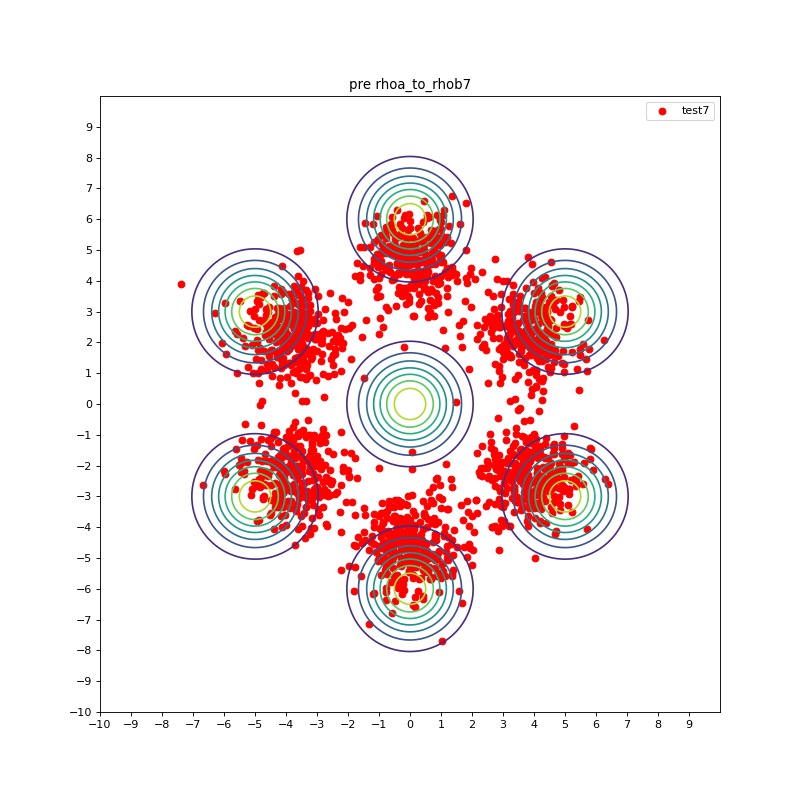}}
 \subfloat[][$\rho_a $ to $\rho_b$ at $t_8$]{\includegraphics[width=.18\linewidth]{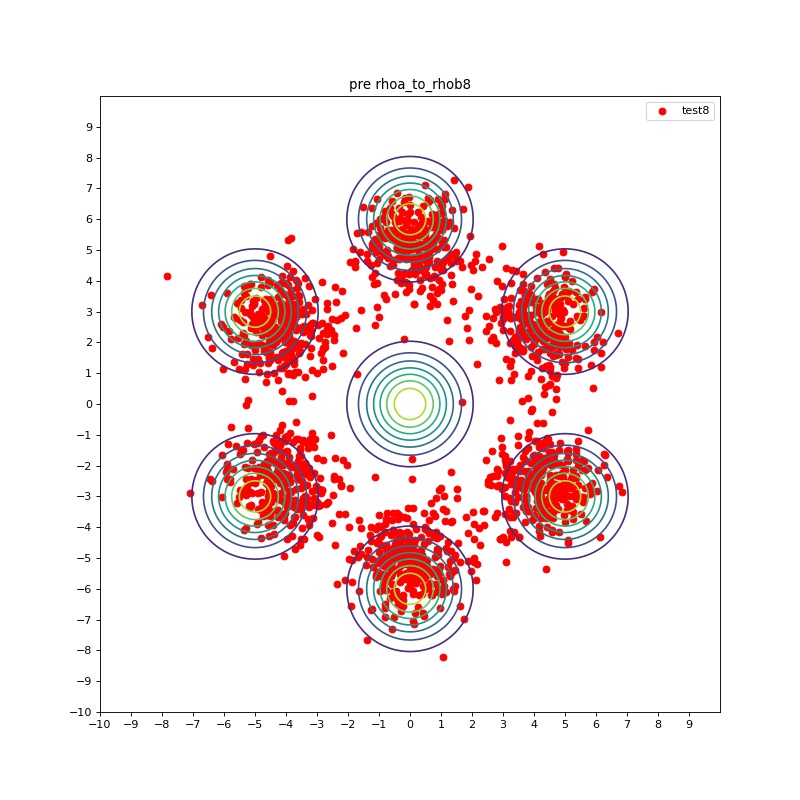}}
 \subfloat[][$\rho_a $ to $\rho_b$ at $t_9$]{\includegraphics[width=.18\linewidth]{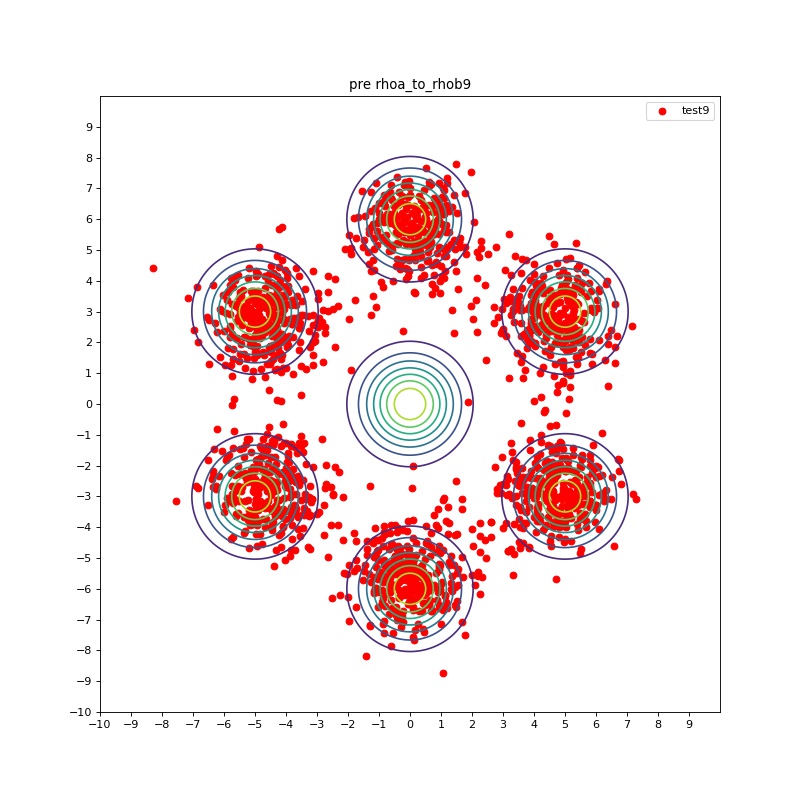}}
 \subfloat[][$\rho_a $ to $\rho_b$ at $t_{10}$]{\includegraphics[width=.18\linewidth]{pre_rhoa_to_rhob1700010.jpg}}\\
 \subfloat[][$\rho_b $ to $\rho_a$ at $t_1$]{\includegraphics[width=.18\linewidth]{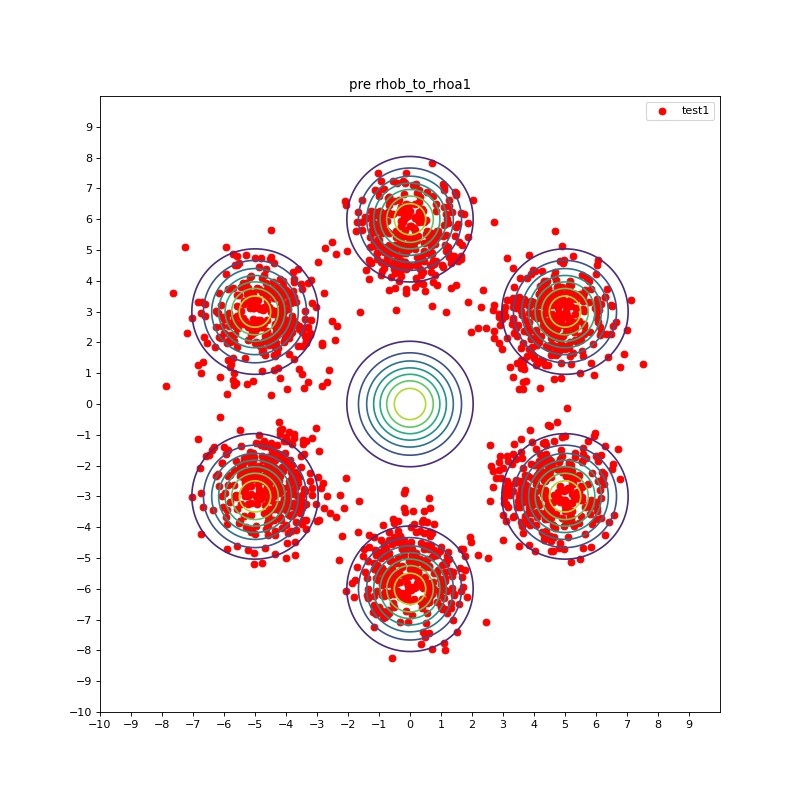}}
 \subfloat[][$\rho_b $ to $\rho_a$ at $t_2$]{\includegraphics[width=.18\linewidth]{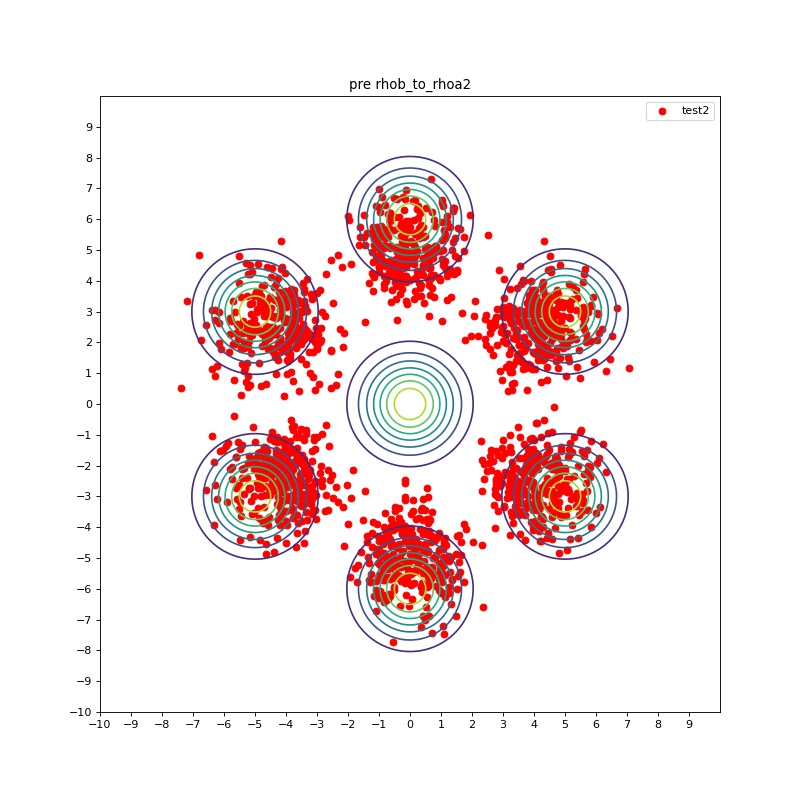}}
 \subfloat[][$\rho_b $ to $\rho_a$ at $t_3$]{\includegraphics[width=.18\linewidth]{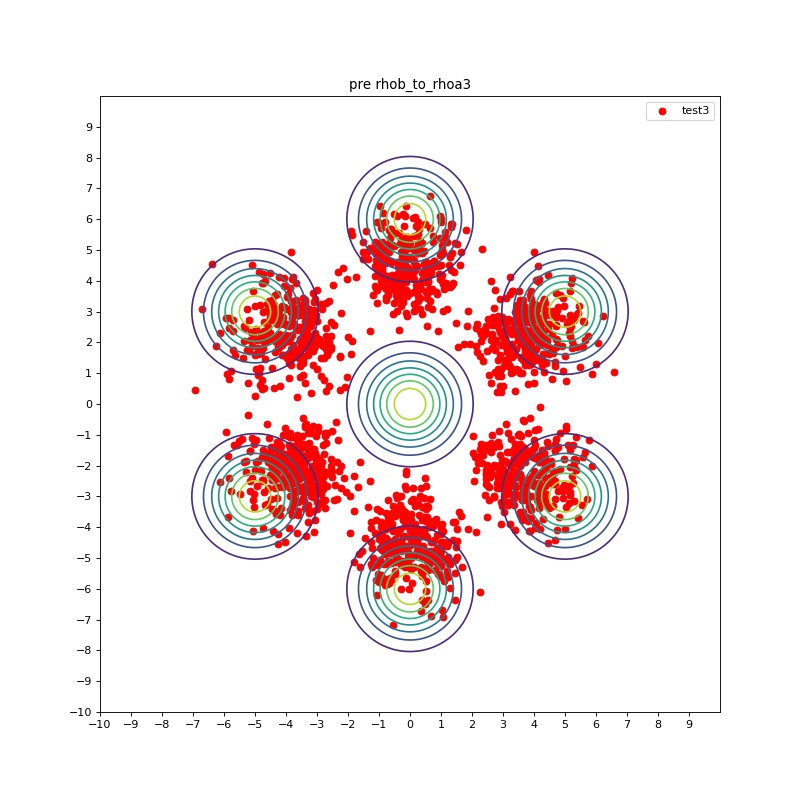}}
 \subfloat[][$\rho_b $ to $\rho_a$ at $t_4$]{\includegraphics[width=.18\linewidth]{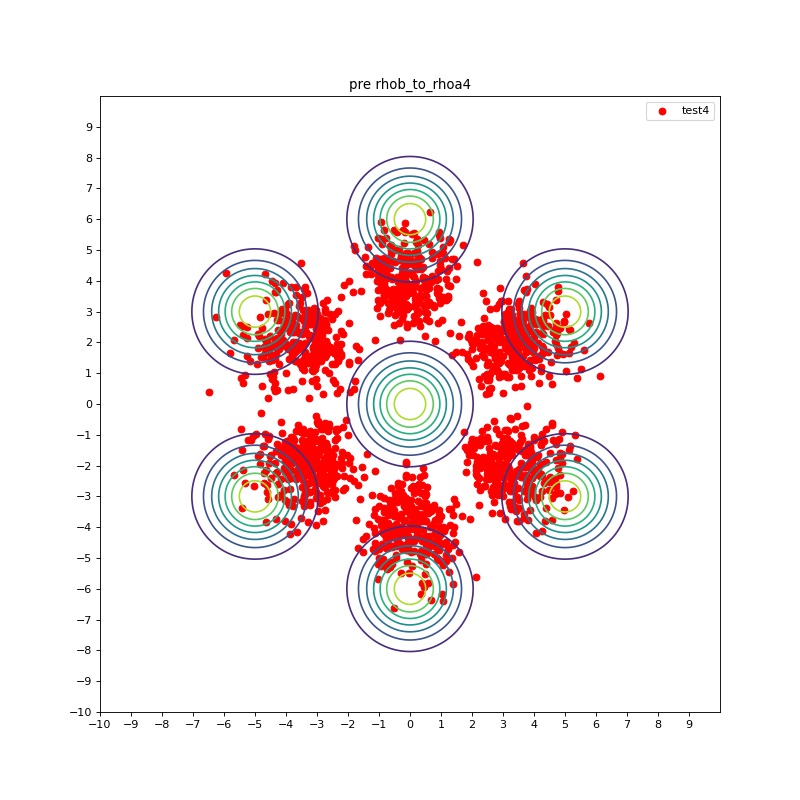}}
 \subfloat[][$\rho_b $ to $\rho_a$ at $t_5$]{\includegraphics[width=.18\linewidth]{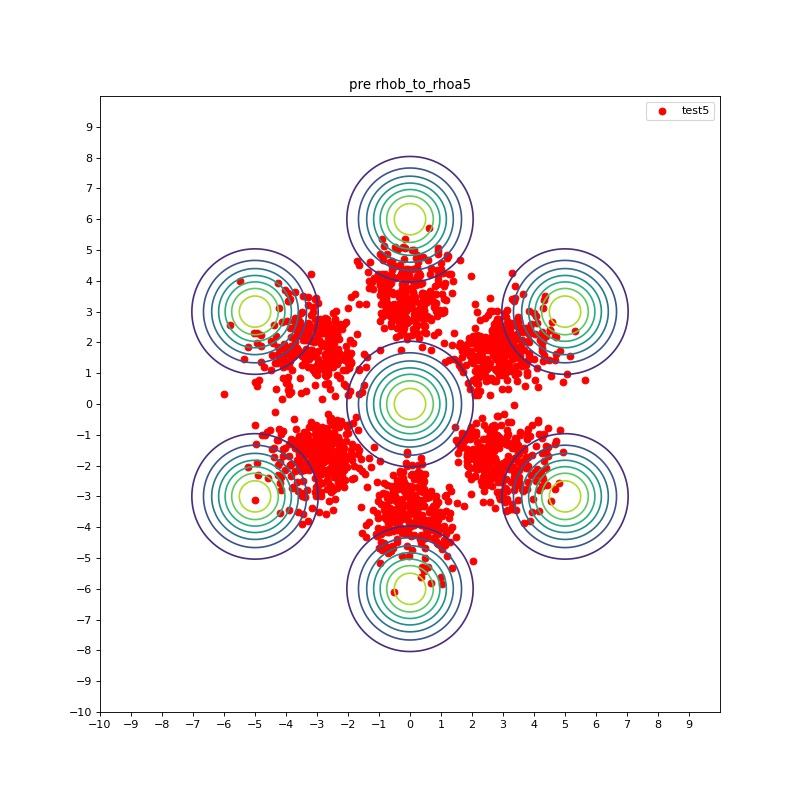}}\\
 \subfloat[][$\rho_b $ to $\rho_a$ at $t_6$]{\includegraphics[width=.18\linewidth]{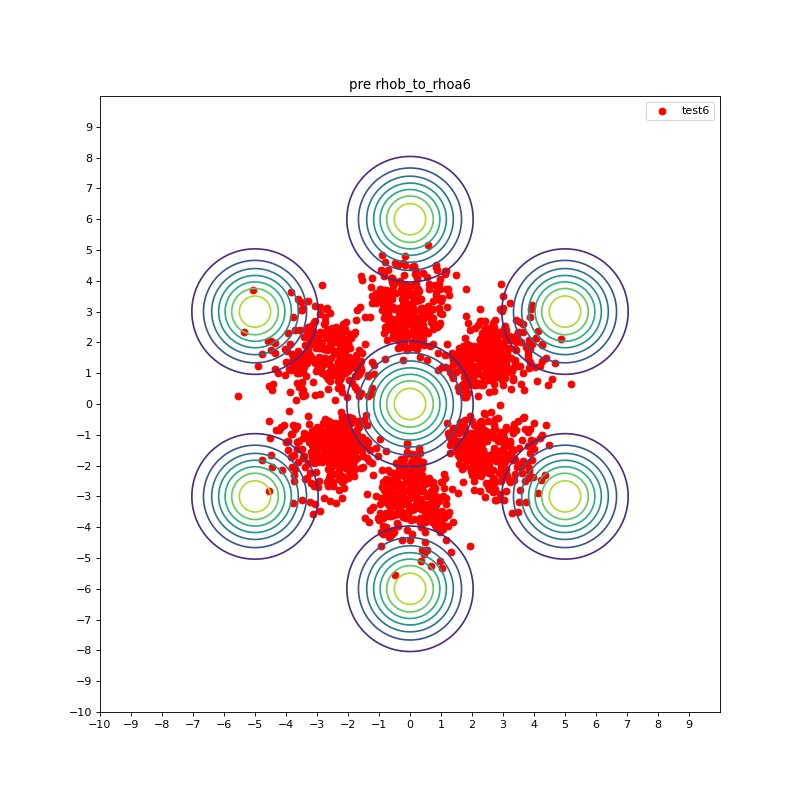}}
 \subfloat[][$\rho_b $ to $\rho_a$ at $t_7$]{\includegraphics[width=.18\linewidth]{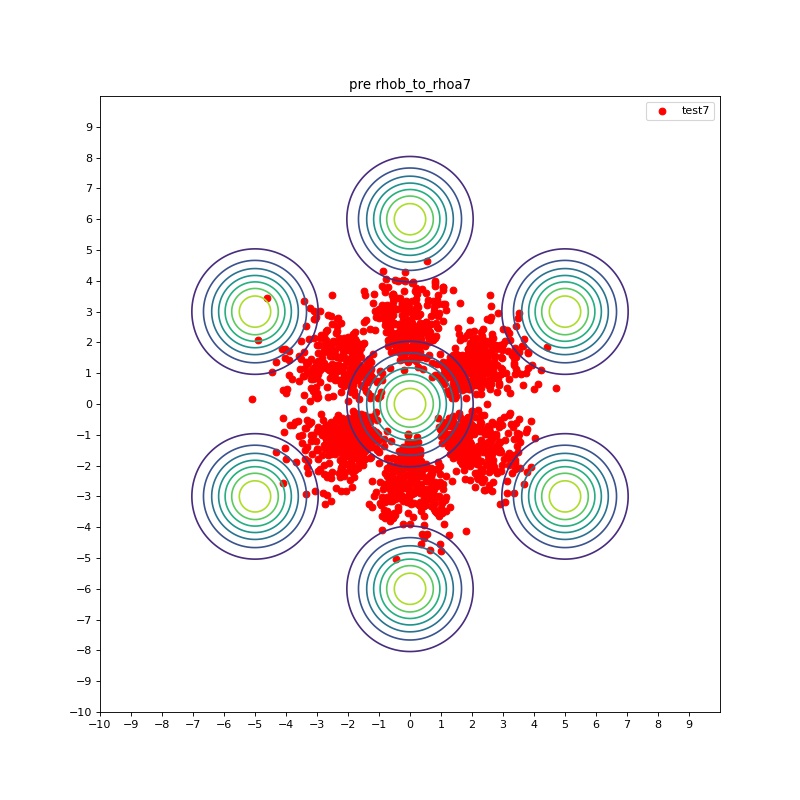}}
 \subfloat[][$\rho_b $ to $\rho_a$ at $t_8$]{\includegraphics[width=.18\linewidth]{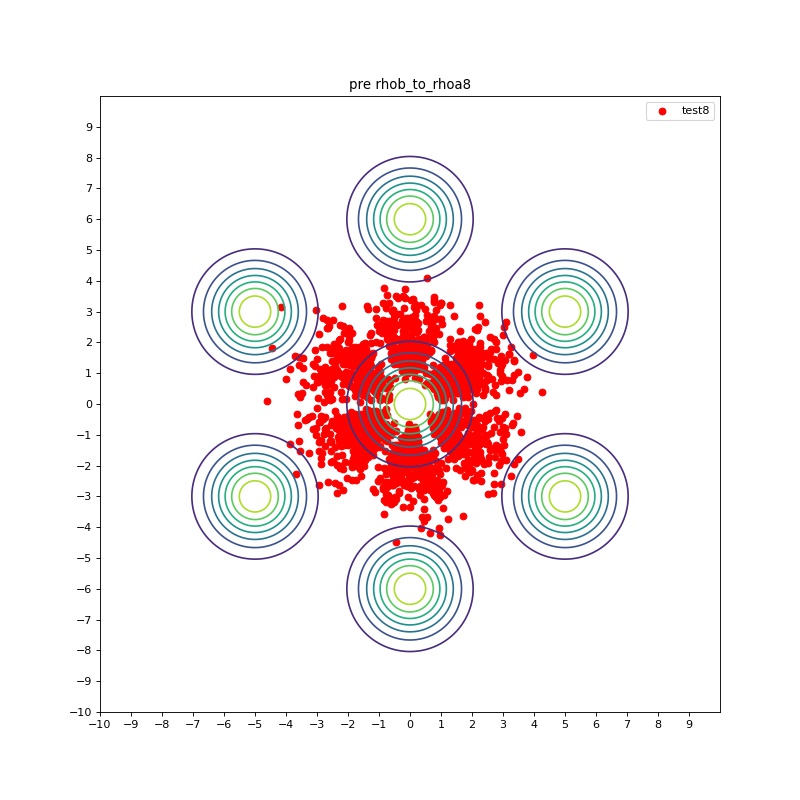}}
 \subfloat[][$\rho_b $ to $\rho_a$ at $t_9$]{\includegraphics[width=.18\linewidth]{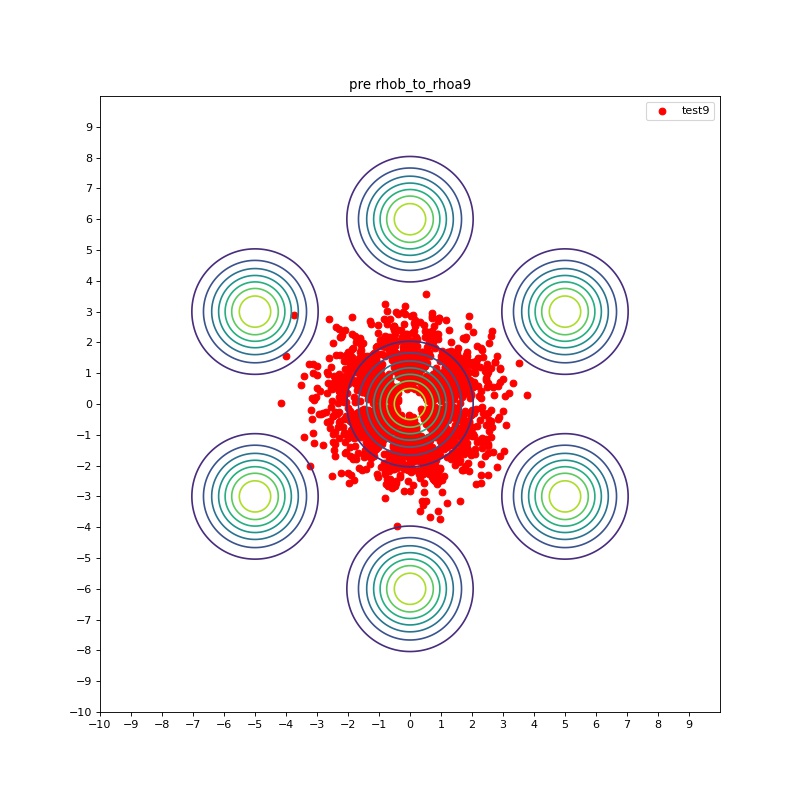}}
 \subfloat[][$\rho_b $ to $\rho_a$ at $t_{10}$]{\includegraphics[width=.18\linewidth]{pre_rhob_to_rhoa1700010.jpg}}\\
 \subfloat[][points track:$\rho_a$ to $\rho_b$]{\includegraphics[width=.18\linewidth]{colored_tracks_rhoa_to_rhob17000.jpg}}
 \subfloat[][vector field:$\rho_a$ to $\rho_b$]{\includegraphics[width=.18\linewidth]{vec_fd_rhoa_to_rhob1700010.jpg}}
 \subfloat[][points track:$\rho_b$ to $\rho_a$]{\includegraphics[width=.18\linewidth]{colored_tracks_rhob_to_rhoa17000.jpg}}
 \subfloat[][vector field:$\rho_b$ to $\rho_a$]{\includegraphics[width=.18\linewidth]{vec_fd_rhob_to_rhoa1700010.jpg}}
\caption{2-dimensional Gaussian to Gaussian}
\label{fig:syn-111}
\end{figure*}

\newpage
\textbf{Syn-2: Wasserstein-1.5}
\begin{figure*}[h!]
\centering
\subfloat[][$\rho_a $ to $\rho_b$ at $t_1$]{\includegraphics[width=0.18\textwidth,height=0.18\textheight,keepaspectratio]{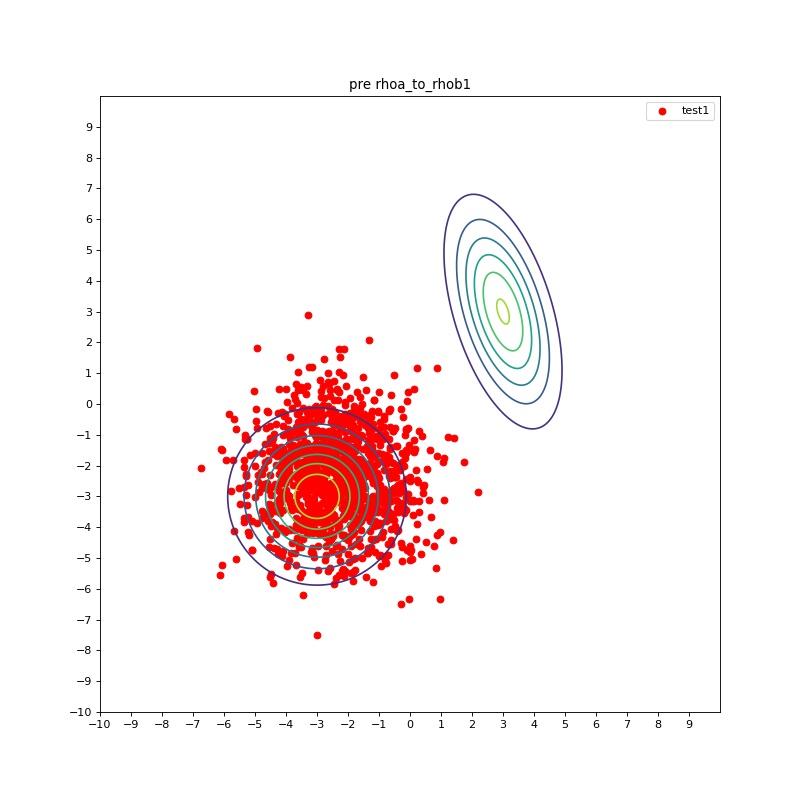}}
\subfloat[][$\rho_a $ to $\rho_b$ at $t_2$]{\includegraphics[width=.18\linewidth]{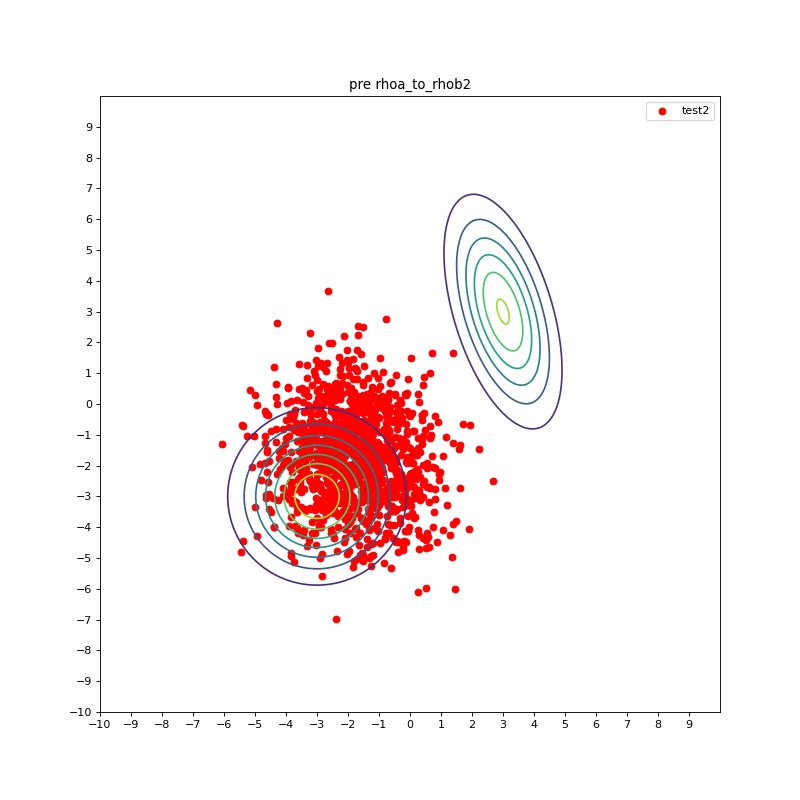}}
\subfloat[][$\rho_a $ to $\rho_b$ at $t_3$]{\includegraphics[width=.18\linewidth]{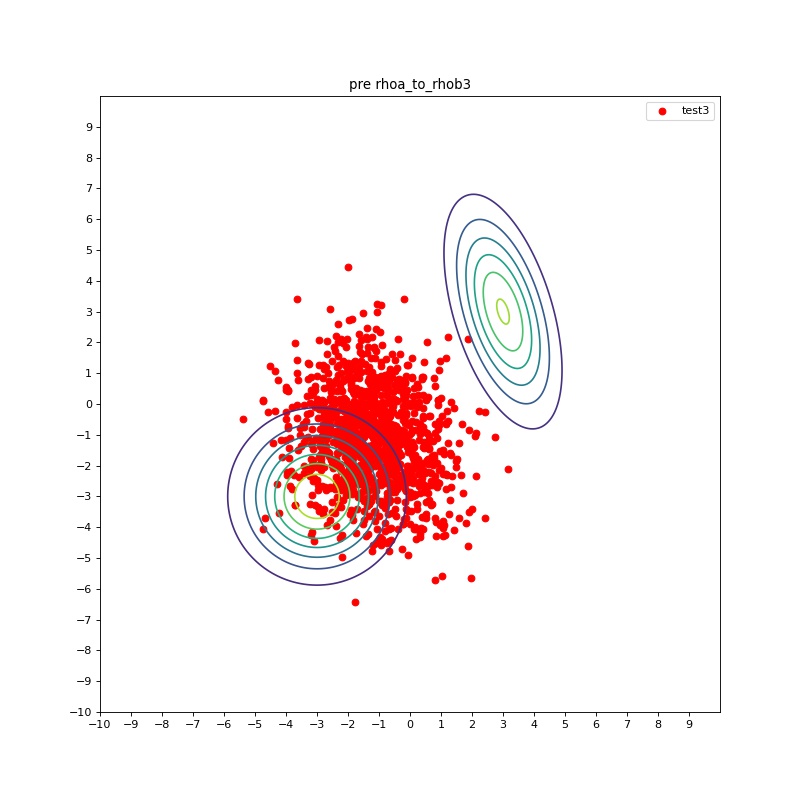}}
\subfloat[][$\rho_a $ to $\rho_b$ at $t_4$]{\includegraphics[width=.18\linewidth]{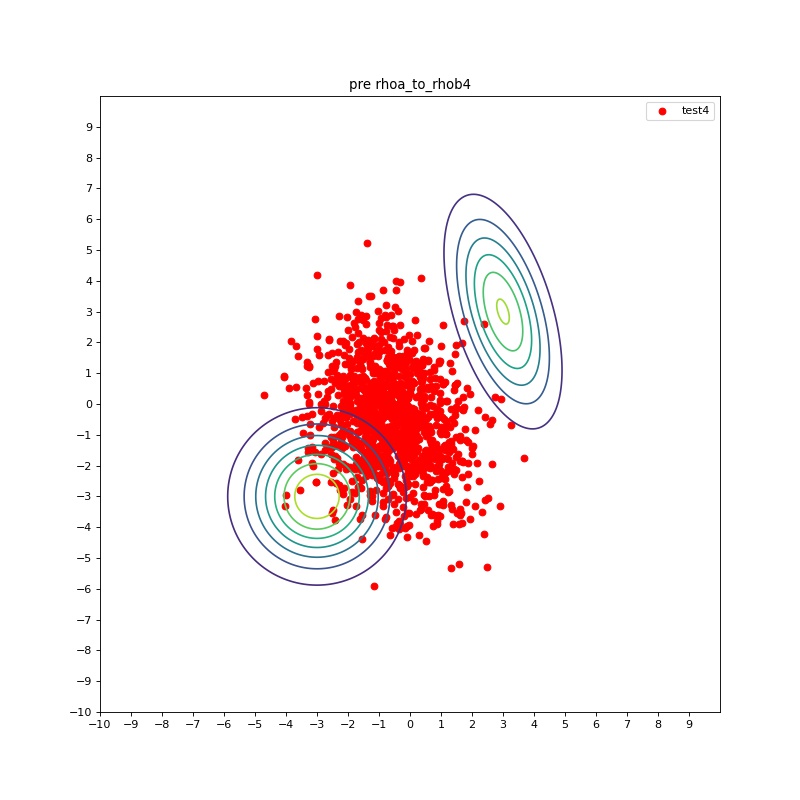}}
\subfloat[][$\rho_a $ to $\rho_b$ at $t_5$]{\includegraphics[width=.18\linewidth]{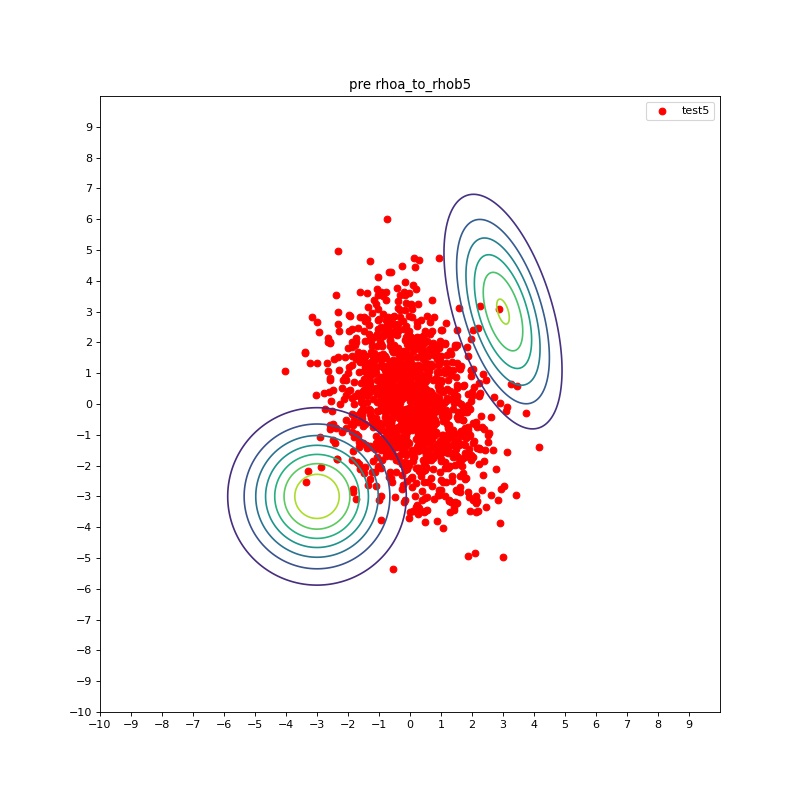}}\\
\subfloat[][$\rho_a $ to $\rho_b$ at $t_6$]{\includegraphics[width=.18\linewidth]{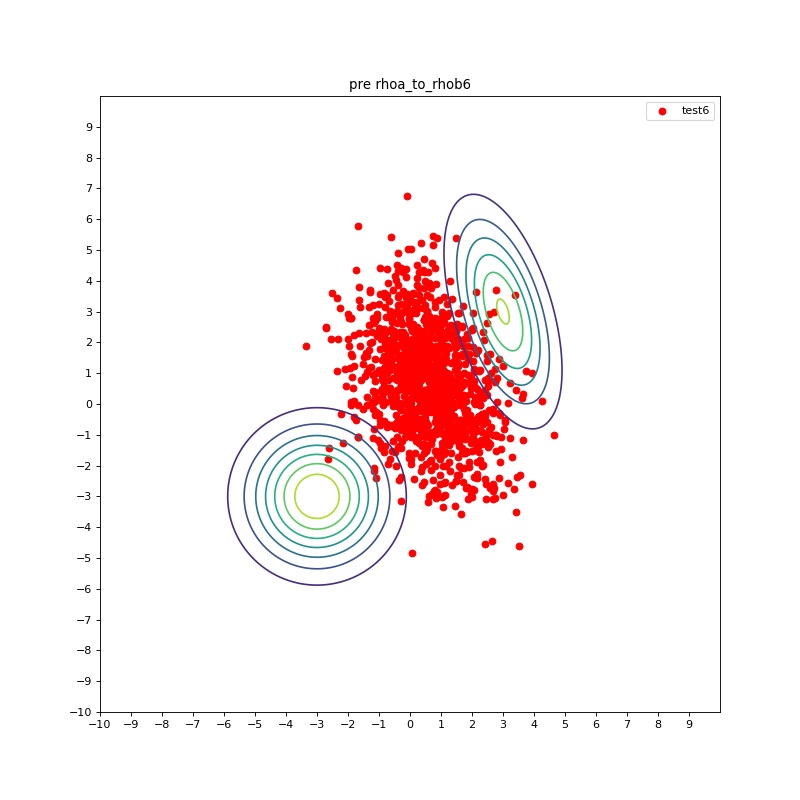}}
\subfloat[][$\rho_a $ to $\rho_b$ at $t_7$]{\includegraphics[width=.18\linewidth]{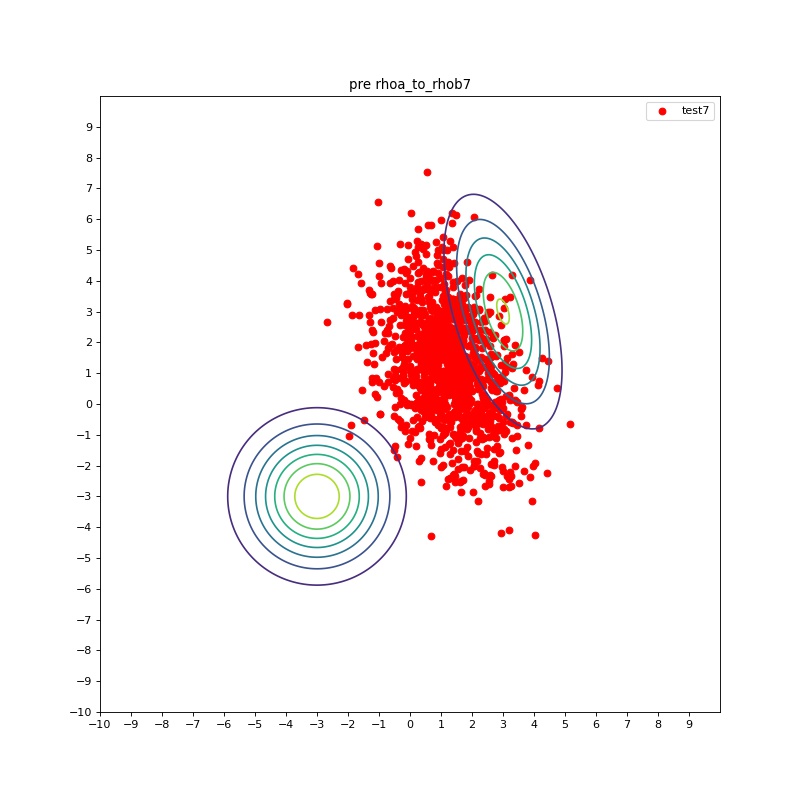}}
\subfloat[][$\rho_a $ to $\rho_b$ at $t_8$]{\includegraphics[width=.18\linewidth]{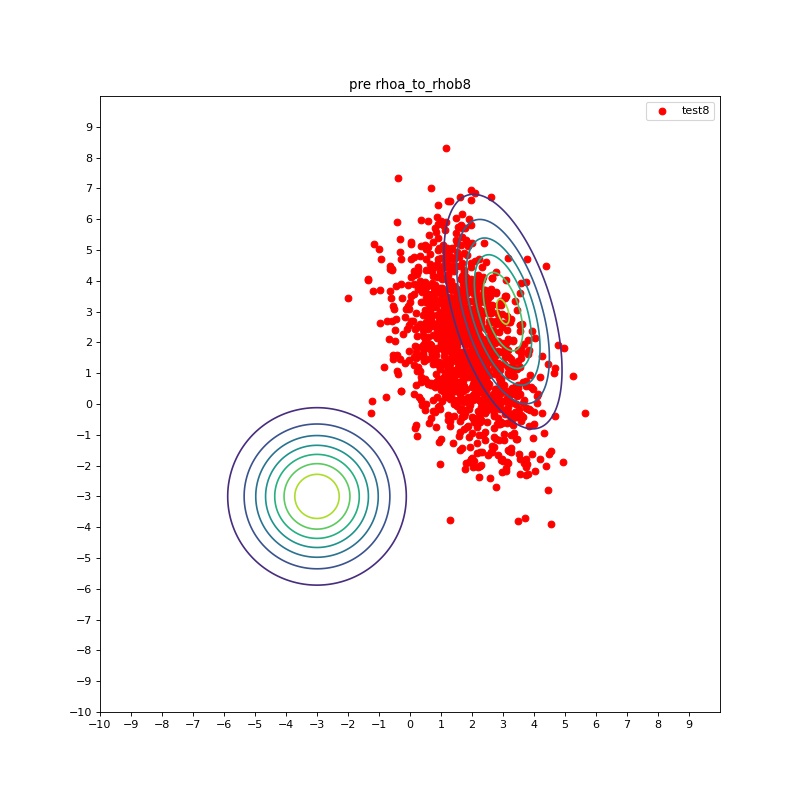}}
\subfloat[][$\rho_a $ to $\rho_b$ at $t_9$]{\includegraphics[width=.18\linewidth]{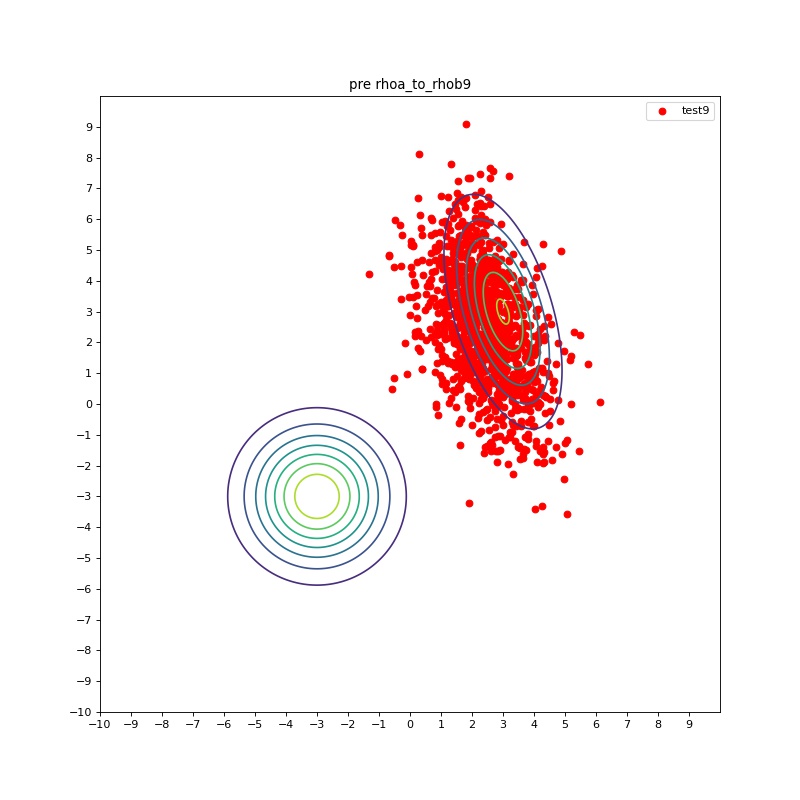}}
\subfloat[][$\rho_a $ to $\rho_b$ at $t_{10}$]{\includegraphics[width=.18\linewidth]{pre_rhoa_to_rhob_11000_10.jpg}}\\
\subfloat[][$\rho_b $ to $\rho_a$ at $t_1$]{\includegraphics[width=.18\linewidth]{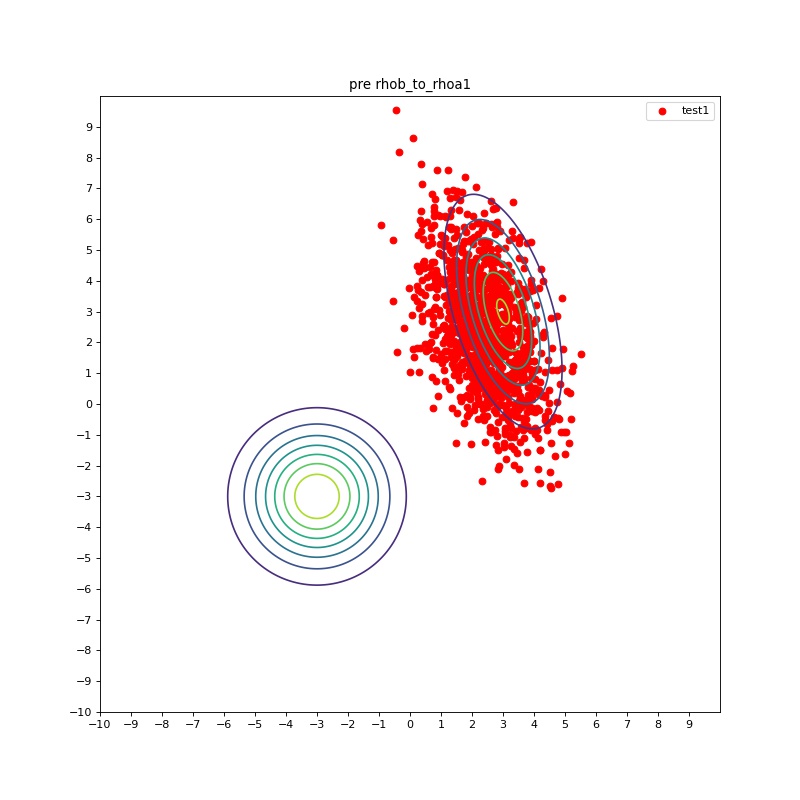}}
\subfloat[][$\rho_b $ to $\rho_a$ at $t_2$]{\includegraphics[width=.18\linewidth]{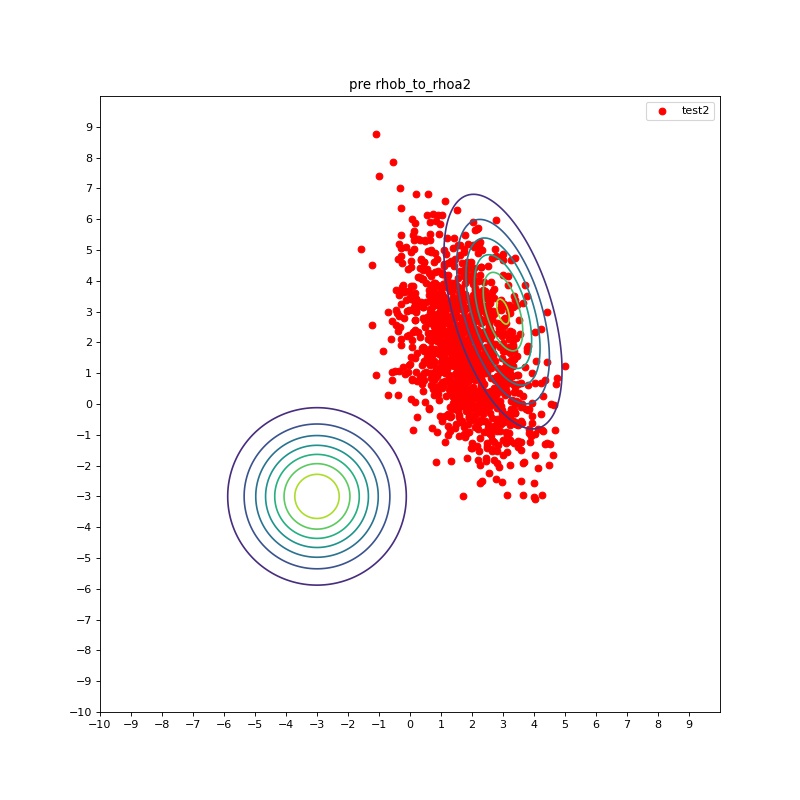}}
\subfloat[][$\rho_b $ to $\rho_a$ at $t_3$]{\includegraphics[width=.18\linewidth]{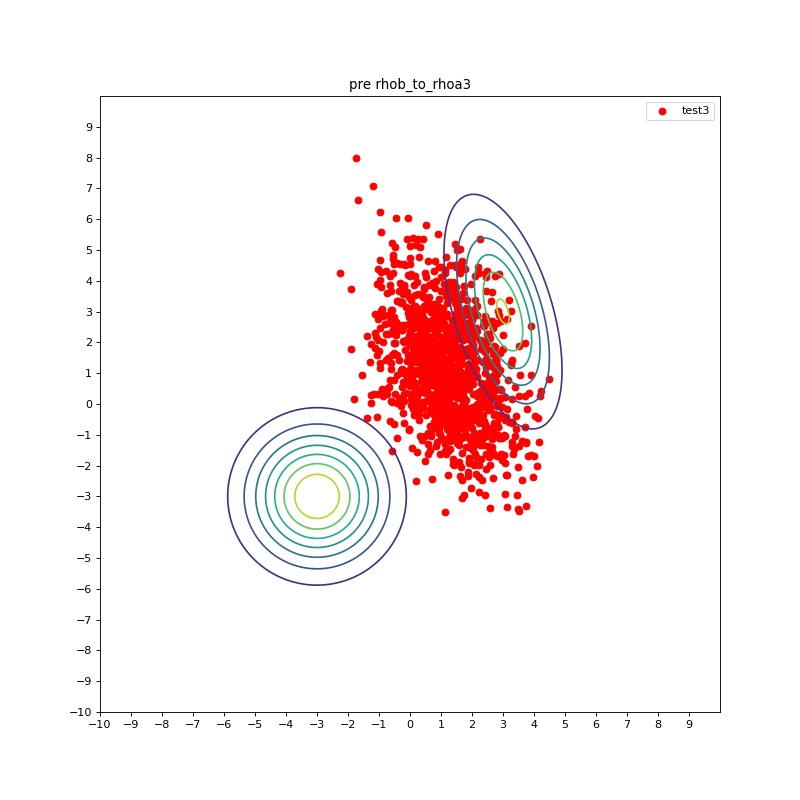}}
\subfloat[][$\rho_b $ to $\rho_a$ at $t_4$]{\includegraphics[width=.18\linewidth]{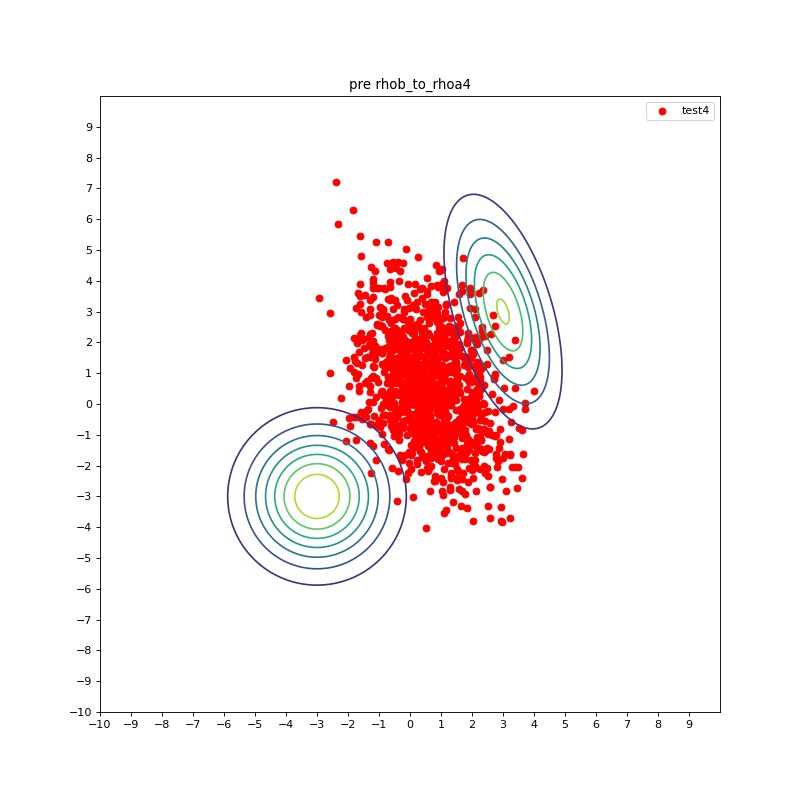}}
\subfloat[][$\rho_b $ to $\rho_a$ at $t_5$]{\includegraphics[width=.18\linewidth]{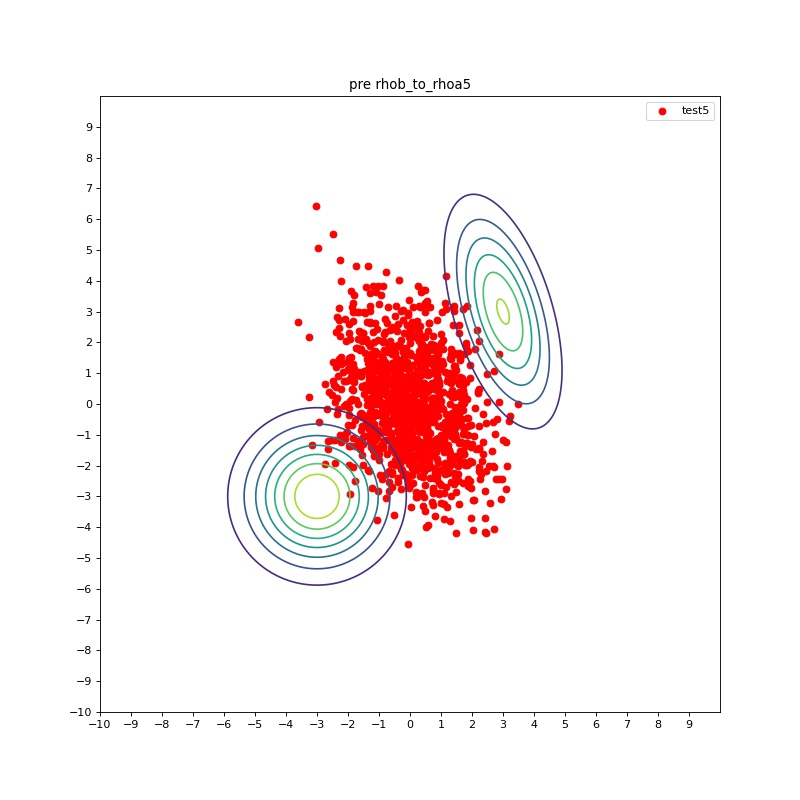}}\\
\subfloat[][$\rho_b $ to $\rho_a$ at $t_6$]{\includegraphics[width=.18\linewidth]{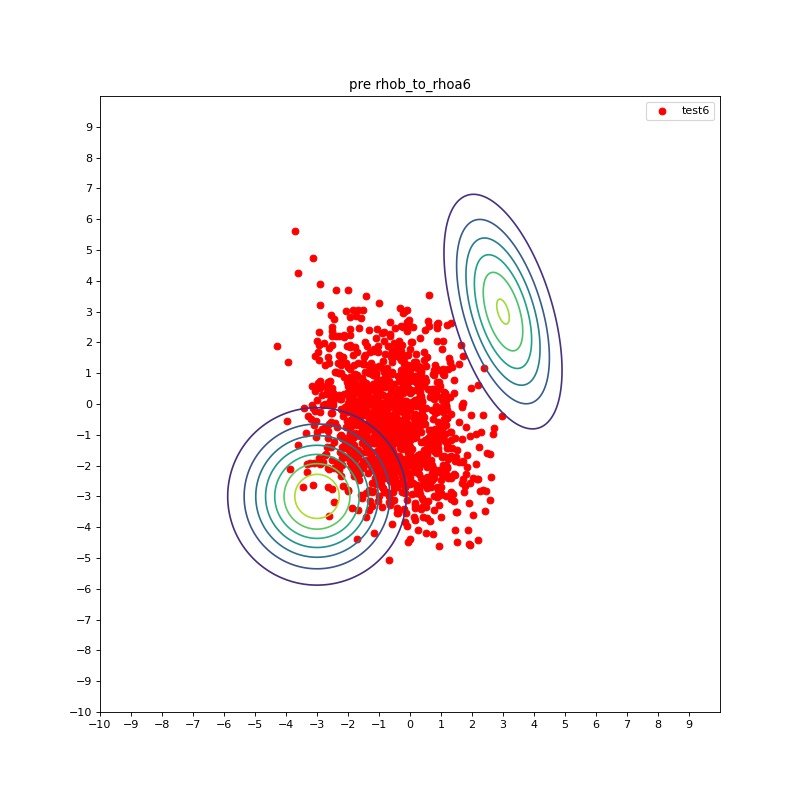}}
\subfloat[][$\rho_b $ to $\rho_a$ at $t_7$]{\includegraphics[width=.18\linewidth]{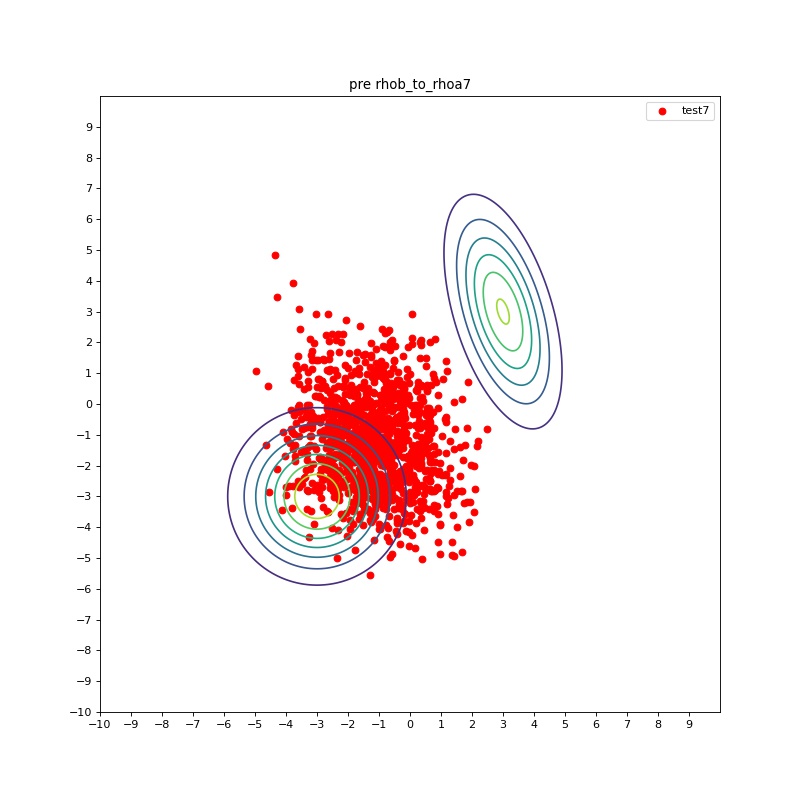}}
\subfloat[][$\rho_b $ to $\rho_a$ at $t_8$]{\includegraphics[width=.18\linewidth]{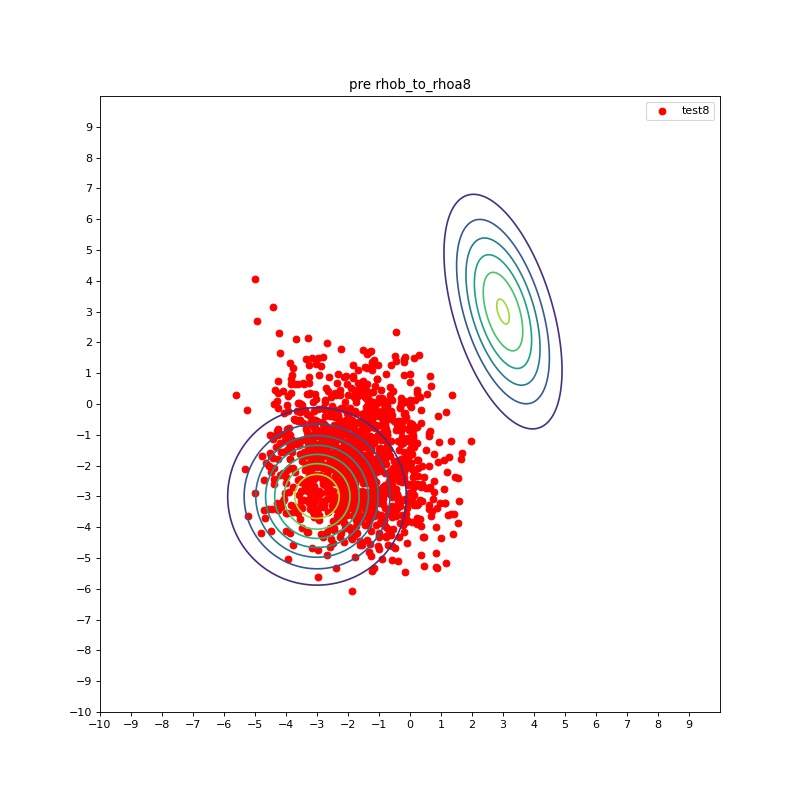}}
\subfloat[][$\rho_b $ to $\rho_a$ at $t_9$]{\includegraphics[width=.18\linewidth]{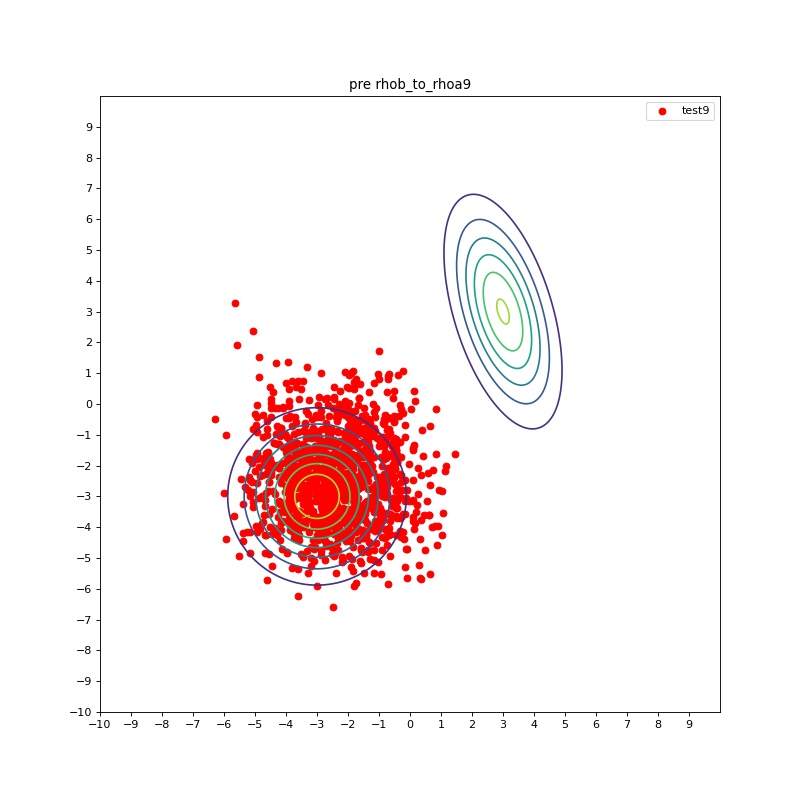}}
\subfloat[][$\rho_b $ to $\rho_a$ at $t_{10}$]{\includegraphics[width=.18\linewidth]{pre_rhob_to_rhoa_11000_10.jpg}}\\
\subfloat[][points track:$\rho_a$ to $\rho_b$]{\includegraphics[width=.18\linewidth]{colored_tracks_rhoa_to_rhob_11000.jpg}}
\subfloat[][vector field:$\rho_a$ to $\rho_b$]{\includegraphics[width=.18\linewidth]{vec_fd_rhoa_to_rhob_11000.jpg}}
\subfloat[][points track:$\rho_b$ to $\rho_a$]{\includegraphics[width=.18\linewidth]{colored_tracks_rhob_to_rhoa_11000.jpg}}
\subfloat[][vector field:$\rho_b$ to $\rho_a$]{\includegraphics[width=.18\linewidth]{vec_fd_rhob_to_rhoa_11000.jpg}}
\caption{5-dimensional Gaussian to Gaussian}
\label{fig:syn-21}
\end{figure*}

\newpage
\textbf{Syn-2: Wasserstein-2}
\begin{figure*}[h!]
\centering
\subfloat[][$\rho_a $ to $\rho_b$ at $t_1$]{\includegraphics[width=0.18\textwidth,height=0.18\textheight,keepaspectratio]{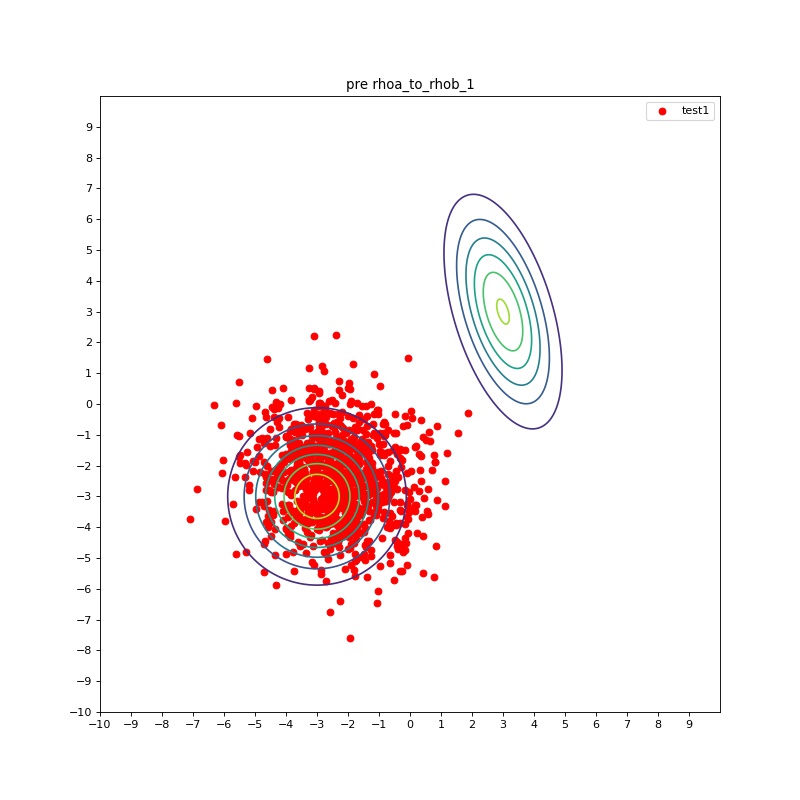}}
\subfloat[][$\rho_a $ to $\rho_b$ at $t_2$]{\includegraphics[width=.18\linewidth]{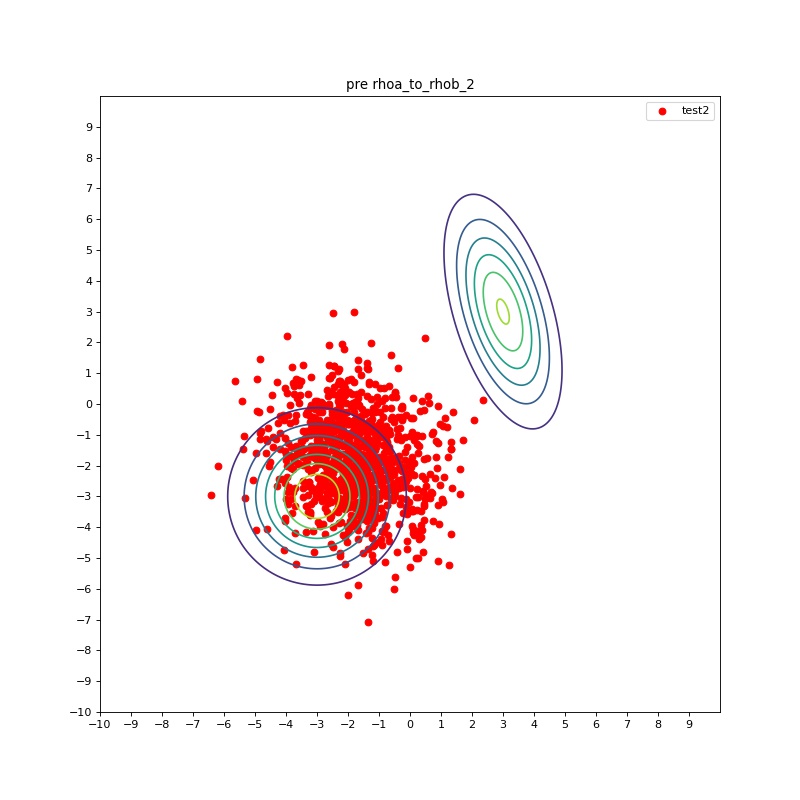}}
\subfloat[][$\rho_a $ to $\rho_b$ at $t_3$]{\includegraphics[width=.18\linewidth]{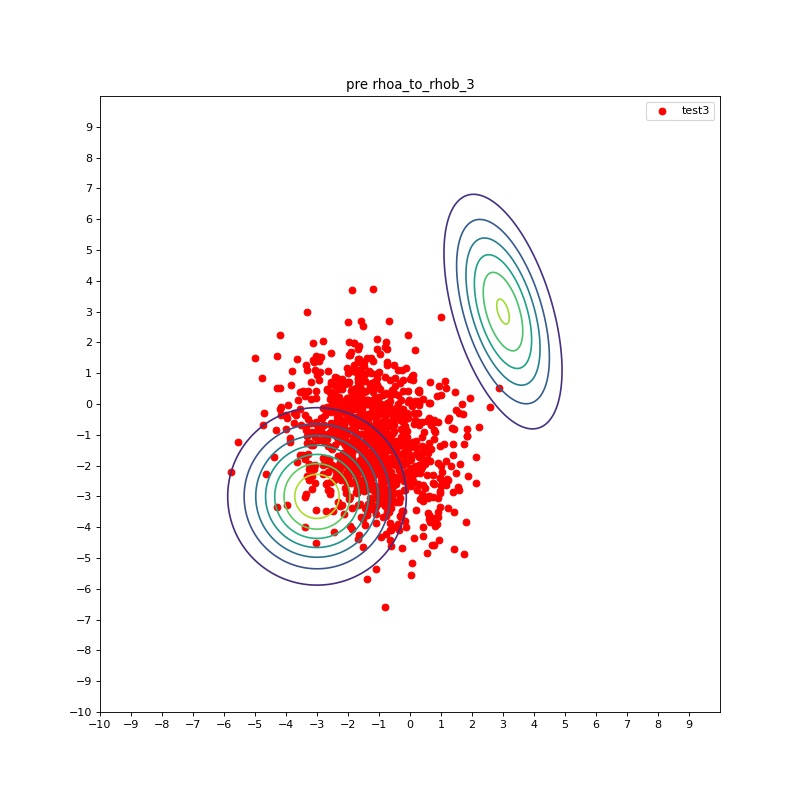}}
\subfloat[][$\rho_a $ to $\rho_b$ at $t_4$]{\includegraphics[width=.18\linewidth]{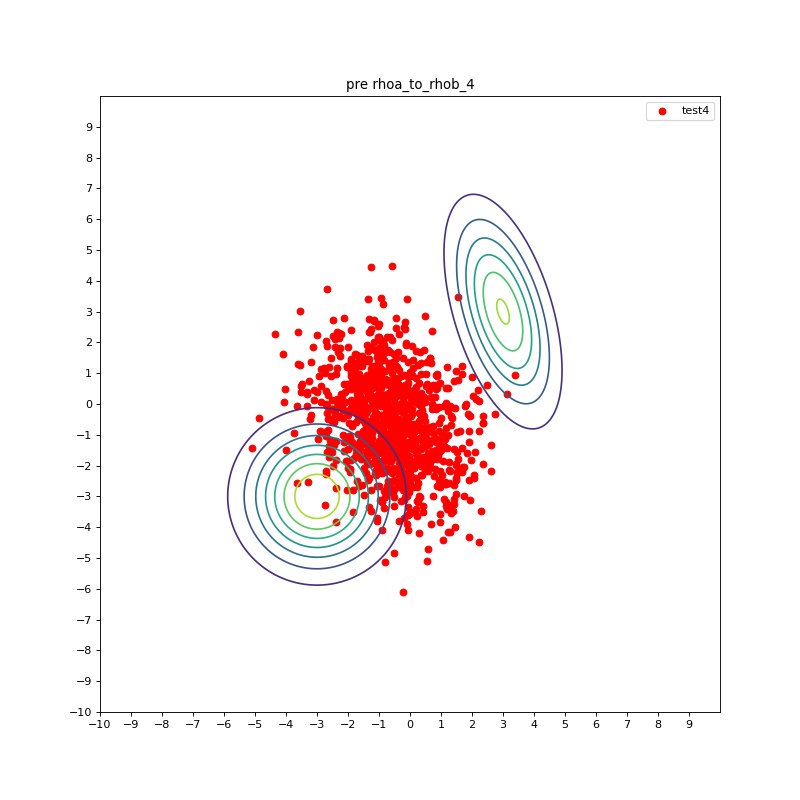}}
\subfloat[][$\rho_a $ to $\rho_b$ at $t_5$]{\includegraphics[width=.18\linewidth]{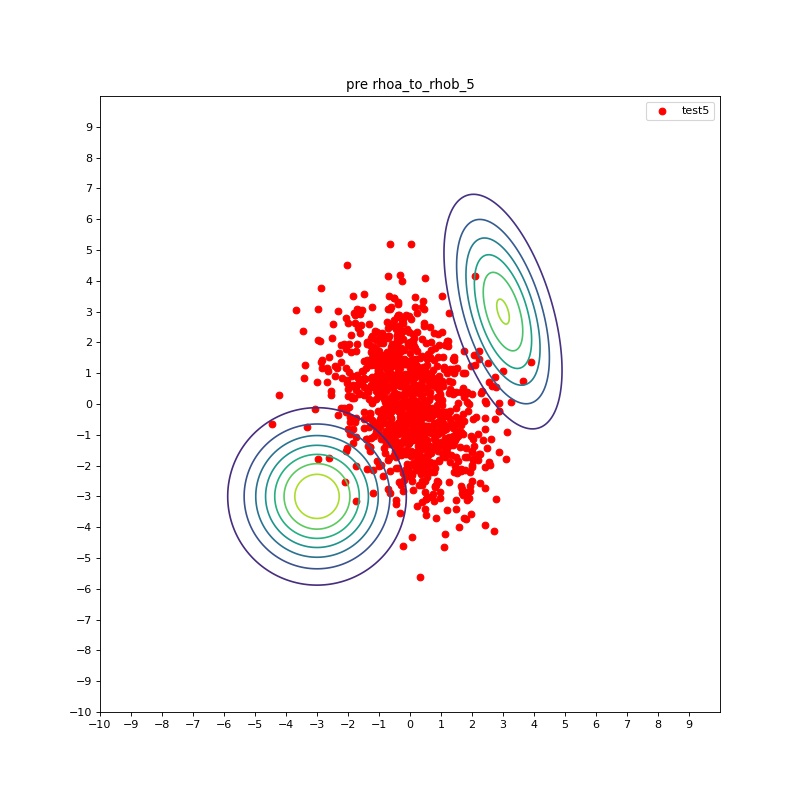}}\\
\subfloat[][$\rho_a $ to $\rho_b$ at $t_6$]{\includegraphics[width=.18\linewidth]{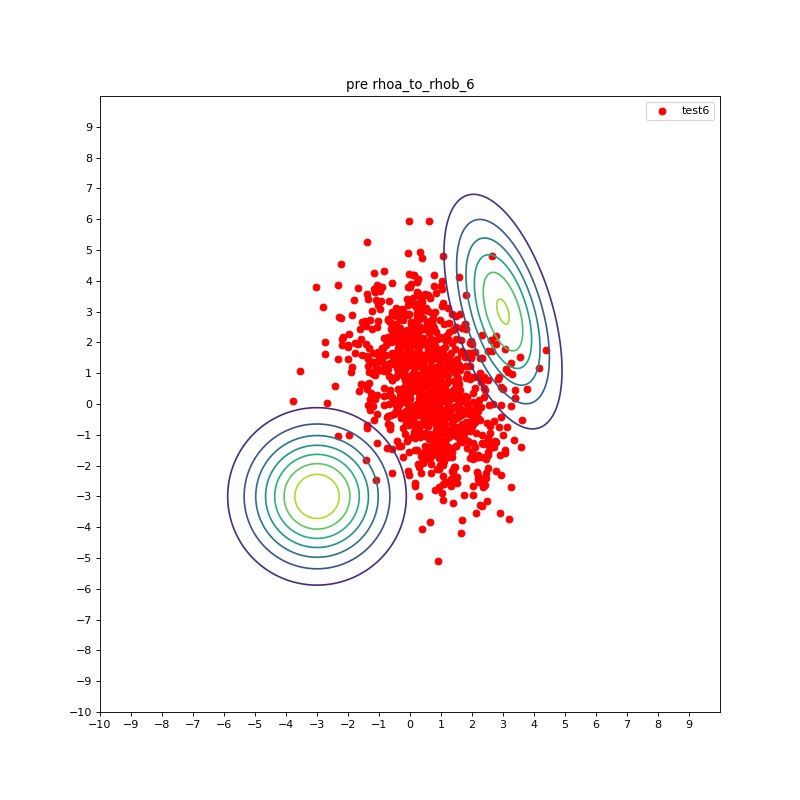}}
\subfloat[][$\rho_a $ to $\rho_b$ at $t_7$]{\includegraphics[width=.18\linewidth]{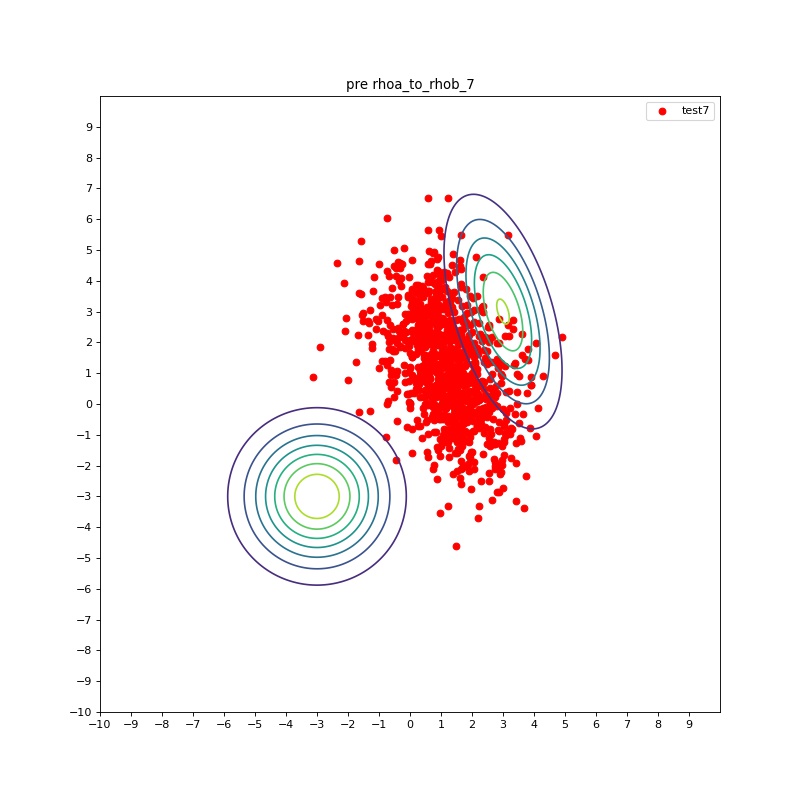}}
\subfloat[][$\rho_a $ to $\rho_b$ at $t_8$]{\includegraphics[width=.18\linewidth]{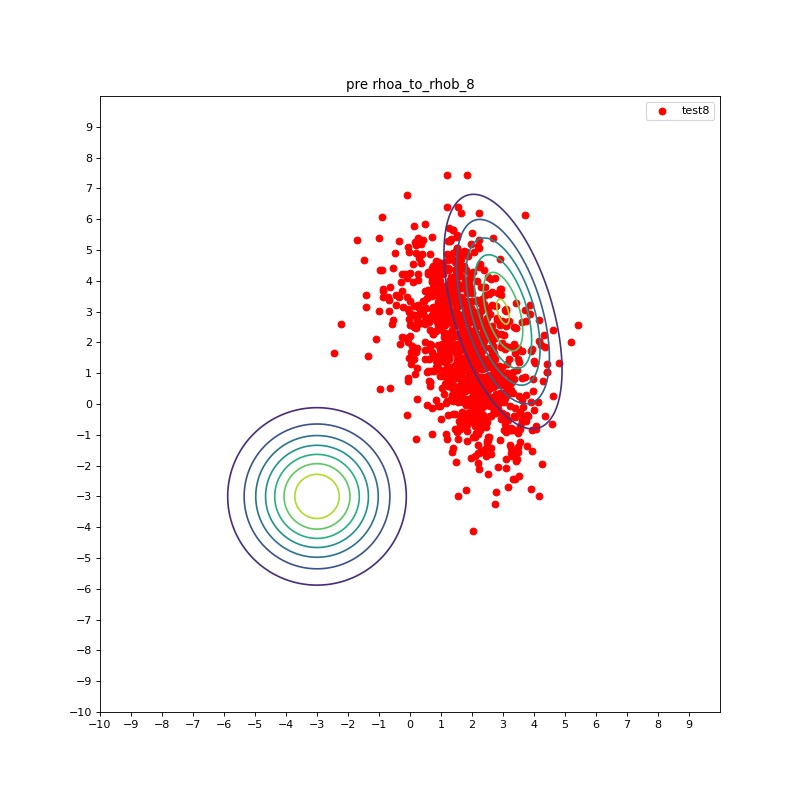}}
\subfloat[][$\rho_a $ to $\rho_b$ at $t_9$]{\includegraphics[width=.18\linewidth]{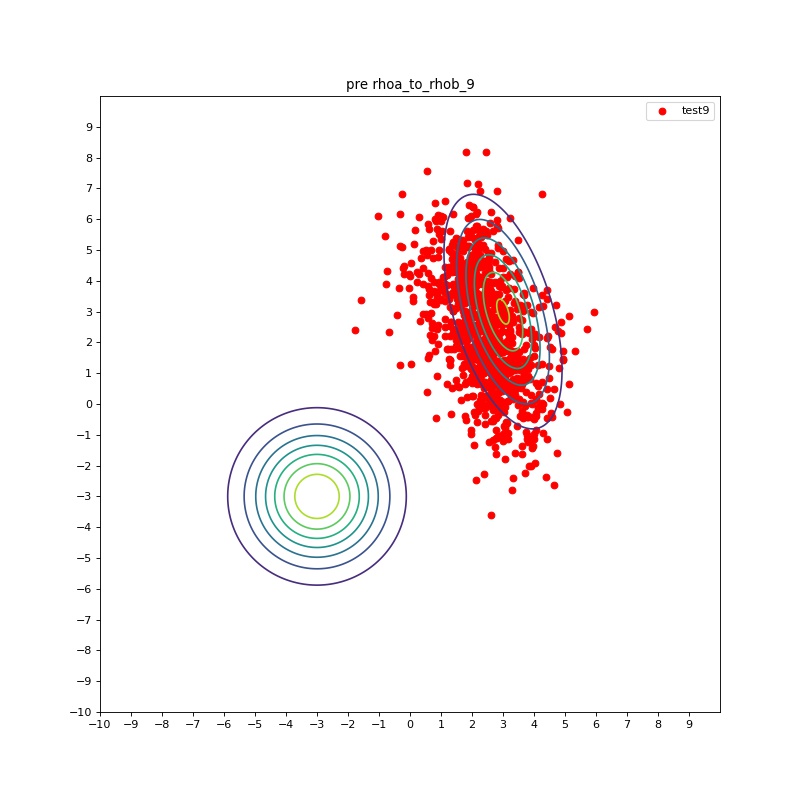}}
\subfloat[][$\rho_a $ to $\rho_b$ at $t_{10}$]{\includegraphics[width=.18\linewidth]{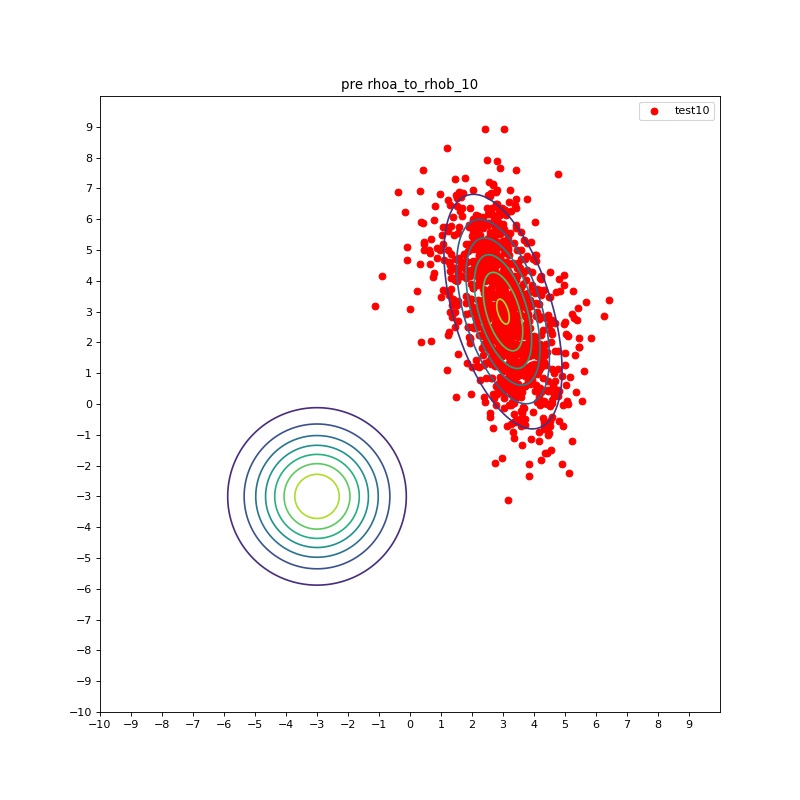}}\\
\subfloat[][$\rho_b $ to $\rho_a$ at $t_1$]{\includegraphics[width=.18\linewidth]{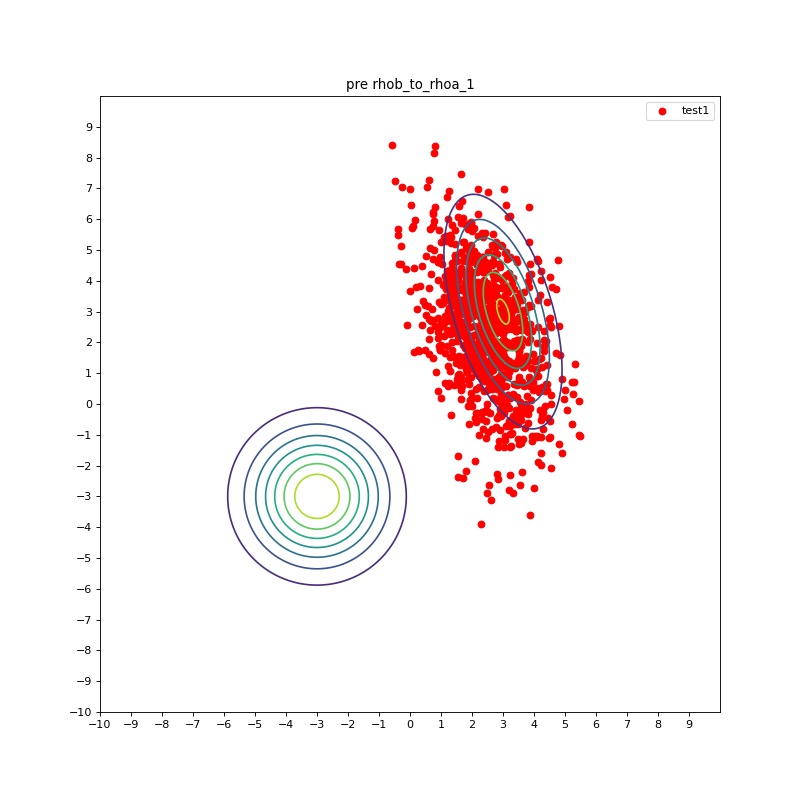}}
\subfloat[][$\rho_b $ to $\rho_a$ at $t_2$]{\includegraphics[width=.18\linewidth]{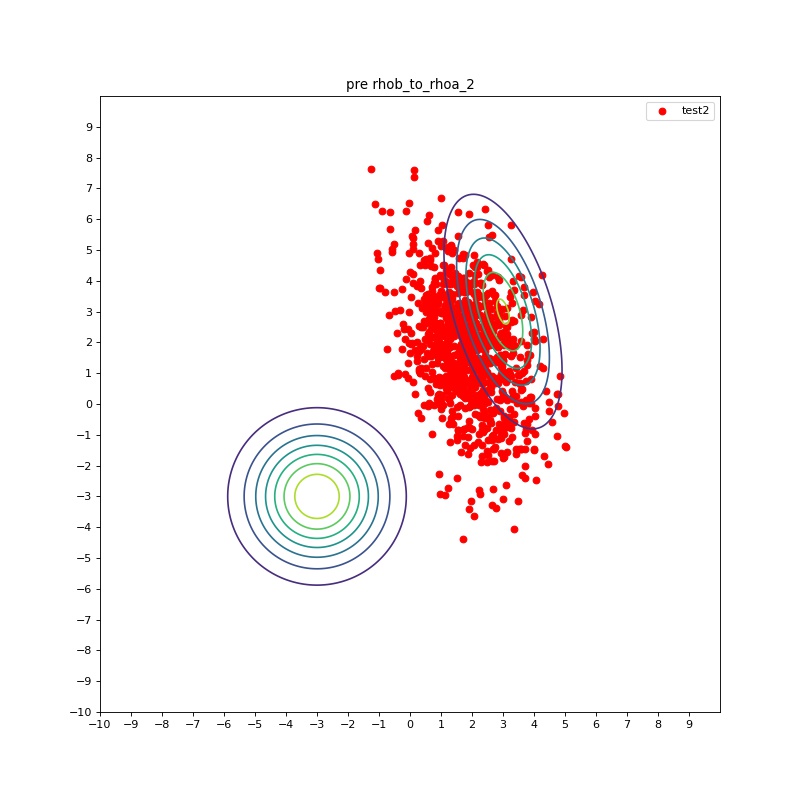}}
\subfloat[][$\rho_b $ to $\rho_a$ at $t_3$]{\includegraphics[width=.18\linewidth]{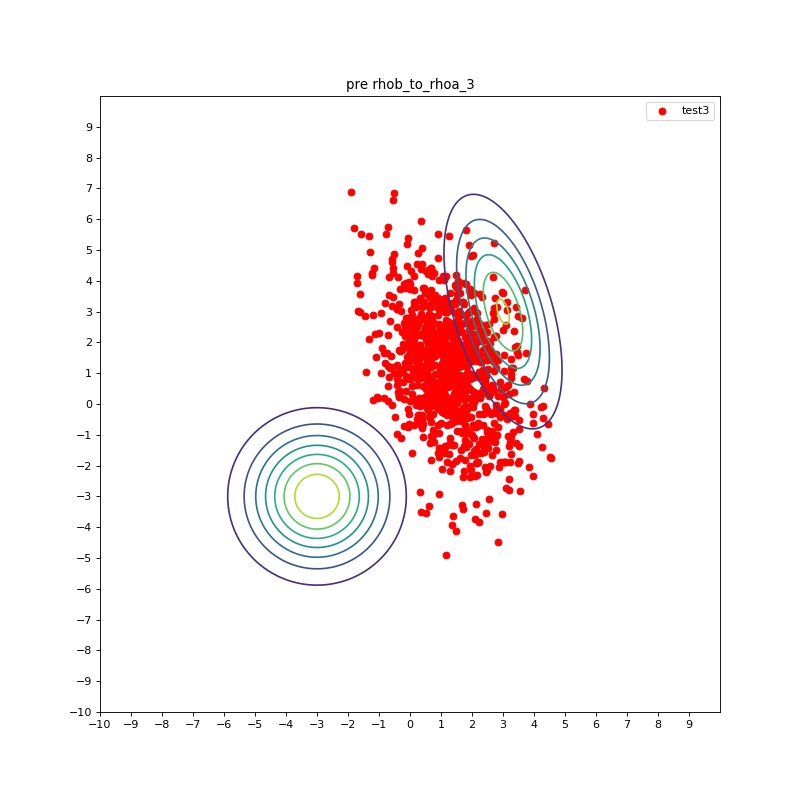}}
\subfloat[][$\rho_b $ to $\rho_a$ at $t_4$]{\includegraphics[width=.18\linewidth]{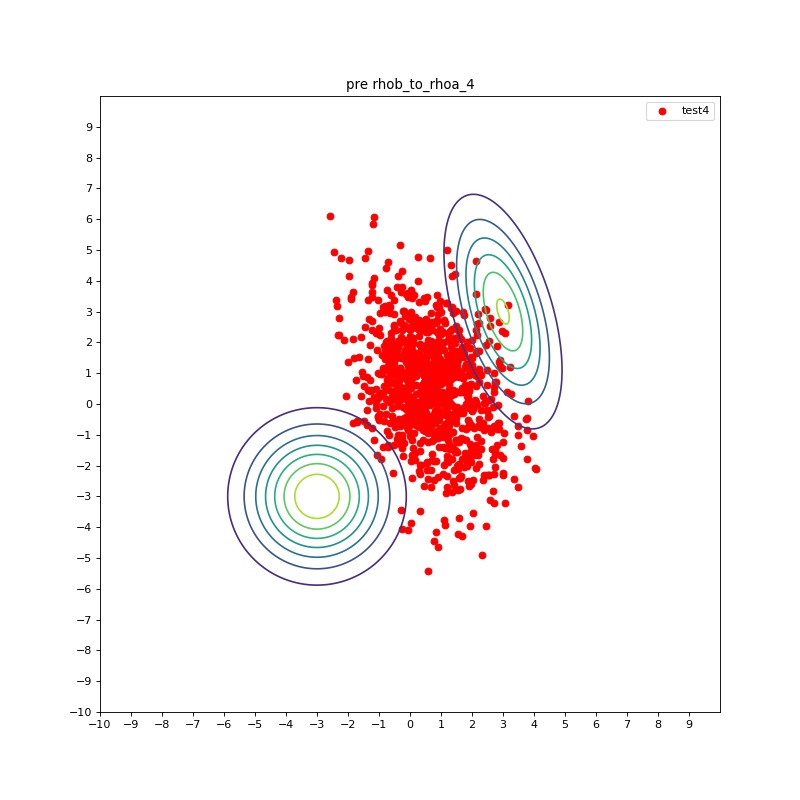}}
\subfloat[][$\rho_b $ to $\rho_a$ at $t_5$]{\includegraphics[width=.18\linewidth]{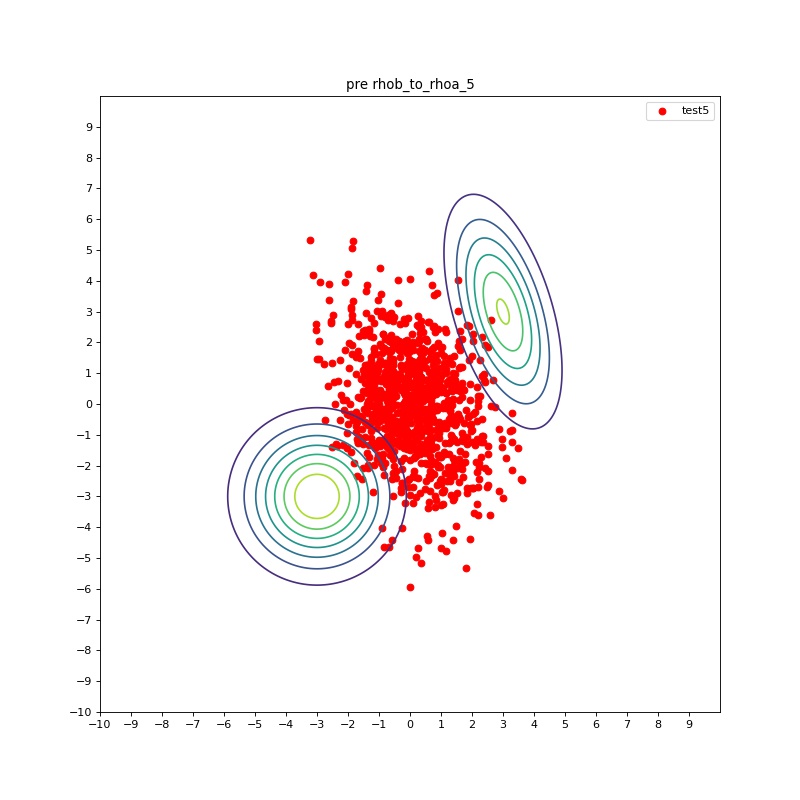}}\\
\subfloat[][$\rho_b $ to $\rho_a$ at $t_6$]{\includegraphics[width=.18\linewidth]{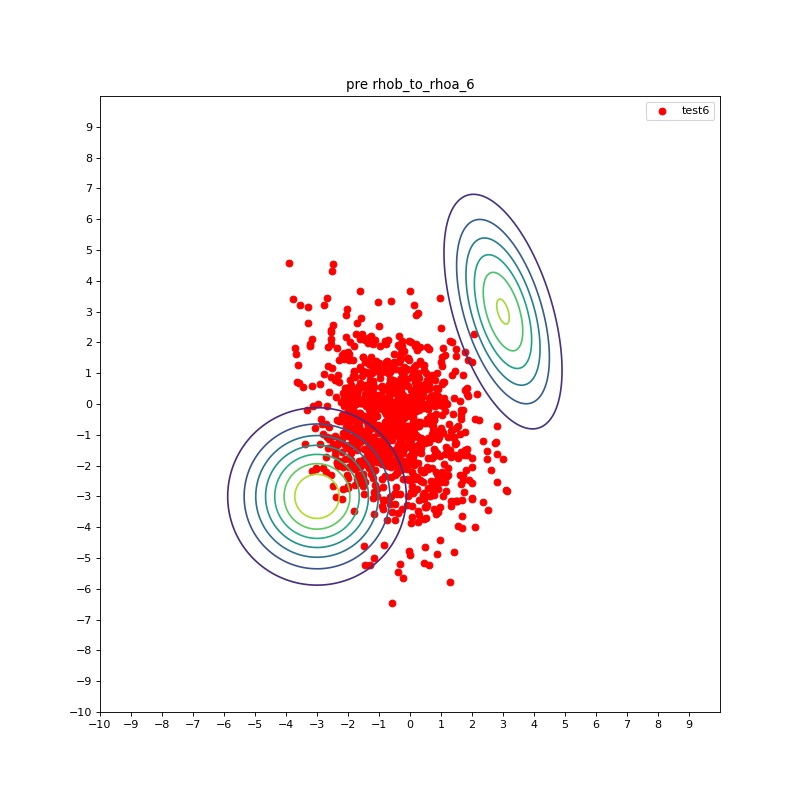}}
\subfloat[][$\rho_b $ to $\rho_a$ at $t_7$]{\includegraphics[width=.18\linewidth]{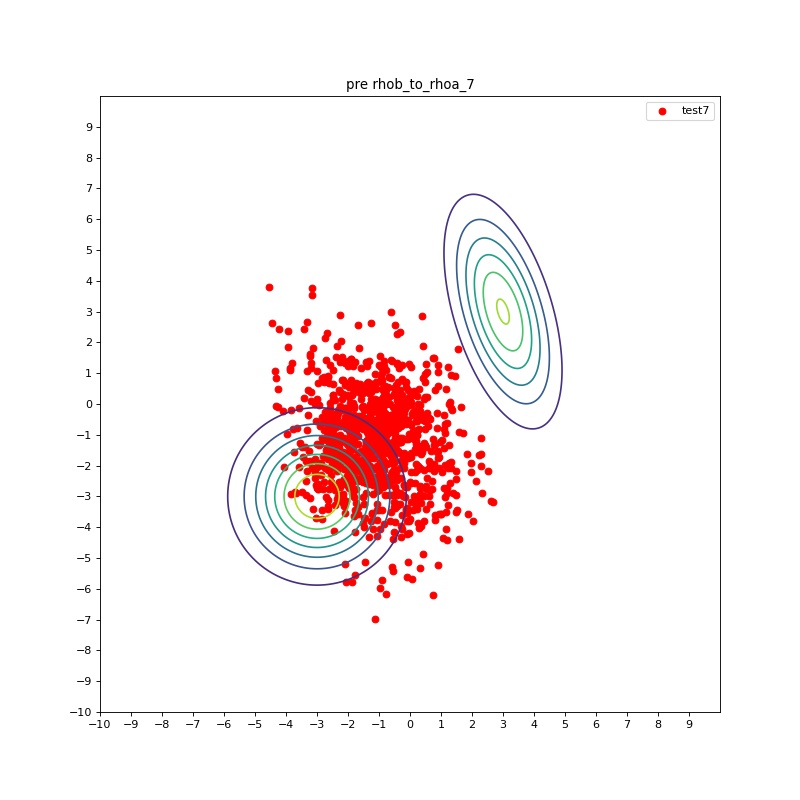}}
\subfloat[][$\rho_b $ to $\rho_a$ at $t_8$]{\includegraphics[width=.18\linewidth]{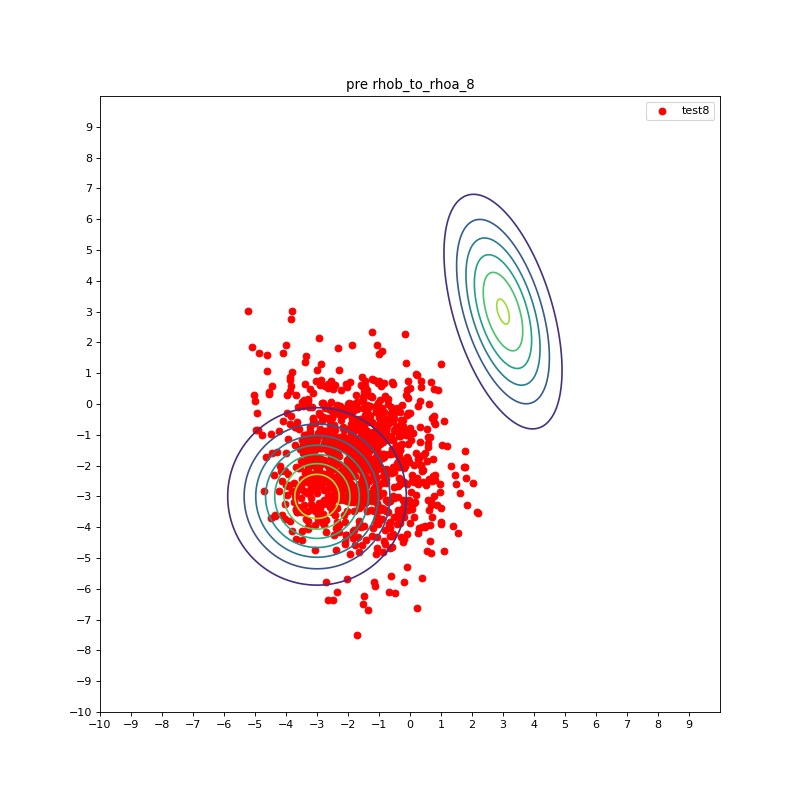}}
\subfloat[][$\rho_b $ to $\rho_a$ at $t_9$]{\includegraphics[width=.18\linewidth]{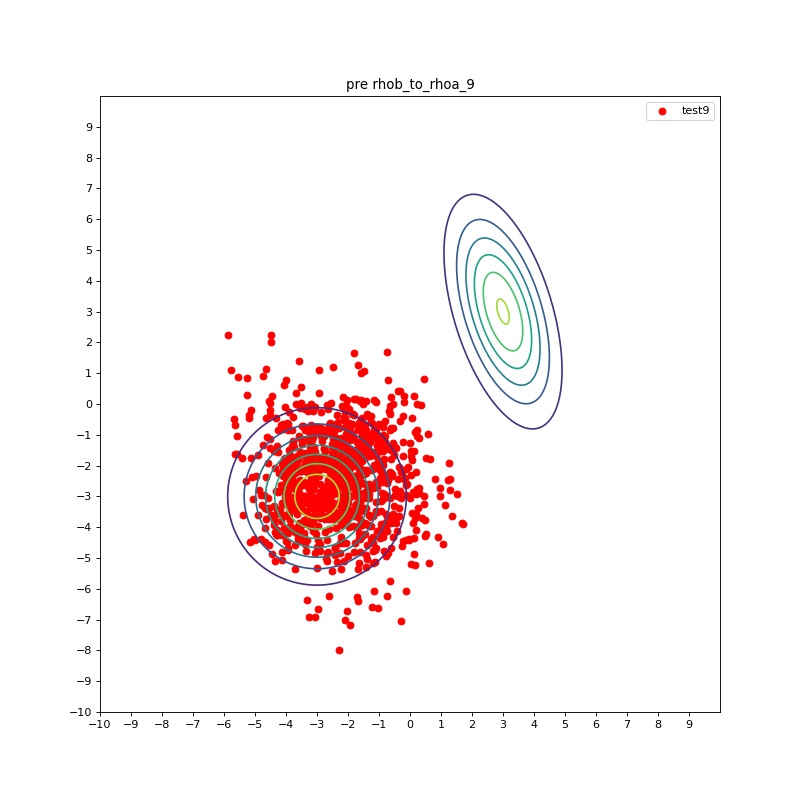}}
\subfloat[][$\rho_b $ to $\rho_a$ at $t_{10}$]{\includegraphics[width=.18\linewidth]{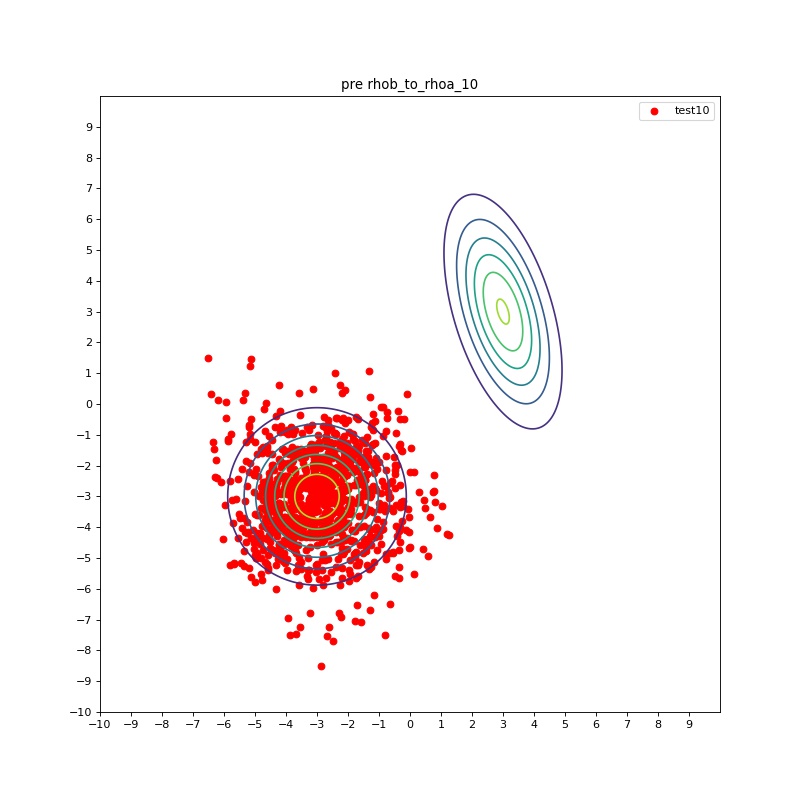}}\\
\subfloat[][points track:$\rho_a$ to $\rho_b$]{\includegraphics[width=.18\linewidth]{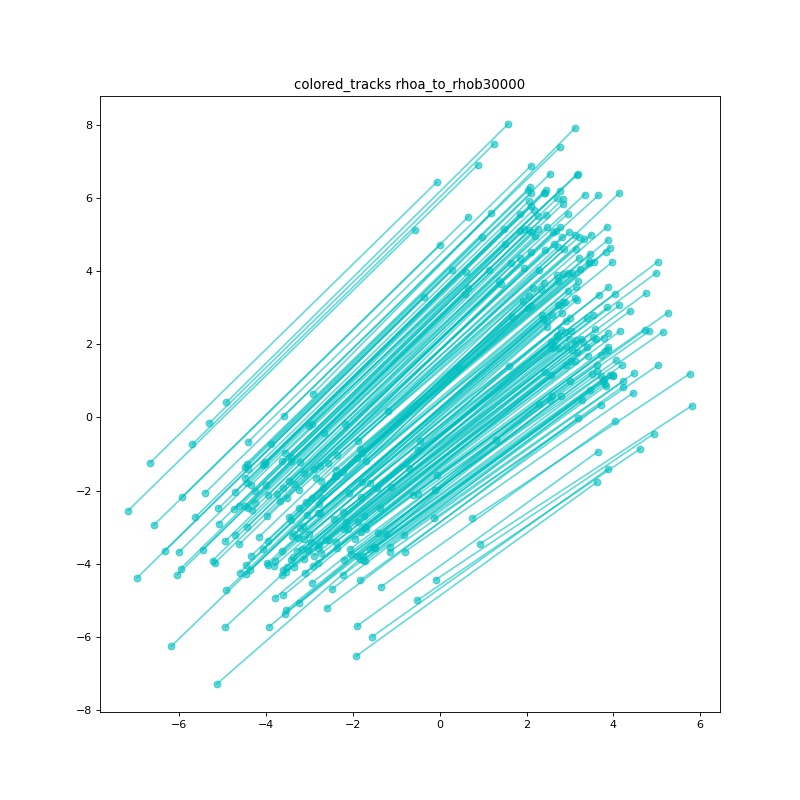}}
\subfloat[][vector field:$\rho_a$ to $\rho_b$]{\includegraphics[width=.18\linewidth]{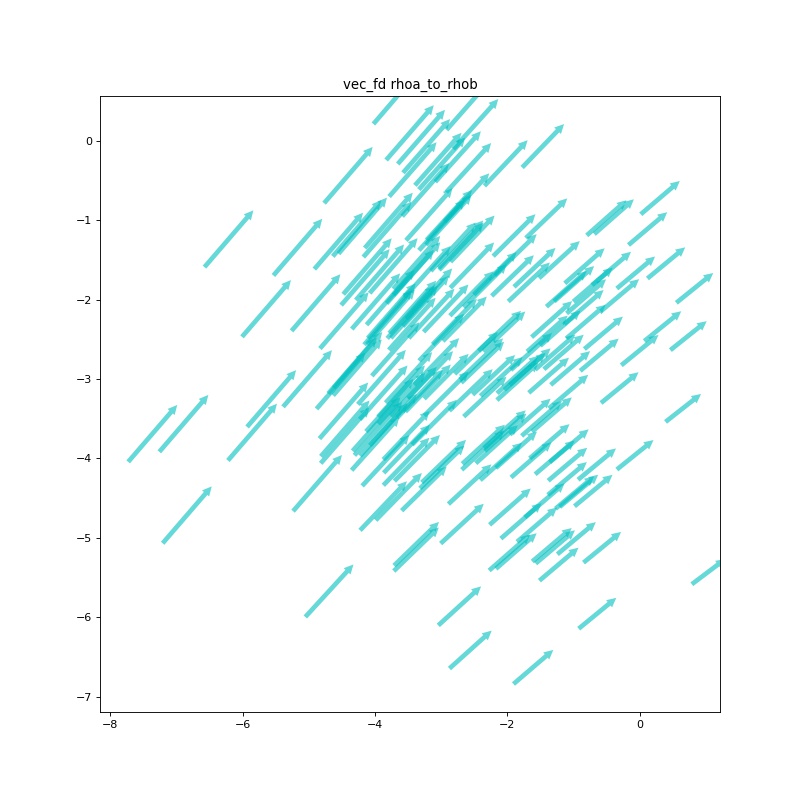}}
\subfloat[][points track:$\rho_b$ to $\rho_a$]{\includegraphics[width=.18\linewidth]{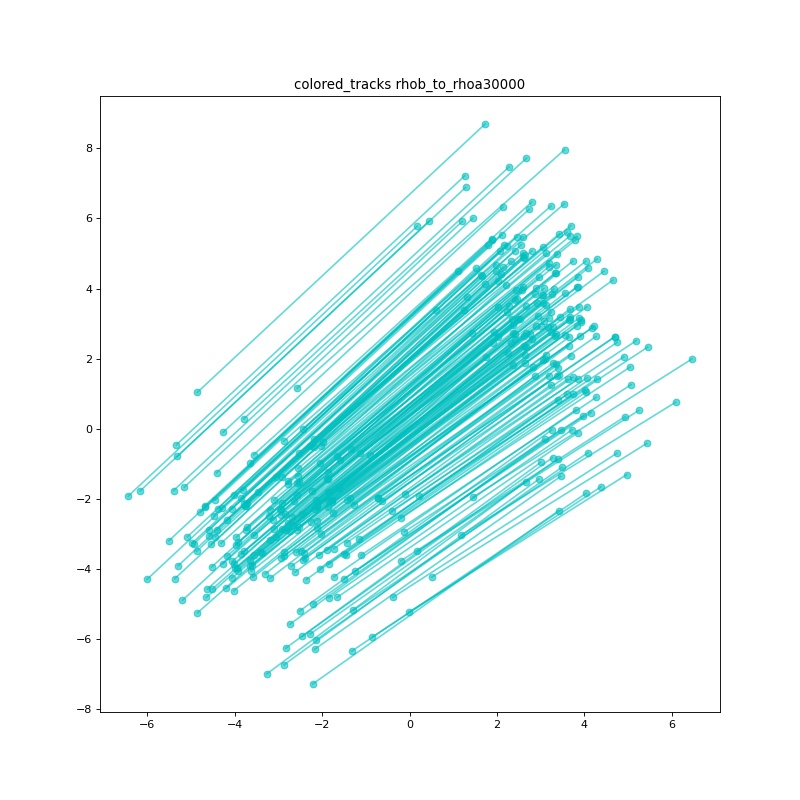}}
\subfloat[][vector field:$\rho_b$ to $\rho_a$]{\includegraphics[width=.18\linewidth]{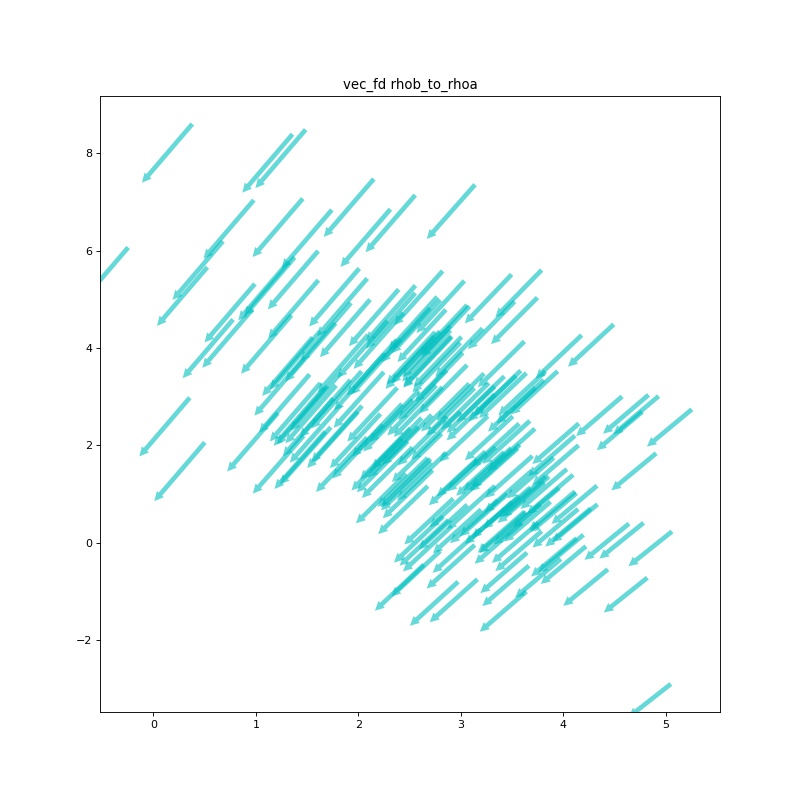}}
\caption{5-dimensional Gaussian to Gaussian}
\label{fig:syn-22}
\end{figure*}

\newpage
\textbf{Syn-3:}
\begin{figure}[ht!]
    \centering
     \subfloat[][$\rho_a $ to $\rho_b$ at $t_1$]{\includegraphics[width=.18\linewidth]{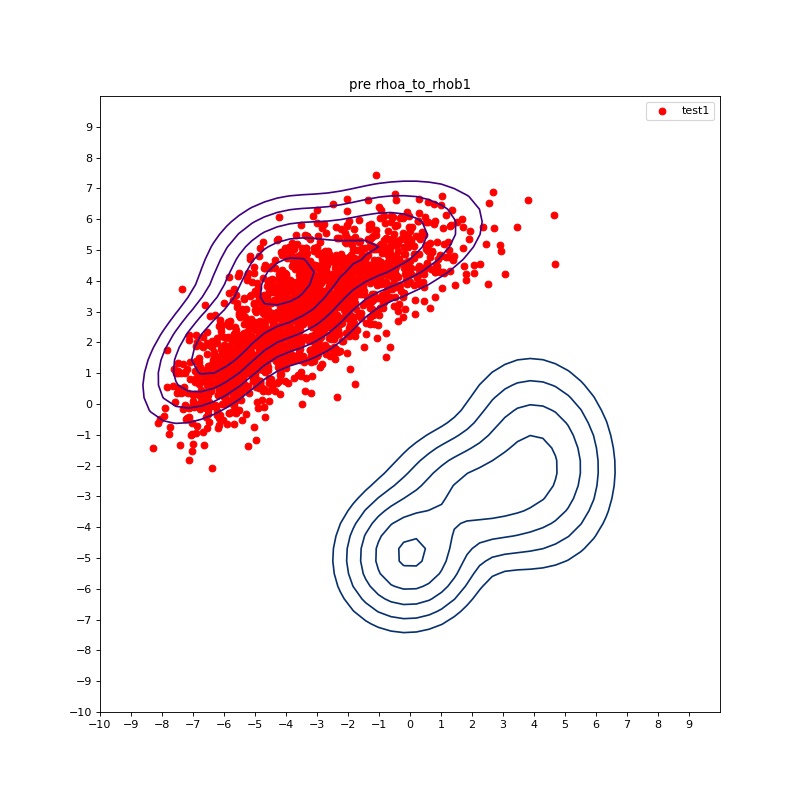}}
     \subfloat[][$\rho_a $ to $\rho_b$ at $t_2$]{\includegraphics[width=.18\linewidth]{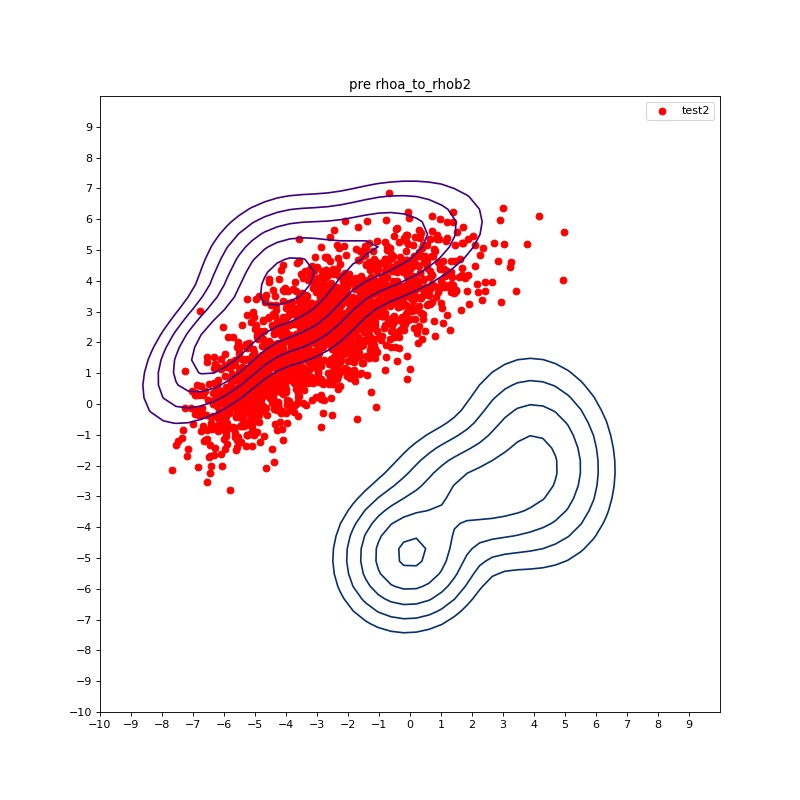}}
     \subfloat[][$\rho_a $ to $\rho_b$ at $t_3$]{\includegraphics[width=.18\linewidth]{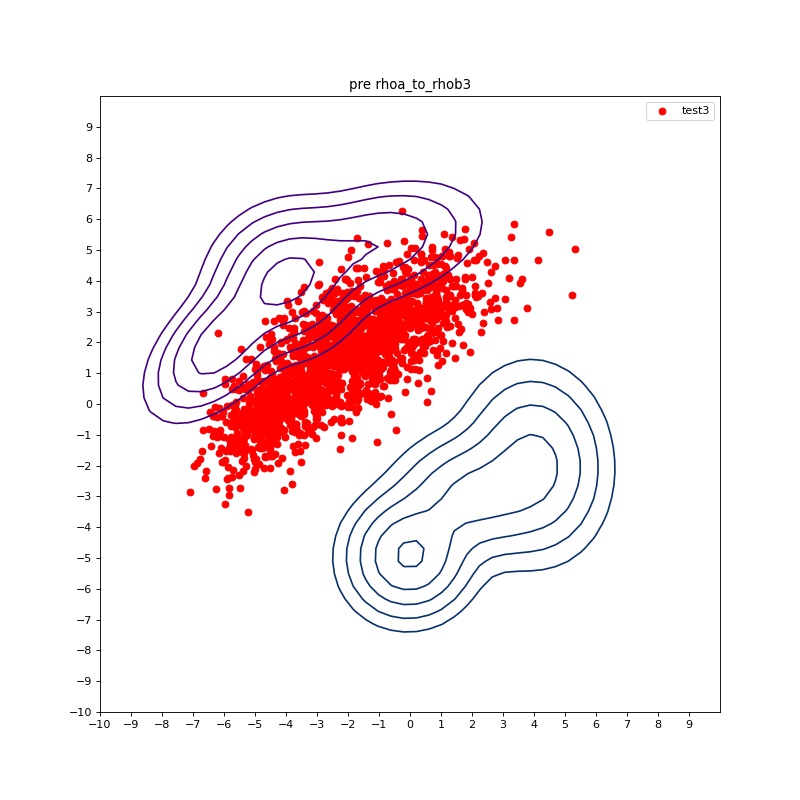}}
     \subfloat[][$\rho_a $ to $\rho_b$ at $t_4$]{\includegraphics[width=.18\linewidth]{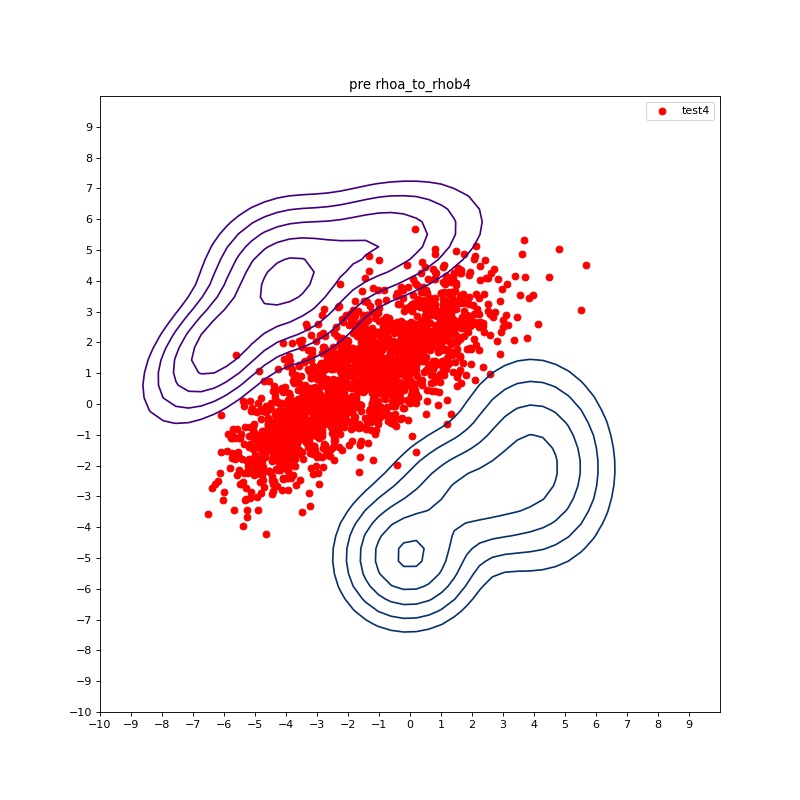}}
     \subfloat[][$\rho_a $ to $\rho_b$ at $t_5$]{\includegraphics[width=.18\linewidth]{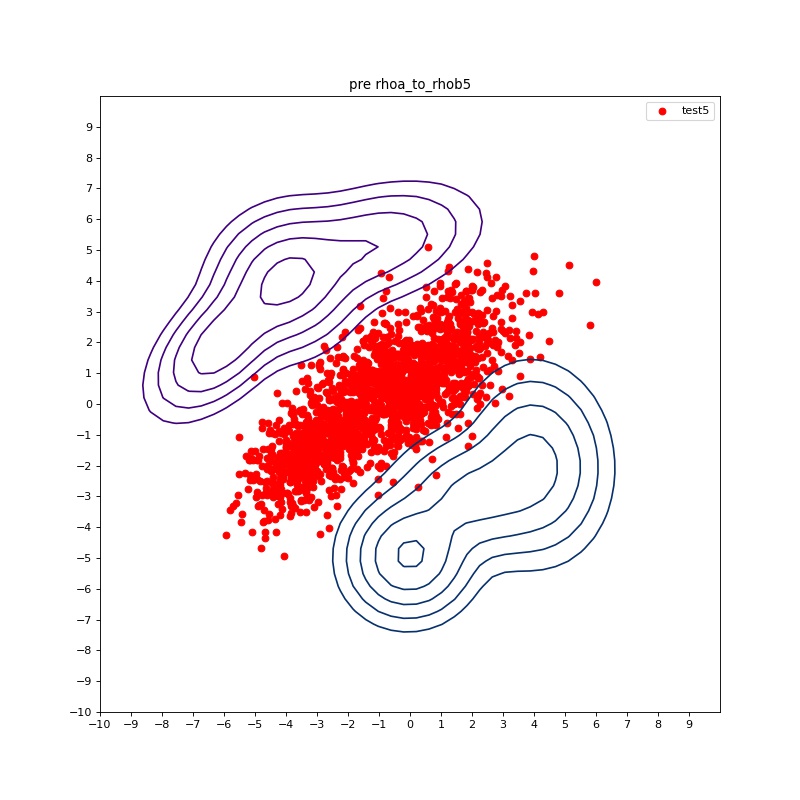}}\\
     \subfloat[][$\rho_a $ to $\rho_b$ at $t_6$]{\includegraphics[width=.18\linewidth]{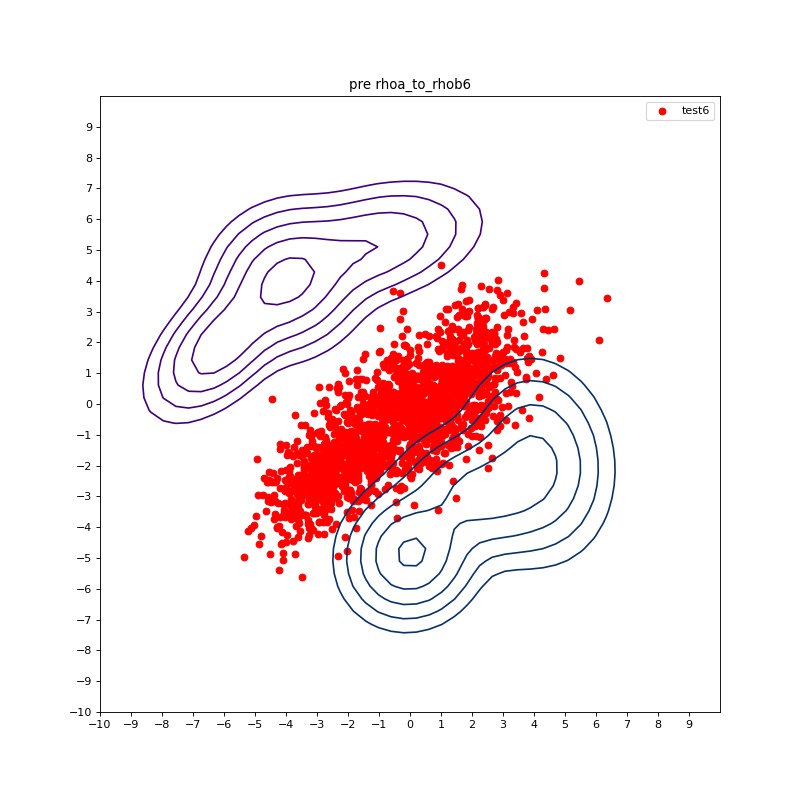}}
     \subfloat[][$\rho_a $ to $\rho_b$ at $t_7$]{\includegraphics[width=.18\linewidth]{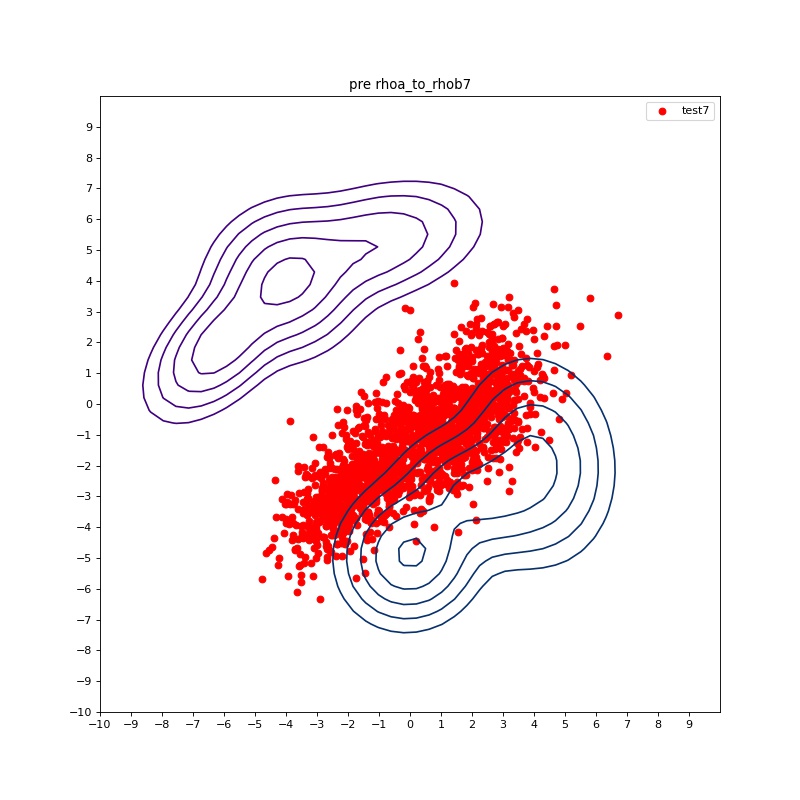}}
     \subfloat[][$\rho_a $ to $\rho_b$ at $t_8$]{\includegraphics[width=.18\linewidth]{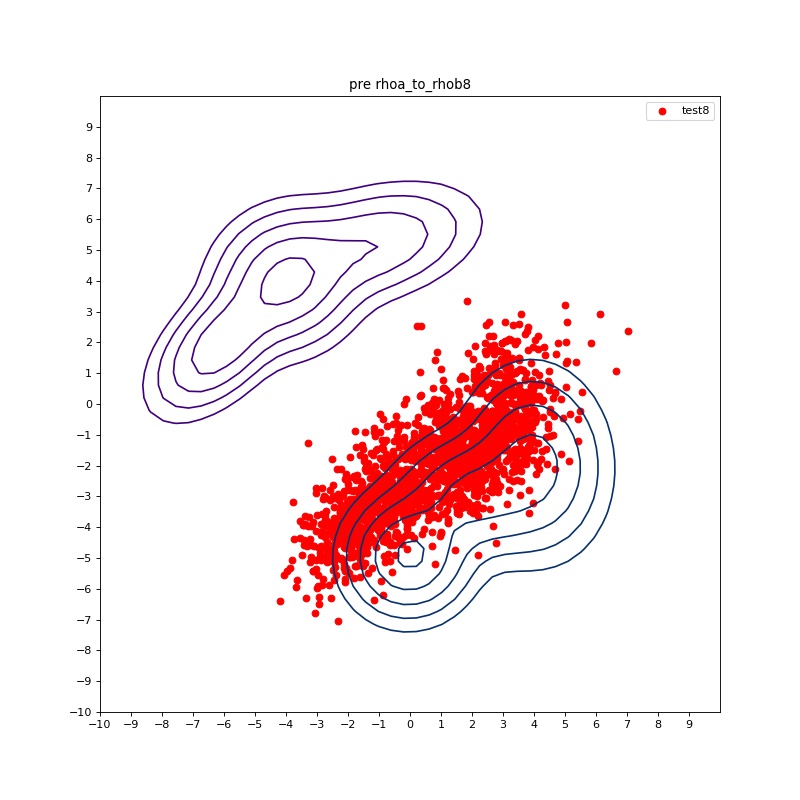}}
     \subfloat[][$\rho_a $ to $\rho_b$ at $t_9$]{\includegraphics[width=.18\linewidth]{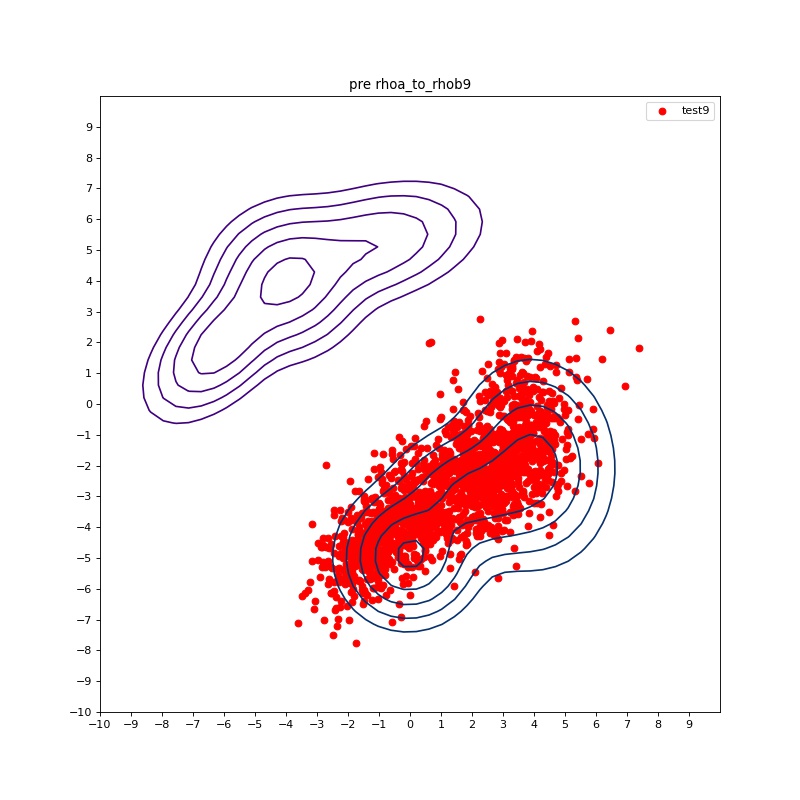}}
     \subfloat[][$\rho_a $ to $\rho_b$ at $t_{10}$]{\includegraphics[width=.18\linewidth]{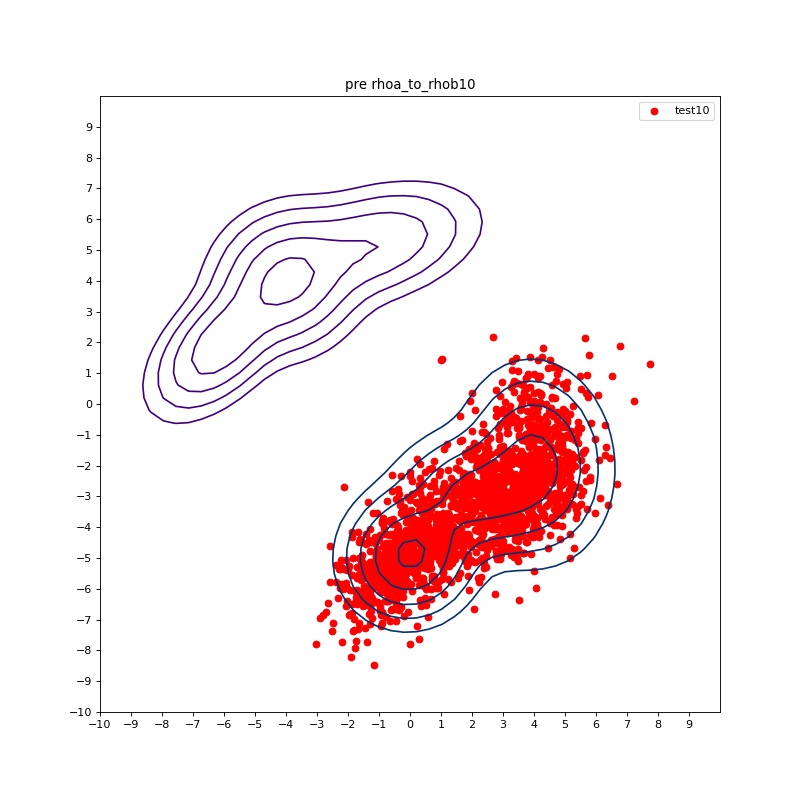}}\\
     \subfloat[][$\rho_b $ to $\rho_a$ at $t_1$]{\includegraphics[width=.18\linewidth]{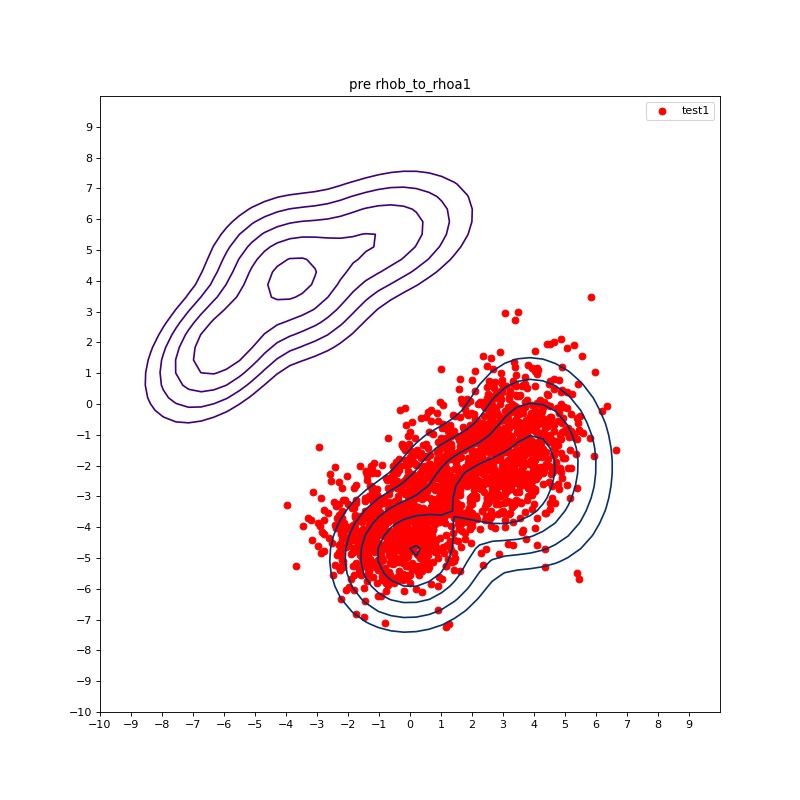}}
     \subfloat[][$\rho_b $ to $\rho_a$ at $t_2$]{\includegraphics[width=.18\linewidth]{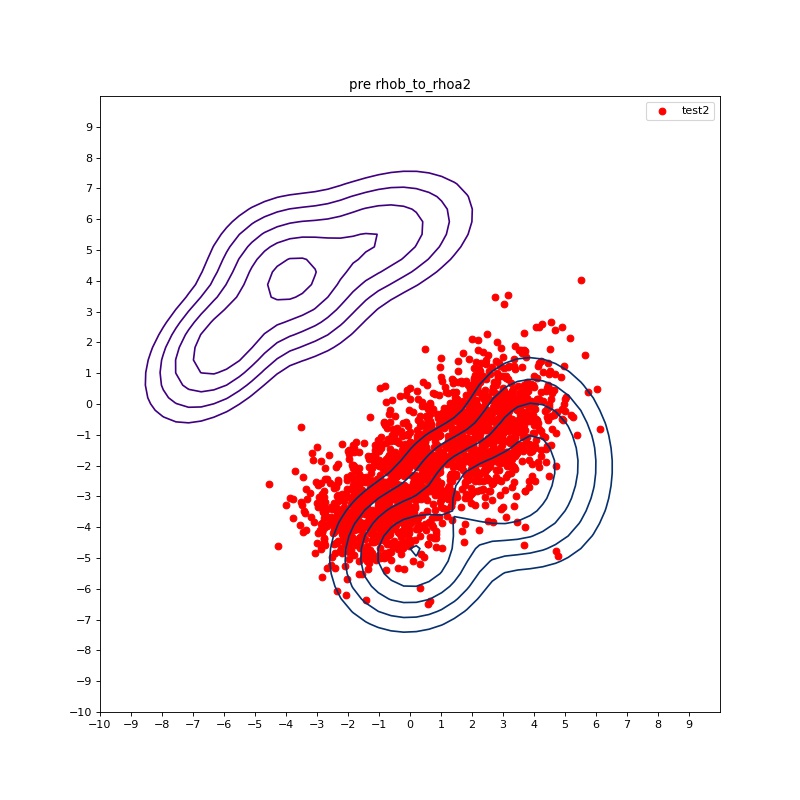}}
     \subfloat[][$\rho_b $ to $\rho_a$ at $t_3$]{\includegraphics[width=.18\linewidth]{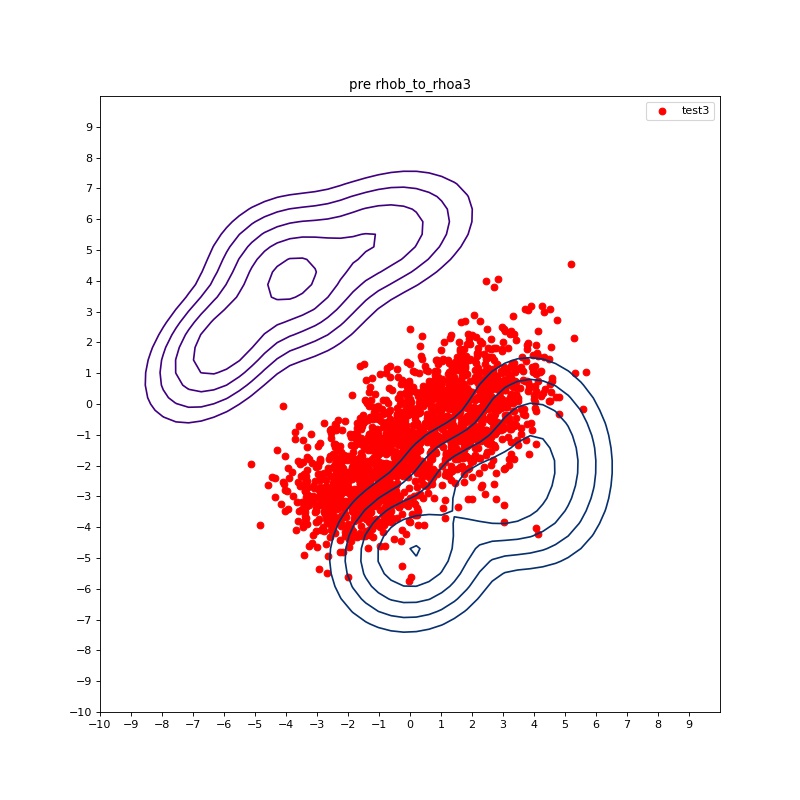}}
     \subfloat[][$\rho_b $ to $\rho_a$ at $t_4$]{\includegraphics[width=.18\linewidth]{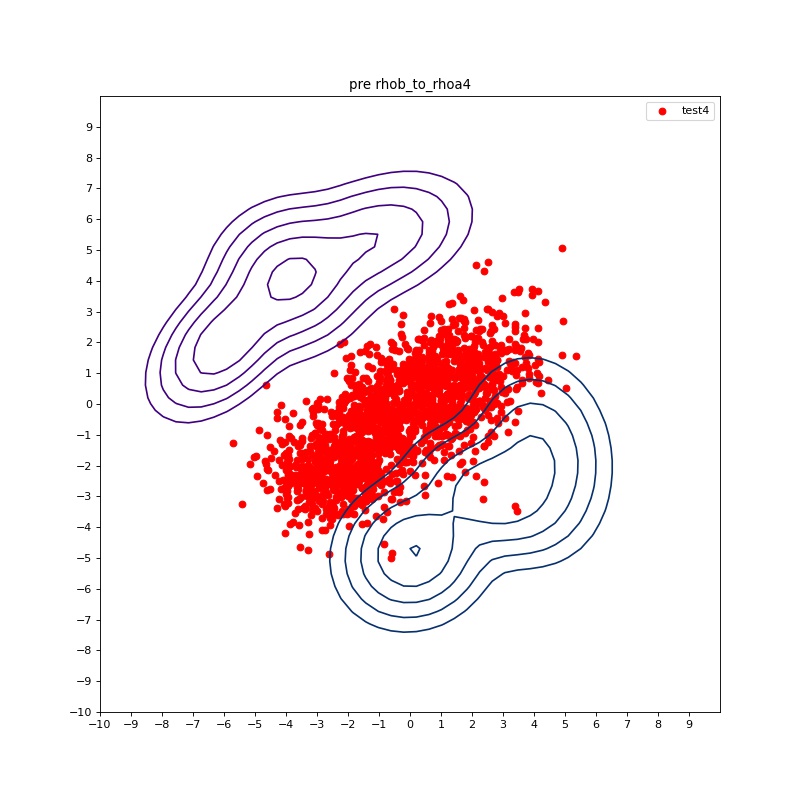}}
     \subfloat[][$\rho_b $ to $\rho_a$ at $t_5$]{\includegraphics[width=.18\linewidth]{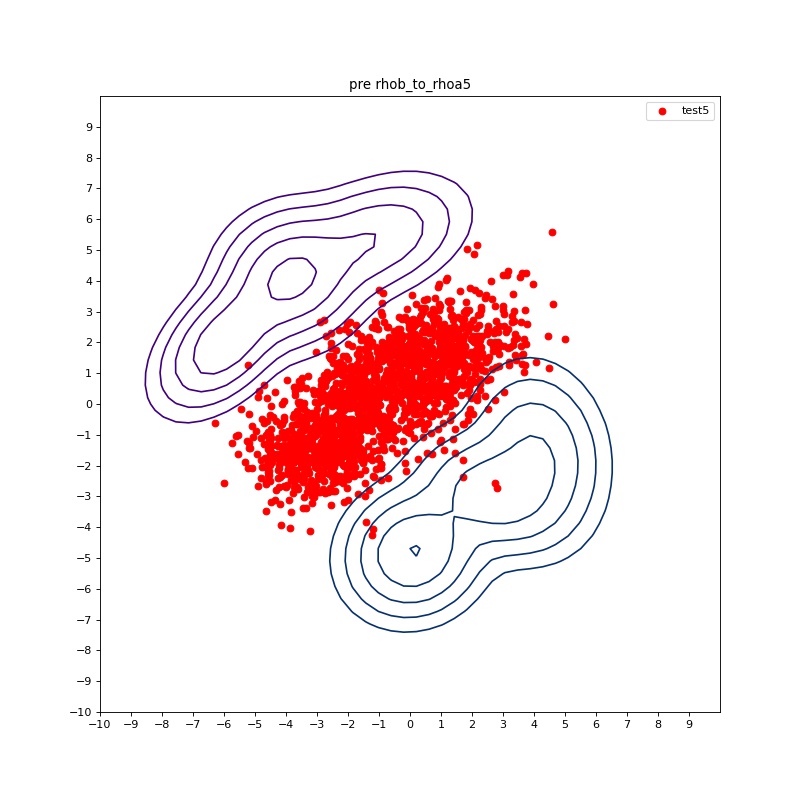}}\\
     \subfloat[][$\rho_b $ to $\rho_a$ at $t_6$]{\includegraphics[width=.18\linewidth]{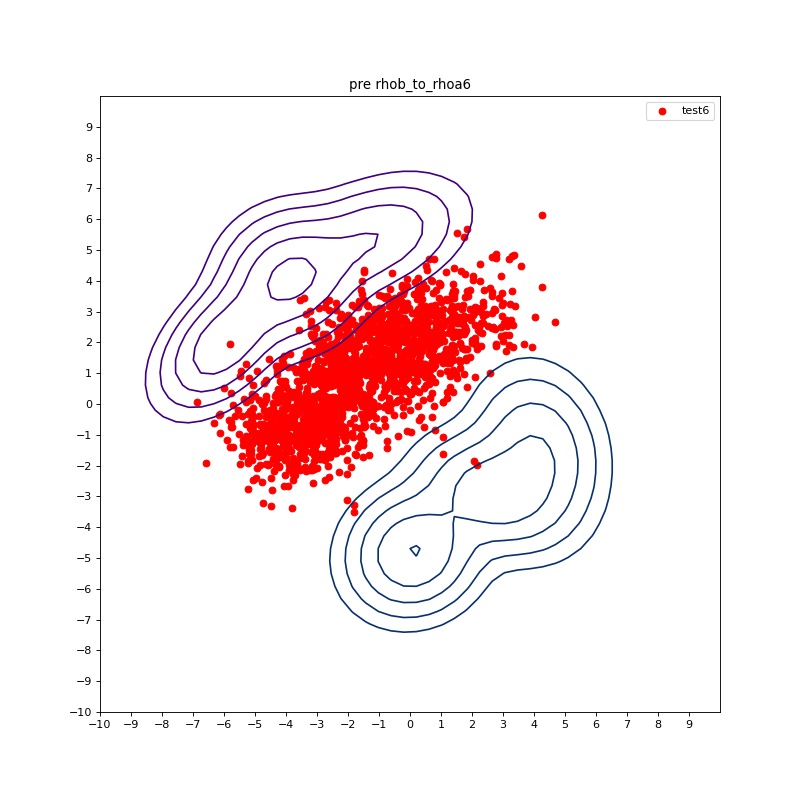}}
     \subfloat[][$\rho_b $ to $\rho_a$ at $t_7$]{\includegraphics[width=.18\linewidth]{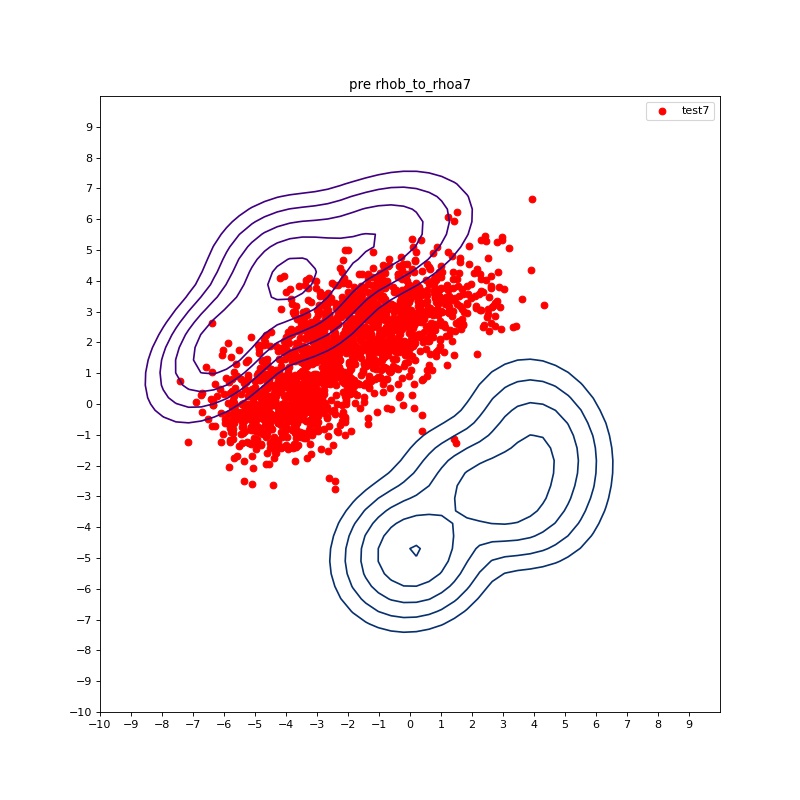}}
     \subfloat[][$\rho_b $ to $\rho_a$ at $t_8$]{\includegraphics[width=.18\linewidth]{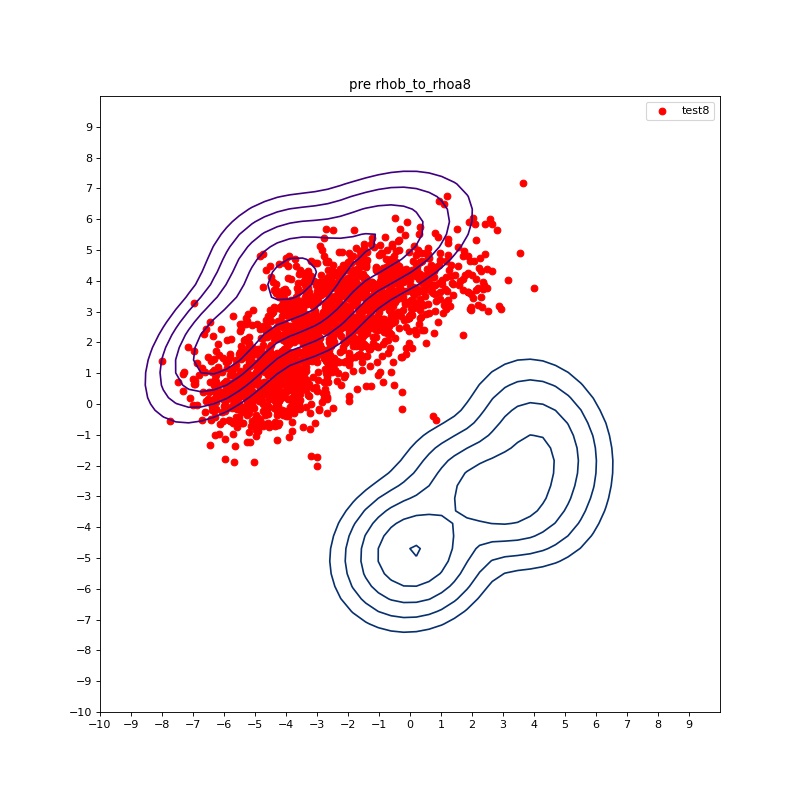}}
     \subfloat[][$\rho_b $ to $\rho_a$ at $t_9$]{\includegraphics[width=.18\linewidth]{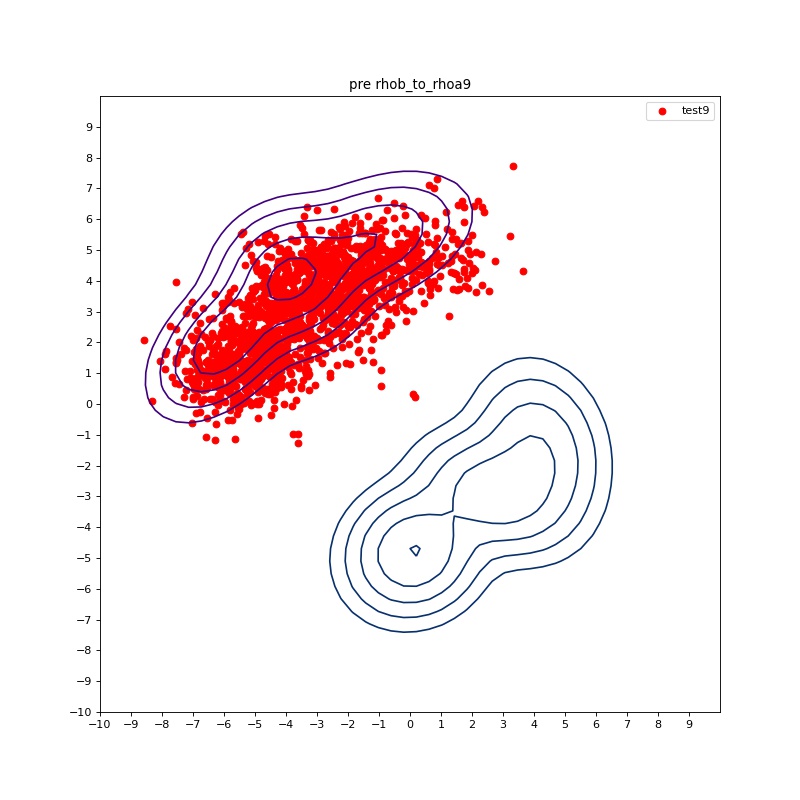}}
     \subfloat[][$\rho_b $ to $\rho_a$ at $t_{10}$]{\includegraphics[width=.18\linewidth]{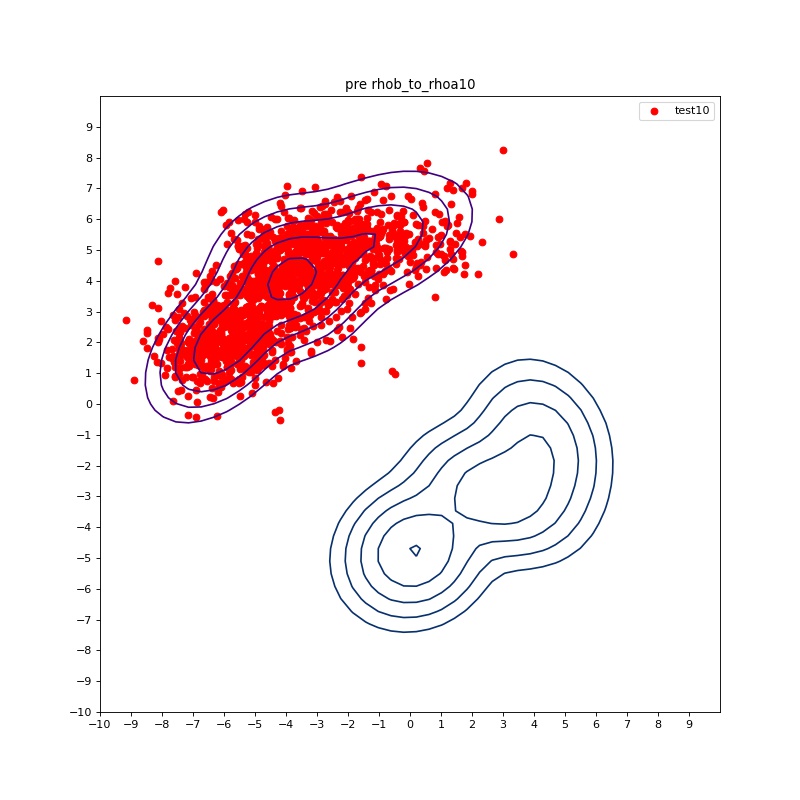}}
\caption{5-dimensional Gaussian Mixture}
\label{fig:syn-33}
\end{figure}


\newpage
\textbf{Syn-4:}
\begin{figure*}[ht!]
\centering
 \subfloat[][$\rho_a $ to $\rho_b$ at $t_1$]{\includegraphics[width=0.18\textwidth,height=0.18\textheight,keepaspectratio]{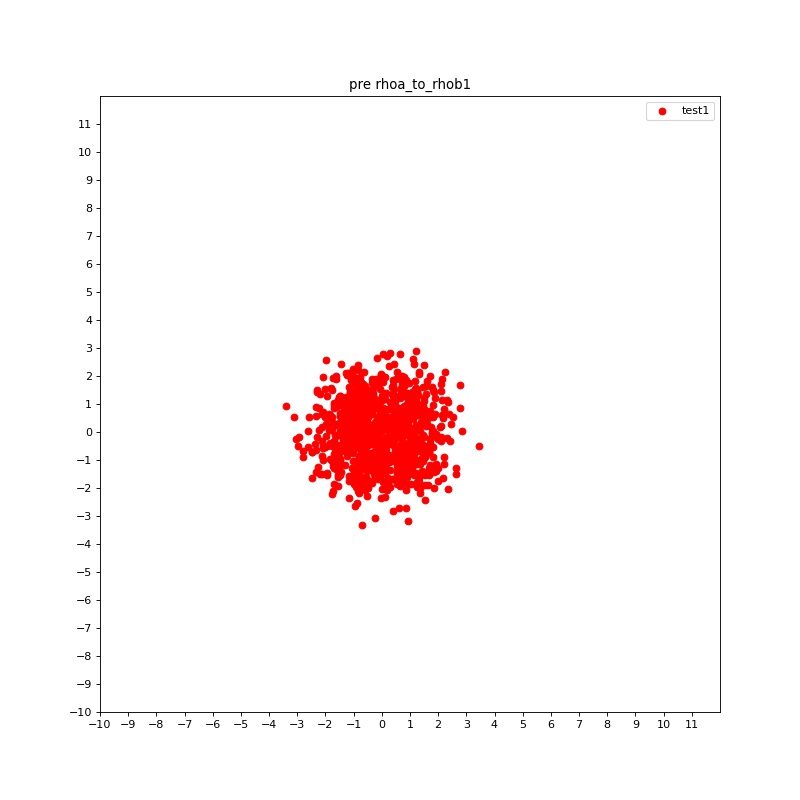}}
 \subfloat[][$\rho_a $ to $\rho_b$ at $t_2$]{\includegraphics[width=.18\linewidth]{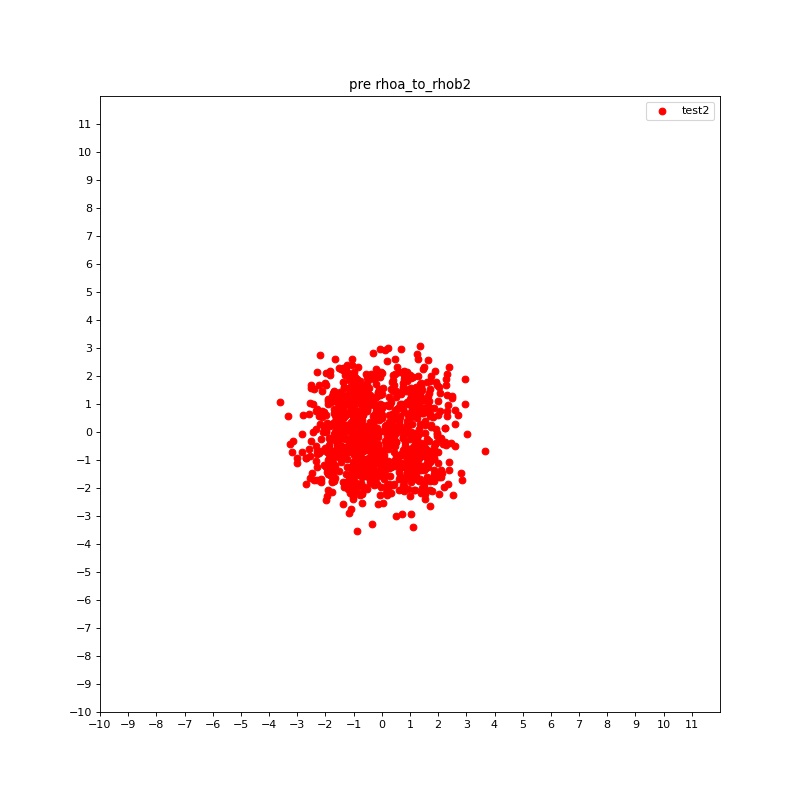}}
 \subfloat[][$\rho_a $ to $\rho_b$ at $t_3$]{\includegraphics[width=.18\linewidth]{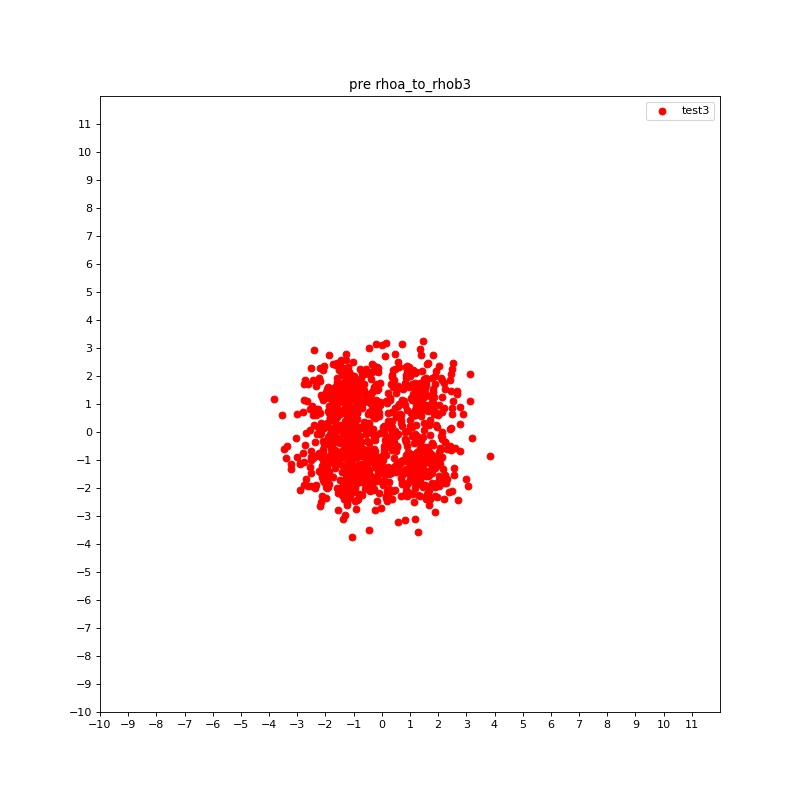}}
 \subfloat[][$\rho_a $ to $\rho_b$ at $t_4$]{\includegraphics[width=.18\linewidth]{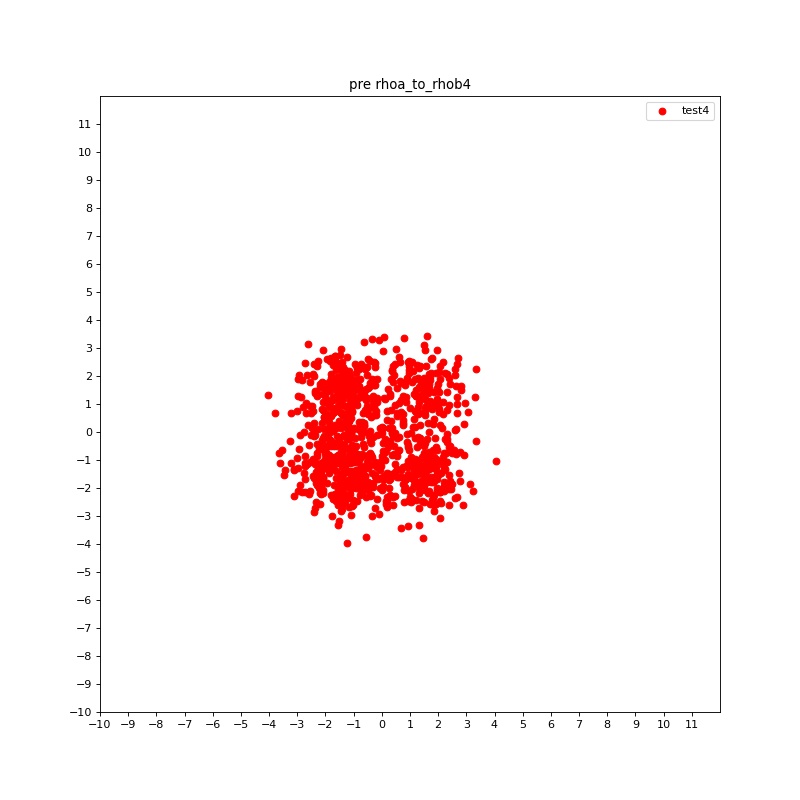}}
 \subfloat[][$\rho_a $ to $\rho_b$ at $t_5$]{\includegraphics[width=.18\linewidth]{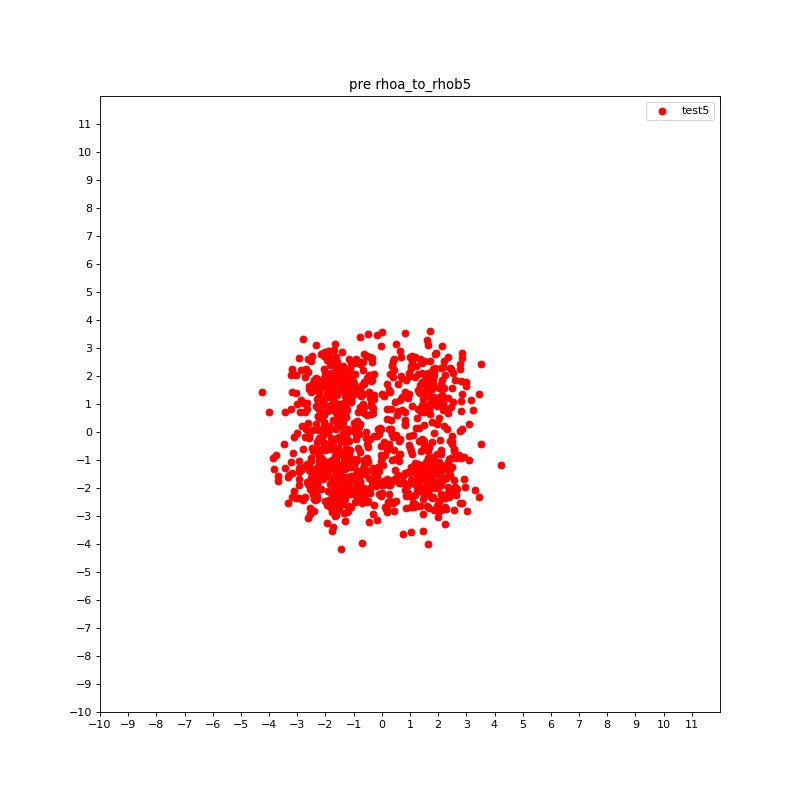}}\\
 \subfloat[][$\rho_a $ to $\rho_b$ at $t_6$]{\includegraphics[width=.18\linewidth]{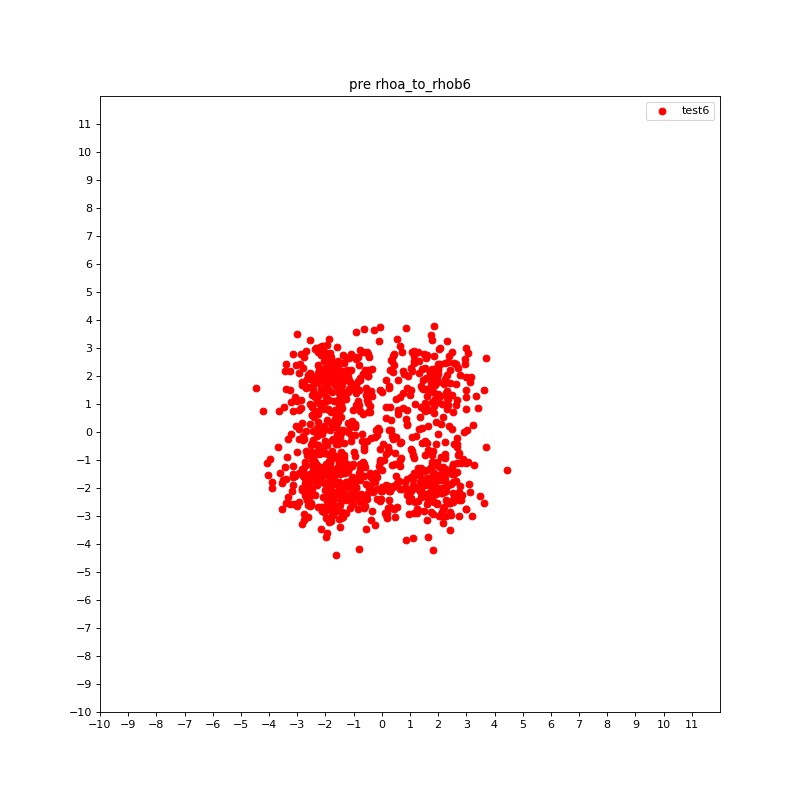}}
 \subfloat[][$\rho_a $ to $\rho_b$ at $t_7$]{\includegraphics[width=.18\linewidth]{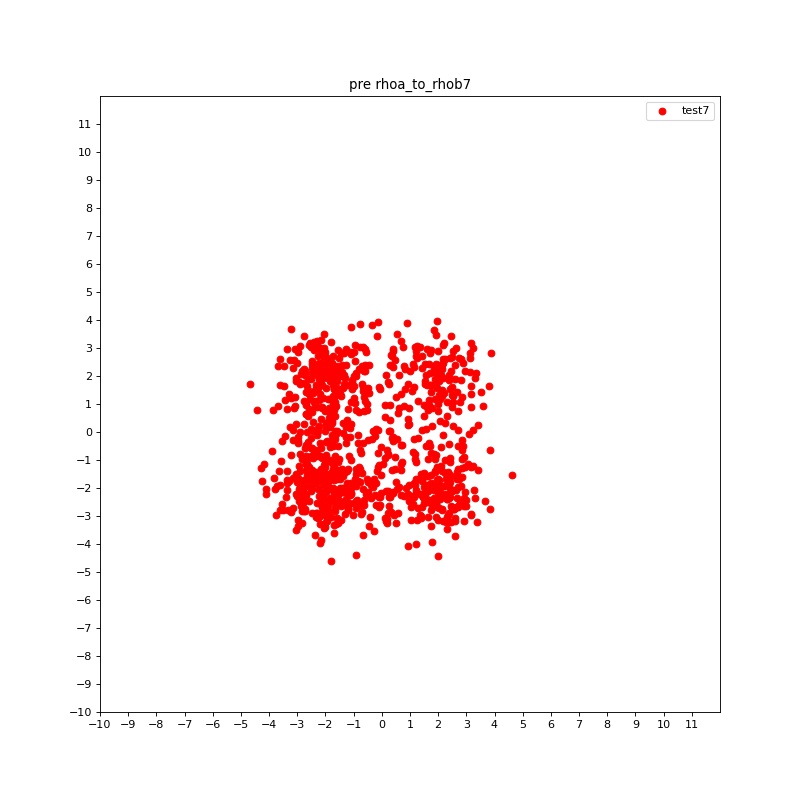}}
 \subfloat[][$\rho_a $ to $\rho_b$ at $t_8$]{\includegraphics[width=.18\linewidth]{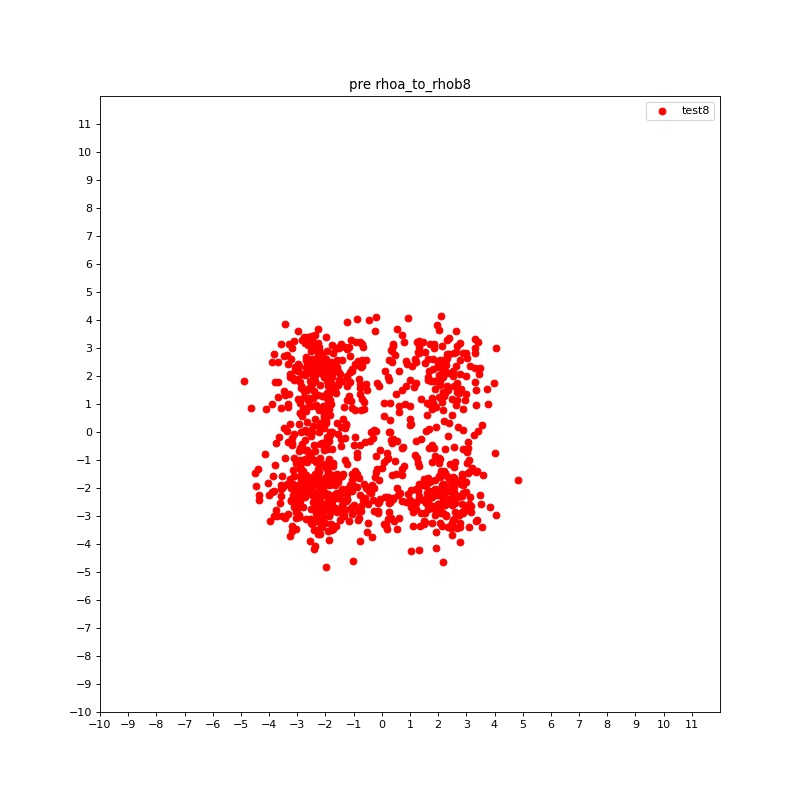}}
 \subfloat[][$\rho_a $ to $\rho_b$ at $t_9$]{\includegraphics[width=.18\linewidth]{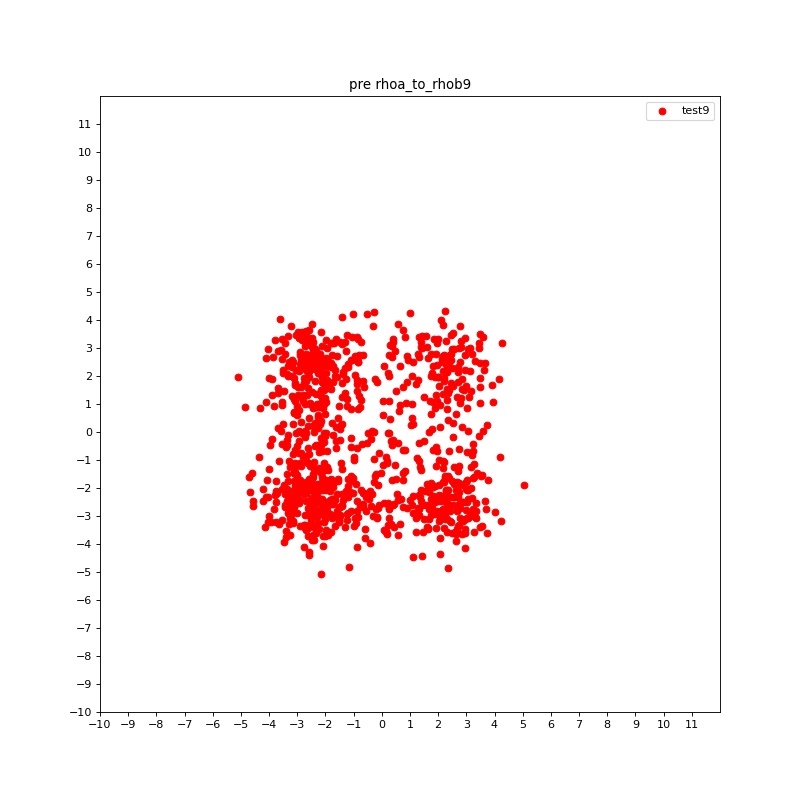}}
 \subfloat[][$\rho_a $ to $\rho_b$ at $t_{10}$]{\includegraphics[width=.18\linewidth]{pre_rhoa_to_rhob2550010.jpg}}\\
 \subfloat[][$\rho_b $ to $\rho_a$ at $t_1$]{\includegraphics[width=.18\linewidth]{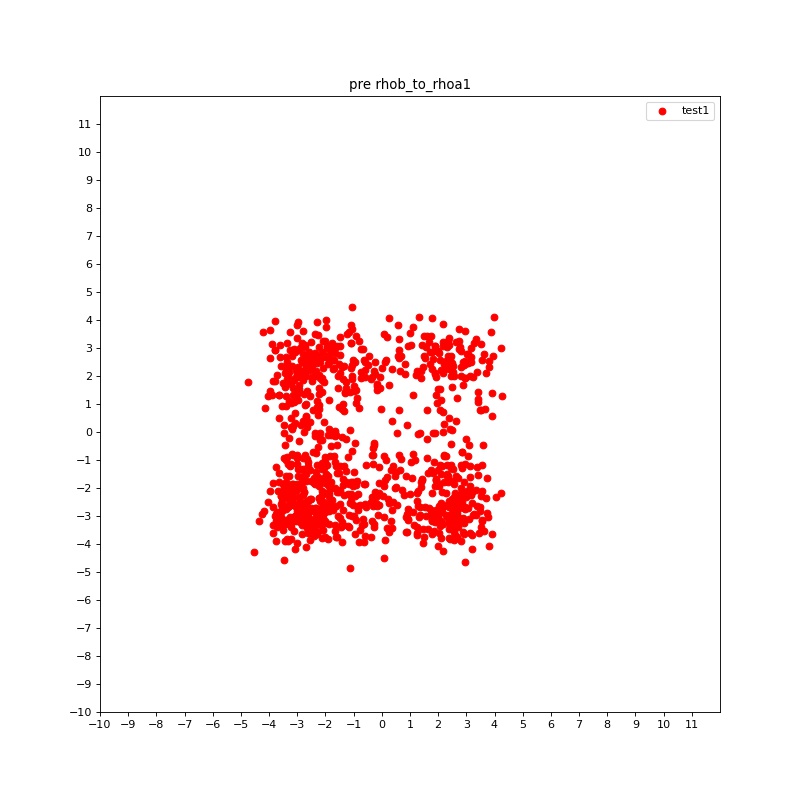}}
 \subfloat[][$\rho_b $ to $\rho_a$ at $t_2$]{\includegraphics[width=.18\linewidth]{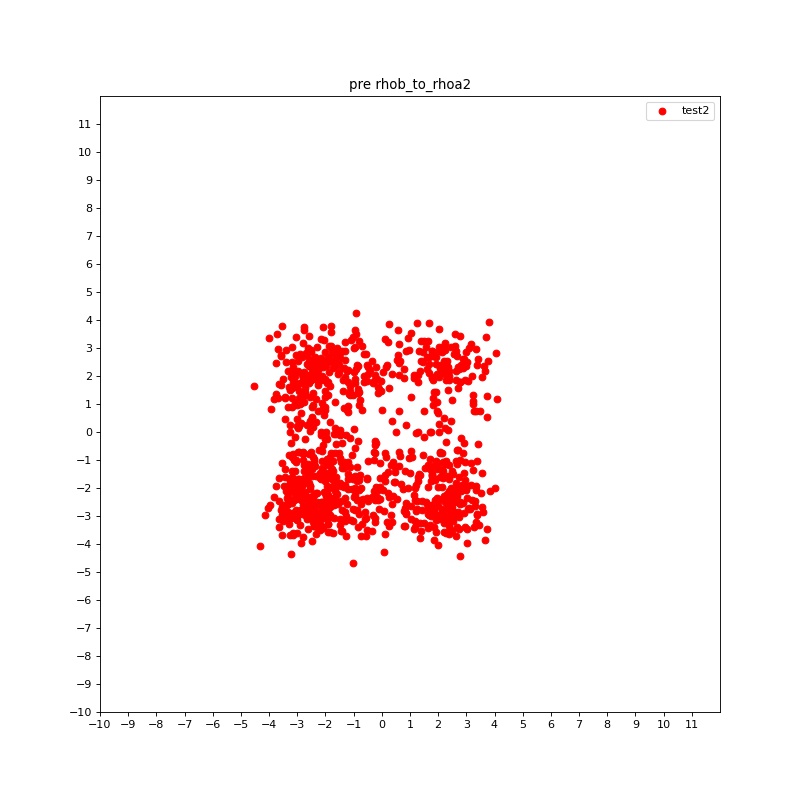}}
 \subfloat[][$\rho_b $ to $\rho_a$ at $t_3$]{\includegraphics[width=.18\linewidth]{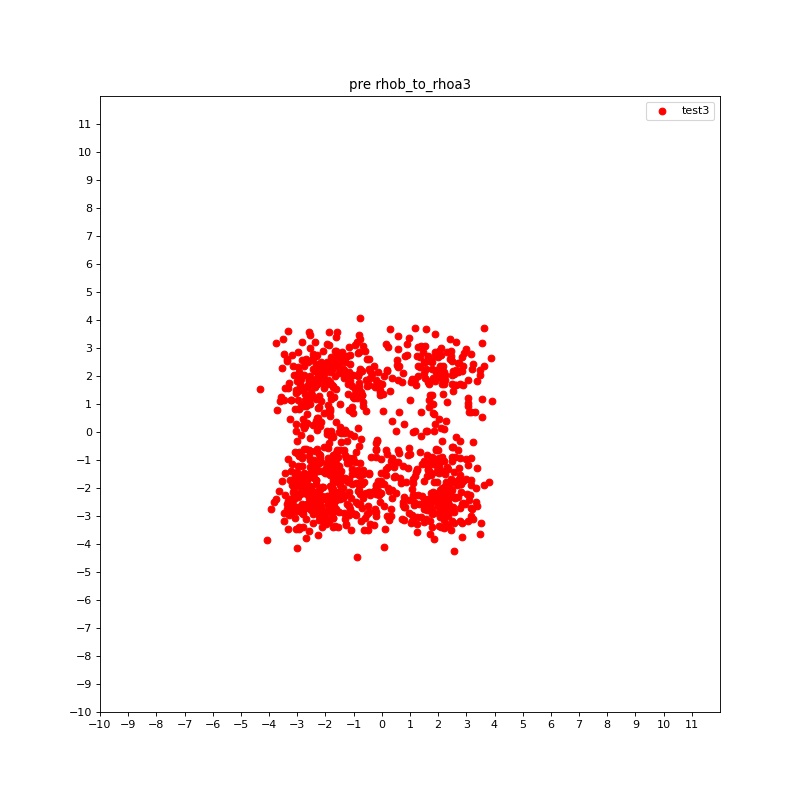}}
 \subfloat[][$\rho_b $ to $\rho_a$ at $t_4$]{\includegraphics[width=.18\linewidth]{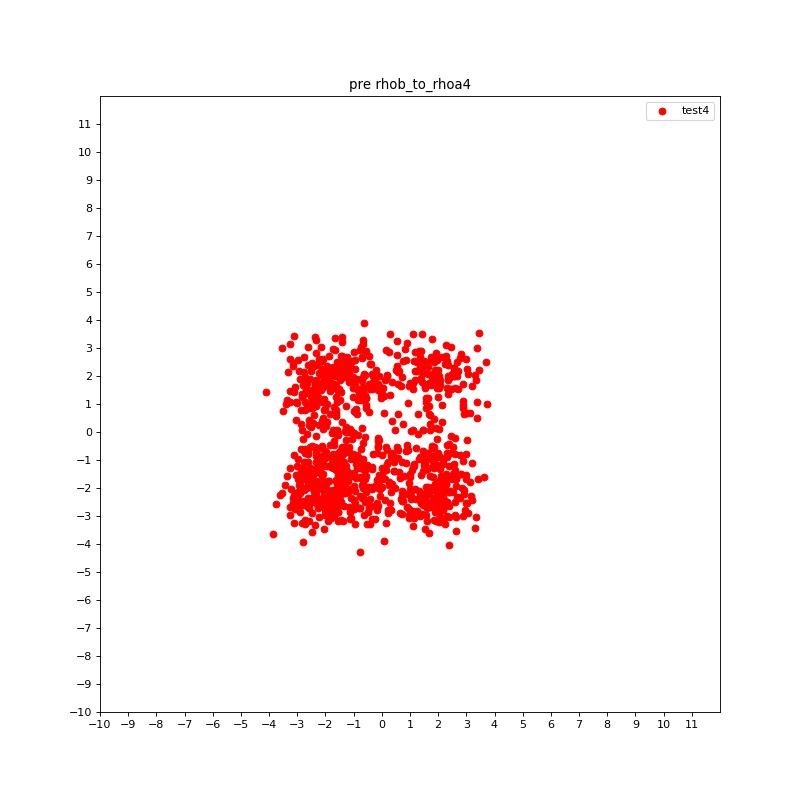}}
 \subfloat[][$\rho_b $ to $\rho_a$ at $t_5$]{\includegraphics[width=.18\linewidth]{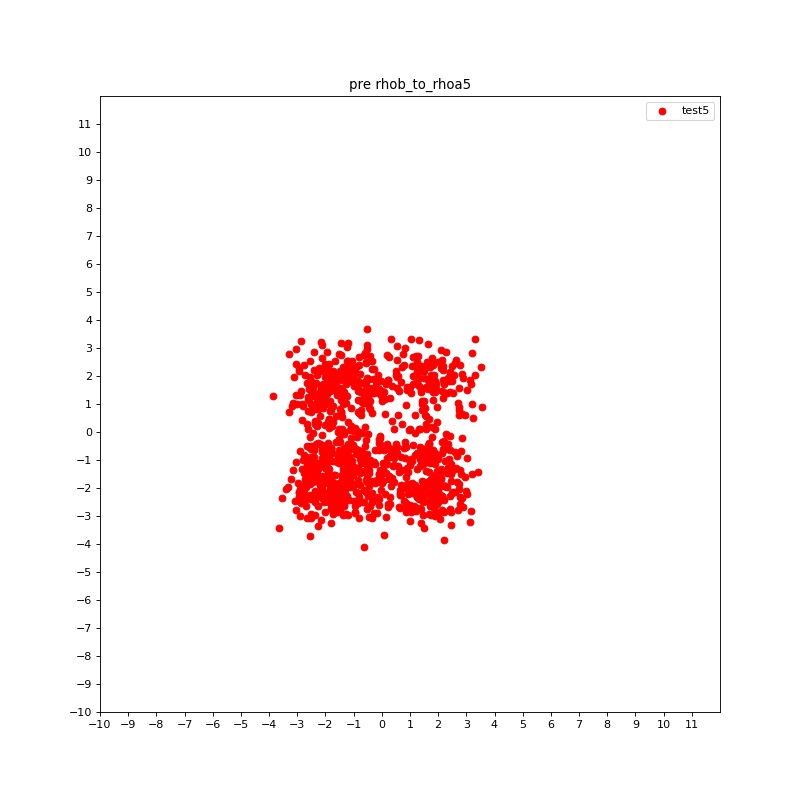}}\\
 \subfloat[][$\rho_b $ to $\rho_a$ at $t_6$]{\includegraphics[width=.18\linewidth]{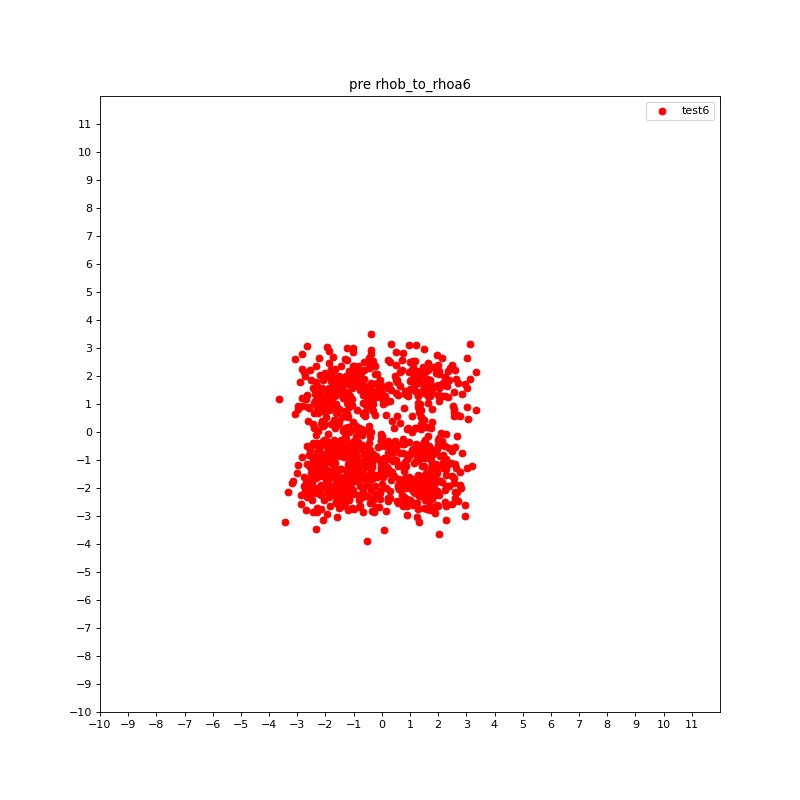}}
 \subfloat[][$\rho_b $ to $\rho_a$ at $t_7$]{\includegraphics[width=.18\linewidth]{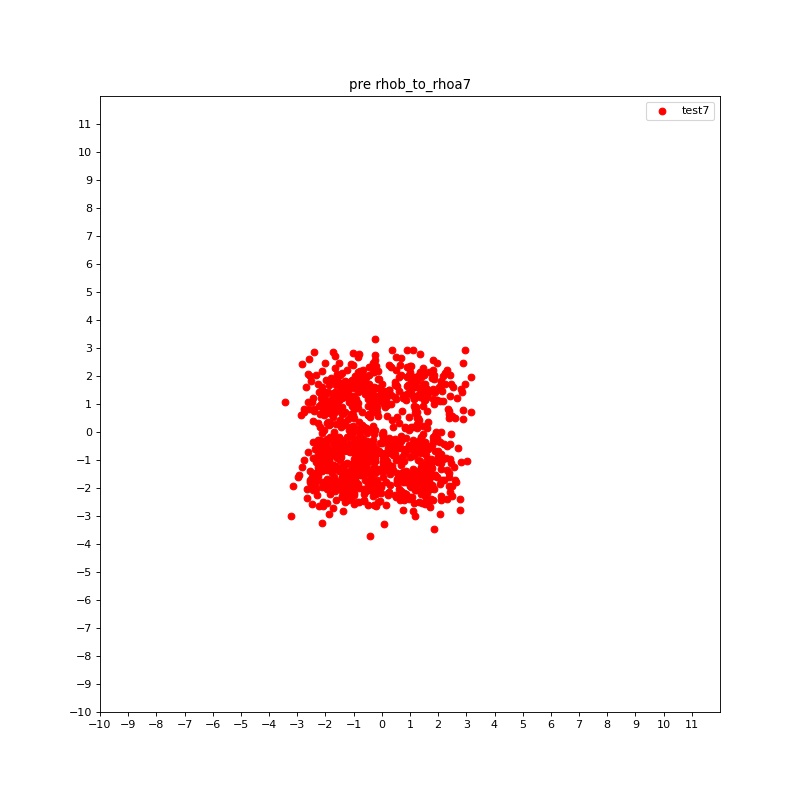}}
 \subfloat[][$\rho_b $ to $\rho_a$ at $t_8$]{\includegraphics[width=.18\linewidth]{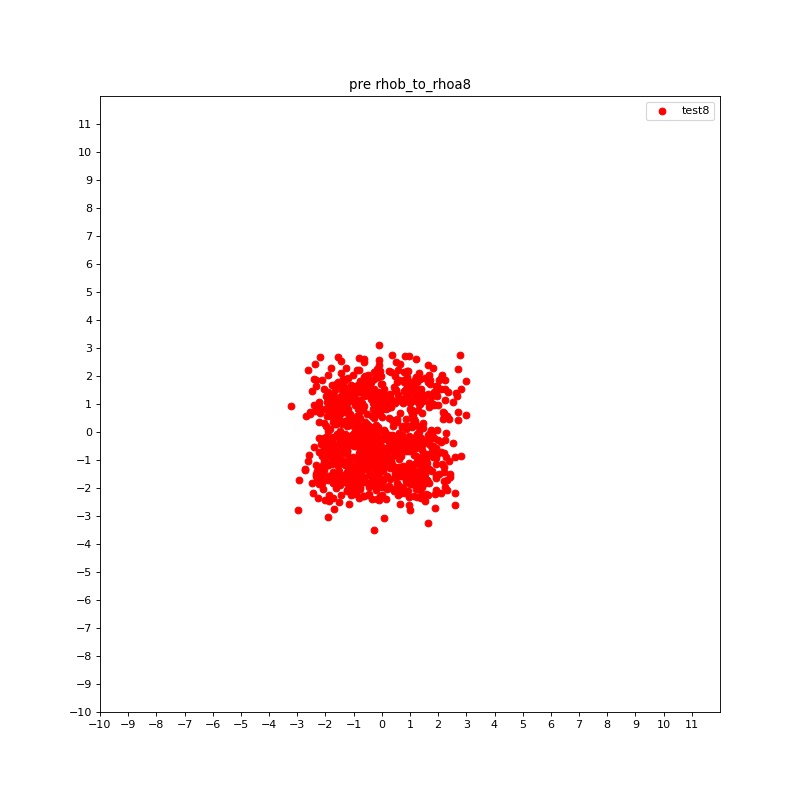}}
 \subfloat[][$\rho_b $ to $\rho_a$ at $t_9$]{\includegraphics[width=.18\linewidth]{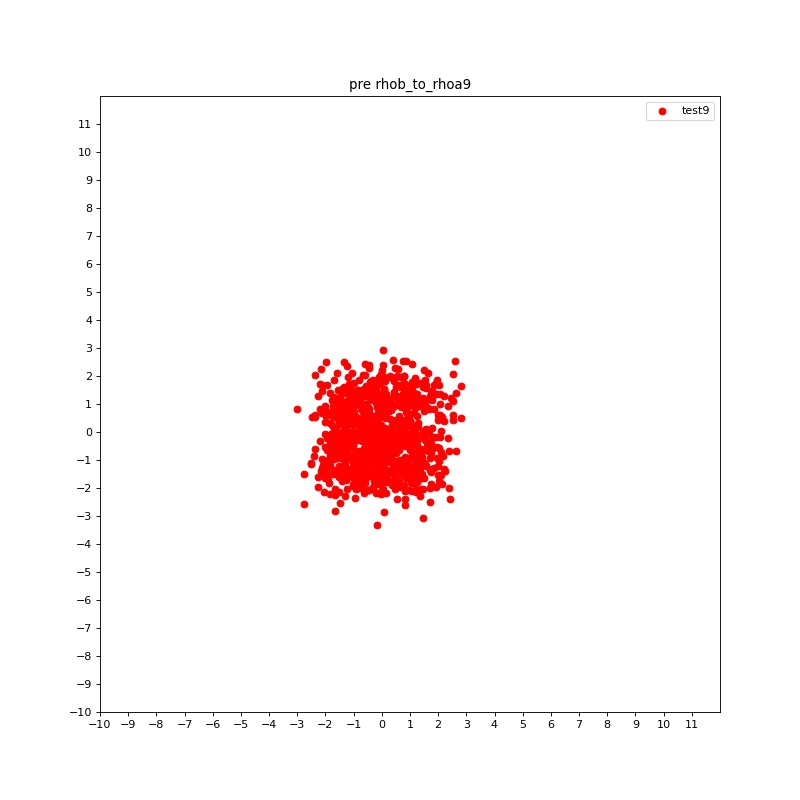}}
 \subfloat[][$\rho_b $ to $\rho_a$ at $t_{10}$]{\includegraphics[width=.18\linewidth]{pre_rhob_to_rhoa2550010.jpg}}\\
 \subfloat[][points track:$\rho_a$ to $\rho_b$]{\includegraphics[width=.18\linewidth]{colored_tracks_rhoa_to_rhob25500.jpg}}
 \subfloat[][vector field:$\rho_a$ to $\rho_b$]{\includegraphics[width=.18\linewidth]{vec_fd_rhoa_to_rhob2550010.jpg}}
 \subfloat[][points track:$\rho_b$ to $\rho_a$]{\includegraphics[width=.18\linewidth]{colored_tracks_rhob_to_rhoa25500.jpg}}
 \subfloat[][vector field:$\rho_b$ to $\rho_a$]{\includegraphics[width=.18\linewidth]{vec_fd_rhob_to_rhoa2550010.jpg}}
\caption{10-dimensional unbalanced Gaussian}
\label{fig:syn-4}
\end{figure*}

\newpage
\subsection{Realistic Data}
\textbf{Real-1:}
\begin{figure}[ht!]
    \centering
     \subfloat[][$\rho_a $ to $\rho_b$ at $t_1$]{\includegraphics[width=.18\linewidth]{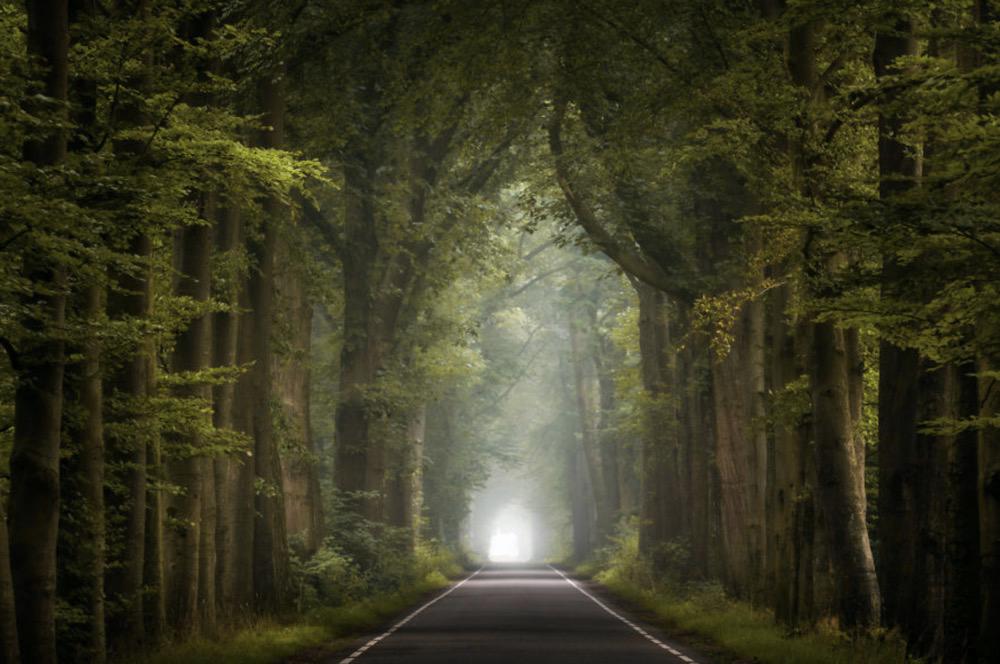}}
     \subfloat[][$\rho_a $ to $\rho_b$ at $t_2$]{\includegraphics[width=.18\linewidth]{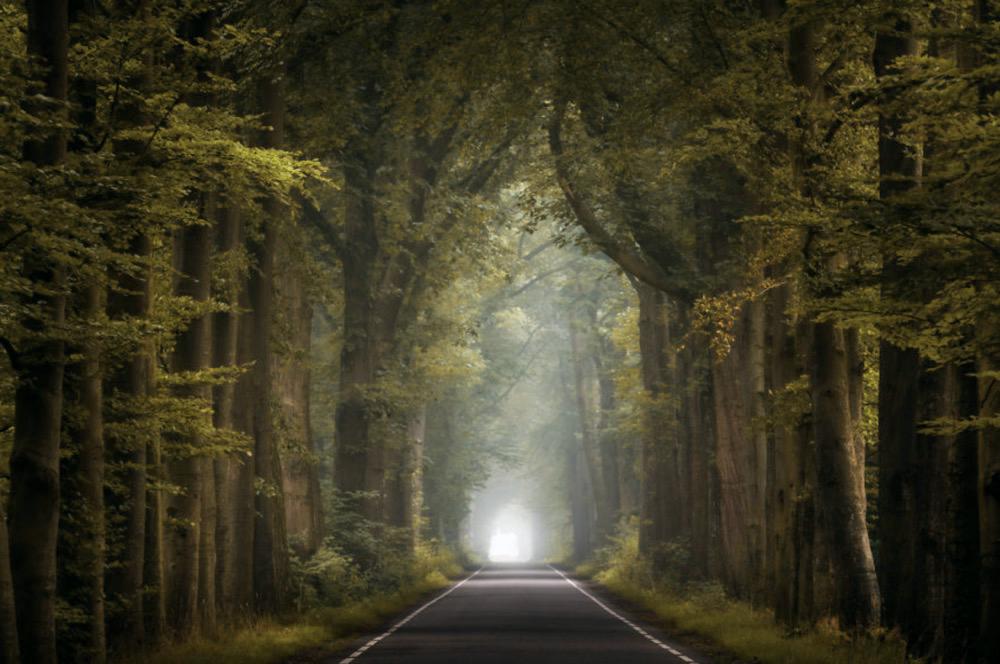}}
     \subfloat[][$\rho_a $ to $\rho_b$ at $t_3$]{\includegraphics[width=.18\linewidth]{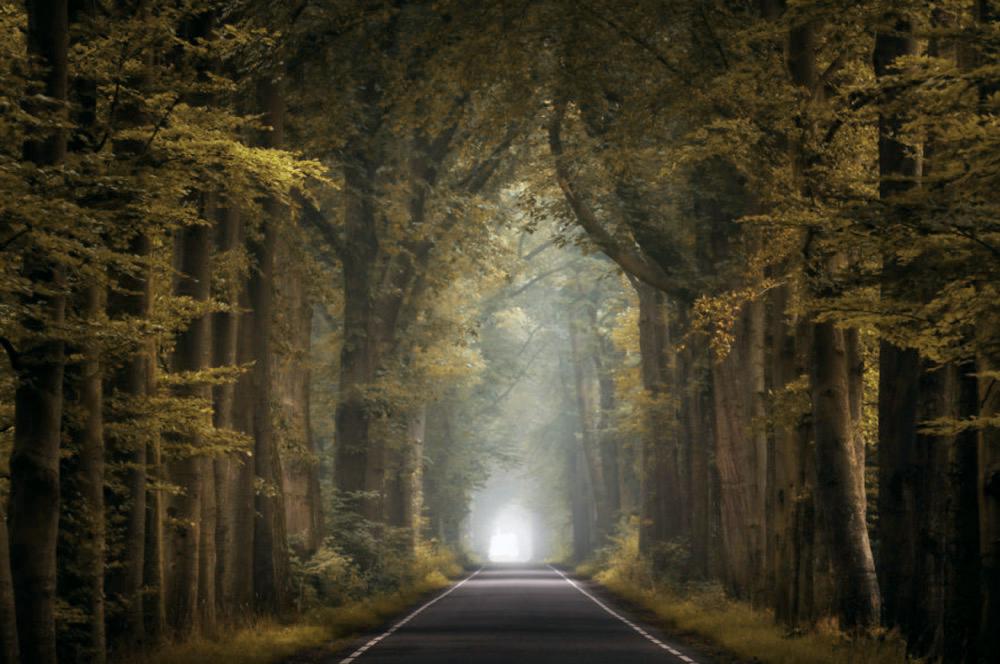}}
     \subfloat[][$\rho_a $ to $\rho_b$ at $t_4$]{\includegraphics[width=.18\linewidth]{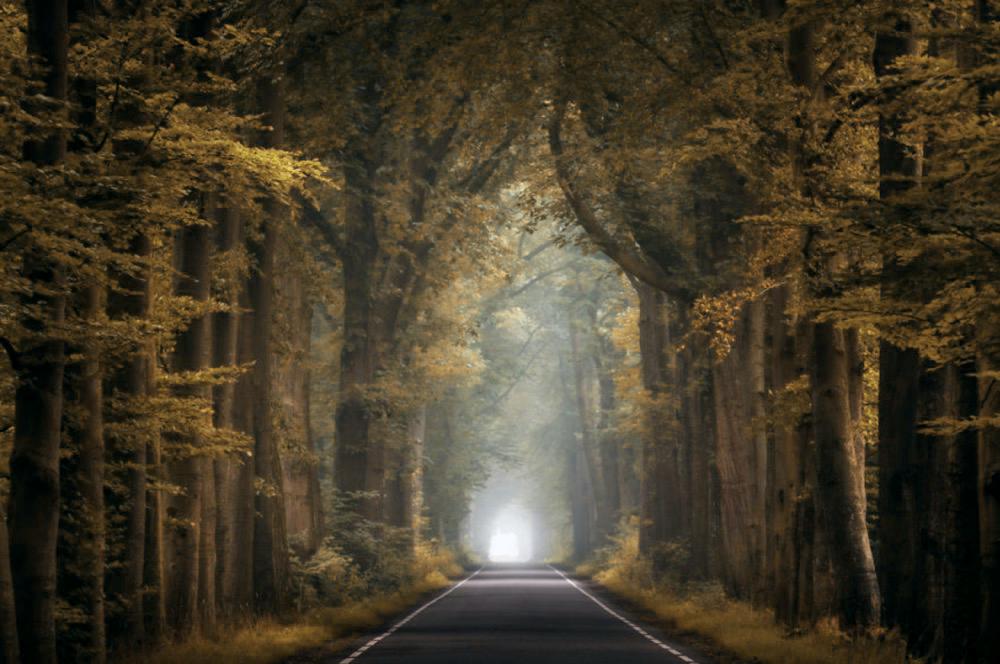}}
     \subfloat[][$\rho_a $ to $\rho_b$ at $t_5$]{\includegraphics[width=.18\linewidth]{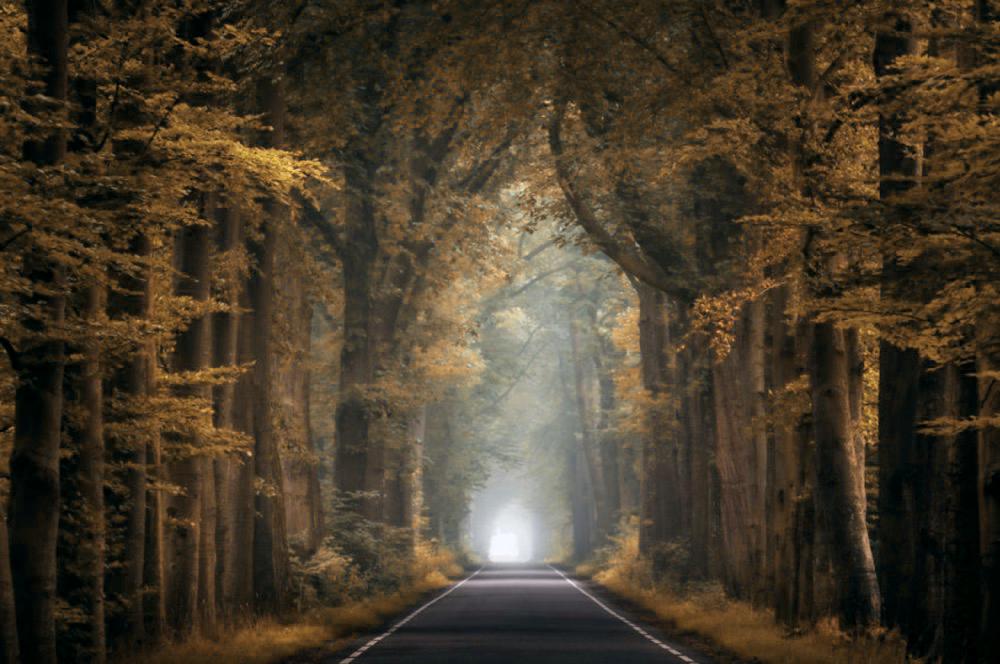}}\\
     \subfloat[][$\rho_a $ to $\rho_b$ at $t_6$]{\includegraphics[width=.18\linewidth]{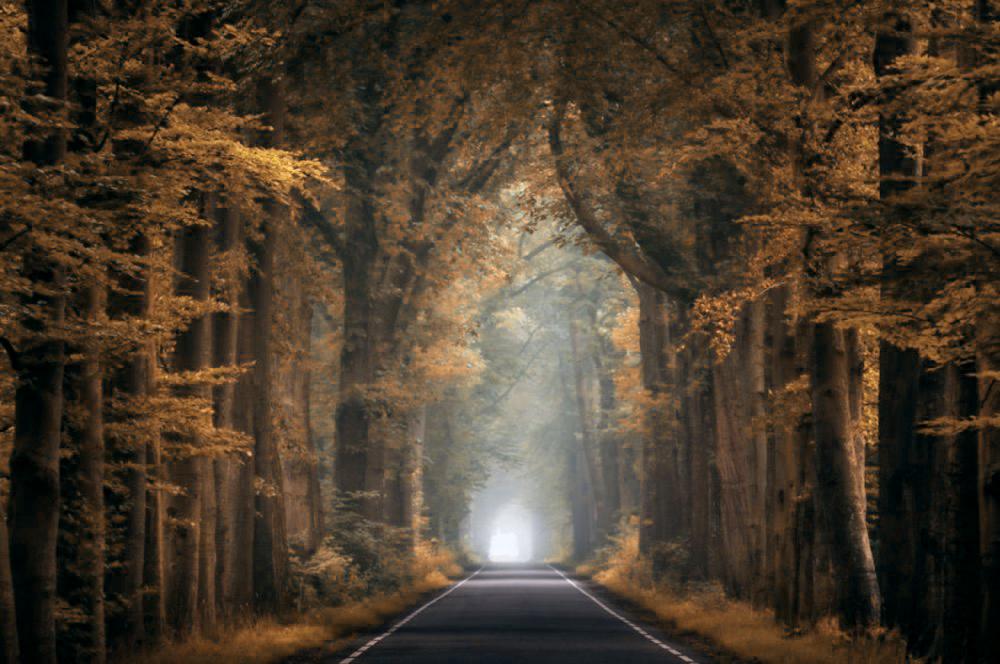}}
     \subfloat[][$\rho_a $ to $\rho_b$ at $t_7$]{\includegraphics[width=.18\linewidth]{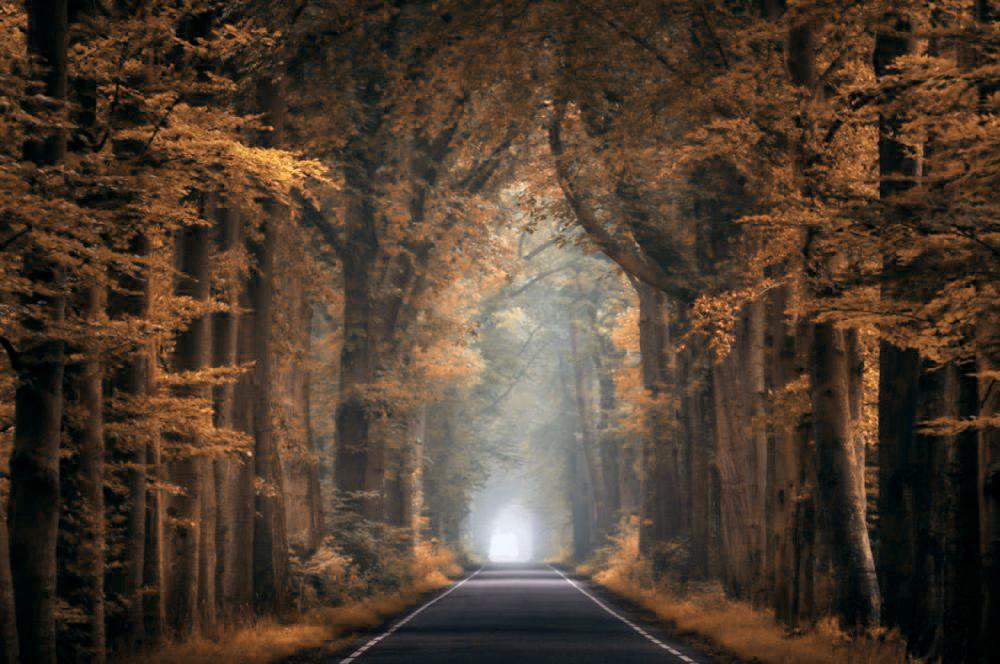}}
     \subfloat[][$\rho_a $ to $\rho_b$ at $t_8$]{\includegraphics[width=.18\linewidth]{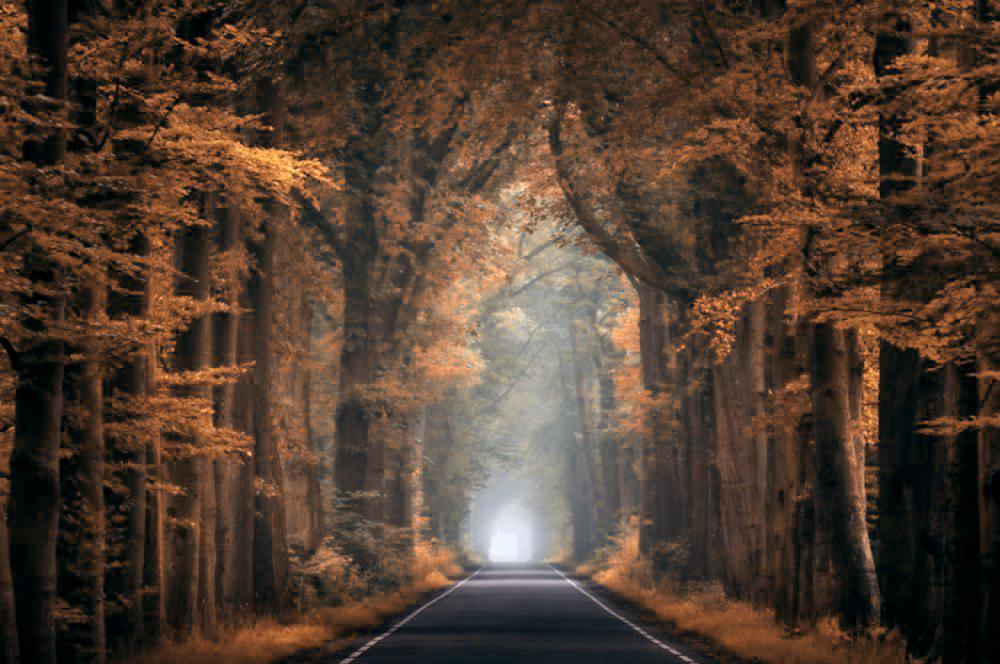}}
     \subfloat[][$\rho_a $ to $\rho_b$ at $t_9$]{\includegraphics[width=.18\linewidth]{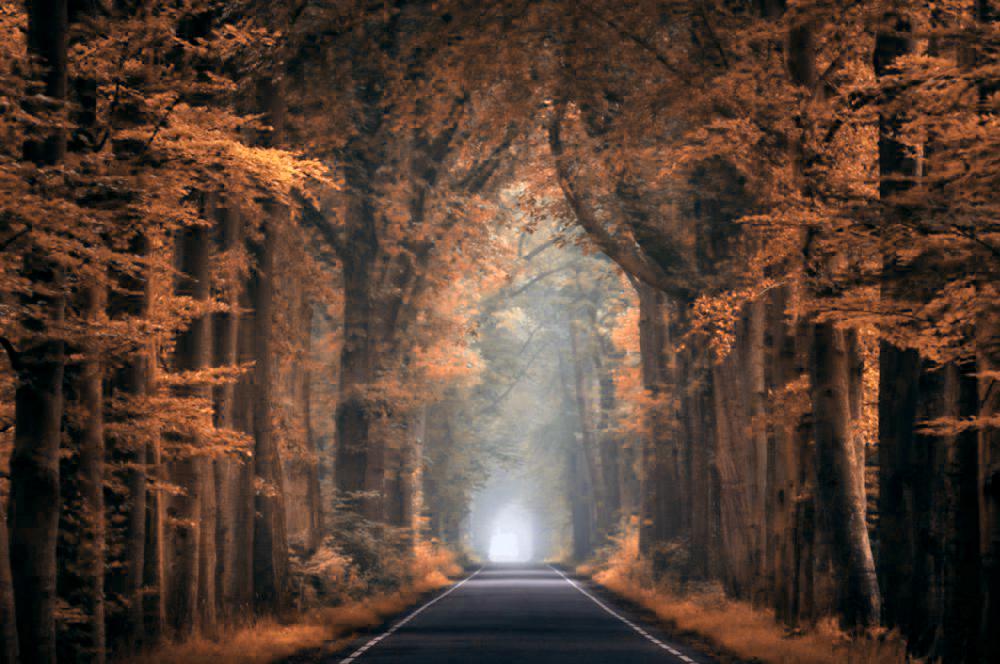}}
     \subfloat[][$\rho_a $ to $\rho_b$ at $t_{10}$]{\includegraphics[width=.18\linewidth]{colortransform1to2attime10outiteration1000.jpg}}\\
     \subfloat[][$\rho_b $ to $\rho_a$ at $t_1$]{\includegraphics[width=.18\linewidth]{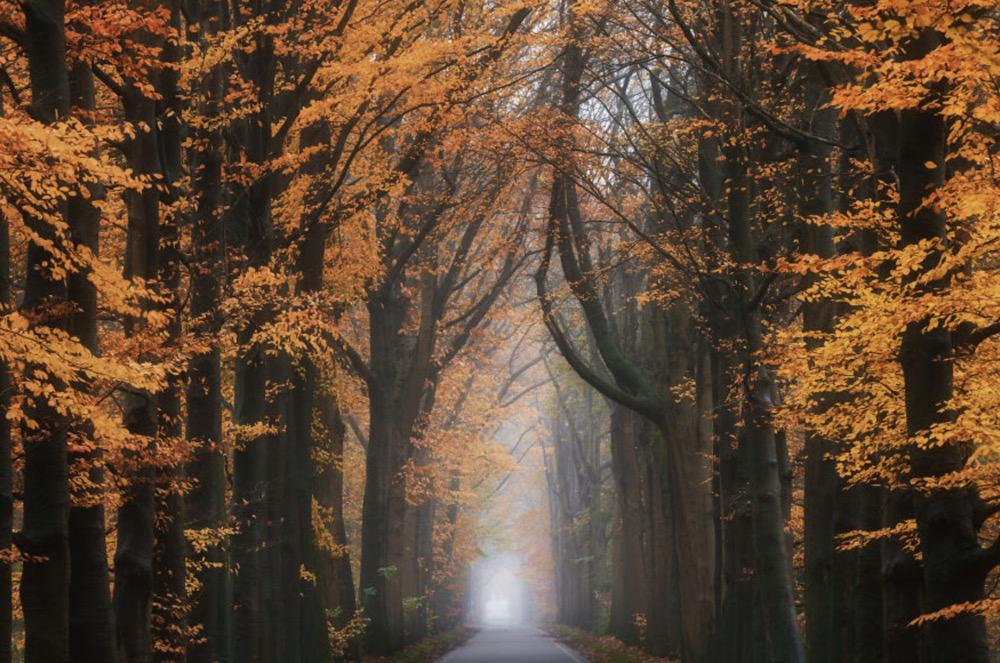}}
     \subfloat[][$\rho_b $ to $\rho_a$ at $t_2$]{\includegraphics[width=.18\linewidth]{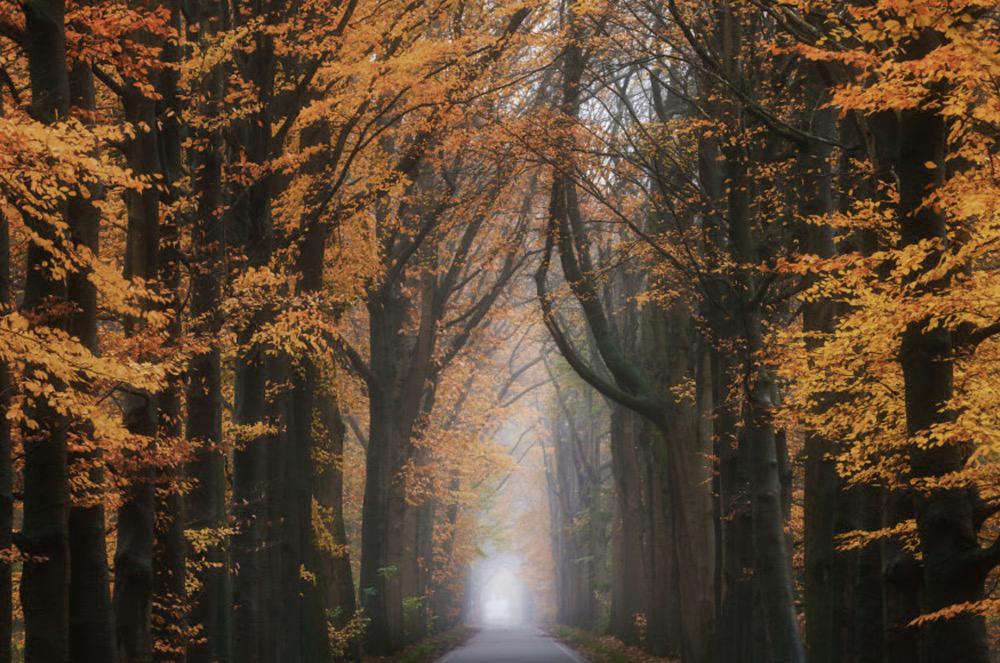}}
     \subfloat[][$\rho_b $ to $\rho_a$ at $t_3$]{\includegraphics[width=.18\linewidth]{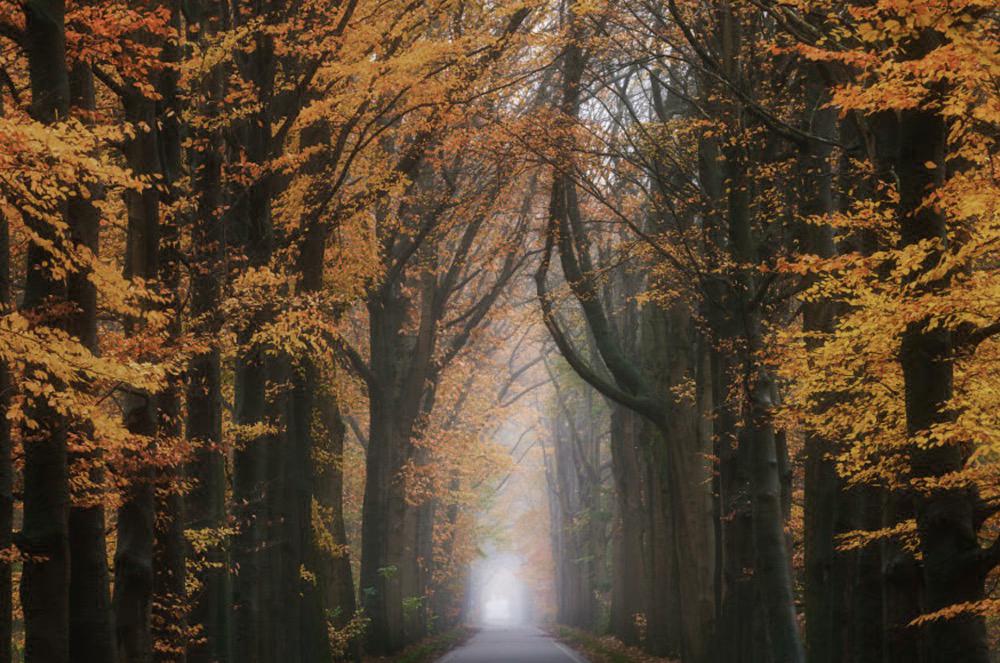}}
     \subfloat[][$\rho_b $ to $\rho_a$ at $t_4$]{\includegraphics[width=.18\linewidth]{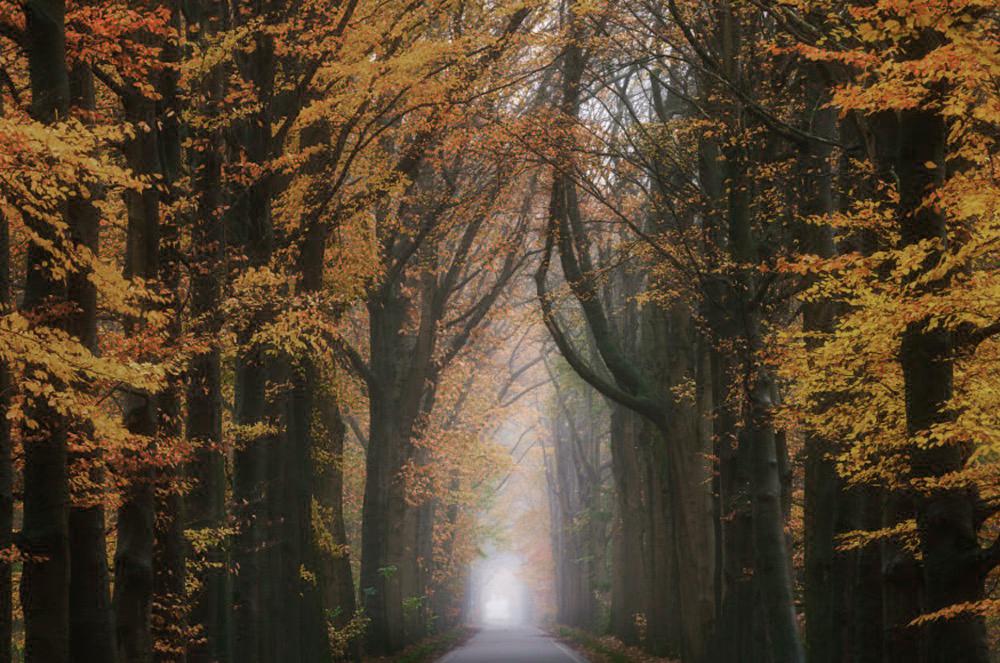}}
     \subfloat[][$\rho_b $ to $\rho_a$ at $t_5$]{\includegraphics[width=.18\linewidth]{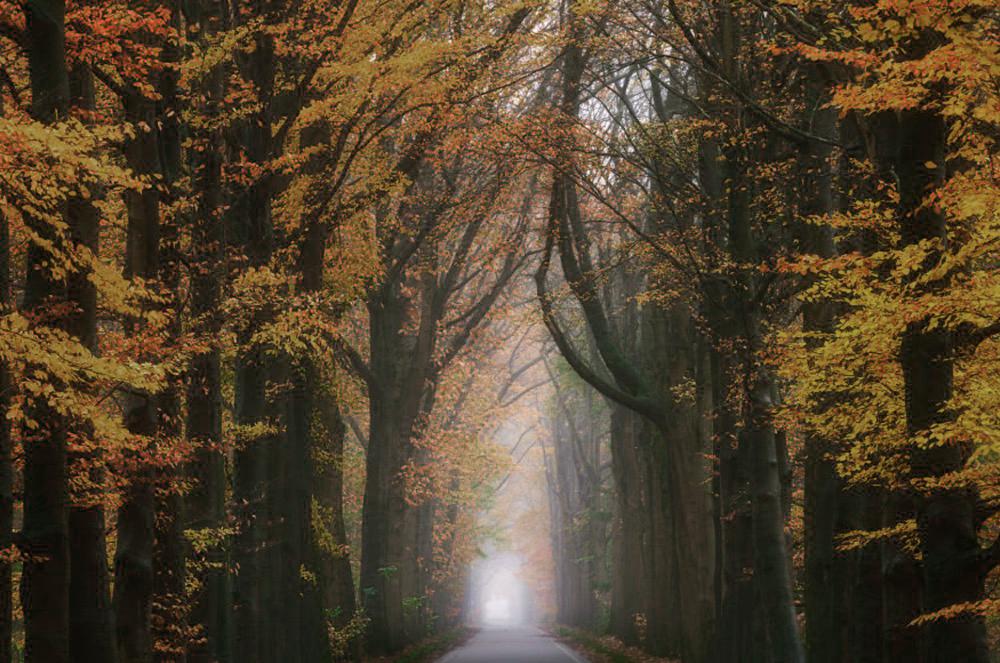}}\\
     \subfloat[][$\rho_b $ to $\rho_a$ at $t_6$]{\includegraphics[width=.18\linewidth]{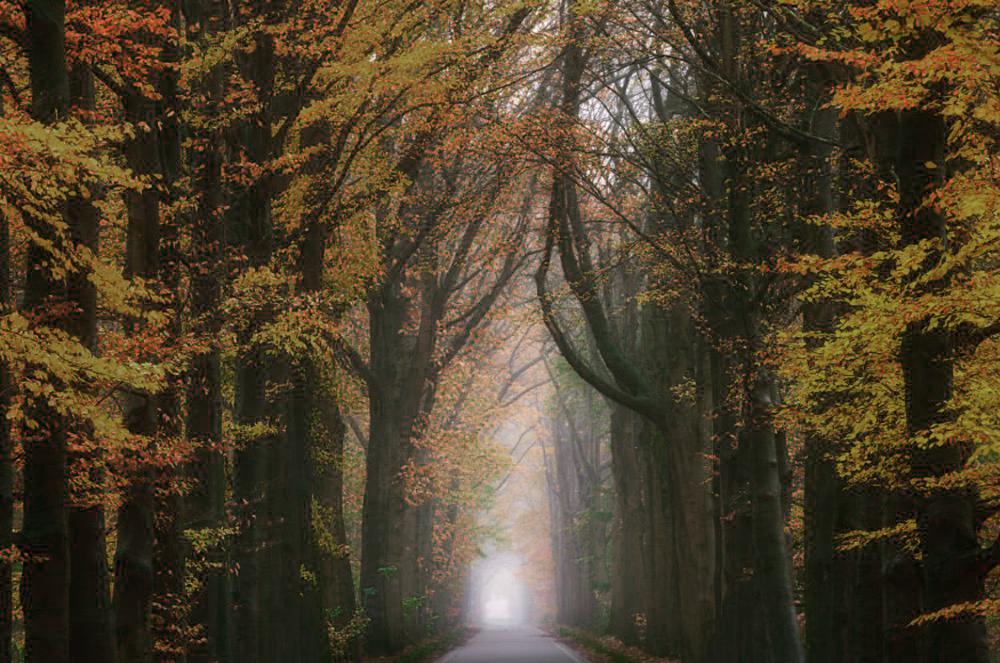}}
     \subfloat[][$\rho_b $ to $\rho_a$ at $t_7$]{\includegraphics[width=.18\linewidth]{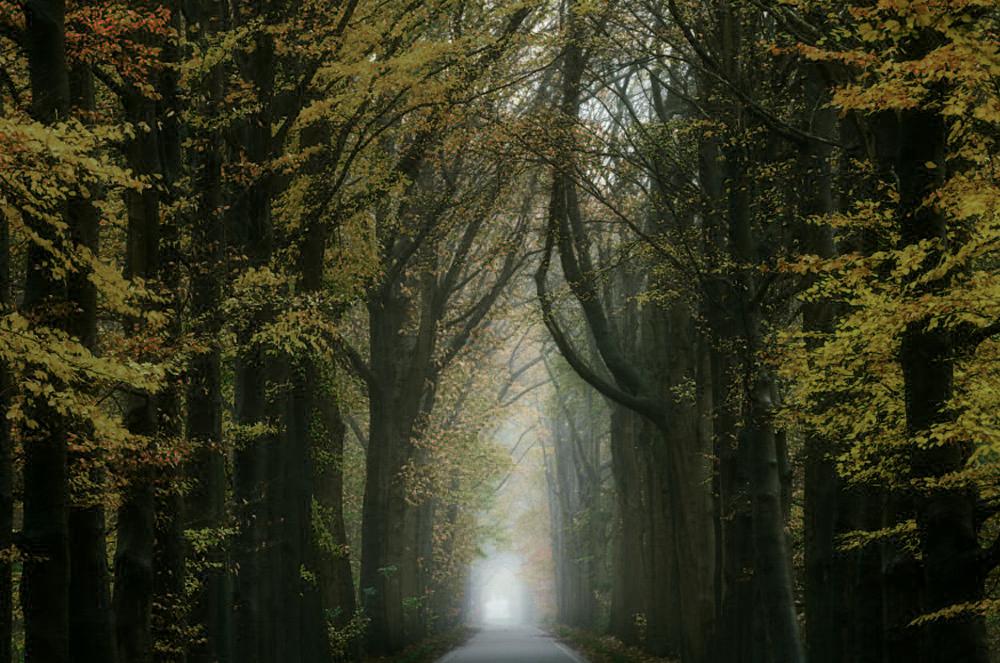}}
     \subfloat[][$\rho_b $ to $\rho_a$ at $t_8$]{\includegraphics[width=.18\linewidth]{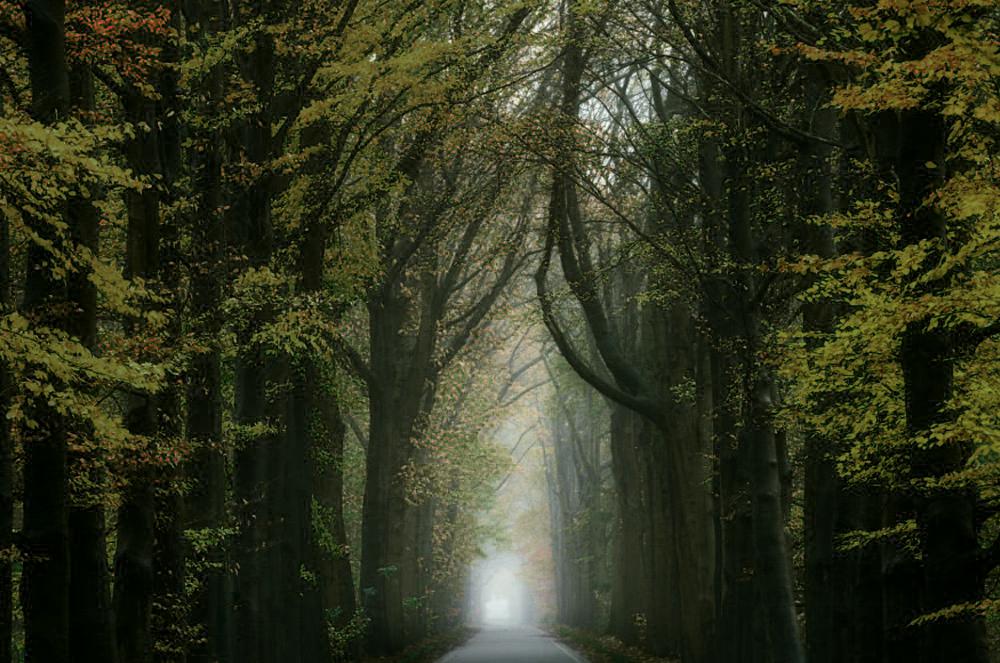}}
     \subfloat[][$\rho_b $ to $\rho_a$ at $t_9$]{\includegraphics[width=.18\linewidth]{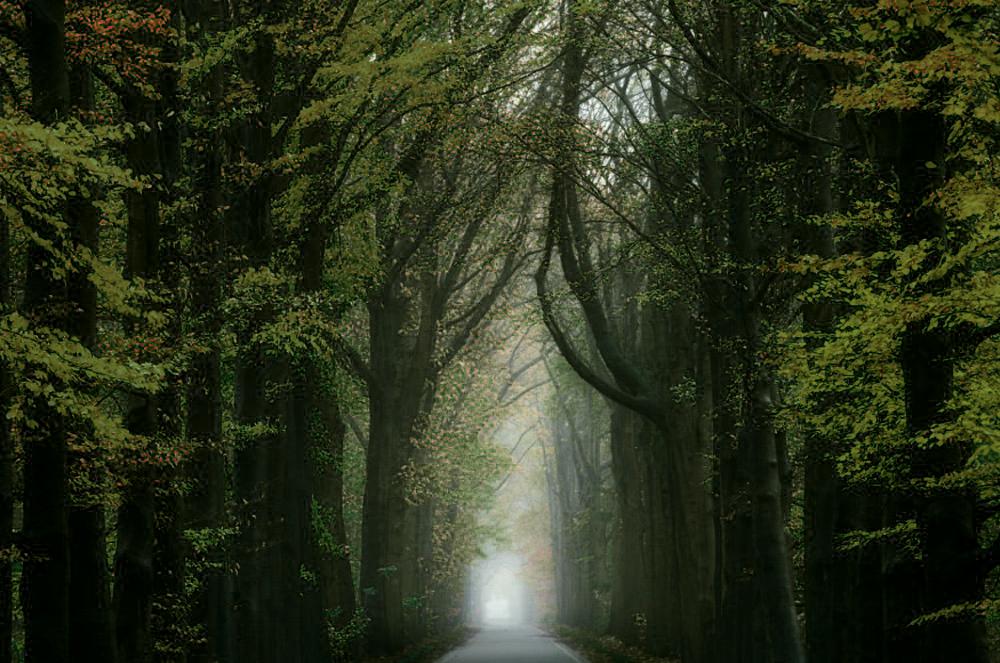}}
     \subfloat[][$\rho_b $ to $\rho_a$ at $t_{10}$]{\includegraphics[width=.18\linewidth]{colortransform2to1attime10outiteration1100.jpg}}\\
      \subfloat[][Color dist of Summer View]{\includegraphics[width=.2\linewidth]{scatterplot_of_RGB_values_of_picture_1_forest.png}}\quad
     \subfloat[][Color dist of Autumn View]{\includegraphics[width=.2\linewidth]{scatterplot_of_RGB_values_of_picture_2_forest.png}}\quad
     \subfloat[][Generated color dist of Summer View]{\includegraphics[width=.2\linewidth]{2to1iteration800samplesatt121.jpg}}\quad
     \subfloat[][Generated color dist of Autumn View]{\includegraphics[width=.2\linewidth]{1to2iteration800samplesatt121.jpg}}
\caption{Forest summer view and autumn view}
\label{fig:real-11}
\end{figure}

\newpage
\textbf{Real-2:}
\begin{figure}[ht!]
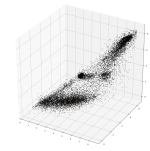
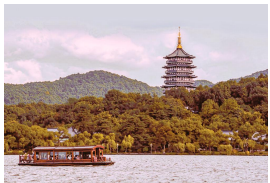
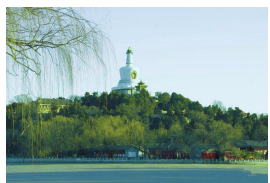
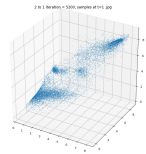
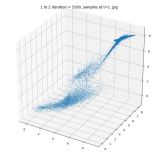

    \centering
     \subfloat[][Summer West Lake]{\includegraphics[width=.25\linewidth]{xihuxia_copy.jpg}}
     \quad
     \subfloat[][Autumn White Tower]{\includegraphics[width=.25\linewidth]{baitaqiu_copy.jpg}}
     \subfloat[][Color dist of West Lake]{\includegraphics[width=.2\linewidth]{scatterplot_of_RGB_values_of_picture_1.png}}\quad
     \subfloat[][Color dist of White Tower]{\includegraphics[width=.2\linewidth]{scatterplot_of_RGB_values_of_picture_2.png}}\\
     \subfloat[][Autumn West Lake]{\includegraphics[width=.25\linewidth]{colortransform1to2attime10outiteration5200.jpg}}\qquad
     \subfloat[][Summer White Tower]{\includegraphics[width=.25\linewidth]{colortransform2to1attime10outiteration5200.jpg}}
    \subfloat[][Generated color dist of West Lake]{\includegraphics[width=.2\linewidth]{2to1iteration5200samplesatt1.jpg}}\quad
     \subfloat[][Generated color of White Tower]{\includegraphics[width=.2\linewidth]{1to2iteration=5200samplesatt=1.jpg}}
\caption{Color transfer}
\label{fig:real-22}
\end{figure}

\newpage
\textbf{Real-3:}
\begin{figure}[ht!]
    \centering
    \subfloat[][$\rho_a $ to $\rho_b$ at $t_1$]{\includegraphics[width=.18\linewidth]{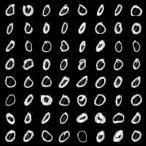}}
     \subfloat[][$\rho_a $ to $\rho_b$ at $t_2$]{\includegraphics[width=.18\linewidth]{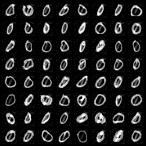}}
     \subfloat[][$\rho_a $ to $\rho_b$ at $t_3$]{\includegraphics[width=.18\linewidth]{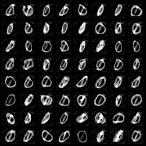}}
     \subfloat[][$\rho_a $ to $\rho_b$ at $t_4$]{\includegraphics[width=.18\linewidth]{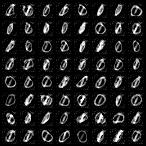}}
     \subfloat[][$\rho_a $ to $\rho_b$ at $t_5$]{\includegraphics[width=.18\linewidth]{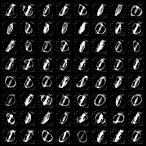}}\\
     \subfloat[][$\rho_a $ to $\rho_b$ at $t_6$]{\includegraphics[width=.18\linewidth]{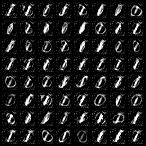}}
     \subfloat[][$\rho_a $ to $\rho_b$ at $t_7$]{\includegraphics[width=.18\linewidth]{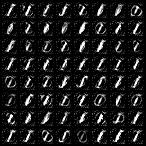}}
     \subfloat[][$\rho_a $ to $\rho_b$ at $t_8$]{\includegraphics[width=.18\linewidth]{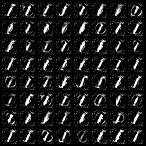}}
     \subfloat[][$\rho_a $ to $\rho_b$ at $t_9$]{\includegraphics[width=.18\linewidth]{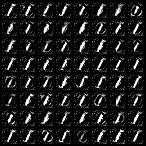}}
     \subfloat[][$\rho_a $ to $\rho_b$ at $t_{10}$]{\includegraphics[width=.18\linewidth]{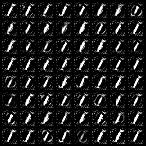}}\\
     \subfloat[][$\rho_b $ to $\rho_a$ at $t_1$]{\includegraphics[width=.18\linewidth]{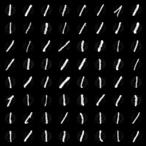}}
     \subfloat[][$\rho_b $ to $\rho_a$ at $t_2$]{\includegraphics[width=.18\linewidth]{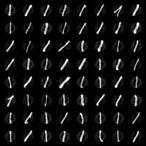}}
     \subfloat[][$\rho_b $ to $\rho_a$ at $t_3$]{\includegraphics[width=.18\linewidth]{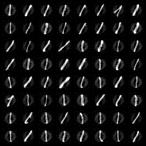}}
     \subfloat[][$\rho_b $ to $\rho_a$ at $t_4$]{\includegraphics[width=.18\linewidth]{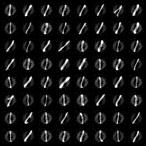}}
     \subfloat[][$\rho_b $ to $\rho_a$ at $t_5$]{\includegraphics[width=.18\linewidth]{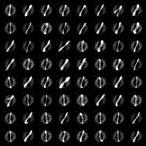}}\\
     \subfloat[][$\rho_b $ to $\rho_a$ at $t_6$]{\includegraphics[width=.18\linewidth]{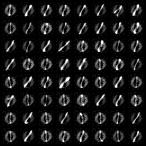}}
     \subfloat[][$\rho_b $ to $\rho_a$ at $t_7$]{\includegraphics[width=.18\linewidth]{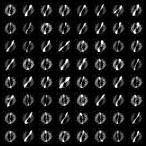}}
     \subfloat[][$\rho_b $ to $\rho_a$ at $t_8$]{\includegraphics[width=.18\linewidth]{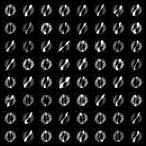}}
     \subfloat[][$\rho_b $ to $\rho_a$ at $t_9$]{\includegraphics[width=.18\linewidth]{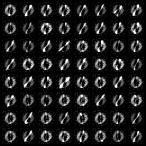}}
     \subfloat[][$\rho_b $ to $\rho_a$ at $t_{10}$]{\includegraphics[width=.18\linewidth]{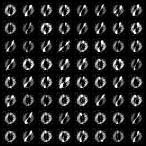}}
\caption{Geodesics between "0" and "1"}
\label{fig:real-33}
\end{figure}

\newpage
\begin{figure}[ht!]
    \centering
     \subfloat[][$\rho_a $ to $\rho_b$ at $t_1$]{\includegraphics[width=.18\linewidth]{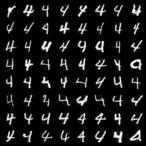}}
     \subfloat[][$\rho_a $ to $\rho_b$ at $t_2$]{\includegraphics[width=.18\linewidth]{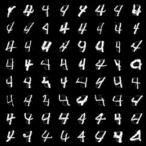}}
     \subfloat[][$\rho_a $ to $\rho_b$ at $t_3$]{\includegraphics[width=.18\linewidth]{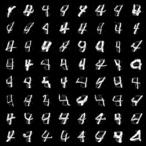}}
     \subfloat[][$\rho_a $ to $\rho_b$ at $t_4$]{\includegraphics[width=.18\linewidth]{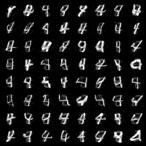}}
     \subfloat[][$\rho_a $ to $\rho_b$ at $t_5$]{\includegraphics[width=.18\linewidth]{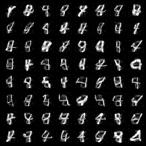}}\\
     \subfloat[][$\rho_a $ to $\rho_b$ at $t_6$]{\includegraphics[width=.18\linewidth]{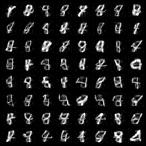}}
     \subfloat[][$\rho_a $ to $\rho_b$ at $t_7$]{\includegraphics[width=.18\linewidth]{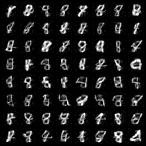}}
     \subfloat[][$\rho_a $ to $\rho_b$ at $t_8$]{\includegraphics[width=.18\linewidth]{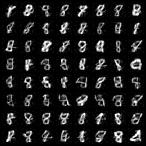}}
     \subfloat[][$\rho_a $ to $\rho_b$ at $t_9$]{\includegraphics[width=.18\linewidth]{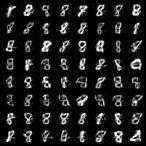}}
     \subfloat[][$\rho_a $ to $\rho_b$ at $t_{10}$]{\includegraphics[width=.18\linewidth]{approx_m816400at10.jpg}}\\
     \subfloat[][$\rho_b $ to $\rho_a$ at $t_1$]{\includegraphics[width=.18\linewidth]{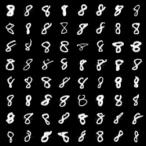}}
     \subfloat[][$\rho_b $ to $\rho_a$ at $t_2$]{\includegraphics[width=.18\linewidth]{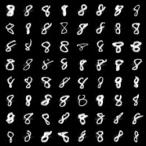}}
     \subfloat[][$\rho_b $ to $\rho_a$ at $t_3$]{\includegraphics[width=.18\linewidth]{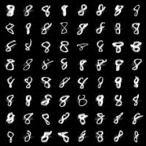}}
     \subfloat[][$\rho_b $ to $\rho_a$ at $t_4$]{\includegraphics[width=.18\linewidth]{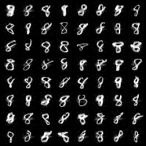}}
     \subfloat[][$\rho_b $ to $\rho_a$ at $t_5$]{\includegraphics[width=.18\linewidth]{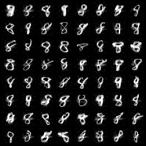}}\\
     \subfloat[][$\rho_b $ to $\rho_a$ at $t_6$]{\includegraphics[width=.18\linewidth]{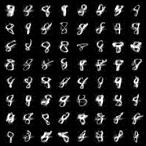}}
     \subfloat[][$\rho_b $ to $\rho_a$ at $t_7$]{\includegraphics[width=.18\linewidth]{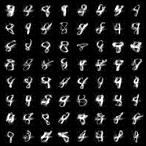}}
     \subfloat[][$\rho_b $ to $\rho_a$ at $t_8$]{\includegraphics[width=.18\linewidth]{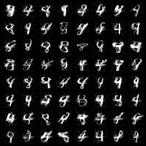}}
     \subfloat[][$\rho_b $ to $\rho_a$ at $t_9$]{\includegraphics[width=.18\linewidth]{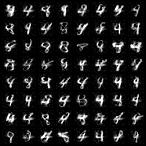}}
     \subfloat[][$\rho_b $ to $\rho_a$ at $t_{10}$]{\includegraphics[width=.18\linewidth]{approx_m416400at10.jpg}}
\caption{Geodesics between "4" and "8"}
\label{fig:real-4}
\end{figure}

\newpage
\begin{figure}[ht!]
    \centering
     \subfloat[][$\rho_a $ to $\rho_b$ at $t_1$]{\includegraphics[width=.18\linewidth]{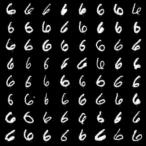}}
     \subfloat[][$\rho_a $ to $\rho_b$ at $t_2$]{\includegraphics[width=.18\linewidth]{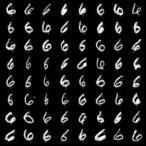}}
     \subfloat[][$\rho_a $ to $\rho_b$ at $t_3$]{\includegraphics[width=.18\linewidth]{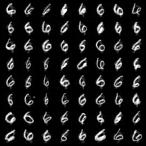}}
     \subfloat[][$\rho_a $ to $\rho_b$ at $t_4$]{\includegraphics[width=.18\linewidth]{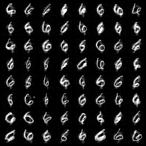}}
     \subfloat[][$\rho_a $ to $\rho_b$ at $t_5$]{\includegraphics[width=.18\linewidth]{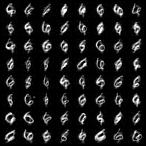}}\\
     \subfloat[][$\rho_a $ to $\rho_b$ at $t_6$]{\includegraphics[width=.18\linewidth]{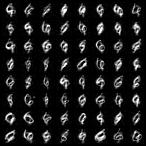}}
     \subfloat[][$\rho_a $ to $\rho_b$ at $t_7$]{\includegraphics[width=.18\linewidth]{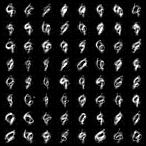}}
     \subfloat[][$\rho_a $ to $\rho_b$ at $t_8$]{\includegraphics[width=.18\linewidth]{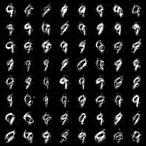}}
     \subfloat[][$\rho_a $ to $\rho_b$ at $t_9$]{\includegraphics[width=.18\linewidth]{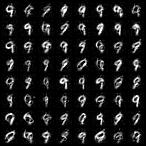}}
     \subfloat[][$\rho_a $ to $\rho_b$ at $t_{10}$]{\includegraphics[width=.18\linewidth]{approx_m923600at10.jpg}}\\
     \subfloat[][$\rho_b $ to $\rho_a$ at $t_1$]{\includegraphics[width=.18\linewidth]{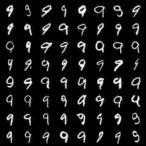}}
     \subfloat[][$\rho_b $ to $\rho_a$ at $t_2$]{\includegraphics[width=.18\linewidth]{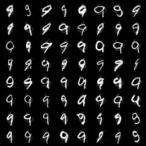}}
     \subfloat[][$\rho_b $ to $\rho_a$ at $t_3$]{\includegraphics[width=.18\linewidth]{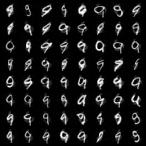}}
     \subfloat[][$\rho_b $ to $\rho_a$ at $t_4$]{\includegraphics[width=.18\linewidth]{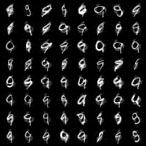}}
     \subfloat[][$\rho_b $ to $\rho_a$ at $t_5$]{\includegraphics[width=.18\linewidth]{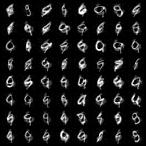}}\\
     \subfloat[][$\rho_b $ to $\rho_a$ at $t_6$]{\includegraphics[width=.18\linewidth]{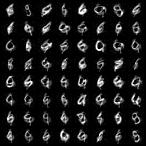}}
     \subfloat[][$\rho_b $ to $\rho_a$ at $t_7$]{\includegraphics[width=.18\linewidth]{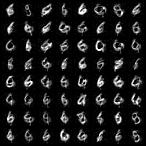}}
     \subfloat[][$\rho_b $ to $\rho_a$ at $t_8$]{\includegraphics[width=.18\linewidth]{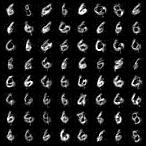}}
     \subfloat[][$\rho_b $ to $\rho_a$ at $t_9$]{\includegraphics[width=.18\linewidth]{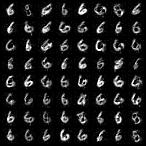}}
     \subfloat[][$\rho_b $ to $\rho_a$ at $t_{10}$]{\includegraphics[width=.18\linewidth]{approx_m623600at10.jpg}}
\caption{Geodesics between "6" and "9"}
\label{fig:real-5}
\end{figure}

\end{document}